\documentclass[11pt]{article}
\usepackage[utf8]{inputenc}
\usepackage[margin=1in]{geometry}
\usepackage{graphicx} % Required for inserting images
\usepackage{enumitem}
\usepackage{comment}
\usepackage{amsmath}
\usepackage{amsfonts}
\usepackage{amsthm, amssymb}
\usepackage{thm-restate}
\usepackage{booktabs}
\usepackage[]{hyperref}
\hypersetup{
    colorlinks = true,
    linkcolor = blue,
    citecolor = blue
    }
    
\usepackage[
        noabbrev,
        capitalise,
        nameinlink,
    ]
    {cleveref}
\usepackage[dvipsnames]{xcolor}
\usepackage{bbm}
\usepackage{dirtytalk}
\usepackage{subcaption}
\usepackage{float}
\usepackage{tikz}
\usepackage{algorithm}
\usepackage[noend]{algpseudocode}
\usepackage{makecell}
\usetikzlibrary{arrows.meta,decorations.markings, calc, positioning}



\newcommand{\man}{M}

\definecolor{cyan}{cmyk}{1, 0.4, 0, 0}

\newcommand{\PT}{\operatorname{PT}}

\usepackage[round]{natbib} 
\title{Wasserstein Parallel Transport for Predicting the Dynamics of Statistical Systems}

\author{
\begin{tabular}{cccc} % Changed to 4 columns
\multicolumn{2}{c}{\makecell[c]{Tristan Luca Saidi\textsuperscript{\(\dagger\)}\\ \href{mailto:tsaidi@andrew.cmu.edu}{\nolinkurl{tsaidi@andrew.cmu.edu}}}} &
\multicolumn{2}{c}{\makecell[c]{Gonzalo Mena\textsuperscript{\(\dagger\)}\\ \href{mailto:gmena@andrew.cmu.edu}{\nolinkurl{gmena@andrew.cmu.edu}}}} \\[2em]
\multicolumn{2}{c}{\makecell[c]{Larry Wasserman\textsuperscript{\(\dagger, \ddagger\)}\\ \href{mailto:larry@stat.cmu.edu}{\nolinkurl{larry@stat.cmu.edu}}}} &
\multicolumn{2}{c}{\makecell[c]{Florian Gunsilius\textsuperscript{\(*\)}\\ \href{mailto:florian.felix.gunsilius@emory.edu}{\nolinkurl{fgunsil@emory.edu}}}} \\[1.5em]
\end{tabular}
\small\\ % Added a line break here to force affiliations down
\textsuperscript{\(\dagger\)}Department of Statistics and Data Science, Carnegie Mellon University\\
\textsuperscript{\(\ddagger\)}Machine Learning Department, Carnegie Mellon University\\
\textsuperscript{\(*\)}Department of Economics, Emory University
}

\begin{document}

\maketitle

\begin{abstract}
    Many scientific systems, such as cellular populations or economic cohorts, are naturally described by probability distributions that evolve over time. Predicting how such a system would have evolved under different forces or initial conditions is fundamental to causal inference, domain adaptation, and counterfactual prediction. However, the space of distributions often lacks the vector space structure on which classical methods rely. To address this, we introduce a general notion of parallel dynamics at a distributional level. We base this principle on parallel transport of tangent dynamics along optimal transport geodesics and call it ``Wasserstein Parallel Trends''. By replacing the vector subtraction of classic methods with geodesic parallel transport, we can provide counterfactual comparisons of distributional dynamics in applications such as causal inference, domain adaptation, and batch-effect correction in experimental settings. The main mathematical contribution is a novel notion of fanning scheme on the Wasserstein manifold that allows us to efficiently approximate parallel transport along geodesics while also providing the first theoretical guarantees for parallel transport in the Wasserstein space. We also show that Wasserstein Parallel Trends recovers the classic parallel trends assumption for averages as a special case and derive closed-form parallel transport for Gaussian measures. We deploy the method on synthetic data and two single-cell RNA sequencing datasets to impute gene-expression dynamics across biological systems.
\end{abstract}

\tableofcontents

\section{Introduction}
Many scientific systems are naturally described by probability distributions that evolve over time, such as  cellular populations undergoing differentiation, income distributions responding to policy changes, or ecological communities adapting to environmental shifts. A fundamental task is to take the dynamics observed for one such system and predict how a second system with different properties would have evolved under similar dynamics. This counterfactual question shows up in several distinct guises; for instance, as causal effect estimation in observational studies, as out-of-sample prediction when only one system is observed, and even in dynamic randomized treatment estimation as we show below. In each case, the challenge is the same, namely, to transfer a temporal trend from one distributional baseline to another.

Existing approaches to transferring dynamics rely on taking differences. In causal inference with observational data for instance, Difference-in-Differences subtracts the observed change in the reference group, relying on the parallel trends assumption \citep{abadie2005semiparametric, ashenfelter1985using}. Powerful recent extensions \citep[e.g.][]{zhou2025geodesic} generalize this to geodesic spaces by requiring an abelian group structure, which is needed to define an analogous notion of ``difference''. We take a fundamentally different approach: rather than computing differences between systems, \emph{we directly transport the instantaneous dynamics} (the tangent velocity driving the evolution of one system) to the baseline of another via geodesic parallel transport on the space of probability measures. This literally constructs a parallel trend. The method operates in continuous time and acts on full distributions rather than moments. Moreover, because the method constructs counterfactual trajectories directly rather than backing out treatment effects from assumed parallelism, it is widely applicable. For instance, it allows for the correction of batch effects in randomized trials or the imputation of unobserved dynamics when only one system's dynamics are observed. 

Our main technical contribution is the introduction of a novel fanning scheme \citep{louis2018fanning} on the Wasserstein manifold that allows us to efficiently approximate parallel transport along geodesics while providing the first theoretical guarantees for parallel transport in the Wasserstein space. By leveraging Jacobi fields on the base manifold \citep{gigli2012second}, the scheme circumvents the nonlinear PDE that characterizes exact Wasserstein parallel transport \citep{ambrosio2008construction}. We further show that curves in $\mathcal{P}_2(\mathbb{R}^d)$ that are parallel in the sense of the Levi-Civita connection on the Wasserstein manifold necessarily have parallel means, recovering the classical equibias condition \citep[e.g.][]{sofer2016} for averages as a special case. We also derive closed-form expressions for parallel transport between Gaussian measures by reducing the problem to a continuous Lyapunov equation.

Our main application domain is single-cell genomics, where transcriptomic technologies produce snapshots of gene-expression distributions at discrete time points under varying biological conditions \citep{schiebinger2019optimal, schiebinger2021reconstructing, lavenant2024toward}. Because measuring a cell's state destroys it, individual trajectories are unobservable, and one must reason at the population level---a setting where optimal transport has become a standard modeling tool \citep{bunne2024optimal}. A natural use case arises in controlled perturbation experiments, where two nominally clonal populations are cultured in parallel and one receives treatment. In many such settings—particularly in vitro systems or carefully designed time-course experiments—samples from control and treated arms are collected and processed in a coordinated manner across timepoints, so that technical variation is largely shared over time. If both populations were distributionally identical at baseline, the control trajectory could serve directly as the counterfactual for the treated population. In practice, however, even under these controlled conditions, replicates often exhibit systematic baseline differences, commonly referred to as \emph{batch effects}, arising from factors such as sample handling or sequencing variation \citep{tran2020benchmark, luecken2022benchmarking}. In these regimes, it is often reasonable to approximate such discrepancies as structured, approximately time-invariant shifts between distributions. Under this assumption, our method provides a geometric framework for aligning dynamics across baselines: by parallel transporting the control dynamics along the geodesic connecting the two initial distributions, we transfer the observed temporal trend to the treated population while accounting for their initial misalignment. This approach operates on full distributions rather than low-dimensional summaries, capturing differences beyond simple location shifts. Its validity, however, relies on the extent to which the baseline discrepancy can be meaningfully represented as a stable transformation across time, and is therefore best suited to settings where time-varying technical effects or large compositional changes are limited.

Our work connects to several strands of literature. The problem of predicting counterfactual dynamics is ubiquitous in causal inference and domain adaptation. In the causal inference literature, Difference-in-Differences (DiD) is the most prominent framework for leveraging parallel trends to estimate treatment effects \citep{ashenfelter1985using, card1994minimum, abadie2005semiparametric, roth2023s}. Distributional extensions have been pursued by \citet{athey2006identification}, who propose univariate changes-in-changes using monotone transport, \citet{torous2024optimal}, who generalize the setting to multivariate outcomes, and by \citet{sofer2016}, who formulate distributional DiD through quantile transport maps. \citet{callaway2019quantile} introduce parallel trends via a copula stability assumption. \citet{zhou2025geodesic} extend DiD to geodesic spaces with an abelian group structure. Other closely related approaches are distributional extensions of the synthetic controls method \citep{abadie2010synthetic, gunsilius2023distributional, gunsilius2024tangential}, which also create parallel trends, but via averaging of possible control dynamics. Our method is applicable to all of these settings but is not limited to causal inference \citep{gunsilius2025primer}: it provides a general-purpose tool for constructing counterfactual trajectories whenever one wishes to transfer observed dynamics to a new distributional baseline.

Prior seminal uses of parallel transport on the Wasserstein manifold are due to \citet{petersen2019wasserstein} and \citet{chen2023wasserstein}, who employ it for Fr\'{e}chet regression; both focus on the univariate setting and address a fundamentally different problem. Our framework can also be viewed as a dynamic counterpart to optimal transport-based domain adaptation \citep{courty2016optimal}, which aligns source and target distributions at a single time point; we instead transport temporal dynamics themselves across distributional baselines.

In single-cell genomics, the main application in this paper,  \citet{bunne2023learning} use neural optimal transport to learn maps from control to perturbed cell distributions. Their approach is static in that it does not model temporal dynamics, and learns a direct map between observed conditions rather than constructing a counterfactual against which a treatment effect can be measured. Our framework, by contrast, operates on evolving distributions over time, produces explicit counterfactual trajectories that enable treatment-control comparisons, and provides a testable identifying assumption---Wasserstein Parallel Trends---supporting causal inference from observational genomics data alongside pure counterfactual prediction in experimental settings.

The remainder of the paper is structured as follows. \Cref{sec:background} reviews the background on optimal transport and the Riemannian geometry of the Wasserstein space. \Cref{sec:parallel} develops the theory of parallel transport on Wasserstein space over manifolds, introduces the novel fanning scheme, and establishes its theoretical guarantees. \Cref{sec:counterfactual_dyn} presents the counterfactual dynamics prediction procedure, introduces the Wasserstein Parallel Trends assumption, and shows that it recovers the classical equibias condition. \Cref{sec: experiments} contains experiments on synthetic Gaussian systems and two single-cell RNA sequencing datasets. All proofs are collected in the appendix.

\section{Setup and Background}\label{sec:background}

In this section we give an overview on the mathematical underpinnings of optimal transport and Riemannian geometry as they pertain to our goals. For a fully detailed treatment of the theory of optimal transport, we recommend \citet{villani2009optimal}. For a comprehensive treatment of Riemannian and differential geometry, we recommend either \citet{lee2018introduction} or \citet{petersen2006riemannian}, and \citet{do2016differential}. The Riemannian perspective on optimal transport that we describe in this section follows closely the constructions described by \citet{otto2001geometry}, \citet{gigli2012second} and \citet{lott2006some}, and later expanded on by \citet{clancy2021interpolating}.  

\subsection{Optimal Transport} Define $\mathcal{P}_{2}(\man)$ to be the set of probability measures with finite second moment over a complete, connected and $C^{\infty}$ Riemannian manifold $\man$ with metric tensor $g$. For the sake of generality and applicability to concurrent work, we will take $\man$ to be a Riemannian manifold satisfying the properties described above, but all of our algorithmic instantiations and experiments will consider $\man = \mathbb{R}^d$. The $2$-Wasserstein distance between two probability measures $\mu, \nu \in \mathcal{P}_{2}(\man)$ is defined by
\begin{equation}
    W_2(\mu, \nu) = \inf_{\gamma \in \Gamma_{\mu, \nu}}\left(\int_{\man \times \man} d_M(x,y)^2\,\gamma(dx, dy)\right)^{1/2} \label{eq: 2 wasserstein dist}
\end{equation}
where $\Gamma_{\mu, \nu}$ denotes the set of couplings of $\mu$ and $\nu$. The $2$-Wasserstein distance (which we will henceforth refer to as the Wasserstein distance) is indeed a metric, which renders $(\mathcal{P}_{2}(\man), W_2)$ a metric space. A key result is that of Brenier, who showed that for $\man = \mathbb{R}^d$, the optimal coupling takes the form $(X, \nabla \varphi(X)), X \sim \mu$ for some convex $\varphi$ when $\mu$ is absolutely continuous with respect to the Lebesgue measure.

\begin{restatable}[Brenier]{thm}{} \label{brenier}
    Let $\mu, \nu \in \mathcal{P}_2(\mathbb{R}^d)$ be probability measures such that $\mu$ has a density, and let $X \sim \mu$. If $\gamma^*$ is optimal for \cref{eq: 2 wasserstein dist} with $\man = \mathbb{R}^d$, then there exists a convex function $\varphi:\mathbb{R}^d \rightarrow \mathbb{R}$ such that $(X, \nabla\varphi(X))\sim \gamma^*$. 
\end{restatable}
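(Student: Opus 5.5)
The plan is the classical route through Kantorovich duality and cyclical monotonicity. The key observation is that on $\mathbb{R}^d$ we have $|x-y|^2 = |x|^2 + |y|^2 - 2\langle x, y\rangle$. Since $\mu,\nu\in\mathcal{P}_2(\mathbb{R}^d)$, the terms $\int |x|^2\,d\mu$ and $\int |y|^2\,d\nu$ are finite constants independent of the coupling. Hence $\gamma^*$ is optimal for \cref{eq: 2 wasserstein dist} if and only if it maximizes $\int \langle x,y\rangle\,\gamma(dx,dy)$ over $\Gamma_{\mu,\nu}$.

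\textbf{Step 1: structure of the support.} I first show that the support of $\gamma^*$ is cyclically monotone. That is, for any $(x_1,y_1),\dots,(x_n,y_n)\in\operatorname{supp}\gamma^*$ and any permutation $\sigma$,
\[
\sum_i \langle x_i,y_i\rangle \ge \sum_i \langle x_i, y_{\sigma(i)}\rangle .
\]
The argument is by contradiction. If this inequality failed at some points, it would fail on small neighbourhoods of them, by continuity. One could then move a little mass of $\gamma^*$ from products of those neighbourhoods to the permuted products. This preserves both marginals and strictly increases $\int\langle x,y\rangle\,d\gamma$, contradicting optimality. The only care needed is to choose equal small masses on the neighbourhoods; this is the standard construction.

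\textbf{Step 2: a convex potential.} By Rockafellar's theorem, a cyclically monotone set is contained in the subdifferential of a proper lower semicontinuous convex function $\varphi$. Concretely, one can take
\[
\varphi(x)=\sup\Big\{\langle x-x_n,y_n\rangle+\dots+\langle x_1-x_0,y_0\rangle\Big\},
\]
with the supremum over finite chains in the support starting from a fixed point $(x_0,y_0)$. Then $\operatorname{supp}\gamma^*\subset\partial\varphi$. Because $\mu$ has finite second moment, $\varphi$ is finite on the interior of the convex hull of $\operatorname{supp}\mu$, and this interior carries full $\mu$-mass. Outside it, one can extend $\varphi$ to a convex function with values in $\mathbb{R}\cup\{+\infty\}$. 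To match the statement's $\varphi:\mathbb{R}^d\to\mathbb{R}$, I would note that only $\mu$-a.e. behaviour matters. If needed, one can instead pass to the Kantorovich dual potential, which is finite everywhere when the dual problem is set up on $\mathbb{R}^d$.

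\textbf{Step 3: differentiability and the map.} This is where the density of $\mu$ is used, and I expect it to be the main obstacle. A convex function is locally Lipschitz on the interior of its domain, so by Rademacher's theorem it is differentiable Lebesgue-a.e. there. The boundary of a convex set is Lebesgue-null, so it has $\mu$-measure zero. Since $\mu\ll$ Lebesgue, $\varphi$ is therefore differentiable $\mu$-a.e. At a point of differentiability, $\partial\varphi(x)=\{\nabla\varphi(x)\}$. So for $\gamma^*$-a.e. $(x,y)$ we get $y=\nabla\varphi(x)$, which means $\gamma^*=(\mathrm{id},\nabla\varphi)_\#\mu$. Equivalently, $(X,\nabla\varphi(X))\sim\gamma^*$ when $X\sim\mu$.

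The delicate points are two. First, checking that $\mu$ gives full mass to the interior of the domain of $\varphi$; this follows since $\operatorname{supp}\mu$ lies in the closure of that domain and the boundary of a convex set is Lebesgue-null. Second, the measurability of $\nabla\varphi$, which is standard.
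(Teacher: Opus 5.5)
The paper does not prove this statement; it quotes it as classical background, so your argument has to stand on its own. Your route is the standard proof: reduce to maximizing $\int\langle x,y\rangle\,d\gamma$, show $\operatorname{supp}\gamma^*$ is cyclically monotone, build Rockafellar's potential, and use a.e. differentiability of convex functions. Steps 1 and 3 are sound.

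\textbf{The gap.} The problem is the end of Step 2, where you try to get a finite-valued $\varphi:\mathbb{R}^d\to\mathbb{R}$. Your first remedy, that only $\mu$-a.e. behaviour matters, does not work. A convex function finite on all of $\mathbb{R}^d$ is locally Lipschitz, so its gradient is locally bounded. That is a constraint on the support of $\mu$ itself, not only off it. Your second remedy, that the Kantorovich dual potential is finite everywhere, is false. In fact no argument can give the literal conclusion.

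\textbf{Counterexample.} Take $d=1$, $\mu$ uniform on $[0,1]$ and $\nu=\mathrm{Exp}(1)$, both in $\mathcal{P}_2(\mathbb{R})$. The optimal plan is $(\mathrm{id},T)_\#\mu$ with $T(x)=-\log(1-x)$, so any admissible $\varphi$ must satisfy $\varphi'=T$ $\mu$-a.e. on $(0,1)$. A convex function finite on all of $\mathbb{R}$ has $\varphi'\le\varphi'_-(1)<\infty$ on $(0,1)$, whereas $T$ is unbounded there. Indeed, every convex $\varphi$ that works must equal $+\infty$ on $(1,\infty)$.

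\textbf{What to conclude instead.} Brenier's theorem actually asserts a proper lower semicontinuous convex $\varphi:\mathbb{R}^d\to\mathbb{R}\cup\{+\infty\}$ that is finite on a convex set whose interior carries full $\mu$-mass. Your Steps 2--3 already deliver exactly this. You should stop there and read the statement's ``$\varphi:\mathbb{R}^d\to\mathbb{R}$'' in that sense. The literal version does hold under extra hypotheses, for instance if $\operatorname{supp}\nu\subset B_R$. Then your Rockafellar potential is a supremum of affine functions with slopes in $B_R$, hence $R$-Lipschitz and finite everywhere.

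\textbf{A smaller point.} Finiteness of $\varphi$ on $\operatorname{int}\operatorname{conv}(\operatorname{supp}\mu)$ has nothing to do with $\mu$ having a finite second moment. It comes from cyclical monotonicity. Closing a chain at a point $(x,y)\in\operatorname{supp}\gamma^*$ shows that every chain value at $x$ is at most $\langle x-x_0,y\rangle$. So $\varphi<\infty$ on $\operatorname{proj}_1(\operatorname{supp}\gamma^*)$, which has full $\mu$-measure, and convexity extends this to its convex hull. Because $\mu$ has a density, that hull cannot lie in a hyperplane. Its boundary is Lebesgue-null, so $\mu(\operatorname{int}\operatorname{dom}\varphi)=1$. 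The second moments enter only in your initial reduction, where they make $\int|x|^2\,d\mu$ and $\int|y|^2\,d\nu$ finite constants.
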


This theorem guarantees that if the source measure has a density, then the optimal transport coupling can be written as the source measure $\mu$ and $\nabla \varphi_\#\mu$, the pushforward of the source under some convex function -- note that the pushforward of a measure $\mu$ under a map $T$ is simply the measure $(T_\#\mu) (B) = \mu(T^{-1}(B))$ for all measurable $B$. The function $\nabla\varphi$ from \Cref{brenier} is often referred to as the \textit{Brenier map}. This key theorem of Brenier will be central to our construction of the Riemannian structure on the space of probability measures. We note that Brenier's theorem was later generalized to Riemannian manifolds, as described below. The result requires generalizing the notion of convexity and concavity to non-Euclidean settings. In particular, given a function $\psi: M \rightarrow \mathbb{R} \cup \{\pm \infty\}$ its \textit{infimal convolution} $\psi^c$ with a cost function $c$ is defined by 
\[\psi^c(y) = \inf_{x \in M}\left\{c(x,y) - \psi(x)\right\}.\]
We say $\psi$ is $c$-concave if and only if $\psi^{cc} = (\psi^c)^c = \psi.$

\begin{restatable}[Brenier-McCann, \citet{mccann2001polar}]{thm}{} \label{brenier-mcann}
    Let $(M, g)$ be a complete Riemannian manifold and let $\mu, \nu \in \mathcal{P}_2(\man)$ with $\mu \ll \text{vol}_g$. Then there exists a $c$-concave function $\varphi: \man \rightarrow \mathbb{R}$ with $c(x,y) = d_M(x,y)^2$ such that the optimal plan (in the sense of \Cref{eq: 2 wasserstein dist}) is induced by a $\mu$-a.e. unique map
    \[T(x) = \exp_x\left(-\nabla\varphi(x)\right), \quad \gamma^* = (\text{id}, T)_\# \mu\]
    where $\nabla$ is the Riemannian gradient and $\exp_p(v)$ is the Riemannian exponential map.
\end{restatable}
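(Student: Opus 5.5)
The plan follows McCann's route: Kantorovich duality, then differentiability of $c$-concave potentials, then identification of the transport map through the first-order condition. First, I establish existence of an optimal plan $\gamma^*$ for \cref{eq: 2 wasserstein dist}. Since $\mu,\nu\in\mathcal{P}_2(\man)$, the set $\Gamma_{\mu,\nu}$ is tight. The cost $c=d_M^2$ is continuous and nonnegative, so the objective is weakly lower semicontinuous, and a minimizer exists. Next I invoke Kantorovich duality for the cost $c$. This gives a $c$-concave $\varphi$ with $\varphi^c$ its transform such that $\varphi(x)+\varphi^c(y)\le c(x,y)$ everywhere, with equality on $\operatorname{supp}\gamma^*$. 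Equivalently, the support of $\gamma^*$ lies in the $c$-superdifferential of $\varphi$. Finite second moments make the potentials integrable, so duality applies. I would normalize $c=\tfrac12 d_M^2$ so that the final formula reads $\exp_x(-\nabla\varphi(x))$; with $c=d_M^2$ the map is the same up to rescaling $\varphi$.

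The key regularity step is to show that $\varphi$ is differentiable $\mu$-a.e. Since $\varphi$ is an infimum of the functions $x\mapsto c(x,y)-\varphi^c(y)$, I localize. On a compact ball $B$, the relevant $y$ can be restricted to a bounded set. The functions $d_M(\cdot,y)^2$ are uniformly semiconcave on $B$ for $y$ in a bounded set, since they are locally semiconcave with constants controlled by sectional curvature bounds on compacts. An infimum of uniformly semiconcave functions is semiconcave, hence locally Lipschitz. By Rademacher's theorem, $\varphi$ is then differentiable $\mathrm{vol}_g$-a.e., so $\mu$-a.e. because $\mu\ll\mathrm{vol}_g$. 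I expect this step to be the main obstacle. The hard part is the localization: I must show that minimizing $y$ stay bounded when $x$ ranges over a compact set. I would prove this by showing $\varphi$ is finite on a set of full $\mu$-measure and using the growth of $d_M^2$ to confine near-minimizers. A secondary difficulty is that $d_M^2$ fails to be smooth at the cut locus; semiconcavity is exactly the property that survives there.

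Finally I identify the map. Fix $x$ where $\varphi$ is differentiable and $(x,y)\in\operatorname{supp}\gamma^*$. Then $x'\mapsto c(x',y)-\varphi(x')$ attains its minimum at $x'=x$. Because $\tfrac12 d_M(\cdot,y)^2$ is semiconcave and $\varphi$ is differentiable at $x$, the superdifferential of $\tfrac12 d_M(\cdot,y)^2$ at $x$ contains $\nabla\varphi(x)$. Superdifferentials of semiconcave functions are nonempty, so this forces $\tfrac12 d_M(\cdot,y)^2$ to be differentiable at $x$ with gradient $\nabla\varphi(x)$. This rules out $y$ lying in the cut locus of $x$, where the superdifferential has several elements. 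The gradient of $\tfrac12 d_M(\cdot,y)^2$ at such a point is $-\exp_x^{-1}(y)$, so $y=\exp_x(-\nabla\varphi(x))=:T(x)$. Hence $\gamma^*$ is concentrated on the graph of $T$, which gives $\gamma^*=(\mathrm{id},T)_\#\mu$. For uniqueness, suppose $\gamma_1,\gamma_2$ are both optimal. Then $\tfrac12(\gamma_1+\gamma_2)$ is also optimal, so by the above it is induced by a map. This is only possible if $\gamma_1=\gamma_2$, so $T$ is $\mu$-a.e. unique.
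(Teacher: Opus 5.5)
\textbf{Verdict: genuine gap.} The paper gives no proof of this theorem; it cites \citet{mccann2001polar}. McCann's argument is essentially your outline: Kantorovich duality, a.e.\ differentiability of the $c$-concave potential, the first-order condition, and uniqueness by averaging two optimal plans. But McCann assumes $M$ compact. There your localization step is vacuous, because $\varphi$ is an infimum of the functions $\tfrac12 d_M(\cdot,y)^2-\varphi^c(y)$, which are uniformly Lipschitz and uniformly semiconcave in $y$. For complete noncompact $M$, which the statement asserts, the step you flag as the main obstacle is a genuine gap, and your proposed mechanism cannot close it.

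\emph{The intermediate claim is false.} Near-minimizing $y$ need not stay bounded as $x$ ranges over a compact set, even inside $\{\varphi>-\infty\}$. On $\mathbb{R}$ with $c=\tfrac12|x-y|^2$, take $\mu$ uniform on $[0,1]$ and $\nu=N(0,1)$, so $T=\Phi^{-1}$.
\begin{itemize}
\item Any $c$-concave $\varphi$ inducing $T$ equals $-\infty$ off $[0,1]$. So incidentally $\varphi$ cannot be real-valued on all of $M$ in general.
\item On the compact set $[0,\tfrac12]\subset\{\varphi>-\infty\}$, the minimizers $y=\Phi^{-1}(x)$ run off to $-\infty$, and $\varphi'(x)=x-\Phi^{-1}(x)$ blows up. So $\varphi$ is not Lipschitz there.
\end{itemize}

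\emph{Growth of $d_M^2$ cannot confine minimizers.} A finite value $\varphi(z)$ only gives $c(x,y)-\varphi^c(y)\ge c(x,y)-c(z,y)+\varphi(z)$. The difference $c(x,y)-c(z,y)$ is unbounded below in $y$; in $\mathbb{R}^d$ it is affine in $y$.
\begin{itemize}
\item Coercivity needs finiteness points surrounding $x$, i.e.\ $x$ in the interior of $\{\varphi>-\infty\}$.
\item You must then also show that the rest of $\{\varphi>-\infty\}$ is $\mathrm{vol}_g$-null. Finiteness $\mu$-a.e.\ does not give this.
\item On a general complete manifold both facts are nontrivial, because the semiconcavity constant of $d_M(\cdot,y)^2$ near $x$ can grow with $d_M(x,y)$ (e.g.\ in negative curvature). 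This is the content of Figalli and Gigli (2011) on local semiconvexity of Kantorovich potentials in the ``region of interest''.
\end{itemize}

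\textbf{Repair for existence and uniqueness of the map: truncate.} Fix $o\in M$ and set $\varphi_n(x)=\inf_{y\in\bar B_n(o)}[c(x,y)-\varphi^c(y)]$.
\begin{itemize}
\item $\varphi^c$ is upper semicontinuous, hence bounded above on the compact ball. So $\varphi_n$ is real-valued for large $n$ and locally semiconcave on all of $M$, hence differentiable off a $\mathrm{vol}_g$-null set $N_n$.
\item For $(x,y)\in\operatorname{supp}\gamma^*$ with $y\in\bar B_n(o)$, one has $\varphi_n(x)=\varphi(x)$, and $c(\cdot,y)-\varphi_n$ is minimized at $x$.
\item Your first-order argument therefore runs with $\varphi_n$ at every $x\notin\bigcup_n N_n$. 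It shows the fibres of $\operatorname{supp}\gamma^*$ are singletons, and your averaging argument then gives uniqueness.
\end{itemize}
What this delivers is $T(x)=\exp_x(-\nabla\varphi_n(x))$ for $n$ large, i.e.\ the \emph{approximate} gradient of $\varphi$, as in Fathi--Figalli and \citet{villani2009optimal}, Ch.~10. Getting the classical gradient of $\varphi$ itself needs the Figalli--Gigli result, or extra hypotheses such as compact supports or nonnegative curvature.

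\textbf{Two minor slips.}
\begin{itemize}
\item Minimality of $c(\cdot,y)-\varphi$ at $x$ puts $\nabla\varphi(x)$ in the \emph{sub}differential of $\tfrac12 d_M(\cdot,y)^2$, not the superdifferential. That, combined with semiconcavity (nonempty superdifferential), is what forces differentiability.
\item Differentiability of $\tfrac12 d_M(\cdot,y)^2$ at $x$ means a unique minimizing geodesic, which does not exclude conjugate cut points. This is harmless: the first-variation formula still yields $y=\exp_x(-\nabla\varphi(x))$.
\end{itemize}
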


\subsection{Dynamic Formulation}

The optimal transport problem described above -- one which looks for an optimal coupling of source and target measures -- is often referred to as the \textit{static} formulation of optimal transport. An equivalent perspective comes from fluid mechanics, which arrives at the same metric structure through a different formulation. In particular, let $(v_t)_{t \geq 0}$ be a time dependent family of vector fields over $\man$, and consider the ODE $\dot X_t = v_t(X_t)$. Let $\mu_t$ denote the law of $X_t$, where $X_0 \sim \mu_0$ and $X_t$ evolves according to the ODE described previously. Then, the dynamics of $\mu_t$ obey the so-called \textit{continuity equation},
\begin{equation}
    \partial_t\mu_t + \operatorname{div}_g (\mu_t v_t) = 0 \label{eq: balanced continuity eqn}
\end{equation}
in the weak (or distributional) sense, where $\operatorname{div}_g$ is the Riemannian divergence. 
\begin{restatable}[Weak solutions to \Cref{eq: balanced continuity eqn}, \citet{santambrogio2015optimal}]{defn}{} \label{def: weak solutions}
    We say that a family of pairs $(\mu_t, v_t)$ solves the continuity equation on $(0, T)$ in the distributional sense if for any bounded and Lipschitz test function $\varphi \in C^1_c((0, T)\times \man)$ we have
    \[\int_0^T\int_{\man} \partial_t\varphi \, d\mu_tdt + \int_0^T\int_{\man} \langle\nabla\varphi,  v_t\rangle_g \, d\mu_tdt = 0.\]
\end{restatable}
It turns out, one can identify optimal transport maps with vector fields satisfying the continuity equation that are optimal in a certain sense. This gives rise to the \textit{dynamic} formulation of optimal transport.
\begin{restatable}[Benamou-Brenier, \citet{chewi2024statisticaloptimaltransport, ambrosio2012user}]{thm}{}
    Let $\mu_0, \mu_1 \in \mathcal{P}_{2}(\man)$ be absolutely continuous with respect to $\text{vol}_g$. Then 
    \begin{equation*}
        W_2^2(\mu_0, \mu_1) = \inf \left\{\int_{0}^1\|v_t\|^2_{L^2(\mu_t)} dt \, \Bigg| \, (\mu_t, v_t)_{t \in [0,1]} \text{ satisfies \cref{def: weak solutions}}\right\}.
    \end{equation*}
    Moreover, if $M = \mathbb{R}^d$ then the optimal curve $(\mu_t)_{t\geq 0}$ is unique and is described by $X_t \sim \mu_t$, where $X_t = (1-t)X_0 + tX_1$ and $(X_0, X_1) \sim \gamma^* \in \Gamma_{\mu_0, \mu_1}$ with $\gamma^*$ being an optimal coupling.
\end{restatable}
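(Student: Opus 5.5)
The plan is to prove the two inequalities $W_2^2 \le \inf$ and $W_2^2 \ge \inf$ separately, and then to obtain the Euclidean uniqueness statement from uniqueness of the optimal coupling together with strict convexity of the kinetic energy.

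\textbf{Lower bound ($W_2^2 \le \inf$).} Take any admissible pair $(\mu_t,v_t)$ with finite action $\int_0^1\|v_t\|^2_{L^2(\mu_t)}dt$.
\begin{enumerate}[label=(\roman*)]
\item First regularize $v_t$ by convolution in the Euclidean case, or by a heat-kernel mollification on $\man$. The action does not increase under this, by Jensen's inequality applied to the convex function $(\rho,m)\mapsto |m|^2/\rho$ of the pair $(\mu_t,\mu_t v_t)$.
\item For the regularized field, the flow $X_t$ of $\dot X_t=v_t(X_t)$ is well defined. By uniqueness of solutions to the continuity equation for Lipschitz fields, $(X_t)_\#\mu_0=\mu_t$.
\item Hence $(X_0,X_1)$ is a coupling of $\mu_0$ and $\mu_1$. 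Since $d_M(X_0,X_1)\le \int_0^1|\dot X_t|\,dt$, Cauchy--Schwarz gives
\[
d_M(X_0,X_1)^2\le \int_0^1|v_t(X_t)|_g^2\,dt .
\]
\item Taking expectations yields $W_2^2(\mu_0,\mu_1)\le \int_0^1\|v_t\|^2_{L^2(\mu_t)}dt$.
\item Finally, remove the regularization by lower semicontinuity of the action under weak convergence.
\end{enumerate}
An alternative is to invoke the superposition principle, which represents $\mu_t$ as the law of a random absolutely continuous curve solving the ODE. This avoids mollification, and I would probably use it to keep things short.

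\textbf{Upper bound ($W_2^2 \ge \inf$).} Here we construct a competitor attaining the value. Use \Cref{brenier-mcann}, since $\mu_0\ll\mathrm{vol}_g$: the optimal plan is $(\mathrm{id},T)_\#\mu_0$ with $T(x)=\exp_x(-\nabla\varphi(x))$. Set
\[
T_t(x)=\exp_x(-t\nabla\varphi(x)),\qquad \mu_t=(T_t)_\#\mu_0 .
\]
\begin{enumerate}[label=(\roman*)]
\item Each curve $t\mapsto T_t(x)$ is a constant-speed geodesic of length $d_M(x,T(x))$.
\item Define $v_t$ on the support of $\mu_t$ by $v_t(T_t(x))=\frac{d}{dt}T_t(x)$.
\item For this to be well defined we need $T_t$ to be injective $\mu_0$-a.e. for $t<1$. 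This is the non-crossing property of optimal-transport geodesics: cyclical monotonicity of the support, or McCann's interpolation argument on manifolds.
\item Given well-definedness, testing against $\varphi\in C^1_c$ and differentiating $\int\varphi(t,T_t(x))\,d\mu_0$ shows that $(\mu_t,v_t)$ satisfies \cref{def: weak solutions}.
\item The action equals $\int d_M(x,T(x))^2\,d\mu_0=W_2^2$.
\end{enumerate}
In $\mathbb{R}^d$ this step is explicit: $T_t=(1-t)\mathrm{id}+t\nabla\varphi$ is the gradient of a strictly convex function for $t<1$, so it is injective.

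\textbf{Uniqueness in $\mathbb{R}^d$.} Suppose a curve $(\mu_t,v_t)$ attains the infimum. Then equality must hold throughout the lower-bound chain.
\begin{enumerate}[label=(\roman*)]
\item The induced coupling of $(X_0,X_1)$ must be optimal. By \Cref{brenier} it is then the unique plan $\gamma^*$.
\item Equality in Cauchy--Schwarz forces almost every trajectory to have constant speed.
\item Equality in $|X_1-X_0|\le\int_0^1|\dot X_t|\,dt$ forces almost every trajectory to be a straight segment.
\item Together these give $X_t=(1-t)X_0+tX_1$, and hence $\mu_t$ is as claimed.
\end{enumerate}

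\textbf{Main obstacle.} I expect the hardest part to be the rigorous handling of low-regularity $v_t$ in the lower bound, for which superposition is the cleanest tool. The other delicate point is injectivity of $T_t$ on a general manifold, where cut-locus issues arise. For that I would cite McCann's displacement interpolation results rather than reprove them, since the paper lists the theorem with references to \citet{ambrosio2012user} and \citet{chewi2024statisticaloptimaltransport}.
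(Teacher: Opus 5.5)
The paper does not prove this statement. It is quoted as background from \citet{chewi2024statisticaloptimaltransport} and \citet{ambrosio2012user}, so there is no in-paper argument to compare with. Your outline is the standard proof from those sources and is correct in substance, but several points need tightening.

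First, \cref{def: weak solutions} only tests against functions compactly supported in $(0,1)\times\man$, so it does not see the endpoints. The family equal to $\mu_0$ for $t<1$ and to $\mu_1$ at $t=1$, with $v_t\equiv 0$, is a weak solution with zero action. The infimum must therefore be taken over narrowly continuous curves with the prescribed endpoint values. Your lower bound uses this silently, so state it.

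Second, superposition is not an optional shortcut. Your uniqueness step reasons about almost every trajectory of the \emph{unregularized} minimizer. For non-Lipschitz $v_t$ the right object is a probability measure $Q$ on absolutely continuous curves with $\dot\gamma_t=v_t(\gamma_t)$ and $(e_t)_\#Q=\mu_t$. The mollification route only yields $W_2^2(\mu_0*k_\epsilon,\mu_1*k_\epsilon)\le\int_0^1\|v_t\|^2_{L^2(\mu_t)}\,dt$ for a different curve with different endpoints, and no equality case can be extracted from that. If you keep mollification for the lower bound, note three things:
\begin{itemize}
\item Smooth the pair $(\mu_t,\mu_t v_t)$ and set $v^\epsilon_t=(\mu_t v_t)*k_\epsilon/(\mu_t*k_\epsilon)$; convolving $v_t$ alone breaks the continuity equation.
\item The closing step is lower semicontinuity of $W_2$ under narrow convergence, not lower semicontinuity of the action.
\item On a curved $\man$ there is no curvature-free Jensen bound for heat-kernel smoothing of a vector-valued measure, so superposition is the workable route there.
\end{itemize}

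Third, the injectivity of $T_t$ that you flag as delicate can be bypassed, since only $\inf\le W_2^2$ is needed. Let $E_t=(T_t)_\#(\dot T_t\,\mu_0)$. It is absolutely continuous with respect to $\mu_t$, with density $v_t(y)=\mathbb{E}[\dot T_t(X)\mid T_t(X)=y]$; this averaging is legitimate because all the averaged vectors lie in $T_y\man$. Your differentiation of $\int\varphi(t,T_t(x))\,d\mu_0(x)$ then shows $(\mu_t,v_t)$ is admissible. Jensen gives $\|v_t\|^2_{L^2(\mu_t)}\le\int d_M(x,T(x))^2\,d\mu_0(x)=W_2^2(\mu_0,\mu_1)$. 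Non-crossing is needed only to identify $v_t$ with $\dot T_t\circ T_t^{-1}$.

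Finally, \Cref{brenier} as stated does not assert that the optimal plan is unique. You need the usual argument: the average of two optimal plans is optimal, hence induced by a map, which forces the two plans to coincide. Likewise, the ``strict convexity of the kinetic energy'' in your opening sentence is not what drives uniqueness. The functional $(\rho,m)\mapsto|m|^2/\rho$ is positively $1$-homogeneous and so not strictly convex. What does the work, as in your actual argument, are the equality cases in Cauchy--Schwarz and in $|X_1-X_0|\le\int_0^1|\dot X_t|\,dt$, together with uniqueness of the plan.
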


\subsection{Riemannian Structure of the Wasserstein Space}

\noindent \textbf{The Tangent Space.} The dynamic formulation of optimal transport will be a key ingredient in establishing the Riemannian structure of $(\mathcal{P}_2(\man), W_2)$. In particular, we will construct the tangent space from the space of solutions to the continuity equation, as they represent the vector space of infinitesimal perturbations to a measure. But seeing as there are an infinite number of solutions to the continuity equation, (as adding a divergence free field does not change the marginal behavior of $\mu_t$) we need to establish a selection principle. To do so, we will define a notion of a \say{derivative} in a general metric space.

\begin{restatable}[Metric Derivative]{defn}{}
    Let $(\mathcal{X}, d)$ be a metric space and $(x_t)_{t \geq 0}$ be a curve in $\mathcal{X}$. The metric derivative of the curve at time $t$ is given by
    \begin{equation*}
        |\dot x|(t) \triangleq \lim_{s\rightarrow t}\frac{d(x_s, x_t)}{|s - t|}
    \end{equation*}
    provided that the limit exists. 
\end{restatable}

Now, for a pair of probability measures $\mu, \nu \in \mathcal{P}_{2}(\man)$ where $\mu$ admits a density, we will write $T_{\mu \rightarrow \nu}$ for the Brenier (or Brenier-McCann) map from $\mu$ to $\nu$ and we will write $|\dot \mu|$ for the metric derivative of a curve in $\mathcal{P}_{2}(\man)$ with respect to the Wasserstein metric.

\begin{restatable}[\citet{ambrosio2005gradient}]{thm}{}
    Let $(M, g)$ be a smooth and complete Riemannian manifold without boundary and let $(\mu_t)_{t\geq 0}$ be an absolutely continuous curve, i.e. $\mu_t$ admits a Riemannian density and $|\dot \mu_t|$ exists for all $t \geq 0$. Then for every family of vector fields $(v_t)_{t\geq 0}$ for which \cref{eq: balanced continuity eqn} holds, it holds that $|\dot \mu_t|(t) \leq \|v_t\|_{L^2(\mu_t)}$ for all $t \geq 0$. Moreover, there exists a unique family $(v_t)_{t\geq 0}$ such that \cref{eq: balanced continuity eqn} holds and for which $|\dot \mu_t|(t) = \|v_t\|_{L^2(\mu_t)}$ for every $t \geq 0$. This family is such that
    \begin{equation*}
        v_t = \lim_{h \rightarrow0^+}\left(h^{-1}\log_x(T_{\mu_t\rightarrow \mu_{t+h}}(x))\right) \qquad \text{ in $L^2(\mu_t)$}
    \end{equation*}
    and
    \begin{equation*}
        v_t = \min \left\{\|\tilde{v}_t\|_{L^2(\mu_t)}^2 \, \Big|\, \partial_t\mu_t + \operatorname{div}_g(\tilde v_t\mu_t) = 0\right\}
    \end{equation*}
    where $\log$ is the Riemannian logarithmic map. 
\end{restatable}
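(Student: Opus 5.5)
The plan follows the metric-space treatment of \citet{ambrosio2005gradient}, adapted to the Riemannian base manifold $M$. There are three steps: an inequality for arbitrary velocity fields, an existence construction, and a minimality-and-uniqueness argument.

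\textbf{Step 1: the inequality $|\dot\mu|(t)\le \|v_t\|_{L^2(\mu_t)}$.}
Take any family $(v_t)$ solving \cref{eq: balanced continuity eqn}. Use the superposition principle to represent $\mu_t$ as the time marginals of a measure on curves that solve $\dot X_s = v_s(X_s)$.
\begin{itemize}
\item If the flow is well defined, one can instead push $\mu_t$ forward along it.
\item Either way, $(X_t, X_{t+h})$ is a coupling of $\mu_t$ and $\mu_{t+h}$.
\item Bounding the Riemannian distance by the length of the trajectory gives
\[
W_2^2(\mu_t,\mu_{t+h}) \le \mathbb{E}\, d_M(X_t,X_{t+h})^2 \le h\int_t^{t+h}\|v_s\|^2_{L^2(\mu_s)}\,ds .
\]
\end{itemize}
Dividing by $h^2$ and letting $h\to 0$ at Lebesgue points yields the inequality.

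\textbf{Step 2: existence of a field attaining equality.}
Absolute continuity of the curve gives $|\dot\mu|\in L^2_{loc}$. For any test function $\varphi\in C_c^\infty(M)$, the map $t\mapsto\int\varphi\,d\mu_t$ is absolutely continuous. Its derivative is controlled by $\|\nabla\varphi\|_{L^2(\mu_t)}\,|\dot\mu|(t)$; this follows by coupling $\mu_t$ and $\mu_{t+h}$ optimally and Taylor expanding $\varphi$ along geodesics. Consequently the linear functional $\nabla\varphi\mapsto \frac{d}{dt}\int\varphi\,d\mu_t$ is bounded on
\[
\mathrm{Tan}_{\mu_t} := \overline{\{\nabla\varphi\}}^{L^2(\mu_t)} .
\]
Riesz representation then gives a unique $v_t\in\mathrm{Tan}_{\mu_t}$ that solves the continuity equation weakly and satisfies $\|v_t\|\le|\dot\mu|(t)$. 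Combined with Step 1, this is an equality.

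\textbf{Step 3: minimality, uniqueness, and the logarithmic formula.}
\begin{itemize}
\item Any other solution $\tilde v_t$ differs from $v_t$ by a field $w$ with $\operatorname{div}_g(\mu_t w)=0$. Such $w$ are $L^2(\mu_t)$-orthogonal to all gradients, hence to $\mathrm{Tan}_{\mu_t}$.
\item Pythagoras then gives $\|\tilde v_t\|^2=\|v_t\|^2+\|w\|^2$. This proves the minimality characterization, and uniqueness follows from strict convexity of the norm.
\item For the logarithmic formula, write $T_h=T_{\mu_t\to\mu_{t+h}}=\exp(-\nabla\varphi_h)$ using \Cref{brenier-mcann}, so that $h^{-1}\log_x T_h(x)=-h^{-1}\nabla\varphi_h(x)$. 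These fields lie in $\mathrm{Tan}_{\mu_t}$ and have norms $W_2(\mu_t,\mu_{t+h})/h\to|\dot\mu|(t)$.
\item Testing against $\nabla\psi$ shows any weak limit solves the continuity equation at time $t$. A weak limit in $\mathrm{Tan}_{\mu_t}$ with norm at most $|\dot\mu|(t)$ must therefore be $v_t$. Convergence of norms upgrades weak convergence to strong convergence in $L^2(\mu_t)$.
\end{itemize}

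\textbf{Main obstacle.}
The hardest step is the last one on a genuine manifold. Three issues arise there:
\begin{itemize}
\item controlling the second-order Taylor remainder of $\varphi$ along $\exp_x(-s\nabla\varphi_h)$ uniformly in $h$, which needs curvature bounds on compact support of the test function;
\item justifying that $\log_x T_h(x)$ is well defined $\mu_t$-a.e., which uses McCann's result that $T_h(x)$ avoids the cut locus a.e.;
\item upgrading the pointwise a.e. statement ``for all $t$'' from the natural a.e.-$t$ conclusion. For this step, interpret the statement in the a.e. sense, as in \citet{ambrosio2005gradient}.
\end{itemize}
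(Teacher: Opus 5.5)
The paper gives no proof of this statement. It is imported from \citet{ambrosio2005gradient} (Theorem 8.3.1 and Propositions 8.4.5--8.4.6 there, proved in $\mathbb{R}^d$ and Hilbert spaces), so the relevant comparison is with that source. Your outline is the standard argument transplanted to $M$ and is correct in substance. You are right to read ``for all $t$'' as ``for a.e.\ $t$''; the second display should also read $v_t=\operatorname{argmin}$ rather than $\min$.

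Your main departure from AGS is in Step 2, where you apply Riesz at each fixed $t$. AGS instead apply it once, on the space--time measure $\int\mu_t\,dt$, to the functional $\nabla_x\varphi\mapsto-\iint\partial_t\varphi\,d\mu_t\,dt$, and then localize the norm bound in time. Their route gives a jointly Borel field $(t,x)\mapsto v_t(x)$ and a single test class for free. Yours is more transparent, but as written it skips exactly these two points. The derivative $\frac{d}{dt}\int\varphi\,d\mu_t$ exists only off a $\varphi$-dependent null set, so you need to:
\begin{itemize}
\item fix a countable family in $C_c^\infty(M)$, dense in $C^1$ with supports in fixed compacts;
\item define $\ell_t$ on that family for $t$ outside a single null set, and extend it using your bound;
\item recover the weak continuity equation for all $\varphi$ by approximation, using $\int\|v_t\|\,dt<\infty$;
\item argue separately that $v$ is measurable in $(t,x)$.
\end{itemize}
The same countable-family device is needed in Step 3, both to pass from the space--time identity for $\tilde v-v$ to $\operatorname{div}_g(\mu_t w_t)=0$ for a.e.\ $t$, and to identify weak limits of $h^{-1}\log T_h$.

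Two smaller corrections. First, you need absolute continuity in the metric sense, $W_2(\mu_s,\mu_t)\le\int_s^t m$ with $m\in L^1_{loc}$. This is stronger than the paper's gloss, and it gives $|\dot\mu|\in L^1_{loc}$, not $L^2_{loc}$. That is enough for the argument. In Step 1, Minkowski's integral inequality gives $W_2(\mu_t,\mu_{t+h})\le\int_t^{t+h}\|v_s\|_{L^2(\mu_s)}\,ds$ under mere $L^1$ integrability.

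Second, the obstacles you list are milder than you fear.
\begin{itemize}
\item For $\psi\in C_c^\infty(M)$ one has $\frac{d^2}{ds^2}\psi(\gamma(s))=\nabla^2\psi(\dot\gamma,\dot\gamma)$ along any geodesic, and $\nabla^2\psi$ is globally bounded. So the Taylor remainder is at most $\frac12\|\nabla^2\psi\|_\infty\, d(x,T_hx)^2$, which integrates to $O(W_2^2(\mu_t,\mu_{t+h}))=O(h^2)$ with no curvature hypothesis.
\item You do not need $h^{-1}\log T_h\in\mathrm{Tan}_{\mu_t}$, which is itself a nontrivial fact for $c$-concave potentials. Any weak limit $u$ satisfies $u-v_t\perp\mathrm{Tan}_{\mu_t}$ and $\|u\|\le\liminf\|h^{-1}\log T_h\|=|\dot\mu|(t)=\|v_t\|$. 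Pythagoras then forces $u=v_t$, and convergence of norms gives strong convergence in $L^2(\mu_t)$.
\end{itemize}
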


Note that result coupled with Brenier's theorem indicates that the velocity field that coincides (in an $L^2(\mu_t)$ sense) with the minimal norm solution and the metric derivative of the path is a limit of gradients. This key fact gives rise to the definition of the tangent space and the metric tensor for $(\mathcal{P}_{2}(\man), W_2)$ at a measure $\mu$. 

\begin{restatable}[Wasserstein Tangent Space, \citet{ambrosio2012user}]{defn}{}
    Let $\mu \in \mathcal{P}_2(\man)$. We define the tangent space to $\mathcal{P}_2(\man)$ at $\mu$ to be 
    \begin{equation}
        T_\mu \mathcal{P}_2(\man) = \overline{\left\{\nabla \varphi \, | \, \varphi \in C_c^{\infty}(\man)\right\}}^{L^2(\mu)}
    \end{equation}
    where $\overline{\{\cdot \}}^{L^2(\mu)}$ denotes the $L^2(\mu)$ closure and $C_c^{\infty}$ denotes the set of compactly supported and smooth maps. We also endow $T_\mu \mathcal{P}_2(\man)$ with the metric tensor
    \begin{equation}
        \langle \nabla \varphi_1, \nabla \varphi_2\rangle_{\mu} \triangleq \int_\man \langle\nabla\varphi_1, \nabla\varphi_2 \rangle_g \,d\mu 
    \end{equation}
    where $\langle \cdot, \cdot \rangle_g$ is the inner product defined by the metric tensor $g$.
\end{restatable}

\noindent \textbf{The Covariant Derivative.} 
On an abstract manifold $(\man, g)$ that isn't embedded in an ambient space, we have no obvious way to compare vectors in the tangent space at two points $p, q \in \man$, $p \neq q$. Therefore, we need a way of connecting the separate vector spaces $T_p\man$ and $T_q\man$. One can achieve this by defining a rule $\nabla$ for differentiating vector fields against each other on $\man$ in a way that preserves the structure of the metric $g$. Note that we will adopt the standard notation that a vector field $X$ over $\man$ is an operator on functions -- in particular, for some $f: \man \rightarrow \mathbb{R}$, the notation $X(f)$ denotes the derivative of $f$ in the direction described by $X$. 

\begin{restatable}[The Covariant Derivative, \citet{petersen2006riemannian}]{defn}{}
    For a manifold $(\man, g)$ we define the covariant derivative $\nabla$ to be a rule that assigns to each pair of vector fields $X$, $Y$ over $\man$ another vector field $\nabla_XY$ satisfying 
    \begin{enumerate}
        \item \textit{Linearity:} $\nabla_{aX+bZ}Y = a\nabla_XY + b\nabla_ZY.$
        \item \textit{Leibniz/derivation property:} for $f \in C^{\infty}(\man)$ we have $\nabla_X(fY) = X(f)Y + f\nabla_XY$, where $X(f) = g(X, \nabla f)$ is the derivative of $f$ in the direction $X$.
    \end{enumerate}
\end{restatable}

While this gives us a way to connect tangent spaces, the covariant derivative might distort the geometry induced by the metric $g$ -- i.e. it might not be \say{metric compatible}. A fundamental result of Riemannian geometry, however, is that for any Riemannian manifold $(\man, g)$ there exists a \textit{unique} connection that is torsion free and is \say{metric compatible}. 

\begin{restatable}[The Fundamental Theorem of Riemannian Geometry, \cite{petersen2006riemannian}]{thm}{}
    If $\man$ is a finite dimensional manifold endowed with a Riemannian metric $g$, then there exists a unique connection $\nabla$ called the \textit{Levi-Civita connection} that is
    \begin{enumerate}
        \item \textit{Torsion free:} $\nabla_XY - \nabla_YX = [X,Y]$, where $[X,Y](f) = X(Y(f)) -Y(X(f))$ is the Lie bracket of $X$ and $Y$. 
        \item \textit{Metric compatible:} $Xg(Y, Z) = g(\nabla_XY, Z) + g(Y, \nabla_XZ).$ 
    \end{enumerate}
\end{restatable}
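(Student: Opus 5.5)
The proof will follow the classical route: uniqueness comes from the Koszul formula, and existence comes from taking the Koszul formula as the definition and checking the axioms. Throughout, $X, Y, Z$ are smooth vector fields on $\man$ and we write $\langle\cdot,\cdot\rangle$ for $g$.

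\emph{Uniqueness.} Suppose $\nabla$ is torsion free and metric compatible. Metric compatibility gives
\begin{align*}
X\langle Y,Z\rangle &= \langle \nabla_XY,Z\rangle+\langle Y,\nabla_XZ\rangle,\\
Y\langle Z,X\rangle &= \langle \nabla_YZ,X\rangle+\langle Z,\nabla_YX\rangle,\\
Z\langle X,Y\rangle &= \langle \nabla_ZX,Y\rangle+\langle X,\nabla_ZY\rangle .
\end{align*}
Add the first two identities, subtract the third, and use torsion freeness in the form $\nabla_XZ-\nabla_ZX=[X,Z]$ (and cyclic versions) to remove every term except $\nabla_XY$. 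This yields the Koszul formula
\begin{align*}
2\langle \nabla_XY,Z\rangle ={}& X\langle Y,Z\rangle+Y\langle Z,X\rangle-Z\langle X,Y\rangle\\
&+\langle [X,Y],Z\rangle-\langle [Y,Z],X\rangle+\langle [Z,X],Y\rangle .
\end{align*}
The right-hand side involves only $g$ and Lie brackets, never $\nabla$. Since $g$ is nondegenerate, it therefore determines $\nabla_XY$ uniquely.

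\emph{Existence.} Call the right-hand side $K(X,Y,Z)$. The key step is to show that, for fixed $X$ and $Y$, the map $Z\mapsto K(X,Y,Z)$ is $C^\infty(\man)$-linear, i.e.\ that $K(X,Y,fZ)=fK(X,Y,Z)$. This uses the Leibniz rule for derivations and the bracket identity $[X,fZ]=f[X,Z]+X(f)Z$; the $X(f)$ and $Y(f)$ terms that appear cancel in pairs. Once tensoriality holds, $K(X,Y,\cdot)$ is a pointwise $1$-form, and nondegeneracy of $g$ (via the musical isomorphism) gives a unique smooth vector field $\nabla_XY$ with $2\langle\nabla_XY,Z\rangle=K(X,Y,Z)$ for all $Z$.

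It remains to verify the required properties by direct computation from $K$:
\begin{itemize}
\item $C^\infty$-linearity in $X$, which involves a similar cancellation;
\item the Leibniz rule in $Y$, where the extra term $2X(f)\langle Y,Z\rangle$ appears;
\item torsion freeness, since $K(X,Y,Z)-K(Y,X,Z)=2\langle [X,Y],Z\rangle$;
\item metric compatibility, since $K(X,Y,Z)+K(X,Z,Y)=2X\langle Y,Z\rangle$.
\end{itemize}

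\emph{Main obstacle.} The conceptual content is small; the risk is bookkeeping. The hardest step is the tensoriality computations, especially the $C^\infty$-linearity in $X$, where six terms generate extra derivative terms that must cancel exactly. I would organize these by expanding each bracket with the identity above and grouping terms by whether they contain $X(f)$, $Y(f)$ or $Z(f)$.

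One technical point remains: $K$ is defined globally, but the operator it produces must be local. Tensoriality in $Z$ makes $\nabla_XY$ well defined pointwise, and locality in $X$ and $Y$ follows by a standard bump-function argument.
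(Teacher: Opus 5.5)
Your proposal is correct and is essentially the standard Koszul-formula proof from the cited reference. The paper gives no proof of its own and simply cites Petersen, who packages the same six-term identity as $2g(\nabla_Y X, Z) = (L_X g)(Y,Z) + (d\theta_X)(Y,Z)$ with $\theta_X = g(X,\cdot)$; each cancellation you describe checks out, and the musical-isomorphism step is exactly where the finite-dimensionality hypothesis is used.
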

In the case of $(\mathcal{P}_2(\man, W_2)$ we can explicitly construct the covariant derivative, but one needs to manually verify that it is torsion free and metric compatible, as the space is an infinite-dimensional Riemannian Manifold.

\begin{restatable}[Wasserstein Covariant Derivative, \citet{clancy2021interpolating, gigli2012second}]{prop}{} \label{Wasserstein covariant deriv}
    Let $(\mu_t)_{t\geq 0}$ be a curve through $\mathcal{P}_2(\man)$ with tangent field $\nabla \varphi_t$ solving \cref{eq: balanced continuity eqn}, and therefore driving the dynamics of $\mu_t$. Also let $v_t$ be another vector field along $\mu_t$ and let $\Pi_{\mu_t}$ be the orthogonal projection onto $T_{\mu_t}\mathcal{P}_2(\man)$ in $L^2(\mu_t)$. Then the differential operator $\nabla_{(\nabla\varphi_t)}^{W_2}$ given by
    \begin{equation*}
        \nabla_{(\nabla\varphi_t)}^{W_2}v_t = \Pi_{\mu_t}\left(\partial_t v_t + \nabla^{\man}_{( \nabla \varphi_t)} v_t \right)
    \end{equation*}
    is a valid covariant derivative, is torsion free, and is metric compatible, where $\nabla^{\man}_{( \nabla \varphi_t)}$ is the Levi-Civita connection on $\man$. Moreover, when $\man = \mathbb{R}^d$, we have
    \begin{equation*}
        \nabla_{(\nabla\varphi_t)}^{W_2}v_t = \Pi_{\mu_t}\left(\partial_t v_t + \nabla v_t \cdot \nabla \varphi_t\right).
    \end{equation*}
\end{restatable}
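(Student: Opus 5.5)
The plan is to verify the two defining properties of a connection (linearity in the direction, Leibniz rule), then torsion-freeness and metric compatibility, working with vector fields along curves. Since the formula is pointwise (Lagrangian), I will use flow maps throughout. Let $X_t$ solve $\dot X_t = \nabla\varphi_t(X_t)$ with $X_0\sim\mu_0$. For a field $v_t$ along $\mu_t$, the quantity $\partial_t v_t + \nabla^{\man}_{\nabla\varphi_t} v_t$ evaluated at $X_t$ is exactly the covariant derivative $\frac{D}{dt}\big(v_t(X_t)\big)$ along the base curve $t\mapsto X_t(x)$ in $\man$. This identity reduces the Wasserstein statement to facts about the Levi-Civita connection on $\man$, followed by the orthogonal projection $\Pi_{\mu_t}$.

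\textbf{Connection axioms.} Linearity in $v_t$ is immediate, because $\partial_t$, $\nabla^{\man}$ and $\Pi_{\mu_t}$ are all linear. For the Leibniz rule along the curve, take $f(t)$ a scalar function of time, the natural scalars for fields along a curve. Then $\partial_t(f v_t) = \dot f v_t + f\partial_t v_t$. Since $v_t$ is already tangent, $\Pi_{\mu_t}(\dot f v_t) = \dot f v_t$, which gives the rule. Linearity in the direction field, understood as $\nabla\varphi_t\mapsto\nabla^{\man}_{\nabla\varphi_t}$, follows from $C^\infty$-linearity of $\nabla^{\man}$ in its lower slot.

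\textbf{Metric compatibility.} For tangent fields $u_t, v_t$ along $\mu_t$, I would write
\[
\langle u_t,v_t\rangle_{\mu_t}=\int \langle u_t(X_t(x)),v_t(X_t(x))\rangle_g\,d\mu_0(x).
\]
I then differentiate under the integral, justified by dominated convergence under smoothness and integrability assumptions. Metric compatibility of $\nabla^{\man}$ along each curve $X_t(x)$ gives
\[
\tfrac{d}{dt}\langle u_t,v_t\rangle_{\mu_t}=\int \langle \partial_t u_t+\nabla^{\man}_{\nabla\varphi_t}u_t, v_t\rangle_g\,d\mu_t+\int\langle u_t,\partial_t v_t+\nabla^{\man}_{\nabla\varphi_t}v_t\rangle_g\,d\mu_t .
\]
Because $v_t\in T_{\mu_t}\mathcal{P}_2(\man)$, I can insert $\Pi_{\mu_t}$ in the first term for free, since the projection is $L^2(\mu_t)$-orthogonal. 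The same applies to $u_t$ in the second term. This yields $\langle\nabla^{W_2}u,v\rangle+\langle u,\nabla^{W_2}v\rangle$.

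\textbf{Torsion-freeness.} I expect this to be the main obstacle, because it needs genuine two-parameter variations rather than a single curve. I would take a two-parameter family $\mu_{s,t}$ driven by gradient fields $\nabla\varphi$ in $t$ and $\nabla\psi$ in $s$, with Lagrangian maps $X_{s,t}$. The goal is to show $\nabla^{W_2}_{\partial_s}\nabla\varphi = \nabla^{W_2}_{\partial_t}\nabla\psi$. Pointwise this follows from $\frac{D}{ds}\partial_t X = \frac{D}{dt}\partial_s X$ on $\man$, which holds by torsion-freeness of $\nabla^{\man}$, and then applying $\Pi$ to both sides.

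The subtlety is that the $s$-velocity of the Lagrangian map need not be a gradient, so $\partial_s X_{s,t}$ may differ from $\nabla\psi(X_{s,t})$ by a non-gradient part. I would handle this in the way of \citet{gigli2012second}: restrict to smooth compactly supported potentials and show that the discrepancy term is killed by $\Pi_{\mu_t}$. Failing that, I would verify the equivalent Koszul-type identity $\nabla^{W_2}_{\nabla\varphi}\nabla\psi-\nabla^{W_2}_{\nabla\psi}\nabla\varphi=[\nabla\varphi,\nabla\psi]$ on constant-potential fields. Here $\nabla^{\man}_{\nabla\varphi}\nabla\psi-\nabla^{\man}_{\nabla\psi}\nabla\varphi=[\nabla\varphi,\nabla\psi]$ holds on $\man$, and the Wasserstein Lie bracket is defined as the projection of it.

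\textbf{Euclidean case.} When $\man=\mathbb{R}^d$, the Levi-Civita connection is ordinary directional differentiation, so $\nabla^{\man}_{\nabla\varphi_t}v_t=\nabla v_t\cdot\nabla\varphi_t$. This gives the stated formula.
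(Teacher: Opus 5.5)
\textbf{Verdict: there is a genuine gap, in the torsion-freeness step.}

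The paper does not prove this proposition; it imports it from \citet{gigli2012second} and \citet{clancy2021interpolating}, so your sketch has to stand on its own. Your Leibniz rule and your metric-compatibility computation are the standard argument and are fine, given enough regularity for the flow maps to exist. That argument pulls back to $\mu_0$ through the flow, uses metric compatibility of $\nabla^M$ along each trajectory, and inserts $\Pi_{\mu_t}$ for free because the other factor is tangent.

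A minor point: linearity in the direction is not just a property of $\nabla^M$. The term $\partial_t v_t$ also depends on the direction, so this only makes sense when $v_t=Y(\mu_t)$ for a field $Y$ on $\mathcal P_2(M)$ that is differentiable along curves.

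\textbf{First route.} Neither of your two routes actually establishes torsion-freeness. At fixed $s$, write $\partial_s X_{s,t}=w(X_{s,t})$ with $w=\nabla\psi+\xi$ and $\operatorname{div}_g(\mu_t\xi_t)=0$. Projecting the symmetry lemma $\frac{D}{ds}\partial_tX=\frac{D}{dt}\partial_sX$ gives $\nabla^{W_2}_{\partial_s}\nabla\varphi-\nabla^{W_2}_{\partial_t}\nabla\psi=\Pi_{\mu_t}(\,\cdot\,)$ of a discrepancy field. So you need
\[
\Pi_{\mu_t}\bigl(\partial_t\xi_t+\nabla^M_{\nabla\varphi_t}\xi_t-\nabla^M_{\xi_t}\nabla\varphi_t\bigr)=0 .
\]
Restricting to smooth compactly supported potentials does nothing to $\xi$. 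Also, $\Pi_{\mu_t}$ does not annihilate $\nabla^M_{\xi}\nabla\varphi$ on its own: by $\operatorname{div}_g(\mu_t\xi_t)=0$, its pairing with $\nabla h$ equals $-\int\operatorname{Hess}h(\xi_t,\nabla\varphi_t)\,d\mu_t$, which is generally nonzero.

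The missing idea is to differentiate the constraint $\int\langle\nabla h,\xi_t\rangle_g\,d\mu_t=0$ in $t$ using the continuity equation. This gives $\int\langle\nabla h,\partial_t\xi_t+\nabla^M_{\nabla\varphi_t}\xi_t\rangle_g\,d\mu_t=-\int\operatorname{Hess}h(\nabla\varphi_t,\xi_t)\,d\mu_t$. The two Hessian terms then cancel by symmetry of $\operatorname{Hess}h$. In words, the Lie derivative of a normal field along the flow stays normal. With this lemma your first route goes through.

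\textbf{Fallback route.} As phrased, it is circular. The paper defines the bracket as $[X,Y](F)=X(Y(F))-Y(X(F))$. Declaring the Wasserstein bracket to be $\Pi_\mu[\nabla\varphi,\nabla\psi]$ therefore makes torsion-freeness a tautology.

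You have to compute the bracket on functionals $F(\mu)=\int f\,d\mu$:
\[
X(Y(F))(\mu)=\int\bigl(\operatorname{Hess}f(\nabla\varphi,\nabla\psi)+\langle\nabla f,\nabla^M_{\nabla\varphi}\nabla\psi\rangle_g\bigr)\,d\mu .
\]
The $\operatorname{Hess}f$ terms cancel in the commutator, so $[X,Y]$ acts as $\int\langle\nabla f,[\nabla\varphi,\nabla\psi]\rangle_g\,d\mu$. Since tangent vectors are determined by their pairings with gradients, $[X,Y]=\Pi_\mu[\nabla\varphi,\nabla\psi]$. Only then does comparing with $\nabla^{W_2}_XY-\nabla^{W_2}_YX=\Pi_\mu\bigl(\nabla^M_{\nabla\varphi}\nabla\psi-\nabla^M_{\nabla\psi}\nabla\varphi\bigr)$ prove something. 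You would still need tensoriality of the torsion to pass from constant-potential fields to general ones.
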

This result establishes a closed form PDE describing a differential operator with the desired characteristics of the Levi-Civita connection. We remark that $\Pi_{\mu_t}: L^2(\mu_t; \man) \rightarrow T_{\mu_t}\mathcal{P}_2(\man)$ is the orthogonal projection from the space of $L^2(\mu_t; \mathbb{R}^d)$ vector fields over $\man$ to the closure of $L^2(\mu_t; \mathbb{R}^d)$ \textit{gradient} fields over $\man$. In practice, this operation amounts to a (Helmholtz-Hodge) decomposition $v_t = \nabla \varphi_t + w_t$ where $w_t$ is the (weighted) divergence-free component of the vector field $v_t$. We discuss estimation of this Helmholtz-Hodge decomposition in \Cref{sec: helmholtz decomp}.
\\

\noindent \textbf{The Exponential and Logarithmic Maps.} For $\mathcal{P}_{2}(\man)$, the Wasserstein exponential map is defined to be $\mathbf{exp}_{\mu}(u) \triangleq (\exp(u))_\#\mu$ where $\exp(\cdot)$ is the exponential map of $\man$ and $u \in L^2(\mu; \mathbb{R}^d)$. In the case of $\man = \mathbb{R}^d$, this operation is trivial -- we have
\begin{equation}\label{eq: wasserstein euclidean expmap}
    \mathbf{exp}_\mu(u) = (\text{id} + u)_\# \mu
\end{equation}
where $\text{id}$ is the identity map, $x \mapsto x$. In the case of the logarithmic map, things are analogous. We define the Wasserstein logarithmic map to be $(\mathbf{log}_\mu\nu) (x) \triangleq \log_x(T_{\mu \rightarrow \nu}(x))$ where $T_{\mu \rightarrow \nu}$ is the Brenier map from $\mu$ to $\nu$.  Again, when $\man = \mathbb{R}^d$ this reduces to
\begin{equation}\label{eq: wasserstein euclidean logmap}
    (\mathbf{log}_\mu\nu )(x) = (T_{\mu \rightarrow \nu} - \text{id})(x).
\end{equation}

\section{Parallel Transport}\label{sec:parallel}

Having laid the groundwork of Wasserstein geometry, we will now move on to discussing parallel transport in detail. In the first part of this section we will describe the characterization of parallel transport via the covariant derivative. In the latter parts of this section, we will discuss a fully general approximation scheme for parallel transport on $\mathcal{P}_2(\man)$ along well behaved Wasserstein geodesics that uses Jacobi fields to avoid the PDE obtained from the covariant derivative.   

\subsection{Exact Parallel Transport via the Covariant Derivative.} 
Intuitively, a vector field along a curve is \say{unchanging} if its derivative is zero. In the context of abstract manifolds, parallel transport is defined based on this principle: a vector field along a curve is the parallel transport of a source vector if its covariant derivative along the curve is zero.
\begin{restatable}[\citet{lee2018introduction}]{defn}{} \label{parallel transport on general manifolds}
    Let $\man$ be a smooth Riemannian manifold. A smooth vector field $X$ along a smooth curve $\gamma$ is said to be parallel along $\gamma$ with respect to the Levi-Civita connection if $\nabla_{\dot \gamma}X \equiv 0$.
\end{restatable}
This characterization now allows us to define parallel transport on $(\mathcal{P}_2(\man), W_2)$ using the covariant derivative described in \Cref{Wasserstein covariant deriv}. Consider a smooth curve $\mu_t$ through $\mathcal{P}_2(\man)$ indexed by $t \in (0, 1)$ with the tangent field $\nabla \varphi_t$ driving its dynamics. Then we have the following definition. 

\begin{restatable}[Wasserstein Parallel Transport PDE]{prop}{} \label{Wasserstein parallel transport pde}
    A vector field $v_t$ along $\mu_t$ is parallel along $\mu_t$ with respect to $\nabla_{(\nabla\varphi_t)}^{W_2}$ if
    \[
    \operatorname{div}_g\!\left(\mu_t\left(\partial_t v_t
    + \nabla^M_{(\nabla \varphi_t)} v_t\right)\right)=0
    \qquad\text{for a.e. } t\in(0,1).
    \]
\end{restatable}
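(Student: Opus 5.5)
The plan is to start from \Cref{parallel transport on general manifolds} applied to the Wasserstein covariant derivative of \Cref{Wasserstein covariant deriv}. By definition, $v_t$ is parallel along $\mu_t$ exactly when $\nabla^{W_2}_{(\nabla\varphi_t)} v_t = 0$ for a.e.\ $t$. By \Cref{Wasserstein covariant deriv}, this means
\[
\Pi_{\mu_t}\bigl(\partial_t v_t + \nabla^M_{(\nabla\varphi_t)} v_t\bigr)=0 .
\]
Write $w_t := \partial_t v_t + \nabla^M_{(\nabla\varphi_t)} v_t \in L^2(\mu_t; TM)$, assuming enough regularity that $w_t$ is square-integrable. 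The whole statement then reduces to a characterization of the kernel of $\Pi_{\mu_t}$: we want $\Pi_{\mu_t} w = 0$ to hold if and only if $\operatorname{div}_g(\mu_t w) = 0$ in the distributional sense.

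The key step is the Helmholtz--Hodge orthogonality mentioned after \Cref{Wasserstein covariant deriv}. Since $\Pi_{\mu}$ is the orthogonal projection onto $T_\mu\mathcal{P}_2(M)$, the $L^2(\mu)$-closure of $\{\nabla\psi : \psi\in C_c^\infty(M)\}$, its kernel is the orthogonal complement of that closure. This complement equals the orthogonal complement of the non-closed set $\{\nabla\psi\}$ itself, because orthogonality to a set is the same as orthogonality to its closure. Hence $\Pi_\mu w = 0$ if and only if
\[
\int_M \langle \nabla\psi, w\rangle_g \, d\mu = 0 \quad \text{for all } \psi\in C_c^\infty(M).
\]
This is precisely the weak formulation of $\operatorname{div}_g(\mu w)=0$, with the sign fixed by integrating by parts against the Riemannian volume as in \Cref{def: weak solutions}. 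Applying this with $\mu=\mu_t$ and $w=w_t$ for a.e.\ $t$ gives the stated equation.

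Two technical points need care.

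\textbf{Projection of a general vector field.} $v_t$ itself is assumed to lie in $T_{\mu_t}\mathcal{P}_2(M)$, but $w_t$ need not, and the projection in \Cref{Wasserstein covariant deriv} is taken in the ambient space $L^2(\mu_t;TM)$. We must therefore check that $w_t \in L^2(\mu_t;TM)$ under the smoothness assumptions on the curve and on $v_t$. I would simply assume $v_t$ is $C^1$ in $(t,x)$ with $\nabla^M v_t$ and $\partial_t v_t$ in $L^2(\mu_t)$, which is the standing smoothness of the curve.

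\textbf{Time dependence.} The condition is pointwise in $t$, so it suffices to observe that the a.e.-$t$ statement matches the a.e.-$t$ vanishing of the covariant derivative. No test functions in $t$ are needed.

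I expect the main obstacle to be justifying the identification of $\ker\Pi_{\mu_t}$ with the weighted divergence-free fields. This step is straightforward in $L^2$, but it relies on $T_\mu\mathcal{P}_2(M)$ being defined as a closure of gradients of compactly supported test functions. That makes it consistent with the distributional notion of divergence, so I would state this correspondence explicitly as a lemma before concluding.
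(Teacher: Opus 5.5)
Your proposal is correct and follows essentially the same route as the paper: use the covariant derivative formula to reduce parallelism to $\Pi_{\mu_t}\bigl(\partial_t v_t + \nabla^M_{(\nabla\varphi_t)} v_t\bigr)=0$, then identify $\ker \Pi_{\mu_t}$ with the fields $w$ satisfying $\operatorname{div}_g(\mu_t w)=0$. The only difference is that the paper cites Definition 1.29 of Gigli for this orthogonal-complement characterization. You instead derive it directly from $(\overline{S})^\perp = S^\perp$ and the weak definition of divergence, which makes the step self-contained and also gives the converse implication.
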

\noindent \textit{Proof.} Due to \Cref{parallel transport on general manifolds} and \Cref{Wasserstein covariant deriv} we know that $v_t$ is a parallel vector field along the curve $\mu_t$ if $\Pi_{\mu_t}\left(\partial_tv_t + \nabla^M_{(\nabla \varphi_t)} v_t\right) = 0$ for almost every $t \in (0,1).$ This is tantamount to the requirement that $\operatorname{div}_g \left(\mu_t(\partial_tv_t + \nabla^M_{(\nabla \varphi_t)} v_t)\right) = 0$ since the $L^2(\mu_t)$ projection simply discards the divergence free portion of the vector field. This can be seen by the fact that the orthogonal complement of the tangent space at a measure $\mu_t$ is 
\[T^\perp_{\mu_t}\mathcal{P}_2(\man) = \left\{w \in L^2(\mu_t) 
\,\big | \, \operatorname{div}_g(w\mu_t) = 0\right\}\]
as stated in definition 1.29 of \citet{gigli2012second}. Thus, $\Pi_{\mu_t}(w) = 0 \iff \operatorname{div}_g(\mu_t w) = 0$.
\qed{}

While \Cref{Wasserstein parallel transport pde} describes parallel transport along any curve, solving this PDE in practice may be challenging, especially in high-dimensional settings. To this end, in the next section we will describe an alternative approximate but tractable approach for computing parallel transport \textit{along geodesics} -- our approach leverages the connection between parallel transport on the base space $\man$ and parallel transport on $\mathcal{P}_{2}(\man)$ for any Riemannian manifold $\man$ (as established by \citet{gigli2012second}) to enable parallel transport on $\mathcal{P}_2(\man)$. Since computing parallel transport on $M$ is not always feasible, we also describe a \say{fanning} scheme in which one can use \textit{Jacobi fields} to approximate the underlying parallel transport on $\man$. But before doing that, we will derive and analyze Wasserstein parallel transport on Gaussians to develop some intuition.

\subsubsection{Example: Wasserstein Parallel Transport with Gaussians} \label{sec: WPT with Gaussians}

To build up intuition for the behavior of Wasserstein parallel transport, we will explore the setting where all measures are Gaussian and $\man = \mathbb{R}^d$. Fortunately, as is often the case, our object of interest is available in closed form when dealing with Gaussian measures. This closed form expression for parallel transport with Gaussian measures is detailed in \Cref{Gaussian parallel transport}. We provide the proof of \Cref{Gaussian parallel transport} in \Cref{sec: proof of Gaussian parallel transport}.

\begin{restatable}[Parallel Transport along Gaussian Geodesics]{thm}{} \label{Gaussian parallel transport}
    Let $\mu_0, \mu_1$ be two Gaussian probability measures parameterized by means $m_0, m_1$ and covariances $\Sigma_0, \Sigma_1$. Further let 
    \[v_0(x) = a_0 + A_0(x - m_0) \in T_{\mu_0}\mathcal{P}_2(\mathbb{R}^d), \quad A_0 \in \mathbb{S}^d\]
    be a tangent vector to $\mu_0$. Then the parallel tangent field $v_t$ along $\mu_t$ with initial condition $v_0$ is given by
    \[v_t(x) = a_0 + A_t(x - m_t) \quad \text{with} \quad A_t = A_0 + \int_0^t \dot A_s\, ds\]
    where $m_t = (1-t)m_0 + tm_1$, and $\dot A_t$ solves the continuous Lyapunov equation
    \[\dot A_tQ_t + Q_t \dot A_t = S^{\top}_tA_tQ_t + Q_tA_tS_t\]
    with $Q_t = M_t^{-\top}\Sigma_0^{-1}M_t^{-1}$, $M_t = (1-t)I_d + tB$, $S_t = (I_d - B)M_t^{-1}$ and 
    \[B = \Sigma_0^{-1/2}\left(\Sigma_0^{1/2}\Sigma_1\Sigma_0^{1/2}\right)^{1/2}\Sigma_0^{-1/2}.\]
\end{restatable}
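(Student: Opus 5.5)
The plan is to verify the parallel transport condition of \Cref{Wasserstein parallel transport pde} directly with $\man=\mathbb{R}^d$, using an affine ansatz for $v_t$. The condition there is
\[
\operatorname{div}\bigl(\mu_t(\partial_t v_t+\nabla v_t\cdot u_t)\bigr)=0,
\]
where $u_t$ is the tangent field of the geodesic. The first step is to write down the Gaussian geodesic explicitly. The Brenier map from $\mu_0$ to $\mu_1$ is $T(x)=m_1+B(x-m_0)$ with $B$ as in the statement, and $B$ is symmetric positive definite. By the Benamou--Brenier description, $\mu_t$ is the pushforward of $\mu_0$ under $T_t(x)=m_t+M_t(x-m_0)$. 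Hence $\mu_t=N(m_t,\Sigma_t)$ with $\Sigma_t=M_t\Sigma_0M_t$, and since $M_t$ is symmetric, $Q_t=\Sigma_t^{-1}$. The Eulerian velocity is obtained from $u_t(T_t(x))=\partial_tT_t(x)$, which gives
\[
u_t(y)=(m_1-m_0)+(B-I_d)M_t^{-1}(y-m_t)=\dot m_t-S_t(y-m_t).
\]
This field is a gradient, because $(B-I)M_t^{-1}$ is a function of $B$ and is therefore symmetric.

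Next I would make the ansatz $v_t(x)=b_t+A_t(x-m_t)$ with $A_t\in\mathbb{S}^d$, so that each $v_t$ lies in $T_{\mu_t}\mathcal{P}_2(\mathbb{R}^d)$. A direct computation gives
\[
\partial_tv_t+\nabla v_t\cdot u_t=\dot b_t-A_t\dot m_t+\dot A_t(x-m_t)+A_t\dot m_t-A_tS_t(x-m_t)=\dot b_t+(\dot A_t-A_tS_t)(x-m_t).
\]
We therefore need to decide when an affine field $w(x)=c+C(x-m)$ has zero projection onto the tangent space at $N(m,\Sigma)$, i.e.\ when $\operatorname{div}(\mu w)=0$. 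Expanding with the Gaussian density gives
\[
\operatorname{div}(\mu w)=\mu\bigl(\operatorname{tr}C-(x-m)^\top\Sigma^{-1}c-(x-m)^\top\Sigma^{-1}C(x-m)\bigr).
\]
Matching the constant, linear and quadratic terms in $x-m$, this vanishes identically if and only if $c=0$, $\operatorname{tr}C=0$, and $\Sigma^{-1}C$ is skew-symmetric. The skewness of $\Sigma^{-1}C$ already forces $\operatorname{tr}C=\operatorname{tr}(\Sigma\cdot\Sigma^{-1}C)=0$, since the trace of a symmetric matrix times a skew matrix is zero.

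Applying this criterion with $c=\dot b_t$ and $C=\dot A_t-A_tS_t$ gives two conditions. The first is $\dot b_t=0$, so $b_t\equiv a_0$. The second is that $Q_t(\dot A_t-A_tS_t)$ be skew, i.e.
\[
Q_t\dot A_t+\dot A_tQ_t=Q_tA_tS_t+S_t^\top A_tQ_t,
\]
using the symmetry of $Q_t$ and $\dot A_t$. This is exactly the Lyapunov equation in the statement. Since $Q_t\succ0$, the Lyapunov operator $X\mapsto XQ_t+Q_tX$ is invertible on $\mathbb{S}^d$, so $\dot A_t$ is a well-defined symmetric matrix that depends linearly on $A_t$. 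The resulting linear ODE has continuous coefficients on $[0,1]$, because $M_t$ is invertible as a convex combination of $I$ and $B\succ0$. It therefore has a unique solution with $A_t$ symmetric, which yields the stated formula $A_t=A_0+\int_0^t\dot A_s\,ds$.

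The main obstacle I expect is justifying that the affine field obtained this way is \emph{the} parallel transport, and not merely one solution of the PDE within the ansatz. My approach would be to note that parallel transport along a curve is unique given the initial vector; for instance, metric compatibility from \Cref{Wasserstein covariant deriv} shows that the difference of two parallel fields has constant norm, hence vanishes. Alongside this, the sign and transpose bookkeeping of $S_t$ versus $S_t^\top$ needs care, and I would check it against the case $B=I$, where $S_t=0$ and $A_t\equiv A_0$.
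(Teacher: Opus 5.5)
Your proposal is correct and follows essentially the same route as the paper. Both write down the explicit Gaussian geodesic with Eulerian velocity $(m_1-m_0)-S_t(y-m_t)$, use an affine ansatz with symmetric $A_t$, and match the constant, linear and quadratic coefficients of $\operatorname{div}(\mu_t w_t)/\mu_t$ to obtain $\dot a_t=0$ and skew-symmetry of $Q_t(\dot A_t-A_tS_t)$, which is exactly the stated Lyapunov equation. Your extra points fill in steps the paper leaves implicit: the conditions are necessary as well as sufficient, $X\mapsto XQ_t+Q_tX$ is invertible on $\mathbb{S}^d$ so the linear ODE for $A_t$ is well posed with symmetric solutions, and parallel transport is unique by metric compatibility.
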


For further intuition, \Cref{fig:Gaussians} instantiates the result presented in \Cref{Gaussian parallel transport}. The figure illustrates how parallel transporting a tangent vector $v \in T_{\nu_i}\mathcal{P}_2(\mathbb{R}^2)$ along the geodesic between $\nu_i$ and $\mu_i^*$ changes its behavior when used to push forward the measures $\nu_i$ and $\mu_i^*$ respectively. In particular, we see the two ways in which Gaussian parallel transport preserves the nature of the deformation of the measures -- the left hand panel depicts how it preserves deformation to the covariance, while the right hand panel depicts how it preserves changes to the mean. 

\begin{figure}[h]
     \centering
     % 1. Set a common height as a variable for easy tweaking
     \def\myheight{10cm} 
     % 2. Use minipages or subfigures without forcing a width larger than the image
     \hspace{-1.25cm}
     \begin{subfigure}[b]{0.45\textwidth}
         \centering
         % Use keepaspectratio to prevent stretching
         \includegraphics[height=\myheight, trim=0.5cm 0cm 0.0cm 0cm, clip]{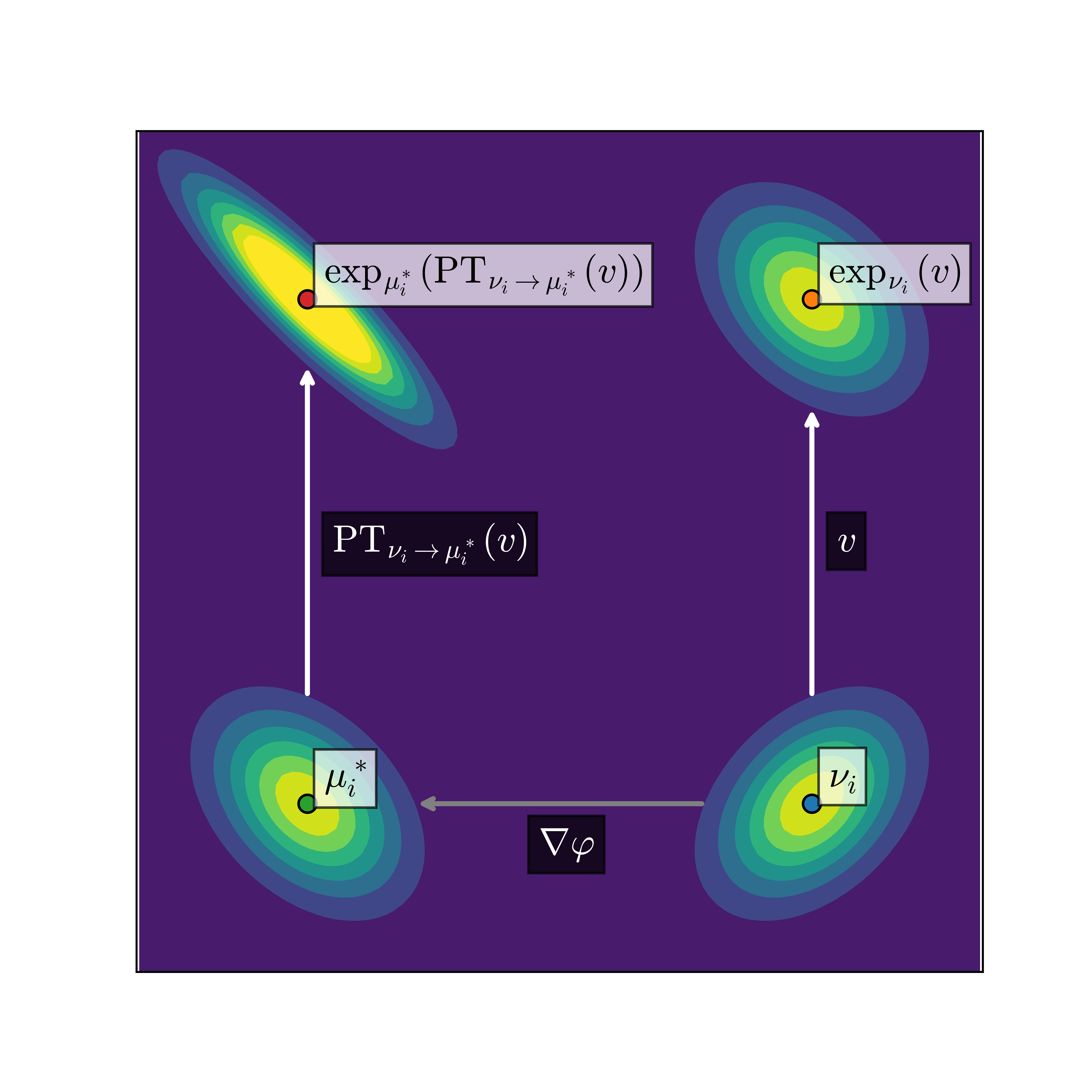}
     \end{subfigure}
     \hspace{1.5cm}
     \begin{subfigure}[b]{0.45\textwidth}
         \centering
         \includegraphics[height=\myheight, trim=0.5cm 0cm 0.5cm 0cm, clip]{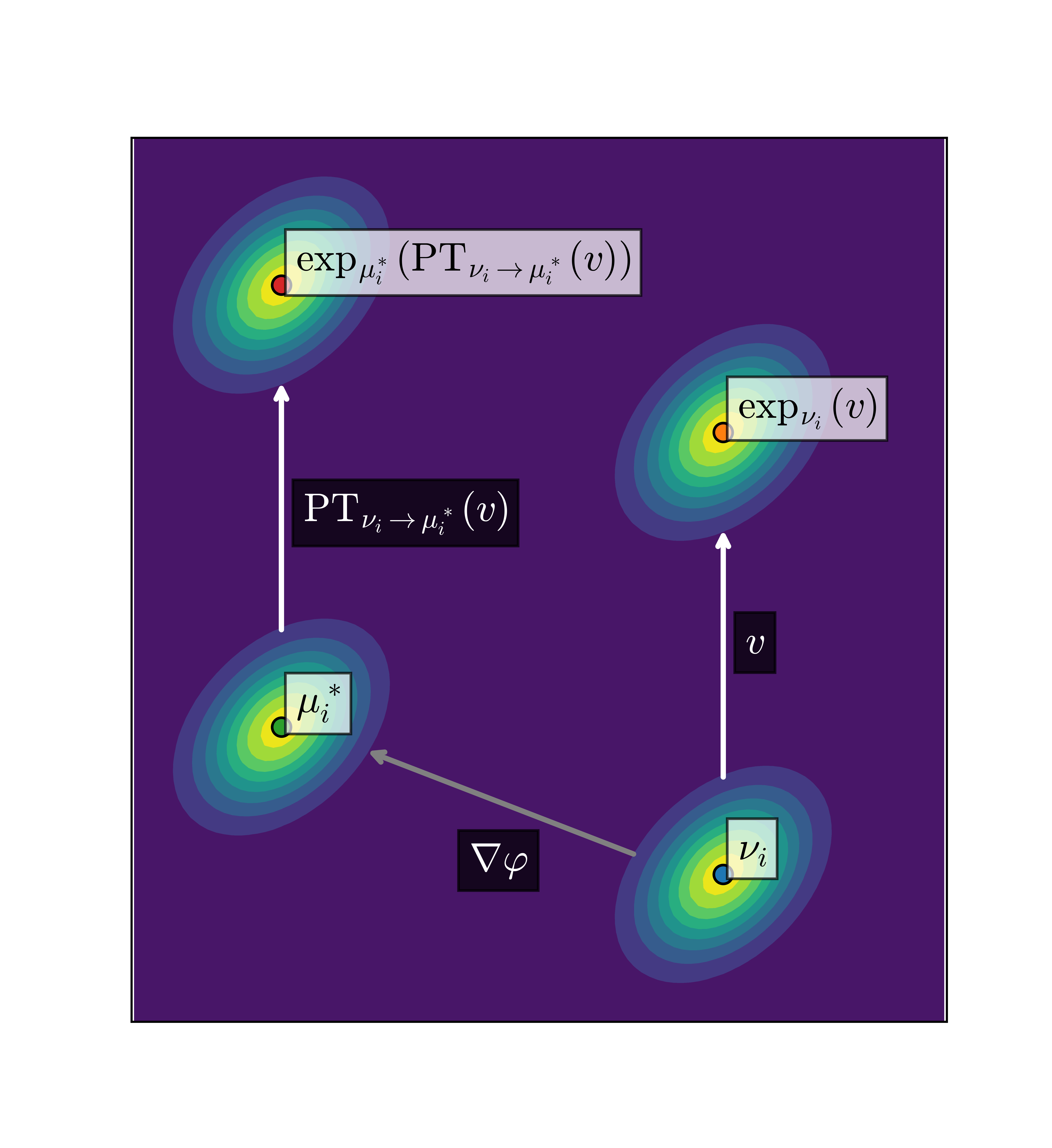}
     \end{subfigure}
     \vspace{-20pt} % Space between sub-captions and main caption
     \caption{Visualizations of Wasserstein Parallel Transport with Gaussian measures in $\mathbb{R}^2$. The left panel illustrates how parallel transport captures deformations to the covariance, while the right panel illustrates how it captures changes in the mean.}
     \label{fig:Gaussians}
\end{figure}

\subsection{Approximation via Base Parallel Transport and Jacobi Fields.} \label{sec: wpt approx}

When dealing with non-Gaussian measures, expressions for Wasserstein parallel transport are not available in closed form. To this end, we will describe a fully general procedure that allows one to approximate parallel transport along geodesics on $\mathcal{P}_2(\man)$ for \textit{any} smooth, complete and connected Riemannian manifold $\man$. To do this we will leverage two key results: the first is the fact that Wasserstein parallel transport of a tangent field $v$ can be \textit{locally} approximated by the parallel transport of the tangent field on the \textit{base} space $\man$ along the Lagrangian paths induced by the optimal transport map associated with the geodesic (\citet{gigli2012second}, Equation 4.18). To give some intuition for this phenomenon, we provide an illustration in \Cref{approximation visualization}. The second key result will be the fact that parallel transport on the \textit{base} space $\man$ can also be locally approximated by a specific Jacobi field (\citet{louis2018fanning}) -- on many Riemannian manifolds, parallel transport is not available in closed form, but we may have access to important objects like the Riemann curvature tensor, the covariant derivative and the geodesic equations. In settings like this, one can use the Jacobi field to approximate parallel transport. Although our experiments in this work consider the case where $\man = \mathbb{R}^d$, in a concurrent work that develops the theory of parallel transport on the space of general positive measures over $\mathbb{R}^d$ we require a non-Euclidean choice of $\man$. 
\\

\begin{figure}
    \centering
    \includegraphics[width=\linewidth]{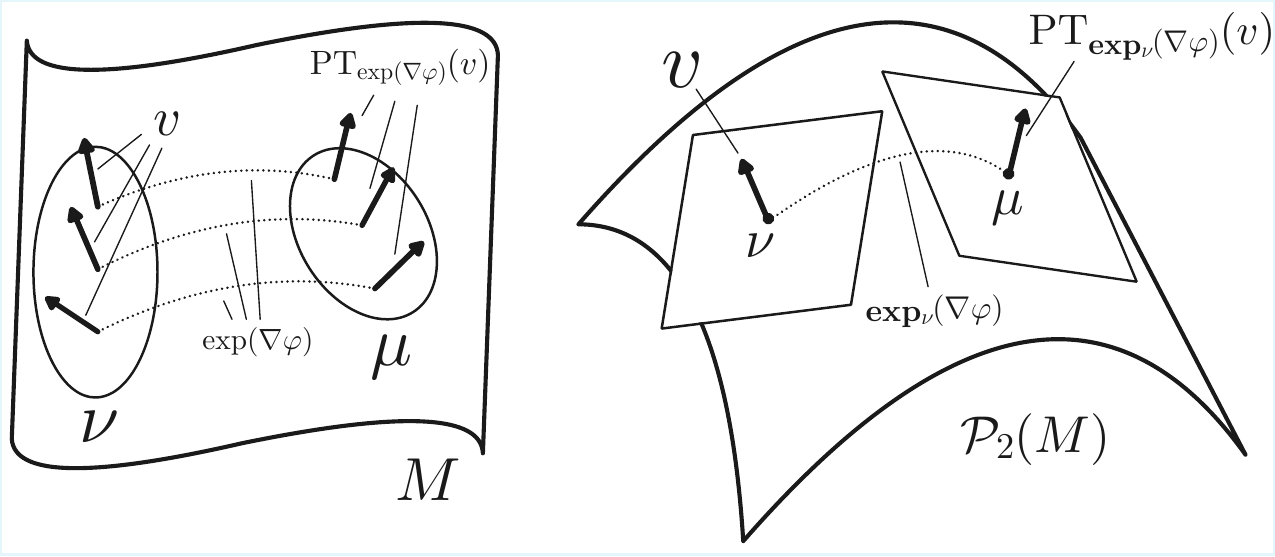}
    \caption{Visualization of \textit{approximate} Wasserstein parallel transport, where the approximate tangent vector is given by the map $p \mapsto {\PT}_{\exp_p(\nabla\varphi(p))}(v(p))$ (left), and \textit{exact} Wasserstein parallel transport, where the true tangent vector is given by the map $p \mapsto({\PT}_{\mathbf{exp}_{\nu}(\nabla\varphi)}(v))(p)$ (right).}
    \label{approximation visualization}
\end{figure}

\noindent \textbf{Jacobi Fields.} Jacobi fields allow us to measure the effect of curvature on a one parameter family of geodesics. A \textit{Jacobi field} is a vector field along a geodesic $\gamma$ capturing the difference between a geodesic and a perturbed geodesic along the parameter of the family. Concretely, let $\gamma_{\tau}$ be a smooth, one-parameter family of geodesics with $\gamma_0 = \gamma$. Then
\begin{equation}
    J(s) = \frac{\partial}{\partial \tau}\gamma_{\tau}(s)\bigg|_{\tau = 0} \in T_{\gamma(s)}\man
\end{equation}
is a Jacobi field, and it describes the infinitesimal behavior of the geodesic family about $\tau = 0$. Jacobi fields satisfy the so-called Jacobi equation
\begin{equation}
    \nabla_{\dot\gamma}\nabla_{\dot\gamma}J(s) + R(J(s), \dot{\gamma}(s))\dot{\gamma}(s) = 0
\end{equation}
where $\nabla$ is the covariant derivative (typically with respect to the Levi-Civita connection) $\dot{\gamma}(s) = d\gamma(s)/ds$, and $R(\cdot, \cdot)$ is the Riemann curvature tensor. In differential geometry, the curvature tensor of a Riemannian manifold $\man$ with a connection $\nabla$ measures the noncommutativity of the covariant derivative, and is defined as
\begin{equation*}
    R(X,Y)Z = \nabla_X \nabla_YZ - \nabla_Y\nabla_XZ - \nabla_{[X,Y]}Z
\end{equation*}
where $X,Y, Z \in \mathfrak{X}(\man)$ are vector fields over the manifold. A key fact about Jacobi fields is the following, which arises from the properties of the second-order ODE above. 

\begin{restatable}[Existence and Uniqueness of Jacobi Fields, Proposition 10.2 of \citet{lee2018introduction}]{prop}{}
Let $\gamma_p(s) = \exp_p(su)$ be a geodesic through a Riemannian or pseudo-Riemannian manifold $\man$ for some $u \in T_p\man$. Then for every pair of vectors $v, w \in T_p\man$ there is a unique Jacobi field denoted $j_{p, u}(v, w )(s)$ along $\gamma_p(s)$ satisfying the initial conditions $j_{p, u}(v, w )(0) = v$ and $\nabla_{\dot \gamma_p}j_{p, u}(v, w)(0) = w$. 
\end{restatable}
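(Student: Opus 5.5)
The plan is to reduce the statement to existence and uniqueness for a linear second-order ODE in a fixed finite-dimensional vector space. We do this by trivializing the tangent bundle along $\gamma_p$ with a parallel orthonormal frame.

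\textbf{Step 1: build the frame.} Choose an orthonormal basis $e_1,\dots,e_n$ of $T_p\man$, where $n=\dim\man$. Let $E_i(s)$ be the parallel transport of $e_i$ along $\gamma_p$. Existence and uniqueness of parallel transport is itself a linear first-order ODE in local coordinates: $\dot E^k + \Gamma^k_{ij}\dot\gamma^i E^j=0$. It is solvable on the whole interval of definition of $\gamma_p$, because linear ODEs with continuous coefficients have global solutions; one covers the curve by finitely many charts on compact subintervals and patches. Metric compatibility of the Levi-Civita connection then shows that $\{E_i(s)\}$ stays orthonormal, hence is a frame of $T_{\gamma_p(s)}\man$ for every $s$. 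In the pseudo-Riemannian case "orthonormal" means with respect to the nondegenerate form, and nondegeneracy is what is preserved.

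\textbf{Step 2: rewrite the Jacobi equation in the frame.} Write $J(s)=\sum_i J^i(s)E_i(s)$. Since $\nabla_{\dot\gamma_p}E_i=0$, we get $\nabla_{\dot\gamma_p}\nabla_{\dot\gamma_p}J=\sum_i \ddot J^i E_i$. Expanding the curvature term,
\[
R(J,\dot\gamma_p)\dot\gamma_p=\sum_{i,k} J^i(s)\,R_i^{\,k}(s)\,E_k(s),
\qquad R_i^{\,k}(s)=\big\langle R(E_i,\dot\gamma_p)\dot\gamma_p,E^k\big\rangle .
\]
Here $E^k$ is the dual frame, and the coefficients $R_i^{\,k}$ are smooth in $s$ because $R$, $\gamma_p$ and the $E_i$ are smooth. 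The Jacobi equation therefore becomes the linear system
\[
\ddot J^k(s)+\sum_i R_i^{\,k}(s)\,J^i(s)=0,\qquad k=1,\dots,n .
\]
The initial conditions $j_{p,u}(v,w)(0)=v$ and $\nabla_{\dot\gamma_p}j_{p,u}(v,w)(0)=w$ translate to $J^k(0)=v^k$ and $\dot J^k(0)=w^k$, where $v^k, w^k$ are the components of $v,w$ in the basis $e_i$.

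\textbf{Step 3: solve and transfer back.} Standard linear ODE theory gives a unique smooth solution on the whole interval where $\gamma_p$ is defined; this is all of $\mathbb{R}$ by completeness of $\man$. Linearity rules out blow-up. The correspondence $J\leftrightarrow(J^1,\dots,J^n)$ is a bijection between smooth vector fields along $\gamma_p$ and smooth $\mathbb{R}^n$-valued functions. It also intertwines the Jacobi equation and its initial data with the system above, so existence and uniqueness of $j_{p,u}(v,w)$ follow. As a byproduct, $(v,w)\mapsto j_{p,u}(v,w)$ is linear, and the space of Jacobi fields along $\gamma_p$ has dimension $2n$.

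\textbf{Main obstacle.} The only delicate point is Step 1: one must ensure the parallel frame exists globally along the curve and remains a frame, so that the chartwise ODE arguments patch together. I would handle this by working on an arbitrary compact subinterval $[0,S]$. Cover $\gamma_p([0,S])$ by finitely many coordinate charts using a Lebesgue-number argument. Solve the parallel transport equation successively on each chart, using the endpoint value from the previous chart as the new initial condition; uniqueness on overlaps makes the pieces agree. Finally, let $S\to\infty$, and handle negative $s$ in the same way.
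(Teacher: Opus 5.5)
Your proposal is correct and follows the same route as the argument the paper relies on; the paper gives no proof of its own beyond citing Lee's Proposition 10.2, whose proof does exactly this. That is, trivialize along $\gamma_p$ with a parallel frame, reduce the Jacobi equation to the linear system $\ddot J^k + \sum_i R_i^{\,k} J^i = 0$ with data $J^k(0)=v^k$, $\dot J^k(0)=w^k$, and invoke global existence and uniqueness for linear ODEs (completeness of $\man$ and orthonormality of the frame are inessential, since the linear system is solvable on the whole interval where $\gamma_p$ is defined and any parallel frame suffices).
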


Observe that this result indicates that one can also define Jacobi fields along a geodesic by solely specifying its zero and first order initial conditions. Henceforth, we will denote with $J$ a Jacobi field defined by a variation through geodesics, while we will denote with $j$ a Jacobi field defined through initial conditions. With this in mind, we will show that a particular Jacobi field along $\man$ can be used to approximate parallel transport on $\man$.

\begin{restatable}[Fanning Scheme, Proposition B.3 of \citet{louis2018fanning}]{prop}{} \label{fanning approximation}
    Let $\Omega \subseteq \man$ be a compact subset of $\man$ with injectivity radius lower bounded by $\eta$, and let $\gamma_p(s) = \exp_p(su)$ be a geodesic through $(\man, g)$ with $\gamma_p([0, s]) \subset \Omega$. Further denote ${\PT}_{0 \rightarrow s}^\man(v)$ as the parallel transport of $v \in T_{p}\man$ from $p = \gamma_p(0)$ to $\gamma_p(s)$. Then there exists an $A \geq 0$ such that for all $s < \frac{\eta}{\|u\|_g}$ we have   
    \[\left\|s^{-1}j_{p, u}(0, v)(s) - {\PT}_{0 \rightarrow s}^\man(v)\right\|_g \leq As^2\|v\|_g. \]
\end{restatable}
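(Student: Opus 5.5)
We prove the estimate by comparing the Jacobi field $j_{p,u}(0,v)$ with its Taylor expansion in a parallel orthonormal frame along $\gamma_p$. The natural frame is $E_i(s) = \PT^\man_{0\to s}(e_i)$, where $(e_i)$ is an orthonormal basis of $T_p\man$. Write
\[
j_{p,u}(0,v)(s)=\sum_i J^i(s)E_i(s).
\]
Because the frame is parallel, covariant differentiation along $\gamma_p$ reduces to ordinary differentiation of the coefficients. The Jacobi equation then becomes the linear ODE
\[
\ddot J(s) + \mathcal{R}(s)J(s)=0,\qquad \mathcal{R}(s)_{ki}=\big\langle R(E_i,\dot\gamma_p)\dot\gamma_p,E_k\big\rangle_g,
\]
with initial data $J(0)=0$ and $\dot J(0)=v$, the latter read in the frame coordinates. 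In the same coordinates the parallel transport $\PT^\man_{0\to s}(v)$ is simply the constant vector $v$. Since the frame is orthonormal, it therefore suffices to bound the Euclidean norm $|s^{-1}J(s)-v|$.

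The bound comes from Taylor's formula with integral remainder: $J(s)=sv+\int_0^s(s-r)\ddot J(r)\,dr$. Dividing by $s$ gives
\[
|s^{-1}J(s)-v|\le \frac{1}{s}\int_0^s (s-r)\,\|\mathcal R(r)\|\,|J(r)|\,dr .
\]
Next we need an a priori bound on $|J(r)|$. From the ODE and Grönwall applied to the pair $(J,\dot J)$ we get $|J(r)|\le C r|v|$ on the relevant interval. Here $C$ depends only on $\sup\|\mathcal R\|$. Substituting this back gives $|s^{-1}J(s)-v|\le C' s^2|v|$, which is the claim.

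The constants must be uniform. We have $\|\mathcal R(r)\|\le K\|u\|_g^2$, where $K$ bounds the sectional curvature on the compact set $\Omega$. Such a $K$ exists by smoothness of $g$, and the bound holds because $\gamma_p([0,s])\subset\Omega$. The restriction $s<\eta/\|u\|_g$ does two jobs. It keeps $s\|u\|_g$ within the injectivity radius, so the geodesic segment is minimizing and the frame is well defined. It also bounds $s\|u\|_g$ uniformly, so the Grönwall factor $\exp(\sqrt{K}\,s\|u\|_g)$ stays bounded. We may then take
\[
A = K\|u\|_g^2 \cdot \tfrac{1}{6}\exp(\sqrt{K}\eta).
\]

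The main obstacle is the uniformity of $A$. It must be independent of $p$ and $s$, and we must track whether $A$ may depend on $\|u\|_g$. The statement allows $A$ to depend on $u$, so we keep the explicit factor $\|u\|_g^2$ and absorb it into $A$. The parallel-frame reduction is routine and carries no such difficulty.
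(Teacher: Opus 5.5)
Your proof is correct. The paper gives no argument of its own for this proposition; it imports it as Proposition B.3 of \citet{louis2018fanning}. The argument behind that result works in normal coordinates centered at $p$. There $j_{p,u}(0,v)(s)=d(\exp_p)_{su}(sv)$ has coordinate expression exactly $sv$, so $s^{-1}j_{p,u}(0,v)(s)$ is coordinate-constant. Meanwhile the parallel transport solves $\dot P^k=-\Gamma^k_{ij}(su)\,u^iP^j$ with $\Gamma(0)=0$, hence differs from $v$ only at order $s^2$. In that route the hypothesis $s<\eta/\|u\|_g$ is essential, because it places the whole segment inside a normal chart at $p$. Compactness of $\Omega$ then makes the bounds on derivatives of the Christoffel symbols uniform in the base point.

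Your parallel orthonormal frame avoids charts altogether. Parallel transport becomes the identity on coefficients, the Jacobi equation becomes the linear ODE $\ddot J=-\mathcal R J$, and the only geometric input is a curvature bound on $\Omega$. What this buys is an explicit, base-point-free estimate:
$\|s^{-1}j_{p,u}(0,v)(s)-{\PT}^{\man}_{0\to s}(v)\|_g\le \tfrac{K}{6}e^{\sqrt K\eta}\,(s\|u\|_g)^2\,\|v\|_g$.
Here the dependence on $\|u\|_g$ is visible. That is exactly the uniformity the paper tacitly uses in the proof of the one-step lemma. There the proposition is applied pointwise with $u=u(y)$ varying and then integrated against $\nu$, which requires $\sup_y\|u(y)\|_g<\infty$ (automatic on a compact $\man$).

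Three small points to tighten.
\begin{itemize}
\item The injectivity radius is not needed for the parallel frame or for $j_{p,u}(0,v)$ to be well defined. Both exist along any geodesic for all $s$, minimizing or not. In your argument, $s<\eta/\|u\|_g$ only caps $s\|u\|_g$.
\item Passing from a sectional-curvature bound to $\|\mathcal R(r)\|\le K\|u\|_g^2$ uses two facts about $X\mapsto R(X,\dot\gamma_p)\dot\gamma_p$: it is self-adjoint (pair symmetry) and it annihilates $\dot\gamma_p$. So its operator norm is $\|u\|_g^2$ times the largest $|\mathrm{sec}|$ over planes containing $\dot\gamma_p$. Say this explicitly.
\item The factor $\exp(\sqrt K s\|u\|_g)$ comes out of Gr\"onwall for the weighted energy $K\|u\|_g^2|J|^2+|\dot J|^2$. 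This gives $|\dot J(r)|\le e^{\sqrt K\|u\|_g r}|v|$ and hence $|J(r)|\le r\,e^{\sqrt K\|u\|_g r}|v|$. The unweighted norm on $(J,\dot J)$ gives a worse constant, though still finite for fixed $u$.
\end{itemize}
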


We will leverage this result to approximate parallel transport on $\mathcal{P}_2(\man)$ by using parallel transport on $\man$ and a result from \citet{gigli2012second} that guarantees that parallel transport on $\man$ and $\mathcal{P}_2(\man)$ along \textit{regular curves} are intimately linked. 

\subsection{Approximate Wasserstein Parallel Transport}

To approximate $\mathcal{P}_2(\man)$ parallel transport we will use the fanning scheme described in \Cref{fanning approximation} and the connection between $\man$ parallel transport and $\mathcal{P}_2(\man)$ parallel transport along regular curves established by Gigli. Before we describe our approximation scheme, we need to rigorously establish when Wasserstein parallel transport exists.

\begin{restatable}[Regular Curves, \citet{gigli2012second}]{defn}{} \label{def: regular curves}
    Let $(\mu_t)_{t \in [0,1]}$ be an absolutely continuous curve. We say that $(\mu_t)$ is regular if its velocity vector field $(v_t)$ satisfies 
    \[\int_0^1\|v_t\|_{L^2(\mu_t)}^2\,dt < \infty \quad \text{and} \quad \int_0^1{\operatorname{Lip}}(v_t)\,dt < \infty\]
    where ${\operatorname{Lip}}(v_t)$ denotes the spatial Lipschitz constant of the field $v_t$. Moreover, we say that $(\mu_t)$ is \textit{strongly} regular if 
    \[\int_0^1\|v_t\|_{L^2(\mu_t)}^2\,dt < \infty \quad \text{and} \quad \sup_{t \in [0,1]}{\operatorname{Lip}}(v_t) < \infty.\]
\end{restatable}

Gigli's notion of regularity on curves of measures is a very important one. In particular, Wasserstein parallel transport is only well defined along these regular curves. Along \textit{non-regular} curves of measures, the tangent space does \textit{not} vary smoothly, and therefore parallel transport cannot exist. Thus, for the remainder of this paper, we will consider curves of measures that are regular in the sense described in \Cref{def: regular curves}. Fortunately, when $\man = \mathbb{R}^d$, standard assumptions on the densities of the source and target measures guarantee this regularity. 

\begin{restatable}[Existence of Wasserstein parallel transport, $\man = \mathbb{R}^d$]{prop}{} \label{assumption implications}
    Assume that for the pair of measures ($\mu$, $\nu$), their supports $\Omega, \Omega^* \subset \mathbb{R}^d$ are bounded and mutually uniformly convex domains with respect to each other and have $C^4$ boundaries. Also assume that their Lebesgue densities $f \in C^2(\overline\Omega)$ and $g \in C^2(\overline{\Omega^*})$ satisfy 
    \[\lambda \leq f \leq \Lambda \quad \forall \,x \in \Omega \qquad \text{and} \qquad \lambda \leq g \leq \Lambda \quad \forall \, x \in \Omega^*\]
    for some constants $0 < \lambda \leq \Lambda < \infty$. Then, the quadratic cost optimal OT map pushing $\mu$ to $\nu$, $\nabla \varphi$, is a globally $C^{2}$ diffeomorphism and is bi-Lipschitz. Furthermore, the Wasserstein geodesic corresponding to $\nabla \varphi$ is regular in the sense of \Cref{def: regular curves}, and Wasserstein parallel transport therefore exists.
\end{restatable}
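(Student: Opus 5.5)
The plan is to reduce the statement to the global regularity theory for the Monge–Ampère equation and then to check \Cref{def: regular curves} by an explicit Lipschitz estimate along the Lagrangian flow.

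\textbf{Step 1 (regularity of the Brenier map).} By \Cref{brenier}, the optimal map is $\nabla\varphi$ with $\varphi$ convex. It solves, in the Brenier sense, the second boundary value problem
\[
\det D^2\varphi(x) = \frac{f(x)}{g(\nabla\varphi(x))}, \qquad \nabla\varphi(\Omega)=\Omega^*.
\]
Under the stated hypotheses, Caffarelli's interior and boundary regularity theory applies: the supports are uniformly convex with smooth boundaries, and the densities are bounded between $\lambda$ and $\Lambda$. This theory gives $\varphi \in C^{1,\alpha}(\overline\Omega)$ and strict convexity. Since $f,g\in C^2$ and the boundaries are $C^4$, the higher-order global regularity results (Caffarelli 1996, Urbas 1997) upgrade this to $\varphi \in C^{3,\alpha}(\overline\Omega)$. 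In particular $\nabla\varphi\in C^{2}(\overline\Omega)$.

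\textbf{Step 2 (diffeomorphism and bi-Lipschitz).} Uniform ellipticity of the Monge–Ampère equation, combined with $C^2$ bounds on $\varphi$ up to the boundary, gives $cI \preceq D^2\varphi \preceq CI$ on $\overline\Omega$ for some $0<c\le C$. The lower bound comes from the determinant identity $\det D^2\varphi = f/(g\circ\nabla\varphi) \ge \lambda/\Lambda$ together with the upper bound on the eigenvalues. Hence $\nabla\varphi$ is injective, since it is the gradient of a strictly convex function, and it is $C$-Lipschitz. By symmetry the inverse map is $\nabla\varphi^*$, whose Hessian is $(D^2\varphi)^{-1}\circ\nabla\varphi^* \preceq c^{-1}I$, so $\nabla\varphi$ is bi-Lipschitz. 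The inverse function theorem then shows that $\nabla\varphi$ is a global $C^2$ diffeomorphism from $\overline\Omega$ onto $\overline{\Omega^*}$.

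\textbf{Step 3 (regularity of the geodesic).} Let $T_t = (1-t)\mathrm{id} + t\nabla\varphi$ and $\mu_t = (T_t)_\#\mu$. The velocity field is $v_t = (\nabla\varphi - \mathrm{id})\circ T_t^{-1}$. Its differential satisfies
\[
DT_t = (1-t)I + tD^2\varphi \succeq \min(1,c)\,I,
\]
so $T_t$ is invertible with $\operatorname{Lip}(T_t^{-1}) \le 1/\min(1,c)$ uniformly in $t$. It follows that
\[
\operatorname{Lip}(v_t) \le \frac{C+1}{\min(1,c)} \quad \text{for all } t\in[0,1].
\]
Moreover $\|v_t\|_{L^2(\mu_t)}^2 = W_2^2(\mu,\nu) < \infty$, because the supports are bounded. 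The curve is therefore strongly regular in the sense of \Cref{def: regular curves}, and in particular regular. Gigli's existence theorem for parallel transport along regular curves then yields existence of Wasserstein parallel transport.

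\textbf{Main obstacle.} The main obstacle is Step 1: obtaining regularity of $\nabla\varphi$ \emph{up to the boundary}, together with two-sided Hessian bounds. This requires carefully matching the paper's hypotheses (mutual uniform convexity, $C^4$ boundaries, $C^2$ densities) to the exact assumptions of the Caffarelli–Urbas global theorems. Everything after that step is elementary.
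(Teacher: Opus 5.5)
Your proposal is correct and follows essentially the same route as the paper. Both arguments first invoke global regularity for the Monge--Amp\`ere second boundary value problem to get a $C^2$ diffeomorphism with two-sided Hessian bounds; the paper cites Trudinger's Theorem 1.1 where you cite Caffarelli/Urbas under the same $C^4$, $C^2$ and uniform convexity hypotheses. Both then use $DT_t \succeq \min(1,c)I$ to bound $\operatorname{Lip}(v_t)$ uniformly in $t$, together with the finite kinetic energy.

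Your version is, if anything, slightly more careful in two places. You derive the Hessian lower bound from the determinant identity. You also bound $\operatorname{Lip}(v_t)$ as a composition of Lipschitz maps, which avoids needing convexity of $T_t(\Omega)$.
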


We provide the proof of \Cref{assumption implications} in \Cref{sec: proof of assumption implications}. This result describes sufficient conditions on the measures that guarantees the needed regularity for the existence of Wasserstein parallel transport when $\man = \mathbb{R}^d$. Note that when $\man \neq \mathbb{R}^d$, we simply directly assume the needed regularity of the curves of measures for parallel transport to exist. With this in mind, the following result proposes an approximation scheme for parallel transport along \textit{regular geodesics} between a source $\nu$ and a target $\mu$.

\begin{restatable}[One step approximation of $\mathcal{P}_{2}(\man)$ Parallel Transport along regular curves]{lemma}{}\label{one step parallel transport approx result} 
    Let $(\man, g)$ be a compact Riemannian manifold with injectivity radius bounded from below and suppose $\nu \in  \mathcal{P}_{2}(\man)$ is a measure with $v \in T_{\nu}\mathcal{P}_2(\man)$. Further let $\mu \in \mathcal{P}_{2}(\man)$ be the target measure and assume that there exists a $u$ such that $t \mapsto \nu_t \triangleq \left(F_t\right)_\# \nu$, $t \in [0, 1]$ (with $F_t \triangleq \exp(tu)$) traces out a strongly regular geodesic between $\nu$ and $\mu$ with velocity field $\nabla \psi_t$. Then for $s$ sufficiently small and for $w_s$ defined as
    \[w_s = \Pi_{\nu_s}\left(s^{-1}j_{\nu, u}\left(0, v\right)(s) \right) \quad \text{with}\quad j_{\nu, u}\left(0, v\right)(s) \triangleq \left(x \mapsto j_{x, u(x)}\left(0, v(x)\right)(s)\right) \circ F_s^{-1}\]
    we have
    \[\|w_s - {\PT}_{\nu \rightarrow \nu_s}(v)\|_{L^2(\nu_s)} \leq As^2\|v\|_{L^2(\nu)} + \left(e^{\int_0^1{\operatorname{Lip}}(\nabla \psi_r)dr} - 1\right)^2\|v\|_{L^2(\nu)}\left(\int_0^s{\operatorname{Lip}}(\nabla\psi_r)dr\right)^2.\]
    Moreover, the strong regularity of $t \mapsto \nu_t$ guarantees that $w_s$ approximates ${\PT}_{\nu \rightarrow \nu_s}(v)$, the parallel transport of $v$ along the geodesic $\mathbf{exp}_{\nu}(tu)$, in the sense that
    \[\|w_s - {\PT}_{\nu \rightarrow \nu_s}(v)\|_{L^2(\nu_s)} \leq Cs^2 \quad \text{for some $C>0$ independent of $s$.}\]
\end{restatable}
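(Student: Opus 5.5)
The argument splits the error into two pieces with the triangle inequality. The natural intermediate object is the \emph{Lagrangian base parallel transport}
\[
\tilde v_s \triangleq \left(x \mapsto {\PT}^{\man}_{0\rightarrow s, x}(v(x))\right)\circ F_s^{-1},
\]
where ${\PT}^{\man}_{0\rightarrow s,x}$ is parallel transport on $\man$ along $r\mapsto \exp_x(r u(x))$. This vector field is generally not tangent, so we project it. Since $\Pi_{\nu_s}$ is an orthogonal projection, it is $1$-Lipschitz in $L^2(\nu_s)$. We then write
\[
\|w_s - {\PT}_{\nu\to\nu_s}(v)\|_{L^2(\nu_s)} \le \|\Pi_{\nu_s}(s^{-1}j_{\nu,u}(0,v)(s) - \tilde v_s)\|_{L^2(\nu_s)} + \|\Pi_{\nu_s}\tilde v_s - {\PT}_{\nu\to\nu_s}(v)\|_{L^2(\nu_s)},
\]
and the two terms should produce the two summands of the claimed bound.

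\textbf{First term.} Drop the projection and change variables via $\nu_s = (F_s)_\#\nu$. This gives
\[
\int \|s^{-1}j_{x,u(x)}(0,v(x))(s) - {\PT}^{\man}_{0\to s,x}(v(x))\|_g^2\, d\nu(x).
\]
Apply \Cref{fanning approximation} pointwise along each geodesic $\exp_x(ru(x))$. Compactness of $\man$ and the injectivity radius bound give a uniform $\eta$ and a uniform constant $A$, since $A$ depends on curvature bounds on the compact set. Strong regularity bounds $\|u\|_\infty$, so for $s<\eta/\|u\|_\infty$ the integrand is at most $A^2s^4\|v(x)\|_g^2$. Integrating yields $As^2\|v\|_{L^2(\nu)}$.

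\textbf{Second term.} This is the comparison between Gigli's exact Wasserstein transport and the projected base transport, i.e.\ his estimate (4.18) in \citet{gigli2012second}. I would invoke that estimate directly: along a regular curve with Lipschitz velocity $\nabla\psi_r$, the discrepancy between $\Pi_{\nu_s}(\tilde v_s)$ and ${\PT}_{\nu\to\nu_s}(v)$ is bounded by $(e^{\int_0^1 \operatorname{Lip}(\nabla\psi_r)dr}-1)^2\|v\|_{L^2(\nu)}(\int_0^s\operatorname{Lip}(\nabla\psi_r)dr)^2$.

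If the reference only gives the estimate in a slightly different form, I would rederive it as follows. Write the parallel-transport ODE in Lagrangian coordinates. Then compare the translated field with the projected one using a Gr\"onwall argument. The error is generated by the non-tangential part of the transported field, whose rate is controlled by $\operatorname{Lip}(\nabla\psi_r)$, since the derivative of the flow $F_r$ deviates from parallel transport by at most $e^{\int\operatorname{Lip}}-1$. Integrating twice gives the square of $\int_0^s\operatorname{Lip}$. Matching the exact constants to Gigli's statement is the main obstacle. In particular, one must check that his hypotheses on regular curves hold for the geodesic $\nu_t$ and that his estimate is stated in $L^2(\nu_s)$ after pushing forward.

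\textbf{Final claim.} Strong regularity gives $L \triangleq \sup_r \operatorname{Lip}(\nabla\psi_r)<\infty$. Hence $\int_0^s\operatorname{Lip}(\nabla\psi_r)dr\le Ls$ and $e^{\int_0^1\operatorname{Lip}}-1\le e^{L}-1$. The second term is then at most $(e^L-1)^2L^2\|v\|_{L^2(\nu)}s^2$, so the whole error is at most $Cs^2$ with
\[
C = \left(A + (e^L-1)^2L^2\right)\|v\|_{L^2(\nu)},
\]
which is independent of $s$. Plain regularity would only give $o(1)$ for this term, which is why the final claim needs strong regularity.
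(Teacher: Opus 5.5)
Your proposal is correct and follows the paper's proof essentially step for step. You use the same intermediate field: base parallel transport along the Lagrangian geodesics $r\mapsto F_r(F_s^{-1}(x))$, then projected. The first term is handled the same way, via non-expansiveness of $\Pi_{\nu_s}$, the change of variables $\nu_s=(F_s)_\#\nu$ and the pointwise fanning estimate. For the second term you invoke Gigli's estimate (4.18) as a black box, exactly as the paper does, and you use $\sup_r \operatorname{Lip}(\nabla\psi_r)<\infty$ to make that term $O(s^2)$. Your remarks on the uniformity of $A$ and of the threshold $s<\eta/\|u\|_\infty$ are, if anything, slightly more careful than the paper, which applies the fanning bound pointwise without comment.
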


We provide the proof of \Cref{one step parallel transport approx result} in \Cref{sec: proof of one step parallel transport approx}. This result allows us to approximate parallel transport of a tangent vector $v$ on $(\mathcal{P}_{2}(\man), W_2)$ along \textit{strongly regular geodesics}, provided that we are able to solve for the Jacobi field $j_{\nu, u}$ on the base space $\man$ -- in particular, this scheme approximates Wasserstein parallel transport along an entire strongly regular geodesic by dividing it into a large number of small segments, and locally parallel transporting the tangent field along the paths defined by the optimal transport map. Note that if one takes the base space $\man = \Omega \subset \subset \mathbb{R}^d$, the Jacobi field approximation can be replaced with the exact Euclidean parallel transport, which is simply the identity map composed with the pullback. In other cases of constant curvature, Jacobi field solutions are available in closed form. Equipped with this general result, we will now describe a procedure to approximate geodesic Wasserstein parallel transport that achieves $O(N^{-1})$ accuracy (where $N$ is a user-chosen approximation resolution) detailed in \Cref{full approximation of parallel transport along regular curves}. We provide the proof in \Cref{sec: proof of full approximation of parallel transport along regular curves}.

\begin{restatable}[Approximation of $\mathcal{P}_{2}(\man)$ Parallel Transport along strongly regular curves]{thm}{} \label{full approximation of parallel transport along regular curves}
    Consider the setting of \Cref{one step parallel transport approx result}, where $\nu$ and $\mu$ are connected by a strongly regular geodesic and define the approximate parallel transport map $\widehat{{\PT}}_{s_1 \rightarrow s_2}$ to be the one described by the procedure of \Cref{one step parallel transport approx result}. Fix some $n\in \mathbb{N}$, let $s = 1/N$, define the maps 
    \[\widehat{{\PT}}_k \triangleq \widehat{{\PT}}_{(k-1)s \rightarrow ks} \quad \text{and} \qquad {\PT}_k \triangleq {\PT}_{(k-1)s \rightarrow ks}\]
    and let $v_k = ({\PT}_k \circ \dots \circ {\PT}_1)(v)$ and $\hat v_k = (\widehat{{\PT}}_k \circ \dots \circ \widehat{{\PT}}_1)(v)$. Then we have
    \[\|\hat v_N - v_N\|_{L^2(\mu)} = O(N^{-1}).\]
\end{restatable}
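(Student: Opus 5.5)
We will argue by a standard telescoping (Lady Windermere's fan) decomposition, in which local $O(s^2)$ errors from \Cref{one step parallel transport approx result} accumulate over $N$ steps into a global $O(N^{-1})$ error. The key structural fact is that exact Wasserstein parallel transport along a regular curve is an isometry between tangent spaces: $\|{\PT}_{k}(x)\|_{L^2(\nu_{ks})} = \|x\|_{L^2(\nu_{(k-1)s})}$. This follows from metric compatibility of $\nabla^{W_2}$ in \Cref{Wasserstein covariant deriv}, since $\frac{d}{dt}\|v_t\|^2_{L^2(\nu_t)} = 2\langle \nabla^{W_2} v_t, v_t\rangle = 0$ for parallel $v_t$. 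Linearity of ${\PT}_k$ is immediate from linearity of the PDE in \Cref{Wasserstein parallel transport pde}.

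\textbf{Step 1: one-step recursion.} Write $e_k = \hat v_k - v_k \in L^2(\nu_{ks})$. Then
\[
e_k = \big(\widehat{{\PT}}_k(\hat v_{k-1}) - {\PT}_k(\hat v_{k-1})\big) + {\PT}_k(\hat v_{k-1} - v_{k-1}).
\]
The second term has norm exactly $\|e_{k-1}\|$ by the isometry. For the first term we apply \Cref{one step parallel transport approx result} on the segment $[(k-1)s, ks]$, with base measure $\nu_{(k-1)s}$ and input $\hat v_{k-1}$. The restriction of a strongly regular geodesic, reparametrized to $[0,1]$ and started at $\nu_{(k-1)s}$, is again strongly regular with velocity $u\circ F_{(k-1)s}^{-1}$ in the Euclidean case, and similarly via the exponential map in general. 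Its Lipschitz constants are bounded by $L := \sup_t \operatorname{Lip}(\nabla\psi_t)$. The lemma then gives
\[
\|\widehat{{\PT}}_k(\hat v_{k-1}) - {\PT}_k(\hat v_{k-1})\| \le \big(A + (e^{L}-1)^2 L^2\big)s^2\,\|\hat v_{k-1}\|,
\]
using $\int_{(k-1)s}^{ks}\operatorname{Lip} \le Ls$. We must check that the constant is uniform in $k$. The constant $A$ from \Cref{fanning approximation} depends only on the compact manifold, its curvature bounds and injectivity radius $\eta$, and on $\|u\|_\infty$. This is why the lemma is stated for compact $\man$: we need $s<\eta/\|u\|_\infty$, which holds for $N$ large.

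\textbf{Step 2: bound $\|\hat v_k\|$ and sum.} Set $C = A + (e^L-1)^2L^2$. Then
\[
\|\hat v_k\| \le \|v_k\| + \|e_k\| = \|v\|_{L^2(\nu)} + \|e_k\|,
\]
so with $\|e_0\|=0$ we get $\|e_k\| \le (1+Cs^2)\|e_{k-1}\| + Cs^2\|v\|$. A discrete Grönwall argument yields
\[
\|e_N\| \le \big((1+Cs^2)^N - 1\big)\|v\| \le (e^{C/N}-1)\|v\| = O(N^{-1}).
\]
The projection $\Pi$ is a contraction, so it causes no extra growth. Alternatively, one can note that $\|\widehat{{\PT}}_k x\| \le (1+Cs^2)\|x\|$ directly, which gives the same bound.

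\textbf{Main obstacle.} The delicate point is the uniformity in Step 1. We must verify that the hypotheses and constants of \Cref{one step parallel transport approx result} hold uniformly when it is re-applied at each intermediate base measure $\nu_{(k-1)s}$ with input $\hat v_{k-1}$, which is not exactly parallel and is only in $L^2$. Concretely, we need three things:
\begin{enumerate}
\item The restricted geodesics stay within the compact region, so the injectivity radius bound applies.
\item The fanning constant $A$ scales linearly in $\|\hat v_{k-1}\|$. This follows from linearity of Jacobi fields in their initial data.
\item The composition with $F_s^{-1}$ is well defined. This uses bi-Lipschitzness of $F_t$, which follows from $\sup_t\operatorname{Lip}(\nabla\psi_t)<\infty$ via Grönwall on the flow.
\end{enumerate}
Once these are established, the remainder of the argument is the routine telescoping given above.
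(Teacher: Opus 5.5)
Your proposal is correct and matches the paper's proof. It uses the same decomposition of $e_k=\hat v_k-v_k$ via the one-step lemma and the isometry of exact parallel transport, the same bound $\|\hat v_{k-1}\|\le\|v\|_{L^2(\nu)}+\|e_{k-1}\|$, and the same recursion $\|e_k\|\le(1+Cs^2)\|e_{k-1}\|+Cs^2\|v\|_{L^2(\nu)}$; you close it with $(1+Cs^2)^N-1\le e^{C/N}-1=O(N^{-1})$ where the paper uses the expansion $(1+Cs^2)^{1/s}=1+Cs+O(s^2)$, and your attention to the uniformity in $k$ of the one-step constant is more careful than the paper, which simply asserts that $C$ is independent of $k$.
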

\begin{algorithm}[h]
\caption{Approximate Wasserstein Parallel Transport (WPT)}
\label{alg: W parallel transport}
\begin{algorithmic}[1]
    \Require Abs. continuous $\nu,\mu\in\mathcal P_2(\man)$, tangent vector $v\in T_\nu\mathcal P_2(\man)$, discretization level $N$.
    \State Compute the Brenier map $T_{\nu\to\mu}$.
    \State Set $u(x) = \log_x(T_{\nu\to\mu}(x))$ and $s = 1/N$. \Comment{$u$ generates the geodesic}
    \For{$i=0,\dots,N$}
        \State $F_i(x)= \exp_x\bigl(isu(x)\bigr)$. 
    \EndFor
    \State $v_0 = v$. \Comment{initialize transported field}
    \For{$i=1$ to $N$}
        \State $\nu_i= (F_{i})_\#\nu$, $S_i= F_i\circ F_{i-1}^{-1}$, $u_i(x)= \log_x(S_i(x))$. \Comment{one-step transport}
        \State $w_i(x)= s^{-1}j_{(x,u_i(x))}(0,v_{i-1}(x))(s)$. \Comment{Jacobi propagation on $\man$}
        \State $v_{i}= \Pi_{\nu_{i}}(w_i\circ S_i^{-1})$. \Comment{project to $T_{\nu_{i}}\mathcal P_2(\man)$}
    \EndFor
    \State \Return $v_N\in T_\mu\mathcal P_2(\man)$.
\end{algorithmic}
\end{algorithm}

\subsection{Weighted Helmholtz-Hodge Decomposition} \label{sec: helmholtz decomp}

The $L^2(\mu)$ projection of velocity fields required in \Cref{alg: W parallel transport} is a non-trivial computational procedure, and it warrants further discussion. In this section we will propose a procedure for estimating this vector field projection using \textit{Reproducing Kernel Hilbert Space (RKHS)} regression. In this section we will assume that the measure $\mu$ has a Lebesgue density $\rho$. The projection $\Pi_{\mu}: L^2(\mu; \mathbb{R}^d) \rightarrow T_\mu\mathcal{P}_2(\mathbb{R}^d)$ maps $L^2(\mu; \mathbb{R}^d)$ vector fields to the $L^2(\mu)$ closure of gradient fields through the following optimization procedure,:
\[\Pi_{\mu}(v) = \arg\min_{u \in \mathcal{G}_\mu}\|u - v\|_{L^2(\mu)}^2\]
where $\mathcal{G}_\mu \triangleq \overline{\{\nabla\phi\,:\, \phi \in C^{\infty}_c(\mathbb{R}^d) \}}^{L^2(\mu)}$. To estimate this projection from samples, we will use Reproducing Kernel Hilbert Space (RKHS) regression. 

A Reproducing Kernel Hilbert Space (RKHS) is a Hilbert space $\mathcal{H}$ of functions $f: \mathcal{X} \rightarrow \mathbb{R}$ with a so-called \textit{reproducing kernel} $K: \mathcal{X}\times \mathcal{X} \rightarrow \mathbb{R}$ characterized by the property that $K(x, \cdot ) \in \mathcal{H}$ and $f(x) = \langle K(x, \cdot), f \rangle_{\mathcal{H}}$ \citep{aronszajn1950theory}. A key reason for the widespread adoption of RKHSs in statistical applications is the representer theorem, which characterizes solutions of regression problems when optimizing over the function class $\mathcal{H}$ subject to a norm penalty.

\begin{restatable}[Representer Theorem]{thm}{}
    For a dataset $\mathcal{X} = \{x_i\}_{i = 1}^n$, consider an RKHS $\mathcal{H}$ of functions $f: \mathcal{X} \rightarrow \mathbb{R}$ with a kernel $K$. For the optimization problem
    \[f^* \in \arg\min_{f\in \mathcal{H}}\sum_{i = 1}^n\left|f(x_i) - y_i\right|^2 + \lambda \|f\|_\mathcal{H}^2\]
    the solution can be expressed as 
    \[f^* = \sum_{i = 1}^n\alpha_i K(x_i, \cdot )\]
    for a set of scalars $\alpha_1, \dots, \alpha_n$. 
\end{restatable}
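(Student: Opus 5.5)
The statement is the Representer Theorem, a classical result (Kimeldorf--Wahba, Sch\"olkopf--Herbrich--Smola), and the plan is the standard orthogonal-decomposition argument.

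\textbf{Step 1: decompose.} Let $\mathcal{H}_0 = \operatorname{span}\{K(x_i,\cdot)\}_{i=1}^n$. This is a finite-dimensional, hence closed, subspace of $\mathcal{H}$. So every $f \in \mathcal{H}$ splits uniquely as $f = f_0 + f_\perp$ with $f_0 \in \mathcal{H}_0$ and $f_\perp \perp \mathcal{H}_0$.

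\textbf{Step 2: the data term ignores $f_\perp$.} By the reproducing property,
\[
f(x_i) = \langle K(x_i,\cdot), f\rangle_{\mathcal{H}} = \langle K(x_i,\cdot), f_0\rangle_{\mathcal{H}} + \langle K(x_i,\cdot), f_\perp\rangle_{\mathcal{H}} = f_0(x_i).
\]
Hence $\sum_i |f(x_i)-y_i|^2 = \sum_i |f_0(x_i)-y_i|^2$.

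\textbf{Step 3: the penalty favors $f_0$.} By Pythagoras, $\|f\|_{\mathcal{H}}^2 = \|f_0\|_{\mathcal{H}}^2 + \|f_\perp\|_{\mathcal{H}}^2$. For $\lambda>0$, the objective at $f$ is therefore at least the objective at $f_0$, with strict inequality whenever $f_\perp \neq 0$. So any minimizer $f^*$ must have $f^*_\perp = 0$, that is, $f^* = \sum_i \alpha_i K(x_i,\cdot)$.

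\textbf{Step 4: existence.} Restricted to $\mathcal{H}_0$, the objective is a continuous, coercive (for $\lambda>0$) convex quadratic in finitely many coefficients, so a minimizer exists. Explicitly, $\alpha = (\mathbf{K}+\lambda I)^{-1} y$ with Gram matrix $\mathbf{K}_{ij}=K(x_i,x_j)$.

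\textbf{Remarks.} If $\lambda = 0$ is allowed, uniqueness of the component $f_\perp$ is lost. One should then read the claim as saying that there exists a minimizer of the stated form: given any minimizer $f$, its projection $f_0$ is also a minimizer. The domain of $\mathcal{H}$ is stated as the dataset itself but should be read as a general input space containing the $x_i$; the argument is unchanged.

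There is no real obstacle. The only point needing care is the $\lambda=0$ case just described; the rest is routine.
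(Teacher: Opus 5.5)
Your argument is correct and follows essentially the paper's route. The paper defers this statement to an external reference, with only the remark that solutions lie in the span of the kernel sections, but it runs exactly your decomposition $f = f^{\parallel} + f^{\perp}$ (reproducing property to remove $f^{\perp}$ from the data term, Pythagoras for the penalty) in its proof of the Gradient Representer Theorem. Your handling of existence and of $\lambda = 0$ is accurate, as is your note that $(\mathbf{K}+\lambda I)^{-1}y$ needs only $\lambda>0$, not invertibility of the Gram matrix as the paper states.
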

One can find a proof by \citet{ghojogh2021reproducing}. The result stems from the fact that the optimization problem is a norm-penalized $L^2(\mathbb{P}_n)$ projection onto the RKHS, and solutions therefore lie in the span of kernel sections $K(x_i, \cdot)$. The critical implication is that for a fixed dataset $\{x_1, \dots, x_n\}$ one only needs to solve for $\alpha = (\alpha_1, \dots, \alpha_n)^{\top} \in \mathbb{R}^n$ via
\[\alpha^* \in \arg\min_{\alpha \in \mathbb{R}^n}\|Y - K\alpha\|_2^2 + \lambda\alpha^{\top}K\alpha\]
which has the solution $\alpha = (K + \lambda I_n)^{-1}Y$ where $K_{ij} = K(x_i, x_j)$ when $K$ is invertible. To adapt this to our setting, where we instead want to find the best \textit{gradient-field} solution, we will make the following modification. Let $\mathcal{H}$ be a scalar RKHS on $\mathbb{R}^d$ with a twice-differentiable \textit{Mercer} kernel (see \citet{zhou2008derivative}) $K \in C^2(\mathbb{R}^d \times \mathbb{R}^d)$ and consider the function class $\{\nabla f : f\in \mathcal{H}\}$. For a population measure $\mu$ and target field $v: \mathbb{R}^d \rightarrow \mathbb{R}^d$ define the population-level solution 
\begin{equation}
    f^*_{\lambda} \in \arg\min_{f \in \mathcal{H}}\left\{\left\|\nabla f(x) - v(x)\right\|_{L^2(\mu)}^2 + \lambda\|f\|_{\mathcal{H}}^2\right\} \label{eq: population gradient RKHS problem}
\end{equation}
and the sample analogue
\begin{equation}
    \hat{f}_{\lambda} \in \arg\min_{f \in \mathcal{H}}\left\{\frac{1}{n}\sum_{i = 1}^n\left\|\nabla f(x_i) - v(x_i)\right\|_{2}^2 + \lambda\|f\|_{\mathcal{H}}^2\right\}. \label{eq: sample gradient RKHS problem}
\end{equation}
We then have the following result, which we call the \textit{gradient} representer theorem. We provide the proof of \Cref{gradient representer thm} in \Cref{sec: proof of gradient representer thm}.

\begin{restatable}[Gradient Representer Theorem]{thm}{} \label{gradient representer thm}
    The optimization problem in \Cref{eq: sample gradient RKHS problem} has a solution of the form $\hat f_\lambda (\cdot) = \sum_{i = 1}^n\langle c_i, \nabla K(x_i, \cdot) \rangle$ for some set of vectors $c_{i} \in \mathbb{R}^d$ and the inner product is the standard Euclidean inner product on $\mathbb{R}^d$. 
\end{restatable}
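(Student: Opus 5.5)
The plan is the usual orthogonal-decomposition argument behind the classical representer theorem, adapted to derivative evaluations. The key tool is the derivative reproducing property for $C^2$ Mercer kernels \citep{zhou2008derivative}. For each $x \in \mathbb{R}^d$ and $j = 1, \dots, d$, the partial derivative $\partial_j K(x,\cdot)$ (derivative in the first argument) belongs to $\mathcal{H}$. Moreover, for every $f \in \mathcal{H}$,
\[\partial_j f(x) = \langle \partial_j K(x,\cdot), f\rangle_{\mathcal{H}}.\]
So the gradient evaluation $f \mapsto \nabla f(x_i)$ is a bounded linear map $\mathcal{H}\to\mathbb{R}^d$ represented by the $d$ elements $\partial_1 K(x_i,\cdot),\dots,\partial_d K(x_i,\cdot)$. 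I would take this as given from the cited reference, or sketch it by showing that difference quotients of $K(x,\cdot)$ are Cauchy in $\mathcal{H}$ because $K\in C^2$.

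Next define the finite-dimensional subspace
\[\mathcal{V} = \operatorname{span}\{\partial_j K(x_i,\cdot) : i=1,\dots,n,\ j=1,\dots,d\} \subset \mathcal{H}.\]
Being finite-dimensional, $\mathcal{V}$ is closed, so every $f\in\mathcal{H}$ decomposes as $f = f_{\mathcal V} + f_\perp$ with $f_\perp \perp \mathcal{V}$. By the derivative reproducing property, $\partial_j f_\perp(x_i) = \langle \partial_j K(x_i,\cdot), f_\perp\rangle_{\mathcal H} = 0$ for all $i,j$. Hence $\nabla f(x_i) = \nabla f_{\mathcal V}(x_i)$, and the data-fit term in \cref{eq: sample gradient RKHS problem} depends only on $f_{\mathcal V}$. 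Meanwhile $\|f\|_{\mathcal H}^2 = \|f_{\mathcal V}\|_{\mathcal H}^2 + \|f_\perp\|_{\mathcal H}^2$. Thus replacing $f$ by $f_{\mathcal V}$ never increases the objective, and strictly decreases it when $\lambda>0$ and $f_\perp\neq 0$. Any minimizer, if one exists, therefore lies in $\mathcal{V}$.

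Existence then reduces to a finite-dimensional problem. Restricted to $\mathcal{V}$, the objective is a continuous convex quadratic, coercive when $\lambda>0$ because of the norm penalty, so it attains its minimum on $\mathcal{V}$. By the previous paragraph this minimizer is also a global minimizer over $\mathcal{H}$. Writing the minimizer as $\hat f_\lambda = \sum_{i}\sum_j c_{ij}\,\partial_j K(x_i,\cdot)$ and setting $c_i = (c_{i1},\dots,c_{id})^\top$ gives exactly $\hat f_\lambda(\cdot)=\sum_i \langle c_i, \nabla K(x_i,\cdot)\rangle$, where $\nabla$ acts on the first argument.

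The only nontrivial step is the derivative reproducing property, in particular membership of $\partial_j K(x,\cdot)$ in $\mathcal H$. If $\lambda = 0$ is allowed, I would instead argue existence via projection of the data vector onto the closed range of the finite-rank map $\mathcal V\to\mathbb{R}^{nd}$. Everything else is routine.
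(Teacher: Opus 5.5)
Your proposal is correct and follows essentially the same argument as the paper. Both use the derivative reproducing property, split $f$ orthogonally against $\operatorname{span}\{\partial_j K(x_i,\cdot)\}$ (the remainder vanishes in every gradient evaluation and only adds to $\|f\|_{\mathcal H}^2$), and conclude that a minimizer can be taken in that span. What you add, beyond the paper, is an explicit existence argument via a coercive convex quadratic on the finite-dimensional span, which the paper leaves implicit. You also place the derivative on the first argument of $K$ as in Zhou's theorem; the paper's $\nabla_2 K(x_i,\cdot)$ agrees with this only up to a sign absorbed into $c_i$, and only for shift-invariant kernels such as the Mat\'ern and random-feature kernels the paper uses.
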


As with classical RKHS regression, we can obtain a closed-form solution for the empirical optimizers $c_1^*, \dots, c_n^*$ by solving a linear system. Let $D \in \mathbb{R}^{nd \times nd}$ be the block matrix where $D_{ij} = \nabla_2^2K(x_i, x_j) \in \mathbb{R}^{d\times d}$, let $G \in \mathbb{R}^{nd \times nd}$ be a block Gram matrix where $G_{ij} = \nabla_1 \nabla_2 K(x_i, x_j) \in \mathbb{R}^{d\times d}$ and define the stacked vectors 
\[c \triangleq (c_1, \dots, c_n)^{\top} \in \mathbb{R}^{nd} \quad \text{and} \quad v \triangleq (v(x_1), \dots, v(x_n))^{\top}.\]
It follows that the RKHS norm of $\hat f$ can be written as $\|\hat f\|_\mathcal{H}^2 = c^{\top}Gc$, and our empirical objective function is 
\[J(c) = \frac{1}{n}\|Dc - v\|_2^2 + \lambda c^{\top}Gc.\]
If $D^\top D + n\lambda G$ is invertible, then the objective function admits a unique minimizer $c^* = (D^\top D + n\lambda G)^{-1}D^{\top}v$, and our estimated projected field is given by 
\[\nabla \hat f_{n, \lambda} = \begin{bmatrix}
    \nabla \hat f(x_1) \\
    \vdots \\
    \nabla \hat f(x_n)
\end{bmatrix} = Dc.\]
We provide the full derivation of the empirical objective function $J$ and the solution $\nabla \hat f_{n, \lambda}$ in \Cref{sec: RKHS empirical objective derivation}. We will now show that this procedure is consistent for estimating the Helmholtz-Hodge projection.

\begin{restatable}{thm}{} \label{RKHS consistency}
Let $\Omega\subset \mathbb R^d$ be compact, let $\mu$ be a probability measure supported on $\Omega$, and let $v\in L^2(\mu;\mathbb R^d)$ such that $\|v(x)\|_2 \leq M < \infty$ for $\mu$-a.e. $x$. Let $\mathcal H$ be a scalar RKHS on $\mathbb{R}^d$ with kernel $K$, and assume: 
\begin{enumerate} 
    \item For each $j\in\{1,\dots,d\}$ and $x\in\mathbb{R}^d$, the derivative representer $\psi_j$ exists, \[ \psi_j(x)\triangleq \partial_{j}K(x, \cdot)\in \mathcal H \quad \text{and} \quad \kappa^2 \triangleq \sup_{x\in\mathbb{R}^d}\sum_{j=1}^d \|\psi_j(x)\|_{\mathcal H}^2 < \infty. \] 
    \item The RKHS gradient class is dense in the gradient subspace, i.e. $\overline{\{\nabla f:f\in \mathcal H\}}^{\,L^2(\mu)}=\mathcal G_\mu.$
\end{enumerate} 
Define \[ \hat f_{n,\lambda}\in \arg\min_{f\in\mathcal H} \left\{ \frac1n\sum_{i=1}^n \|\nabla f(x_i)-v(x_i)\|_2^2+\lambda\|f\|_{\mathcal H}^2 \right\}\]
where $x_1, \dots, x_n \sim \mu$. Then if $\lambda = O(n^{-\ell})$ for any $\ell \in  (0, 1/2)$, we have
\[\|\nabla \hat f_{n, \lambda} - \Pi_{\mu}(v)\|_{L^2(\mu)} \overset{p}\longrightarrow 0.\]
\end{restatable}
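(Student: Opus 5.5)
We write the estimator as a regularized least-squares problem in the RKHS and split the error into an approximation (bias) part and an estimation (variance) part. Define the bounded linear operator $S:\mathcal H\to L^2(\mu;\mathbb R^d)$, $Sf=\nabla f$. By assumption 1, $\partial_j f(x)=\langle f,\psi_j(x)\rangle_{\mathcal H}$, so $\|\nabla f(x)\|_2\le \kappa\|f\|_{\mathcal H}$ uniformly in $x$. In particular $S$ is bounded with $\|S\|\le\kappa$ and is Hilbert--Schmidt. Put $T=S^*S=\int\sum_j \psi_j(x)\otimes\psi_j(x)\,d\mu(x)$, and let $T_n$ be its empirical analogue with $\mu$ replaced by $\mu_n=\frac1n\sum_i\delta_{x_i}$. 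Put $g=S^*v=\int\sum_j v_j(x)\psi_j(x)\,d\mu$, with empirical analogue $g_n$. The first-order conditions then give
\[ f^*_\lambda=(T+\lambda)^{-1}g,\qquad \hat f_{n,\lambda}=(T_n+\lambda)^{-1}g_n, \]
and this is consistent with \Cref{gradient representer thm}. Throughout, $f^*_\lambda$ is the population solution of \Cref{eq: population gradient RKHS problem}.

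\textbf{Step 1 (bias).} We show $\|\nabla f^*_\lambda-\Pi_\mu v\|_{L^2(\mu)}\to0$ as $\lambda\to0$. Since $\Pi_\mu$ is the orthogonal projection onto $\mathcal G_\mu$, we have $S^*v=S^*\Pi_\mu v$, because $v-\Pi_\mu v\perp \mathcal G_\mu\supseteq \operatorname{ran}S$. Hence
\[ Sf^*_\lambda=S(S^*S+\lambda)^{-1}S^*\Pi_\mu v=SS^*(SS^*+\lambda)^{-1}\Pi_\mu v. \]
By the spectral theorem applied to $SS^*$, this converges in $L^2(\mu)$ to the projection of $\Pi_\mu v$ onto $\overline{\operatorname{ran} S}$. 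By assumption 2, $\overline{\operatorname{ran} S}=\mathcal G_\mu$, so the limit is $\Pi_\mu v$ itself. No rate is needed here, only convergence.

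\textbf{Step 2 (variance).} We bound $\|S(\hat f_{n,\lambda}-f^*_\lambda)\|_{L^2(\mu)}\le\kappa\|\hat f_{n,\lambda}-f^*_\lambda\|_{\mathcal H}$. Using the standard identity
\[ \hat f-f^*_\lambda=(T_n+\lambda)^{-1}\big[(g_n-g)-(T_n-T)f^*_\lambda\big], \]
we get
\[ \|\hat f-f^*_\lambda\|_{\mathcal H}\le\lambda^{-1}\big(\|g_n-g\|_{\mathcal H}+\|T_n-T\|_{HS}\|f^*_\lambda\|_{\mathcal H}\big). \]
The summands $\sum_j v_j(x_i)\psi_j(x_i)$ are i.i.d. Hilbert-space valued and bounded by $M\kappa$, and the summands $\sum_j\psi_j(x_i)\otimes\psi_j(x_i)$ are bounded in HS norm by $\kappa^2$. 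A Hilbert-space Chebyshev bound (variance of a mean of i.i.d. bounded vectors) therefore gives $\|g_n-g\|=O_p(n^{-1/2})$ and $\|T_n-T\|_{HS}=O_p(n^{-1/2})$. For the remaining factor, comparing the objective at $f^*_\lambda$ with its value at $f=0$ gives $\lambda\|f^*_\lambda\|^2\le\|v\|^2_{L^2(\mu)}\le M^2$, so $\|f^*_\lambda\|_{\mathcal H}\le M\lambda^{-1/2}$. Combining,
\[ \|\hat f-f^*_\lambda\|_{\mathcal H}=O_p\big(n^{-1/2}\lambda^{-1}+n^{-1/2}\lambda^{-3/2}\big). \]

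\textbf{Main obstacle.} The crude $\lambda^{-3/2}$ factor in Step 2 is the difficulty. With $\lambda\asymp n^{-\ell}$ it only yields consistency for $\ell<1/3$, not the full range $\ell<1/2$. To recover $\ell<1/2$ we sharpen the term $(T_n-T)f^*_\lambda$. The idea is to measure the error in the $L^2(\mu)$ norm through $T^{1/2}$ rather than in $\mathcal H$, and to use two facts:
\[ \|T^{1/2}(T_n+\lambda)^{-1}\|\lesssim\lambda^{-1/2}\quad\text{once }\|T_n-T\|\le\lambda/2, \]
which holds with high probability whenever $n^{-1/2}=o(\lambda)$, and
\[ \|Sf^*_\lambda\|_{L^2(\mu)}\le M. \]
We write $(T_n-T)f^*_\lambda$ as an empirical mean of $\sum_j\partial_jf^*_\lambda(x)\psi_j(x)$ minus its expectation. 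Its summands have second moment bounded by $\kappa^2\|\nabla f^*_\lambda\|^2_{L^2(\mu)}\le\kappa^2 M^2$, which is free of $\lambda$. This gives an $L^2(\mu)$ error of order $O_p(n^{-1/2}\lambda^{-1/2})+O_p(n^{-1/2}\lambda^{-1})$, which tends to $0$ whenever $\lambda\to0$ with $\lambda n^{1/2}\to\infty$. Both conditions hold for $\lambda\asymp n^{-\ell}$ with $\ell\in(0,1/2)$. Reading the hypothesis $\lambda=O(n^{-\ell})$ as $\lambda\asymp n^{-\ell}$, Steps 1 and 2 together with the triangle inequality give the claimed convergence in probability.
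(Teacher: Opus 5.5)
Your proposal is correct. It shares the paper's structural step: the limit is the projection of $v$ onto $\overline{\operatorname{ran}S}$, which equals $\Pi_\mu v$ by assumption 2. Beyond that, you replace the paper's argument, which is essentially a single citation, with a self-contained proof.

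The paper checks that $A=\nabla:\mathcal H\to L^2(\mu;\mathbb R^d)$ is bounded with $\|A\|\le\kappa$ (your $\|S\|\le\kappa$). It then invokes Theorem 5 of De Vito et al.\ (2005) to get $\nabla\hat f_{n,\lambda}\to P_{\overline{\operatorname{ran}A}}v$, and identifies that limit with $\Pi_\mu v$. You instead prove both halves directly:
\begin{itemize}
\item the bias part, by spectral calculus for $SS^*$ together with $S^*v=S^*\Pi_\mu v$;
\item the sample-error part, via the resolvent identity, Hilbert-space Chebyshev bounds for $g_n$ and $T_n$, and the $T^{1/2}$ refinement.
\end{itemize}

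The citation buys brevity. Your route buys three things.
\begin{itemize}
\item It verifies that the concentration inputs hold for the gradient-sampling operator $T_n=\frac1n\sum_i\sum_j\psi_j(x_i)\otimes\psi_j(x_i)$. The paper leaves this implicit, having checked only boundedness of $A$.
\item It shows exactly where $\ell<1/2$ enters. You need $\lambda\sqrt n\to\infty$ so that $\|T^{1/2}(T_n+\lambda)^{-1}\|\lesssim\lambda^{-1/2}$ with probability tending to one. Meanwhile the $\lambda$-free second-moment bound $\kappa^2\|\nabla f^*_\lambda\|_{L^2(\mu)}^2\le\kappa^2M^2$ removes the lossy $\lambda^{-3/2}$ factor.
\item It makes explicit that the hypothesis must be read as $\lambda\asymp n^{-\ell}$, since a lower bound on $\lambda$ is essential. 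The paper's statement inherits this ambiguity from the citation.
\end{itemize}

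Two small remarks. First, on the event $\|T_n-T\|\le\lambda/2$ your error is already $O_p(n^{-1/2}\lambda^{-1/2})$. The extra $O_p(n^{-1/2}\lambda^{-1})$ you list is not part of the error; it is really the condition for that event to have probability tending to one. Second, both you and the paper tacitly need $x\mapsto\psi_j(x)$ to be measurable into a separable $\mathcal H$, for example from continuity of the second derivatives of $K$. Without this, $T$ and $g$ are not well-defined Bochner integrals.
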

\noindent \textit{Proof.} Let $f^*_\lambda$ be defined as in \Cref{eq: population gradient RKHS problem}. To prove this result we will leverage the results of \citet{de2005learning}, where the consistency that we desire is proven for bounded linear operators $A: \mathcal{H} \rightarrow \mathcal{K}$ where $\mathcal{K}$ is another Hilbert space. Under the hypotheses stated, \Cref{bounded linear operator} indicates that the operator $A = \nabla$ is indeed a bounded linear operator.
Theorem 5 of \citet{de2005learning} gives $\|\nabla \hat f_\lambda - Pv\|_{L^2(\mu)} \rightarrow 0$ where $P$ is the orthogonal projection onto $\overline{\text{Ran}(A)}$, which in our case satisfies \[\overline{\text{Ran}(A)} = \overline{\{\nabla f: f \in \mathcal{H}\}}^{L^2(\mu)}.\] 
Since the space $\{\nabla f : f \in \mathcal{H}\}$ is assumed to be dense in $\overline{\{\nabla f: f \in C^\infty_c(\mathbb{R}^d)\}}^{L^2(\mu)}$, we know that $Pv = \Pi_{\mu}(v)$. \qed{}

\Cref{RKHS consistency} guarantees consistency of the RKHS procedure under the assumption that the function class is dense in the space of gradients of functions of interest. We will now show that this property is satisfied when one chooses a kernel inducing an RKHS that is norm-equivalent to a Sobolev space of finite smoothness. Recall that a Sobolev space $W^{k, p}(\mathbb{R})$ for $1 \le p \leq \infty$ is the subset of functions $f$ in $L^p(\mathbb{R})$ such that $f$ and its weak derivatives up to order $k$ have finite $L^p$ norm. When $p = 2$, the function space forms a Hilbert space and we denote $H^k = W^{k, 2}.$ 

\begin{restatable}[Sufficient conditions for density]{prop}{} \label{suff cond for density}
    Let $\mu$ be a measure supported on a compact domain $\Omega \subset \mathbb{R}^d$. Suppose that $\mu$ admits a Lebesgue density $\rho$ satisfying $\rho \leq M < \infty$, and let $K$ be a kernel such that $\mathcal{H}$, the RKHS induced by $K$, is set and norm equivalent (denoted $\cong$) to the Sobolev space $H^\tau(\mathbb{R}^d)$ for some $\tau \in \mathbb{N}$. Then
    \[\overline{\left\{\nabla f : f \in \mathcal{H}\right\}}^{L^2(\mu)} = \overline{\{\nabla f: f \in C^\infty_c(\mathbb{R}^d)\}}^{L^2(\mu)}.\]
    and thus $\left\{\nabla f : f \in \mathcal{H}\right\}$ is dense in $\overline{\{\nabla f: f \in C^\infty_c(\mathbb{R}^d)\}}^{L^2(\mu)}.$
\end{restatable}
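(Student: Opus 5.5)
We show both inclusions between the two $L^2(\mu)$ closures. Since $\mathcal{H}\cong H^\tau(\mathbb{R}^d)$ as sets, it suffices to show
\[
\overline{\{\nabla f : f\in H^\tau(\mathbb{R}^d)\}}^{L^2(\mu)} = \overline{\{\nabla f : f\in C_c^\infty(\mathbb{R}^d)\}}^{L^2(\mu)}.
\]
The density hypothesis gives the basic comparison: for any $g\in L^2(\mathbb{R}^d;\mathbb{R}^d)$,
\[
\|g\|_{L^2(\mu)}^2=\int_\Omega \|g\|_2^2\rho\,dx\le M\|g\|_{L^2(\mathbb{R}^d)}^2 .
\]
Hence $L^2(\mathbb{R}^d)$-convergence of gradients implies $L^2(\mu)$-convergence. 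This reduces the problem to Lebesgue-space density statements about Sobolev spaces.

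\textbf{Inclusion $\supseteq$.} We show $C_c^\infty(\mathbb{R}^d)\subset H^\tau(\mathbb{R}^d)$. Any $\phi\in C_c^\infty$ has all weak derivatives compactly supported and bounded, so they lie in $L^2$. Thus every $\nabla\phi$ is already in the RKHS gradient class, and taking closures gives the inclusion.

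\textbf{Inclusion $\subseteq$.} Fix $f\in H^\tau(\mathbb{R}^d)$ with $\tau\ge 1$. We use the standard fact that $C_c^\infty(\mathbb{R}^d)$ is dense in $H^\tau(\mathbb{R}^d)$ on the whole space, obtained by mollification followed by smooth cutoff. This gives $\phi_k\in C_c^\infty$ with $\|\phi_k-f\|_{H^\tau}\to 0$. In particular $\|\nabla\phi_k-\nabla f\|_{L^2(\mathbb{R}^d)}\le\|\phi_k-f\|_{H^1}\to 0$. By the comparison above, $\nabla\phi_k\to\nabla f$ in $L^2(\mu)$, so $\nabla f$ lies in the right-hand closure. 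Since that closure is closed, the closure of the whole RKHS gradient class lies in it as well.

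\textbf{Potential obstacles.} There are two places needing care.

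First, norm equivalence is used only as set equality. The boundedness of $\rho$ is what lets $L^2(\mathbb{R}^d)$ convergence control $L^2(\mu)$ convergence. Without it, $\mu$ could concentrate where $L^2(\mathbb{R}^d)$ gives no control, and the argument would break.

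Second, we must make sure $\nabla f$ even belongs to $L^2(\mu)$ for $f\in H^\tau$, so that it is a genuine element there. This follows from the same comparison, since $\|\nabla f\|_{L^2(\mu)}^2\le M\|f\|_{H^1}^2<\infty$, with $\nabla f$ the weak gradient, defined Lebesgue-a.e. and hence $\mu$-a.e. because $\mu\ll$ Lebesgue. This assumes $\tau\ge 1$; if $\tau=0$, the weak gradient need not be a function and the statement should be read with $\tau\ge1$.

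The density claim in the statement then follows immediately from the equality of closures.
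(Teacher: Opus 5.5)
Your proposal is correct and follows essentially the same argument as the paper: the inclusion $\supseteq$ comes from $C_c^\infty(\mathbb{R}^d)\subset\mathcal{H}$, and the inclusion $\subseteq$ comes from the density of $C_c^\infty(\mathbb{R}^d)$ in $H^\tau(\mathbb{R}^d)$ combined with $\|\nabla g\|_{L^2(\mu)}^2\le M\|g\|_{H^1(\mathbb{R}^d)}^2$ for $\tau\ge 1$. Your caveat about $\tau=0$ is harmless but moot, since $H^\tau(\mathbb{R}^d)$ can be norm-equivalent to an RKHS only when $\tau>d/2$, and that already forces $\tau\ge 1$.
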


We provide the proof of \Cref{suff cond for density} in \Cref{sec: proof of suff cond for density}. The result indicates that any choice of kernel yielding an RKHS that is equivalent (as sets and in norm) to the Sobolev space $H^{\tau}(\mathbb{R}^d)$ for some $\tau \geq 1$ yields the density condition needed for consistency (\Cref{RKHS consistency}). An example of such a kernel is the M\'atern kernel \citep{matern1960spatial},
\begin{equation}
    K_{a, p}(x,y) = \frac{2^{1-p}}{\Gamma(p)}\left(\frac{\|x - y\|_2}{a}\right)K_p\left(\frac{\|x - y\|_2}{a}\right)
\end{equation}
for some $a, p > 0$ where $K_p(\cdot)$ is the \textit{modified Bessel function of the second kind} \citep{emery2025towards}. When one chooses this kernel with $p > 0$, the kernel's spectral density $\widehat K(\omega)$ satisfies $ \widehat K(\omega)\asymp (1 + \|\omega\|_2^2)^{-p-d/2}$, which makes it norm equivalent to $H^{p+d/2}(\mathbb{R}^d)$ \citep{emery2025towards}.

\begin{figure}
    \centering \includegraphics[width=\linewidth]{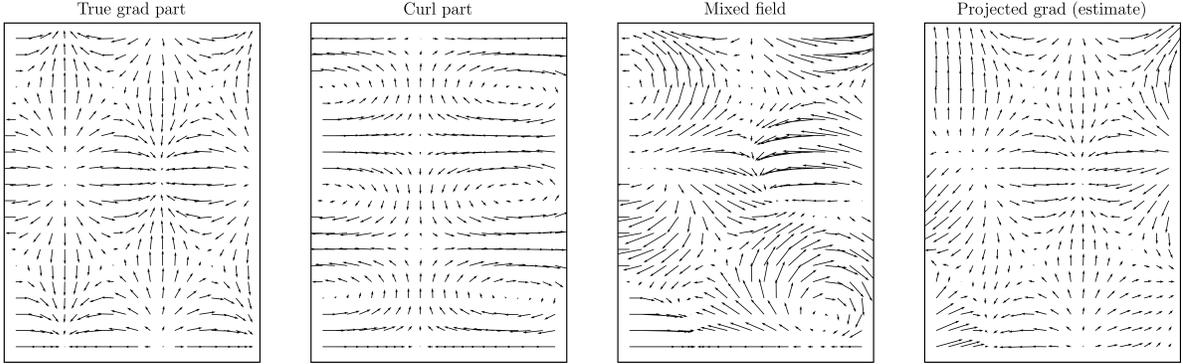}
    \caption{Estimated projection of a non-conservative vector field onto the space of gradient fields using the procedure described in \Cref{sec: helmholtz decomp}}
    \label{fig: helmholtz}
\end{figure}

\section{Reconstructing Counterfactual Dynamics}\label{sec:counterfactual_dyn}

Equipped with an algorithm for approximating Wasserstein parallel transport, we will now leverage it to construct a procedure for predicting dynamics on the space of measures under a parallel trends assumption. In this section we will explicitly state the algorithm, and we will prove an error bound that quantifies the distance between the predicted counterfactual dynamics and the true dynamics; we note that this error bound represents population level \textit{approximation} error, and does not include statistical error. We defer a rigorous treatment of the statistical aspects of estimating parallel transport to future work. After describing the algorithm, we will illustrate the conceptual utility of this procedure by instantiating it in the Difference-in-Differences framework in Causal Inference, and showing that it recovers the well known \textit{parallel trends} assumption.

\subsection{Wasserstein Counterfactual Dynamics Prediction}

\begin{figure}[t]
  \centering
  \includegraphics[width=\linewidth]{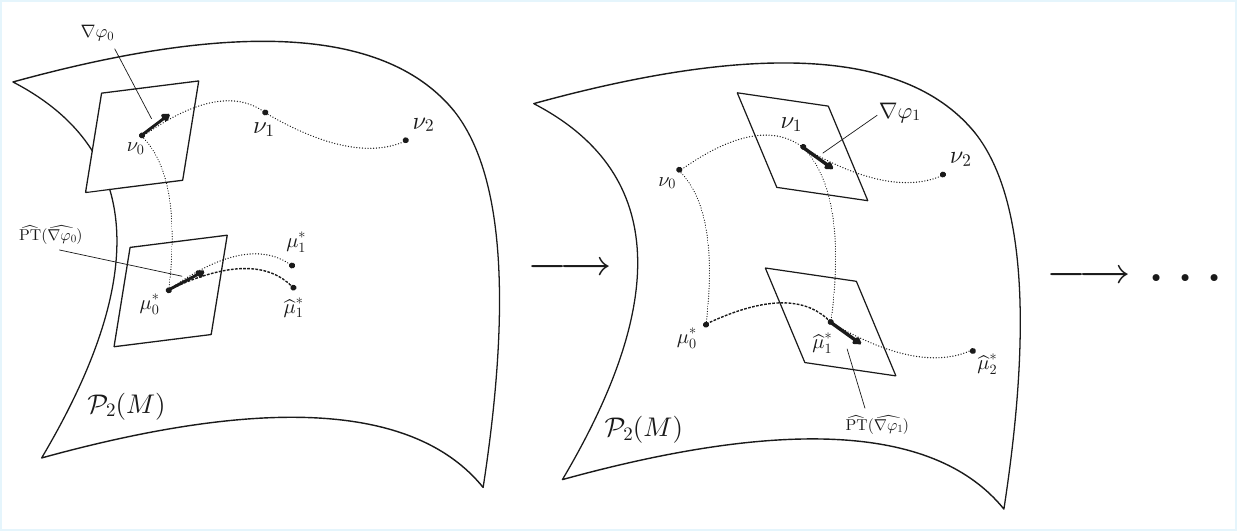}
  \caption{Visualization of counterfactual dynamics prediction procedure described in \Cref{alg: W trajectory reconstruction}. }
  \label{fig:counterfactual-prediction}
\end{figure}

Our procedure for predicting counterfactual dynamics will follow that depicted in \Cref{fig:counterfactual-prediction}. In particular, given a control curve $\{\nu_i\}_{i \in \{1, \dots, T\}} \subset \mathcal{P}_2(\man)$ and an initial condition $\mu_1^*$, we will predict $\{\hat \mu_i^*\}_{i \in [2, \dots, T]}$ as follows: at a given iteration $i$ we will first estimate the tangent velocity driving the dynamics of $\nu_i$ at the current time. We will then approximate the parallel transport of this tangent velocity on the geodesic between the current $\nu_i$ and the most recent prediction $\hat\mu_i^*$ -- note that when $i = 1$, $\hat \mu_1^*$ is set to the initial condition $\mu_1$. Finally, we will push $\hat \mu_i^*$ through the Wasserstein exponential map with the parallel transported velocity. 

While our theoretical results in previous sections allowed for non-Euclidean choices of $\man$, in this section our error bounds consider the setting $\man = \mathbb{T}^d$, where $\mathbb{T}^d$ is the \textit{flat torus} defined by $\mathbb{T}^d = \mathbb{R}^d/\mathbb{Z}^d$. We restrict our analysis to this setting in analyzing our counterfactual dynamics prediction algorithm as the analysis requires stability bounds on Wasserstein parallel transport that are challenging to obtain in non-Euclidean settings, and settings where the domain has a boundary. We also note that our intention with the theoretical results presented in this section are to rigorously quantify how our approximation scheme affects predicted dynamics away from the boundary of the support; this goal makes $\man = \mathbb{T}^d$ a natural choice, as it is a boundaryless, compact and flat space.

\begin{restatable}[Uniform admissibility of the trajectory class, $\man = \mathbb{T}^d$]{asmpt}{} \label{all necessary W assumptions}
There exists a class $\mathcal C \subset \mathcal P_2(\mathbb{T}^d)$ such that the measures appearing in the procedure satisfy $\nu_i, \mu_i^*, \hat\mu_i^* \in \mathcal C$ for all $i\in\{1,\dots,T\}.$ Moreover, we require the following:

\begin{enumerate}[label=\textbf{(A.\arabic*)}, leftmargin=*, align=left]

\item \textbf{Density regularity conditions:} 
All measures $\mu \in \mathcal{C}$, and all pairwise Wasserstein interpolants $\mu_t$ thereof, admit Lebesgue densities $\rho$, $\rho_t$ such that: \label{density regularity conditions}
\begin{enumerate}[label=(\alph*)]
    \item there exist constants $0 < \lambda \leq \Lambda < \infty$ such that $\lambda \leq \rho\leq \Lambda$ and $\lambda \leq \rho_t \leq \Lambda$.
    \item the interpolating densities are uniformly bounded in a Sobolev sense, i.e. 
    \[\sup_{t \in [0,1]}\|\rho_t\|_{W^{1, \infty}(\mathbb{T}^d)} \leq K.\] 
    \item the map $t \mapsto \rho_t$ is absolutely continuous as a map from $[0,1]$ into $L^\infty(\mathbb{T}^d)$ and the interpolating densities have a uniformly bounded time derivative, i.e.
    \[\sup_{t \in [0,1]}\|\partial_t\rho_t\|_{L^{\infty}(\mathbb{T}^d)} \leq K_\rho\]
    \item the interpolating densities have a uniformly bounded score function in an $L^\infty$ sense, i.e.
    \[\sup_{t \in [0,1]}\|\nabla \log \rho_t\|_{L^\infty(\mathbb{T}^d)} \leq K_{\log}.\]
\end{enumerate}

\item \textbf{Wasserstein geodesic Lipschitz conditions:}  All triplets $(\nu, \mu, \mu') \in \mathcal{C}$ satisfy the following: \label{geodesic lipschitz conditions}
\begin{enumerate}[label=(\alph*)]
    \item the densities $\rho_t, \rho_t'$ of the Wasserstein interpolants of $(\nu, \mu)$ and $(\nu, \mu')$ satisfy
    \[\| \rho_t - \rho_t'\|_{L^\infty(\mathbb{T}^d)} \leq C_{L}\cdot W_2(\mu, \mu') \quad \text{and} \quad \|\rho_t-\rho_t'\|_{W^{1,\infty}(\mathbb{T}^d)}
    \le C_{W}\,W_2(\mu,\mu')\]
    for some $C_{L}, C_W > 0$. 
    \item the tangent velocities $\nabla \phi_t, \nabla \phi_t'$ of the Wasserstein interpolants of $(\nu, \mu)$ and $(\nu, \mu')$ satisfy
    \[\|\nabla \phi_t - \nabla \phi_t'\|_{L^\infty(\mathbb{T}^d)} \leq C_{\text{vel}}\cdot W_2(\mu, \mu').\]
\end{enumerate}

\item \textbf{Wasserstein geodesic regularity conditions:} 
All pairs $(\nu, \mu) \in \mathcal{C}$ admit geodesic interpolants $\mu_t$ with velocity fields $\nabla \phi_t$ that are uniformly bounded in a Sobolev sense
\[ \sup_{t\in[0, 1]}\|\nabla \phi_t\|_{W^{1, \infty}(\mathbb{T}^d)} \leq M\]
for some $M > 0.$ \label{geodesic regularity conditions}
\end{enumerate}
\end{restatable}

In this section, we will operate under the conditions detailed in \Cref{all necessary W assumptions}. These assumptions guarantee the existence of strongly regular geodesics (in the sense of \Cref{def: regular curves}) between all pairs of measures of interest. Moreover, the assumptions guarantee the stability of Wasserstein parallel transport needed to obtain error bounds for our dynamics prediction algorithm, \Cref{alg: W trajectory reconstruction}.

\begin{restatable}[\Cref{wpt stability}, Stability of Wasserstein Parallel Transport]{thm}{} \label{wpt stability statement}
    Suppose all statements in \Cref{all necessary W assumptions} hold. Then there exists a constant $C_{\rm WPT}>0$, depending only on the constants above, such that for every $v\in T_\nu\mathcal P_2(\mathbb{T}^d)\cap H^1(\nu;\mathbb{R}^d)$,
    \[
    \|\PT_{\nu\to\mu}(v)-\PT_{\nu\to\mu'}(v)\|_{L^2(\nu)}
    \le
    C_{\rm WPT}\,\|v\|_{H^1(\nu)}\,W_2(\mu,\mu').
    \]
\end{restatable}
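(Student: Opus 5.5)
Throughout, I will compare the two transported fields in the Lagrangian frame of $\nu$, which is what the $L^2(\nu)$ norm in the statement refers to. On $\mathbb{T}^d$ let $X_t, X'_t$ be the flows of the geodesic velocities $\nabla\phi_t, \nabla\phi'_t$ of $(\nu,\mu)$ and $(\nu,\mu')$, with $X_0=X'_0=\mathrm{id}$. For a parallel field $v_t$ along $\mu_t$ set $V_t \triangleq v_t\circ X_t \in L^2(\nu;\mathbb{R}^d)$, and similarly $V'_t \triangleq v'_t\circ X'_t$. The claim is then $\|V_1 - V'_1\|_{L^2(\nu)} \le C_{\rm WPT}\|v\|_{H^1(\nu)} W_2(\mu,\mu')$.

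\textbf{Step 1: linear ODE in a fixed Hilbert space.} Let $P_t$ be the orthogonal projection in $L^2(\nu)$ onto the pulled-back tangent space $\{w\circ X_t : w\in T_{\mu_t}\mathcal P_2\}$. Since $M$ is flat, $\frac{d}{dt}V_t = (\partial_t v_t + \nabla v_t\,\nabla\phi_t)\circ X_t$. By \Cref{Wasserstein parallel transport pde}, $v_t$ is parallel iff $P_t V_t = V_t$ and $P_t \dot V_t = 0$. Differentiating $P_tV_t=V_t$ then gives the linear ODE $\dot V_t = \dot P_t V_t$, which is Gigli's Lagrangian description of parallel transport. I will first show $\sup_t\|\dot P_t\|_{L^2\to L^2} \le C(M,\lambda,\Lambda,K,K_\rho,K_{\log})$. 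The idea is that the pulled-back tangent space is the range of a weighted Helmholtz projection, obtained by solving $\operatorname{div}(\rho_t\nabla f)=\operatorname{div}(\rho_t w)$. Its time derivative involves only $\partial_t\rho_t$, $\nabla\log\rho_t$ and $D\nabla\phi_t$, all of which are bounded by \ref{density regularity conditions} and \ref{geodesic regularity conditions}. The same equation for $V'$ holds with $P'_t$.

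\textbf{Step 2: Gronwall.} Set $E_t = V_t - V'_t$. Then $\dot E_t = \dot P_t E_t + (\dot P_t - \dot P'_t)V'_t$, and Gronwall yields
\[
\|E_1\|_{L^2(\nu)} \le e^{\sup_t\|\dot P_t\|}\int_0^1 \|(\dot P_t-\dot P'_t)V'_t\|_{L^2(\nu)}\,dt .
\]
So it suffices to prove the operator-difference estimate $\|(\dot P_t-\dot P'_t)w\|_{L^2(\nu)} \le C\,W_2(\mu,\mu')\,\|w\|_{H^1(\nu)}$, together with a uniform bound $\sup_t \|V'_t\|_{H^1(\nu)} \le C\|v\|_{H^1(\nu)}$.

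\textbf{Step 3: the difference estimate (main obstacle).} The operators $P_t$ and $P'_t$ differ for two reasons: the Eulerian projections use the different weights $\rho_t$ and $\rho'_t$, and they are conjugated by the different flows $X_t$ and $X'_t$.
\begin{itemize}
\item \emph{Flow difference.} By \ref{geodesic lipschitz conditions}(b), the Lipschitz bound $M$, and Gronwall, $\sup_t\|X_t - X'_t\|_\infty \le C_{\rm vel}e^{M}W_2(\mu,\mu')$. Composing with different flows costs one derivative, since $\|w\circ X_t - w\circ X'_t\|_{L^2} \lesssim \|\nabla w\|_{L^2}\|X_t-X'_t\|_\infty$ with Jacobians controlled by $\lambda,\Lambda$. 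This is exactly why the $H^1$ norm appears on the right-hand side.
\item \emph{Weight difference.} The elliptic problems $\operatorname{div}(\rho\nabla f)=\operatorname{div}(\rho w)$ for $\rho=\rho_t$ versus $\rho=\rho'_t$ are stable. Energy estimates with ellipticity $\lambda$ bound the solution difference by $\|\rho_t-\rho'_t\|_{W^{1,\infty}}\|w\|_{L^2}$, and this is $\le C_W W_2(\mu,\mu')$ by \ref{geodesic lipschitz conditions}(a). The time-differentiated version of this problem needs the $\partial_t\rho$ and score bounds as well.
\end{itemize}

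\textbf{Step 4: propagating $H^1$.} I expect the main difficulty to be showing that $V'_t$ stays in $H^1$ with a controlled norm. I would first differentiate the ODE $\dot V' = \dot P' V'$ spatially and use $H^1$ elliptic regularity of the weighted projection on the boundaryless torus, which needs $\rho\in W^{1,\infty}$ and $\nabla\phi\in W^{1,\infty}$. This should give $\|\dot P'_t\|_{H^1\to H^1}\le C$, and a Gronwall argument then bounds $\sup_t \|V'_t\|_{H^1(\nu)}$. If that regularity fails, the fallback is to bound $(\dot P_t - \dot P'_t)$ directly as an operator $H^1\to L^2$ and use only $\|V'_t\|_{L^2}=\|v\|_{L^2}$, which parallel transport preserves. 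Combining Steps 1–4 gives the constant $C_{\rm WPT}$.
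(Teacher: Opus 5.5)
Your Steps 1--2 are sound in substance. In particular, $\sup_t\|\dot P_t\|_{L^2(\nu)\to L^2(\nu)}\lesssim\sup_t\|D^2\phi_t\|_{L^\infty}$ is true. It follows most cleanly from $\|(I-P_s)P_t\|\lesssim|s-t|\sup_r\operatorname{Lip}(\nabla\phi_r)$, obtained by writing $\nabla f\circ X_t=(\nabla f\circ S)\circ X_s$ with $S=X_t\circ X_s^{-1}$ and comparing with $\nabla(f\circ S)$. The gap is Step 3: comparing the two fields in two \emph{different} Lagrangian frames loses one derivative. From your own Step 1, and using that $\partial_t v'_t$ is again a gradient, you get $\dot V'_t=\big[(I-\Pi_{\mu'_t})(\nabla v'_t\,\nabla\phi'_t)\big]\circ X'_t$, which for $v'_t\in H^1$ is only in $L^2$. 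Since $\dot P_t$ acts on $V'_t$ through the frame $X_t$, $(\dot P_t-\dot P'_t)V'_t$ equals terms that genuinely are $O(W_2(\mu,\mu')\|v\|_{H^1})$ plus the frame-mismatch term
\[
\dot V'_t\circ Y_t-\dot V'_t,\qquad Y_t=(X'_t)^{-1}\circ X_t .
\]
Your composition estimate bounds this by $\|\nabla\dot V'_t\|_{L^2}\,\|Y_t-\mathrm{id}\|_\infty$, which needs $\dot V'_t\in H^1$, i.e.\ essentially $v'_t\in H^2$ or $\nabla\phi'_t\in W^{2,\infty}$. For a mere $L^2$ function $f$, $\|f\circ Y-f\|_{L^2}$ has no rate in $\|Y-\mathrm{id}\|_\infty$. \Cref{all necessary W assumptions} does not supply this extra derivative: you only have $v\in H^1$, $\nabla\phi_t\in W^{1,\infty}$, and $L^\infty$ control of $\nabla\phi_t-\nabla\phi'_t$.

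The same derivative count defeats Step 4. $\dot P'_t$ contains multiplication by $D^2\phi'_t\in L^\infty$, so an $H^1\to H^1$ bound would again need $\nabla\phi'_t\in W^{2,\infty}$, beyond \ref{geodesic regularity conditions}. Your fallback does not work either. An $H^1\to L^2$ operator bound cannot be applied with only $\|V'_t\|_{L^2}$ in hand. An $L^2\to L^2$ bound of size $W_2(\mu,\mu')$ is not available, because conjugating by uniformly close but different flows is not close in operator norm (for instance, the merely bounded $D^2\phi_t$ composed with $X_t$ versus $X'_t$).

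The paper avoids this by working in a single frame and never differentiating a difference of projections. It takes the Eulerian difference $e_t=v_t-v'_t$ at the same point of $\mathbb{T}^d$. That is also what the statement measures: your reformulation as $\|V_1-V'_1\|_{L^2(\nu)}$ is a different quantity and would need a conversion step. It then splits $e_t=\eta_t+r_t$ with $\eta_t=\Pi_{\mu_t}e_t$ and $r_t=(\Pi_{\mu_t}-\Pi_{\mu'_t})v'_t$. The term $r_t$ is controlled \emph{statically} in $L^2$ and $H^1$ by $\|\rho_t-\rho'_t\|_{L^\infty}$ and $\|\rho_t-\rho'_t\|_{W^{1,\infty}}$ (\Cref{L2 lipschitz stability}, \Cref{sobolev lipschitz stability}). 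For $\eta_t$ the paper runs an $L^2(\mu_t)$ energy estimate along the single curve $(\mu_t)$, using its continuity equation. In the forcing:
\begin{itemize}
\item the velocity mismatch enters only undifferentiated, as $\nabla v'_t\,(\nabla\phi_t-\nabla\phi'_t)$;
\item the projection mismatch enters only as $(\Pi_{\mu_t}-\Pi_{\mu'_t})h'_t$, where $h'_t=\partial_t v'_t+\nabla v'_t\nabla\phi'_t$ is bounded in $L^2$;
\item the only time derivative of a projection is $(\partial_t\Pi_{\mu_t})e_t$ for one curve (\Cref{lem:time_reg_weighted_projection}).
\end{itemize}
The $H^1$ bound on $v'_t$ comes from an Eulerian energy estimate, not an $H^1$ operator bound. \Cref{prop:h1} tests the Galerkin system with $-\Delta w_m$, so the top-order transport term cancels against $\partial_t\rho_t$.

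If you want to keep your Lagrangian picture, compare $V_t$ with $v'_t\circ X_t$ rather than $v'_t\circ X'_t$. Then
\[
\tfrac{d}{dt}(v'_t\circ X_t)=h'_t\circ X_t+\big[\nabla v'_t(\nabla\phi_t-\nabla\phi'_t)\big]\circ X_t ,
\]
with $P_t(h'_t\circ X_t)=\big((\Pi_{\mu_t}-\Pi_{\mu'_t})h'_t\big)\circ X_t$. No derivative is lost, and this is exactly the paper's argument.
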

We provide the proof of \Cref{wpt stability statement} in \Cref{sec: stability theory}. This stability result allows us to control the error of the iterative dynamics prediction algorithm described in \Cref{alg: W trajectory reconstruction} by ensuring that small approximation errors in the counterfactual do not cause unbounded growth in the terminal error of the parallel transported tangent at the approximate counterfactual. Concretely, we use this stability to establish the following one-step error bound described in \Cref{one step error of W traj reconstruction}.

\begin{algorithm}[h]
\caption{Counterfactual dynamics prediction via WPT}
\label{alg: W trajectory reconstruction}
\begin{algorithmic}[1]
    \Require Sufficiently regular control trajectory $(\nu_i)_{i=1}^T$, treated initial condition $\mu_1\in\mathcal P_2(\man)$, discretization level $N$.
    \State Initialize $\hat\mu_1^* \gets \mu_1$. \Comment{starting counterfactual state}
    \For{$i=1$ to $T-1$}
        \State Compute $T_{\nu_i\to\nu_{i+1}}$.
        \State Set $v^{(i)}(x)\triangleq \log_x(T_{\nu_i\to\nu_{i+1}}(x))$. \Comment{control velocity}
        \State $\hat w^{(i)} = \text{WPT}(\nu_i, \hat \mu_i^*, v^{(i)}, N)$. \Comment{\Cref{alg: W parallel transport}}
        \State $\hat\mu_{i+1}^* =  \mathbf{exp}_{\hat\mu_i^*}(\hat w^{(i)})$. \Comment{advance counterfactual state}
    \EndFor
    \State \Return $(\hat\mu_i^*)_{i=1}^T$.
\end{algorithmic}
\end{algorithm}

\begin{restatable}[One step error of \Cref{alg: W trajectory reconstruction}, $\man = \mathbb{T}^d$]{lemma}{} \label{one step error of W traj reconstruction}
    Let $\nu_i,\nu_{i+1},\mu_i^*,\hat\mu_i^* \in \mathcal C$, where $\mathcal C$ is the admissible
    class from \Cref{all necessary W assumptions}. Suppose there exists a tangent vector
    $\nabla \varphi_i \in T_{\nu_i}\mathcal P_2(\mathbb{T}^d)$ such that
    \[
    \mathbf{exp}_{\nu_i}(\nabla \varphi_i)=\nu_{i+1}.
    \]
    Let $\nabla \varphi_i^*$ denote the parallel transport of $\nabla \varphi_i$ along the strongly regular Wasserstein
    geodesic connecting $\nu_i$ and $\mu_i^*$, and define
    \[
    \mu_{i+1}^* \triangleq \mathbf{exp}_{\mu_i^*}(\nabla \varphi_i^*).
    \]
    Let $\hat v$ be the output of \Cref{alg: W trajectory reconstruction} used to predict the next counterfactual iterate
    \[
    \hat\mu_{i+1}^* \triangleq \mathbf{exp}_{\hat\mu_i^*}(\hat v).
    \]
    Then
    \[
    W_2(\hat\mu_{i+1}^*, \mu_{i+1}^*) \leq \left(1 + \operatorname{Lip}(\nabla \varphi_i^*) + \|\nabla \varphi_i\|_{H^1(\mathbb{T}^d)}C_{\text{WPT}} \right) W_2(\hat \mu_i^*, \mu_i^*) + O(N^{-1}).
    \]
\end{restatable}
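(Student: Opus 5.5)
Following the standard decomposition, we introduce an intermediate measure: let $w^\ast \triangleq \PT_{\nu_i\to\hat\mu_i^*}(\nabla\varphi_i)$ denote the \emph{exact} parallel transport of the control velocity to the current (inexact) counterfactual $\hat\mu_i^*$, and set $\tilde\mu_{i+1}\triangleq \mathbf{exp}_{\hat\mu_i^*}(w^\ast)$. The triangle inequality then gives
\[
W_2(\hat\mu_{i+1}^*,\mu_{i+1}^*)\le W_2\bigl(\mathbf{exp}_{\hat\mu_i^*}(\hat v),\mathbf{exp}_{\hat\mu_i^*}(w^\ast)\bigr)+W_2\bigl(\mathbf{exp}_{\hat\mu_i^*}(w^\ast),\mathbf{exp}_{\mu_i^*}(\nabla\varphi_i^*)\bigr).
\]
The first term is pure discretization error. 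Since both measures are pushforwards of the same $\hat\mu_i^*$ under $\mathrm{id}+\hat v$ and $\mathrm{id}+w^\ast$ (on the flat torus, via \cref{eq: wasserstein euclidean expmap} modulo $\mathbb{Z}^d$), the coupling $(\mathrm{id}+\hat v,\mathrm{id}+w^\ast)_\#\hat\mu_i^*$ bounds it by $\|\hat v-w^\ast\|_{L^2(\hat\mu_i^*)}$. This is $O(N^{-1})$ by \Cref{full approximation of parallel transport along regular curves}; the geodesic from $\nu_i$ to $\hat\mu_i^*$ is strongly regular by \ref{geodesic regularity conditions}, and the constant is uniform over $\mathcal C$.

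For the second term we split once more through $\mathbf{exp}_{\hat\mu_i^*}(\nabla\varphi_i^*\circ\,\cdot)$. Rather than comparing fields living on different measures, we compose everything along the optimal map $T\triangleq T_{\mu_i^*\to\hat\mu_i^*}$, under which $W_2(\hat\mu_i^*,\mu_i^*)=\|T-\mathrm{id}\|_{L^2(\mu_i^*)}$. Consider the coupling of $\mathbf{exp}_{\mu_i^*}(\nabla\varphi_i^*)$ and $\mathbf{exp}_{\hat\mu_i^*}(w^\ast)$ given by $x\mapsto (x+\nabla\varphi_i^*(x),\,T(x)+w^\ast(T(x)))$, $x\sim\mu_i^*$. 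This yields
\[
\|T-\mathrm{id}\|_{L^2(\mu_i^*)}+\|\nabla\varphi_i^*\circ T-\nabla\varphi_i^*\|_{L^2(\mu_i^*)}+\|w^\ast\circ T-\nabla\varphi_i^*\circ T\|_{L^2(\mu_i^*)}.
\]
The first piece equals $W_2(\hat\mu_i^*,\mu_i^*)$, which accounts for the leading $1$. The second piece is at most $\operatorname{Lip}(\nabla\varphi_i^*)\,W_2(\hat\mu_i^*,\mu_i^*)$, which accounts for the Lipschitz term; its finiteness follows from \ref{geodesic regularity conditions} and the flow representation of parallel transport along strongly regular curves. The third piece equals $\|w^\ast-\nabla\varphi_i^*\|_{L^2(\hat\mu_i^*)}$, which is the difference of two parallel transports of the same vector $\nabla\varphi_i$ from $\nu_i$ to the two endpoints $\hat\mu_i^*$ and $\mu_i^*$.

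This last comparison is exactly what \Cref{wpt stability statement} addresses. It gives $C_{\rm WPT}\|\nabla\varphi_i\|_{H^1}W_2(\hat\mu_i^*,\mu_i^*)$, and we use the density bounds \ref{density regularity conditions} to pass between $H^1(\nu_i)$ and $H^1(\mathbb T^d)$ norms. The \textbf{main obstacle} is bookkeeping: the stability theorem measures the discrepancy in $L^2(\nu)$ of transported fields pulled back to the common base $\nu_i$, not in $L^2(\hat\mu_i^*)$ on the endpoints. We therefore need to reinterpret the two fields as pullbacks via the respective geodesic flows and check that the change of measure costs only constants absorbed into $C_{\rm WPT}$. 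This uses $\lambda\le\rho\le\Lambda$ and the bi-Lipschitz flows from \ref{geodesic regularity conditions}. If the pullback mismatch creates an extra term of size $\|\nabla\varphi_i^*\|_{W^{1,\infty}}\cdot C_{\rm vel}W_2$ via \ref{geodesic lipschitz conditions}(b), we absorb it into the Lipschitz coefficient. Summing the three contributions with the $O(N^{-1})$ term gives the claim.
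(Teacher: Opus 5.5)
Your proof is correct and is essentially the paper's argument. The paper inserts $\exp(\nabla\varphi_i^*)_\#\hat\mu_i^*$ as the intermediate measure and bounds one piece by the Lipschitz-pushforward estimate $(1+\operatorname{Lip}(\nabla\varphi_i^*))\,W_2(\hat\mu_i^*,\mu_i^*)$. It bounds the other piece by $\|\hat v-\nabla\varphi_i^*\|_{L^2(\hat\mu_i^*)}$, which it splits through $\PT_{\nu_i\to\hat\mu_i^*}(\nabla\varphi_i)$ into the $O(N^{-1})$ term from \Cref{full approximation of parallel transport along regular curves} and the stability term from \Cref{wpt stability}. These are exactly your three contributions, only reordered relative to your composite coupling through $T$.

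The ``main obstacle'' you flag does not arise. \Cref{wpt stability} compares the two transported fields pointwise as gradient fields on all of $\mathbb{T}^d$, not as pullbacks along geodesic flows. So passing from $L^2(\nu_i)$ to $L^2(\hat\mu_i^*)$ is pure norm equivalence under $\lambda\le\rho\le\Lambda$, and no extra $C_{\rm vel}$ term appears. That is fortunate, since such a term could not be absorbed into $\operatorname{Lip}(\nabla\varphi_i^*)$ without changing the stated constant.
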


We provide the proof of \Cref{one step error of W traj reconstruction} in \Cref{proof of one step error of W traj reconstruction}. The result establishes that each iteration of the procedure incurs an $O(N^{-1})$ error of the counterfactual prediction in a Wasserstein sense. Fortunately, since $N$ corresponds to the user chosen approximation resolution, the one step error can be made arbitrarily small. We also have the following corollary, which quantifies the cumulative error over the entire procedure. 

\begin{restatable}{thm}{} \label{full cf approximation error}
    In the regime described in \Cref{all necessary W assumptions} we have the following accumulated error bound on the counterfactual trajectory,
    \[\sum_{i = 1}^TW_2(\hat{\mu}_i, \mu_i^*) = O\left(\frac{T}{N}\cdot \frac{( 1 + \operatorname{L} + SC_{\text{WPT}}) ^T - 1}{\operatorname{L} + SC_{\text{WPT}} }\right) \]
    under the assumption that $\|\nabla \varphi_i\|_{H^1(\mathbb{T}^d)} \leq S $ and $\operatorname{Lip}(\nabla \varphi_i^*) \leq L$ for all $i \in \{1, \dots, T-1\}$. In the setting where $T = \Theta(1)$ and $N \rightarrow \infty$, the accumulated error is $o(1)$. 
\end{restatable}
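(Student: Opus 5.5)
The plan is to unroll the one-step recursion of \Cref{one step error of W traj reconstruction} into a discrete Gr\"onwall-type bound. Write $e_i \triangleq W_2(\hat\mu_i^*, \mu_i^*)$. Since \Cref{all necessary W assumptions} places every iterate in the admissible class $\mathcal C$, the lemma applies at each step $i\in\{1,\dots,T-1\}$. Using the uniform bounds $\operatorname{Lip}(\nabla\varphi_i^*)\le L$ and $\|\nabla\varphi_i\|_{H^1(\mathbb{T}^d)}\le S$, and setting $a \triangleq 1 + L + SC_{\rm WPT}$, we get
\[
e_{i+1} \le a\, e_i + \frac{c}{N}, \qquad e_1 = 0,
\]
where $e_1=0$ because $\hat\mu_1^* = \mu_1 = \mu_1^*$. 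The main care point is that the constant $c$ hidden in the $O(N^{-1})$ term of the lemma must be uniform in $i$. I would verify this by tracing it back to \Cref{full approximation of parallel transport along regular curves}. There the constant depends only on the strong-regularity data of the geodesic, that is $\sup_t \operatorname{Lip}(\nabla\psi_t)$ and the $L^2$ norms of the velocity fields, together with $\|v\|_{L^2}$. All of these are bounded uniformly over $\mathcal C$ by \ref{geodesic regularity conditions} and by $S$. This uniformity is the step I expect to be the only real obstacle. If the constant turns out to also depend on the transported field through its $H^1$ norm, the bound $S$ still covers it.

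Next, iterate the recursion. Induction gives
\[
e_i \le \frac{c}{N}\sum_{k=0}^{i-2} a^k = \frac{c}{N}\cdot\frac{a^{i-1}-1}{a-1} \le \frac{c}{N}\cdot\frac{a^{T}-1}{a-1}
\]
for every $i\le T$, since $a\ge 1$ makes the partial geometric sums monotone in $i$. Note that $a-1 = L+SC_{\rm WPT}$. If $L+SC_{\rm WPT}=0$, interpret the ratio as its limit $T$; the bound then reads $e_i\le cT/N$ and the conclusion is unchanged.

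Summing over $i=1,\dots,T$ gives at most $T$ copies of the same bound, so
\[
\sum_{i=1}^T e_i \le \frac{cT}{N}\cdot\frac{(1+L+SC_{\rm WPT})^T-1}{L+SC_{\rm WPT}},
\]
which is the claimed rate. For the final sentence of the theorem, take $T=\Theta(1)$. The factor $T\,\frac{a^T-1}{a-1}$ is then a constant independent of $N$, so the accumulated error is $O(N^{-1}) = o(1)$ as $N\to\infty$.
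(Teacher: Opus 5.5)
Your proposal is correct and follows essentially the same route as the paper: both unroll the linear recursion $e_{i+1}\le a\,e_i + O(N^{-1})$ from \Cref{one step error of W traj reconstruction} with $e_1=0$ and $a=1+L+SC_{\text{WPT}}$. Both then bound each $e_i$ by the geometric sum and add up the $T$ terms. Your extra checks make explicit points the paper's short proof leaves implicit: that the constant hidden in $O(N^{-1})$ is uniform in $i$ (via \ref{geodesic regularity conditions} and the bound $S$), and that the degenerate case $L+SC_{\text{WPT}}=0$ is handled.
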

\noindent \textit{Proof.} The result of \Cref{one step error of W traj reconstruction} implies a recurrence relation of the form $a_{t+1} = ca_t + r$ and $a_0 = 0$, the solution to which has the form 
\[a_t = \begin{cases}
    r\cdot \frac{c^t - 1}{c - 1} & \text{when $c \neq 1$} \\
    tr &  \text{when $c = 1$}
\end{cases}.\]
Plugging in $c = 1 + \operatorname{L} + SC_{\text{WPT}}$ and $r = O(N^{-1})$ yields the time-indexed error. Summing over $i$ yields the result.
\qed{}

Now we are equipped with a consistent method for imputing the dynamics of a reference curve $\{\nu_i\}_{i \in \{1, \dots, T\}}$ onto a new initial condition via Wasserstein parallel transport. In the following sections, we will discuss and explore applications of this algorithm to various settings and datasets.

\subsection{Application: Causal Inference via  Wasserstein Difference-in-Differences}

In this section we describe an instantiation of our algorithm in Causal Inference -- in particular, we illustrate how it can be used to extend the Difference-in-Differences (DiD) framework from scalar to distribution-valued outcomes. The traditional DiD framework is as follows: we observe subjects at time $0$ and time $1$ where a treatment has been administered somewhere in between, with the observations denoted $(Y_0, Y_1)$. We denote $Y_0(0)$ and $Y_1(0)$ as the counterfactual outcome, had this subject not been treated. While $Y_i(1)$ need not be equal to $Y_i$ in general, we will assume that this is the case -- this is commonly referred to as the \textit{consistency} assumption. We also observe a control group at the two times whose observations are denoted $(Z_0, Z_1).$ Since the control group is untreated, any difference between $Z_0$ and $Z_1$ stems from unobserved confounding. A standard assumption in the Difference-in-Differences framework is the \textit{equibias} or the \textit{parallel trends} assumption, which stipulates that $\mathbb{E}[Y_1(0) - Y_0(0)] = \mathbb{E}[Y_1 - Y_0] - \mathbb{E}[Z_1 - Z_0]$ ensuring that the trends in untreated individuals are \say{parallel} \citep{abadie2005semiparametric} and allowing us to recover the treatment effect. Note that this procedure typically operates on the level of first-order moments, and it does not deal with the full counterfactual distribution. 

% \citet{sofer2016}, however, propose a procedure for dealing with distribution level objects of one dimensional random variables in the form of the quantile function. In particular, the parallel trends assumption in their setting manifests as the assumption that the optimal transport map from the law of $Z \,| \, A = 0$ to $Z \, | \, A = 1$ is the same as the optimal transport map from the law of $Y(a) \, | \,A = 0$ to $Y(a) \, | \,A = 1.$ This idea is an instantiation of the generalized conceptual framework that we propose. (put this in intro)

In many cases, we care about the laws $\mu_t \triangleq \text{Law}(Y_t)$ and $\nu_t \triangleq \text{Law}(Z_t)$ as opposed to statistics like their expected value, but we cannot take differences on $\mathcal{P}_2(\man)$ to subtract off differences in initial conditions.  To remedy this, we will apply Wasserstein parallel transport. In particular, we consider the setting where the population-level treated, untreated and control objects can be represented by piecewise-geodesic trajectories through $\mathcal{P}_2(\man)$. For a set of observation times $\{t_i\}_{i \in \{1, \dots, T\}} \subset [0, 1]$, we denote $(\mu_{t_i})_{i \in \{1, \dots, T\}}$ the path corresponding to the treated population, $(\nu_{t_i})_{i \in \{1, \dots, T\}}$ the path corresponding to the untreated population, and $(\mu_{t_i}^*)_{i \in \{1, \dots, T\}}$ the path corresponding to the counterfactual treated population. Our goal is to emulate the Difference-in-Differences procedure and reconstruct the population counterfactual trajectory $(\mu_{t_i}^*)_{i \in \{1, \dots, T\}}$ given the control trajectories and the initial condition of the treated population $\mu_{t_0}^*$ (before treatment).

\begin{figure}[t]
  \centering
  \vspace{0.5em} % whitespace above the diagram (adjust as needed)
    \begin{subfigure}[t]{0.4 \textwidth}
    \begin{tikzpicture}[x=0.95cm,y=0.95cm,>=Latex]
    
    % -----------------------
    % Axes
    % -----------------------
    \draw[->] (0,0) -- (6.4,0) node[right] {Time};
    \draw[->] (0,0) -- (0,4.0) node[above] {Outcome};
    
    % Period labels
    \node[below] at (1.3,0.5) {Pre};
    \node[below] at (4.3,0.5) {Post};
    
    % Intervention time (vertical line)
    \draw[densely dotted] (3,0) -- (3,3.8);
    \node[below] at (3,0) {$t_0$};
    
    % -----------------------
    % Coordinates (choose any reasonable numbers)
    % -----------------------
    % Control: pre -> post
    \coordinate (Cpre)  at (0,0.5);
    \coordinate (Cpost) at (6,1.5);
    
    % Treated: observed pre -> observed post
    \coordinate (Tpre)  at (0,1.5);
    \coordinate (Tpost) at (6,3.5);
    
    % NEW: treated value at the intervention time t0 (x=4)
    \coordinate (Tt0)   at (3, 2); % choose y to set the pre-slope
    
    % Treated counterfactual under parallel trends:
    % Tcf = Tpre + (Cpost - Cpre)
    \coordinate (Tcf)   at (6,2.5);
    
    % -----------------------
    % Lines
    % -----------------------
    % Control (solid)
    \draw[thick] (Cpre) -- (Cpost);
    \node[right] at (Cpost) {\large $Z_1$};
    
    % Treated observed (solid)
    \draw[thick] (Tpre) -- (Tt0) -- (Tpost);
    \node[right] at (Tpost) {\large $Y_1$};
    % Treated counterfactual (dashed, parallel to control trend)
    \draw[thick,dashed] (Tpre) -- (Tcf);
    \node[right] at (Tcf) {\large  $Y_1(0)$};
    
    % Points
    \fill (Cpre)  circle (2pt);
    \fill (Cpost) circle (2pt);
    \fill (Tpre)  circle (2pt);
    \fill (Tpost) circle (2pt);
    \fill (Tcf)   circle (2pt);
    
    \end{tikzpicture}
    \end{subfigure}
    \hspace{0.1\textwidth}
    \begin{subfigure}[t]{0.44 \textwidth}
        \raisebox{1.4em}{
        \begin{tikzpicture}[
          scale=1.0,
          >={Latex[length=2.0mm]},
          warp/.style={draw, very thick, line cap=round, line join=round},
          lab/.style={font=\small},
          atom/.style={circle, fill=blue, inner sep=1.6pt},
        ]
        
        % One warped "square-like" manifold + 4 atoms (Diracs)
        \newcommand{\warpsquare}[2]{%
        \begin{scope}[xshift=#1cm,yshift=#2cm, scale=1.1]
          \draw[warp]
            (-3.5, 0.75)
              .. controls (-1.0, 2.5) and ( 1.0, 2.5) .. ( 2.4, 0.9)
              .. controls ( 2.4, 0.9) and ( 3.0,0.0) .. ( 3.0,-1.5)
              .. controls ( 1.0,-0.25) and (-1.0, -0.25) .. (-3.0,-1.5)
              .. controls (-2.5, 0.0) and (-2.5, 1.25) .. (-3.5, 0.75)
            -- cycle;
        
          \node[lab] at (-2.95, 2.0) {\LARGE $\mathcal{P}_2(\man)$};
        \end{scope}
        }
        
        % Two identical manifolds
        \warpsquare{-2}{0}

        % --- Overlay DiD "parallel trends" paths as CURVES on the manifold ---
        \begin{scope}[xshift=-2cm,yshift=0cm]
        
          % A constant shift to create a "parallel" treated pre-trend curve
          % (translation of the entire pre-curve shape)
          \coordinate (Tshift) at (0.0,-0.70);
        
          % -----------------------
          % CONTROL curve: Z_1
          % -----------------------
          % Pre (curved) segment: C0 -> Ct0
          \coordinate (C0)   at (-2.2, 0.60);
          \coordinate (Ct0)  at (-0.45, 1.35);
        
          % Control points chosen to align visually with your warped boundary
          \coordinate (C0c1) at (-2.00, 0.65);
          \coordinate (C0c2) at (-1.35, 1.35);
        
          % Post (curved) segment: Ct0 -> C1
          \coordinate (C1)   at ( 1.75, 1.05);
          \coordinate (C1c1) at (-0.10, 1.40);
          \coordinate (C1c2) at ( 0.55, 1.45);
        
          % Draw control path (curved)
          \draw[very thick, -{Latex[length=2mm]}]
            (C0) .. controls (C0c1) and (C0c2) .. (Ct0)
                 .. controls (C1c1) and (C1c2) .. (C1);
          \node[lab] at (2.10,1.02) {\large $\nu_t$};
        
          % -----------------------
          % TREATED curve: Y_1
          % -----------------------
          % Treated pre endpoints are translated copies of the control pre endpoints.
          \coordinate (T0)  at ($(C0)  + (Tshift)$);
          \coordinate (Tt0) at ($(Ct0) + (Tshift)$);
        
          % Treated observed post endpoint (diverges)
          \coordinate (T1)  at (1.55,-0.35);
        
          % Treated pre control points are also translated (this enforces "parallel" pre-trend)
          \coordinate (T0c1) at ($(C0c1) + (Tshift)$);
          \coordinate (T0c2) at ($(C0c2) + (Tshift)$);
        
          % Draw treated observed path: curved pre (parallel) then diverging curved post
          \draw[very thick, -{Latex[length=2mm]}]
            (T0) .. controls (T0c1) and (T0c2) .. (Tt0)
                .. controls (1.00, 0.15) and (0.35,-0.45) .. (T1);
          \node[lab] at (1.90,-0.4) {\large $\mu_t$};
        
          % -----------------------
          % TREATED counterfactual: Y_1(0)
          % -----------------------
          % Counterfactual post curve is a translated copy of the control post curve.
          \coordinate (Tcf)  at ($(C1)  + (Tshift)$);
          \coordinate (T1c1) at ($(C1c1)+ (Tshift)$);
          \coordinate (T1c2) at ($(C1c2)+ (Tshift)$);
        
          \draw[very thick, dashed, -{Latex[length=2mm]}]
            (Tt0) .. controls (T1c1) and (T1c2) .. (Tcf);
          \node[lab] at ($(Tcf)+(0.4,0.00)$) {\large  $\mu_t^*$};
        
          % Mark t0 points and label t0
          \fill (Ct0) circle (2pt);
          \fill (Tt0) circle (2pt);
          \node[lab] at ($(Ct0)+(0.05,-0.48)$) {$t_0$};
        
        \end{scope}

        \end{tikzpicture}
        }
    \end{subfigure}
  \vspace{0.5em} % whitespace below the diagram (adjust as needed)
  \caption{Difference-in-differences diagram illustrating the parallel trends assumption in the Euclidean case (left) and the Wasserstein case (right). The goal is to reconstruct the counterfactual (denoted $Y_t(0)$ on the left and $\mu_t^*$ on the right).}
  \label{fig:did-parallel-trends}
\end{figure}

\begin{restatable}[Wasserstein Parallel Trends]{asmpt}{} \label{Wasserstein parallel trends}
    We assume that the trajectories $(\nu_{t_i})_{i\in \{1, \dots, T\}}$ and $(\mu_{t_i}^*)_{i\in \{1, \dots, T\}}$ are parallel in the following sense: for all $i \in \{1, \dots, T-1\}$, the tangent velocity at $\nu_{t_i}$ given by $\nabla \varphi_{t_i} \triangleq \mathbf{log}_{\nu_{t_i}}(\nu_{t_{i+1}})$ and the tangent velocity at $\mu_i^*$ given by $\nabla\varphi_{t_i}^* =  \mathbf{log}_{\mu_{t_i}^*}(\mu_{t_{i+1}}^*)$ are parallel with respect to the connection $\nabla^{W_2}$ in the sense that
    \[\nabla\varphi_{t_i}^* = {\PT}_{\nu_{t_i} \rightarrow \mu_{t_i}^*}(\nabla \varphi_{t_i})\]
    where ${\PT}_{\nu_{t_i} \rightarrow \mu_{t_i}^*}$ is the parallel transport of $\nabla\varphi_{t_i}$ along the Wasserstein geodesic between $\nu_{t_i}$ and $\mu_{t_i}^*$.
\end{restatable}
% \ndfg{I think in the $\mathcal{P}_2(R^d)$ case these two notions are the same, no? ANother idea: for pre-treatment tests, we can actually test both assumptions and if there's a difference, we know that curvature plays a role in the setting. } 

In order for the counterfactual trajectory to be recoverable, we need an assumption analogous to the parallel trends assumption in Euclidean Differences-in-Differences. Since we are not considering objects living in a vector space, we have no notion of taking differences. Instead we will use Wasserstein parallel transport to formulate an analogous assumption. This assumption, which we call \textit{Wasserstein Parallel Trends}, is described in \Cref{Wasserstein parallel trends}, and we provide a visualization in \Cref{fig:did-parallel-trends}. Furthermore, we have the following key result, which establishes that when $\man = \mathbb{R}^d$, parallel curves in Wasserstein space have \textit{parallel means too}. 

\begin{restatable}[Wasserstein Parallel Trends recovers Parallel Trends in $\mathbb{R}^d$]{thm}{} \label{pt recovers parallel trends}
    Suppose $(\nu_t)_{t \in [0,1]}$ and $(\mu_t^*)_{t\in [0,1]}$ are curves of absolutely continuous measures in $\mathcal{P}_2(\mathbb{R}^d)$, and suppose that the curves have tangent velocity fields $\nabla \varphi_t, \nabla\varphi_t^*$ that are parallel with respect to $\nabla^{W_2}$ for all $t$, in the sense that 
    \[\nabla\varphi_t^* = {\PT}_{\nu_t \rightarrow \mu_t^*}(\nabla\varphi_t) \quad \forall \, t\in [0,1].\]
    Then it holds that for a.e. $t$ 
    \[ \frac{d}{dt}\int_{\mathbb{R}^d}x \,d\nu_t(x) = \frac{d}{dt} \int_{\mathbb{R}^d} x\, d\mu_t^*(x).\] 
\end{restatable}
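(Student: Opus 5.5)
Two observations do almost all the work. First, the time derivative of the mean equals the $\mu$-average of the velocity field. Second, constant vector fields are gradients, and they are parallel along any curve, so parallel transport preserves the inner product against them.

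\textbf{Step 1 (mean derivative via the continuity equation).} For a curve $(\mu_t)$ with velocity $\nabla\varphi_t$ solving \cref{eq: balanced continuity eqn}, we test against the coordinate functions $x\mapsto x_k$. Formally, \Cref{def: weak solutions} gives
\[\frac{d}{dt}\int x_k\,d\mu_t=\int \langle e_k,\nabla\varphi_t\rangle\,d\mu_t .\]
The test functions in \Cref{def: weak solutions} are compactly supported, so this step needs a truncation. Take cutoffs $\chi_R(x)x_k$ together with a time test function, and pass to $R\to\infty$ using finite second moments and $\int_0^1\|\nabla\varphi_t\|_{L^2(\mu_t)}^2dt<\infty$. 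This gives the identity in the distributional sense in $t$, hence for a.e. $t$. Applying it to both curves, the claim reduces to showing, for a.e. $t$ and every $k$,
\[\langle e_k,\nabla\varphi_t\rangle_{L^2(\nu_t)}=\langle e_k,\nabla\varphi^*_t\rangle_{L^2(\mu^*_t)}.\]

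\textbf{Step 2 (constants are parallel).} Fix $t$ and let $(\rho_s)_{s\in[0,1]}$ be the Wasserstein geodesic from $\nu_t$ to $\mu_t^*$, with velocity $\nabla\psi_s$. The constant field $e_k=\nabla x_k$ lies in $L^2(\rho_s)$ because the measures are probability measures. It lies in $T_{\rho_s}\mathcal P_2(\mathbb R^d)$ since $\nabla(\chi_R x_k)\to e_k$ in $L^2(\rho_s)$. By the Euclidean formula in \Cref{Wasserstein covariant deriv},
\[\nabla^{W_2}_{(\nabla\psi_s)}e_k=\Pi_{\rho_s}\big(\partial_s e_k+\nabla e_k\cdot\nabla\psi_s\big)=0.\]
So $e_k$ is a parallel field along $(\rho_s)$, and $\PT_{\nu_t\to\mu_t^*}(e_k)=e_k$.

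\textbf{Step 3 (isometry).} Metric compatibility in \Cref{Wasserstein covariant deriv} says that for two parallel fields $X_s,Y_s$ along $(\rho_s)$,
\[\tfrac{d}{ds}\langle X_s,Y_s\rangle_{\rho_s}=\langle\nabla^{W_2}X_s,Y_s\rangle_{\rho_s}+\langle X_s,\nabla^{W_2}Y_s\rangle_{\rho_s}=0.\]
Thus parallel transport is an isometry. Take $X_s=\PT_{\nu_t\to\rho_s}(\nabla\varphi_t)$ and $Y_s=e_k$. Using the hypothesis $\nabla\varphi_t^*=\PT_{\nu_t\to\mu_t^*}(\nabla\varphi_t)$, we get
\[\langle \nabla\varphi_t^*,e_k\rangle_{\mu_t^*}=\langle\nabla\varphi_t,e_k\rangle_{\nu_t},\]
which is exactly what Step 1 requires.

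\textbf{Main obstacle.} The delicate point is the rigorous justification of Steps 2–3 in the infinite-dimensional setting, namely metric compatibility along the specific geodesic for a field that is not compactly supported. I would argue directly, without the abstract identity. Write $\frac{d}{ds}\int\langle X_s,e_k\rangle d\rho_s$ via the continuity equation of $\rho_s$, giving $\int\langle\partial_sX_s+\nabla X_s\cdot\nabla\psi_s,e_k\rangle d\rho_s$. This equals $\int\langle\Pi_{\rho_s}(\partial_sX_s+\nabla X_s\nabla\psi_s),e_k\rangle d\rho_s$ because $e_k$ is tangent, and that is zero by \Cref{Wasserstein parallel transport pde}. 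The required regularity (Lipschitz velocities, existence of $\PT$) comes from regularity of the geodesic as in \Cref{def: regular curves} and \Cref{assumption implications}, which I would take as implicit in the hypothesis that $\PT$ is defined.
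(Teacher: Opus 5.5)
Your proposal is correct and follows essentially the same route as the paper. The paper tests the weak continuity equation with the truncated linear function $\langle a,x\rangle\chi_R(x)$ to identify $\frac{d}{dt}\int\langle a,x\rangle\,d\nu_t$ with $\int\langle a,\nabla\varphi_t\rangle\,d\nu_t$, then equates the two pairings using the isometry of parallel transport together with the fact that constant fields are parallel (\Cref{affine PT preserves constant fields}), exactly as in your Steps 1--3. Your extra checks are details the paper leaves implicit: that $e_k$ genuinely lies in $T_{\rho_s}\mathcal P_2(\mathbb R^d)$, and the direct computation of $\frac{d}{ds}\int\langle X_s,e_k\rangle\,d\rho_s$ in place of the bare appeal to isometry.
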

We provide the proof of \Cref{pt recovers parallel trends} in \Cref{sec: proof of pt recovers parallel trends}. This result ensures that our assumptions recover the classical \textit{equibias} assumption, while also capturing changes in the global \say{shape} of the distribution. We believe that this result motivates the application of Wasserstein parallel transport and our proposed counterfactual dynamics procedure in Causal inference settings; in particular, augmenting DiD with Wasserstein parallel transport would enable estimation of treatment effects that manifest through changes in the \textit{shape} of the distribution \textit{in addition} to changes in the means.

\section{Experiments}\label{sec: experiments}

We now pivot to demonstrating the utility of our conceptual framework and proposed algorithms through simulations. We begin by showing simulation results of our approximation scheme on synthetic data, documented in \Cref{sec: synthetic data experiments}. Then, we apply our counterfactual dynamics prediction method to two single-cell RNA sequencing (scRNAseq) datasets, where we impute the gene-level dynamics of one biological system onto another. We provide these genomics experiments in \Cref{sec: genomics experiments}. For a detailed treatment of our exact computational implementation of approximate Wasserstein parallel transport (\Cref{alg: W parallel transport}) and the counterfactual dynamics prediction method (\Cref{alg: W trajectory reconstruction}), please see \Cref{appendix:implementation_details}. 

\subsection{Simulations} \label{sec: synthetic data experiments}

We begin by evaluating our proposed algorithms on synthetic data. In particular, we evaluate \Cref{alg: W trajectory reconstruction} on a stochastic system of time-evolving Gaussian measures, as in this regime we have access to closed form parallel transport expressions (which we derived in \Cref{sec: WPT with Gaussians}). We explore how the dimension $d$ affects the quality of the predicted trajectories as measured by Wasserstein distance, and we compare across methods. 

As mentioned, \Cref{fig: cf combined} illustrates an application of \Cref{alg: W trajectory reconstruction} to a system of Gaussian measures evolving in parallel. The top row consists of a sequence of measures where the mean evolves in time, while the bottom row illustrates a sequence where both the mean and the covariance evolves in time. We display the reconstructed parallel curves in warm colors for our method (\Cref{alg: W trajectory reconstruction}) and for two baselines: the first baseline applies the same change in mean as observed in the control trajectory to the counterfactual initial condition. The second baseline uses the trajectory induced by the Brenier map between successive time points on the control curve \textit{without} using parallel transport. In this setup we use the ground truth Brenier map between the underlying control Gaussian measures as extending the empirical Brenier map out of sample is nontrivial -- while this is not a tractable baseline in practice, we believe it to be illustrative.

\Cref{fig: cf combined} indicates that the cumulative error for the system of Gaussians with evolving means and stationary covariances grows at a comparable rate for our method and the baselines as expected. Since the covariances of the sequence of measures is unchanging, one would expect that the mean-sensitive baseline is successful. Moreover, the stationary nature of the covariances guarantees that the Brenier map is just a shift; this explains the good performance. For the system with a non-stationary covariance, however, we see a significantly faster growth in the cumulative error in the baseline methods. The mean-shifted baseline method evidently doesn't incorporate information about the change in the shape of the distribution over time, while the Brenier map baseline clearly doesn't lead to imputed dynamics that are parallel in any desirable sense. 

\begin{figure}[htbp]
    \centering
    % First Image
    \begin{subfigure}{\linewidth}
        \centering
        \includegraphics[width=\linewidth]{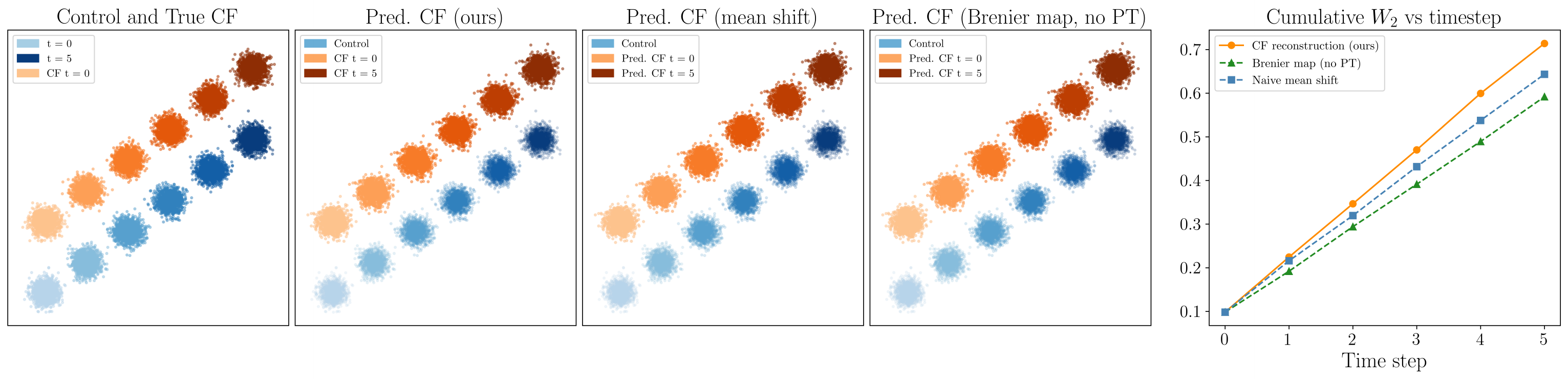}
        \caption{Parallel sequences of Gaussian measures with changing means.}
        \label{fig:linear_shift}
    \end{subfigure}
    
    \vspace{1em} % Adds a bit of vertical space between the images

    % Second Image
    \begin{subfigure}{\linewidth}
        \centering
        \includegraphics[width=\linewidth]{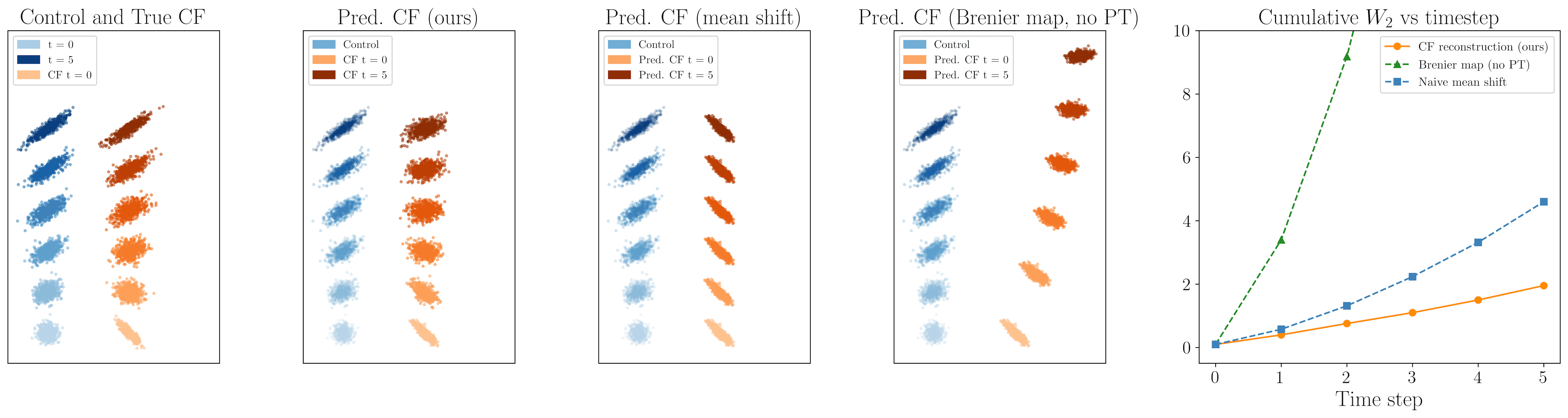}
        \caption{Parallel sequences of Gaussian measures with changing means and changing covariances.}
        \label{fig:cov_deform}
    \end{subfigure}
    
    \caption{Illustration of the parallel transported dynamics for two systems of time-evolving Gaussian measures using our method and two baselines.}
    \label{fig: cf combined}
\end{figure}

To get a sense of how the performance of our method depends on dimension, we also evaluate on Gaussian measures for $d \in \{2, 4, 8, 12, 16\}$. In particular, we compute the Wasserstein distance between the true and predicted pushforward of a parallel transported tangent between two random Gaussian measures. We report the results of our method, the mean-shift baseline and the (non parallel transported) Brenier map baseline as a function of dimension in \Cref{tab:dim_sweep}. In this table we add an \say{estimated} Gaussian baseline (labeled \say{Gauss PT}), which estimates the parameters of a Gaussian and then applies closed form Gaussian parallel transport (\Cref{Gaussian parallel transport}). We wish to highlight two key takeaways from this experiment -- firstly, we see that our method \textit{without} the Helmholtz projection consistently outperforms all other methods (except the estimated Gaussian baseline) for small $d$ ($d < 10$). The superior performance of the estimated Gaussian baseline can be attributed to the fact that WPT$^-$ is more general, and the estimated Gaussian baseline is applicable only for Gaussian measures. When $d > 10$, however, our method is comparable in performance to the simple mean-shift baseline; this is unsurprising, as the empirical mean estimates the true mean at a $\sqrt{n}$-rate, while plugin estimators for optimal transport maps are known to suffer from error rates that have an exponential dependence on the dimension $d$ \citep{deb2021rates, manole2024plugin}. We therefore suggest that for high dimensional problems, one should opt to employ the Gaussian estimation or the simple mean-shift baseline. 

\begin{table}[h]
\centering
\small
\begin{tabular}{ccccccccccc}
\toprule
Step & \multicolumn{5}{c}{$d=2$} & \multicolumn{5}{c}{$d=4$} \\
\cmidrule(lr){2-6}\cmidrule(lr){7-11}
& WPT & WPT$^-$ & Brenier & Shift & Gauss PT & WPT & WPT$^-$ & Brenier & Shift & Gauss PT \\
\midrule
1 & 0.606 & \underline{0.142} & 2.242 & 0.277 & \textbf{0.118} & 3.300 & \textbf{0.400} & 4.669 & 0.512 & \underline{0.401} \\
2 & 1.609 & \underline{0.121} & 3.740 & 0.418 & \textbf{0.101} & 7.490 & \underline{0.432} & 7.765 & 0.728 & \textbf{0.429} \\
3 & 2.988 & \underline{0.124} & 4.926 & 0.582 & \textbf{0.097} & 12.700 & \textbf{0.455} & 10.283 & 0.935 & \underline{0.462} \\
4 & 4.816 & \underline{0.145} & 5.965 & 0.729 & \textbf{0.139} & 18.462 & \underline{0.475} & 12.477 & 1.122 & \textbf{0.463} \\
5 & 7.163 & \underline{0.114} & 6.884 & 0.838 & \textbf{0.090} & 24.186 & \underline{0.486} & 14.463 & 1.264 & \textbf{0.466} \\
\bottomrule
\end{tabular}

\vspace{0.7em}

\begin{tabular}{ccccccccccc}
\toprule
Step & \multicolumn{5}{c}{$d=8$} & \multicolumn{5}{c}{$d=12$} \\
\cmidrule(lr){2-6}\cmidrule(lr){7-11}
& WPT & WPT$^-$ & Brenier & Shift & Gauss PT & WPT & WPT$^-$ & Brenier & Shift & Gauss PT \\
\midrule
1 & 5.285 & \textbf{1.277} & 7.091 & 1.333 & \underline{1.284} & 5.534 & 1.943 & 6.483 & \underline{1.920} & \textbf{1.897} \\
2 & 11.644 & \underline{1.446} & 12.026 & 1.697 & \textbf{1.423} & 12.555 & \underline{2.199} & 10.869 & 2.311 & \textbf{2.067} \\
3 & 18.130 & \underline{1.559} & 15.935 & 1.992 & \textbf{1.517} & 20.659 & \underline{2.415} & 14.640 & 2.707 & \textbf{2.236} \\
4 & 25.011 & \underline{1.673} & 19.322 & 2.269 & \textbf{1.625} & 29.407 & \underline{2.560} & 17.918 & 3.074 & \textbf{2.322} \\
5 & 31.928 & \underline{1.740} & 22.533 & 2.516 & \textbf{1.686} & 38.225 & \underline{2.681} & 20.841 & 3.391 & \textbf{2.405} \\
\bottomrule
\end{tabular}

\vspace{0.7em}

\begin{tabular}{cccccc}
\toprule
Step & \multicolumn{5}{c}{$d=16$} \\
\cmidrule(lr){2-6}
& WPT & WPT$^-$ & Brenier & Shift & Gauss PT \\
\midrule
1 & 6.660 & 2.651 & 8.642 & \textbf{2.489} & \underline{2.544} \\
2 & 14.538 & 3.048 & 13.875 & \underline{2.922} & \textbf{2.842} \\
3 & 23.431 & 3.341 & 18.104 & \underline{3.329} & \textbf{3.067} \\
4 & 32.890 & \underline{3.577} & 21.703 & 3.747 & \textbf{3.216} \\
5 & 42.623 & \underline{3.737} & 24.906 & 4.089 & \textbf{3.400} \\
\bottomrule
\end{tabular}

\caption{$W_2$ between predicted and true Gaussian CF (under Wasserstein Parallel Trends) across dimensions using $5000$ samples. WPT denotes Wasserstein parallel transport with projection (\Cref{alg: W trajectory reconstruction}), WPT$^-$ omits the projection step, Brenier applies the control OT map without transport, Shift applies the mean-shift baseline, and Gauss PT fits Gaussian parameters from samples and applies closed-form parallel transport. Lower is better; \textbf{bold} marks the best method at each step and \underline{underlining} marks the second-best.}
\label{tab:dim_sweep}
\end{table}

The second takeaway that we wish to convey is the apparent poor performance of our method when the Helmholtz projection \textit{is} included. We attribute this to the nonparametric rate of convergence associated with RKHS methods, which indicates that the risk of our estimator likely depends exponentially on $d$. This rate is slow enough to explain the poor performance of WPT with the Helmholtz projection even when $d$ is fairly small. 

\subsection{Real Data: Genomics} \label{sec: genomics experiments}

We now evaluate our method on two publicly available genomics datasets. Specifically, we apply our method to the dataset of human and chimp cerebral organoid cells undergoing developmental changes \citep{kanton2019organoid}, and a dataset of microglial cells from mice evolving over time in response to spared nerve injury (SNI) \citep{tansley2022single}. In both datasets, we have access to per-cell gene counts via single-cell transcriptomics -- as a result, we regard each cell $x$ as a point in $\mathbb{R}^d$, with $d$ being the total number of genes, where each coordinate quantifies the extent to which the cell expresses that gene. We regard a population of cells $\{x_1, \dots, x_n\}$ at a single time point as an empirical measure over $\mathbb{R}^d$, assigning a mass of $1/n$ to each datapoint. These time-varying empirical measures will be the object of study in our subsequent experiments.

\begin{table}[h]
\centering
\small
\begin{tabular}{ccccc}
\toprule
$n_{\mathrm{pcs}}$ & \multicolumn{2}{c}{mid ($\sim$ day 65)} & \multicolumn{2}{c}{late ($\sim$ day 110)} \\
\cmidrule(lr){2-3}\cmidrule(lr){4-5}
 & WPT$^-$ & Mean-shift & WPT$^-$ & Mean-shift \\
\midrule
2  & 1.262 & \textbf{1.230} & \textbf{0.817} & 1.367 \\
3  & \textbf{1.262} & 1.298 & \textbf{0.833} & 1.433 \\
5  & \textbf{1.294} & 1.376 & \textbf{0.896} & 1.621 \\
7  & \textbf{1.389} & 1.474 & \textbf{1.121} & 1.707 \\
10 & \textbf{1.537} & 1.578 & \textbf{1.428} & 1.851 \\
15 & \textbf{1.712} & 1.776 & \textbf{1.683} & 2.047 \\
20 & \textbf{1.822} & 1.981 & \textbf{1.897} & 2.211 \\
\bottomrule
\end{tabular}
\caption{$W_2$ between predicted and observed human organoid gene expressions per non-initial time point for WPT$^-$ and mean-shift baseline as a function of the number of principal components used.}
\label{tab:kanton_w2_npcs}
\end{table}

Before proceeding, we remark on a key subtlety that we ask the reader to keep in mind: in the following experiments, our goal is to verify whether or not these cellular systems develop in parallel in the Wasserstein sense when subjected to similar biological dynamics. We are not making any causal claims -- instead, we want these experiments to provide evidence that Wasserstein parallel trends is a reasonable assumption to adopt in genomics. We hope that this would provide \textit{support} for applying our method to biological systems to make meaningful causal claims about counterfactual dynamics \textit{downstream}.

\begin{figure}
    \centering
    \includegraphics[width=0.8\linewidth]{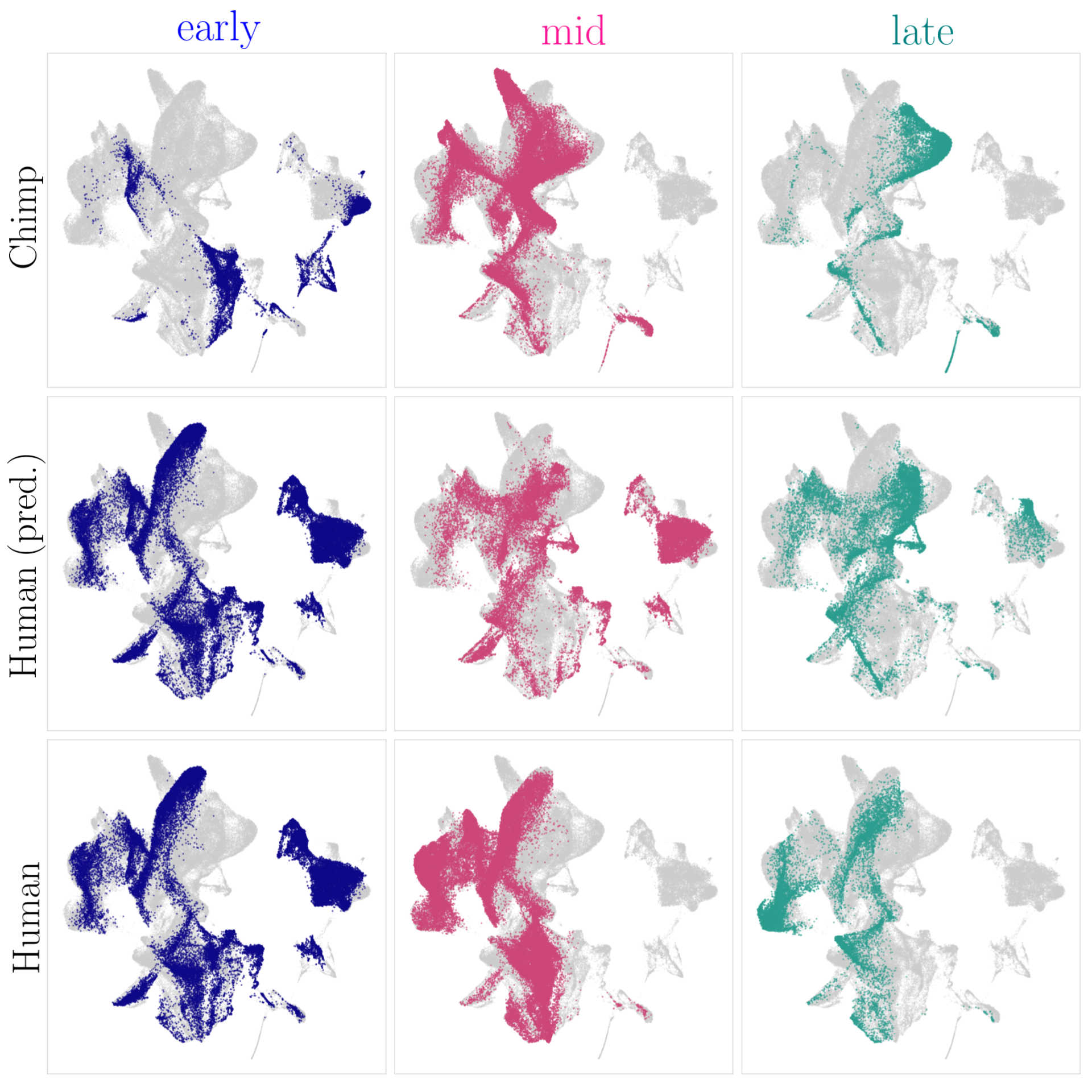}
    \caption{UMAP \citep{mcinnes2018umap} visualizations of chimp (top row), predicted human (middle row) and observed human (bottom row) organoid dynamics over time using $n_{\text{pcs}} = 15$.}
    \label{fig:kanton_umap}
\end{figure}

\begin{figure}
    \centering
    \includegraphics[width=\linewidth]{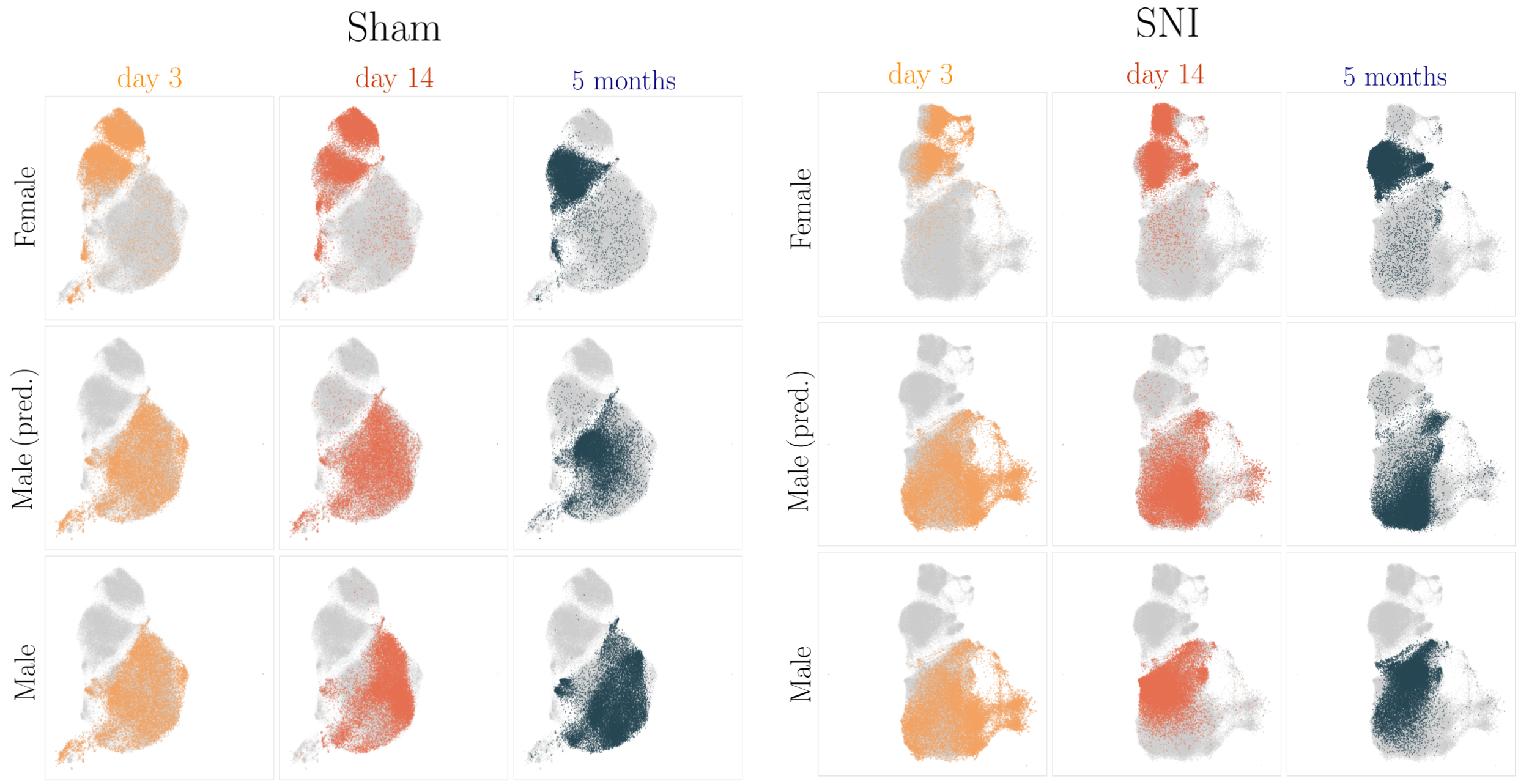}
    \caption{UMAP \citep{mcinnes2018umap} visualizations of female (top row), predicted male (middle row) and observed male (bottom row) microglial dynamics over time using $n_{\text{pcs}} = 15$. The panel on the left uses only cells from the untreated (sham) group, while the panel on the right uses only cells from the treated (SNI) group.}
    \label{fig:tansley}
\end{figure}
\begin{table}[h]
\centering
\small

\begin{subtable}[t]{0.48\textwidth}
\centering
\begin{tabular}{ccccc}
\toprule
$n_{\mathrm{pcs}}$ & \multicolumn{2}{c}{day 14} & \multicolumn{2}{c}{5 months} \\
\cmidrule(lr){2-3}\cmidrule(lr){4-5}
 & WPT$^-$ & Mean-shift & WPT$^-$ & Mean-shift \\
\midrule
2  & \textbf{0.489} & 0.601 & 0.600 & \textbf{0.575} \\
3  & 0.799 & \textbf{0.719} & \textbf{0.574} & 0.613 \\
5  & 0.893 & \textbf{0.854} & \textbf{0.971} & 1.120 \\
7  & 1.090 & \textbf{1.053} & \textbf{1.146} & 1.228 \\
10 & \textbf{1.346} & 1.354 & \textbf{1.434} & 1.507 \\
15 & 1.630 & \textbf{1.616} & \textbf{1.728} & 1.742 \\
20 & 1.916 & \textbf{1.883} & \textbf{1.980} & 1.984 \\
\bottomrule
\end{tabular}
\caption{Sham}
\label{tab:mf_w2_npcs_sham}
\end{subtable}
\hfill
\begin{subtable}[t]{0.48\textwidth}
\centering
\begin{tabular}{ccccc}
\toprule
$n_{\mathrm{pcs}}$ & \multicolumn{2}{c}{day 14} & \multicolumn{2}{c}{5 months} \\
\cmidrule(lr){2-3}\cmidrule(lr){4-5}
 & WPT$^-$ & Mean-shift & WPT$^-$ & Mean-shift \\
\midrule
2  & \textbf{0.711} & 0.714 & 0.879 & \textbf{0.838} \\
3  & \textbf{0.928} & 0.998 & \textbf{0.875} & 0.893 \\
5  & \textbf{1.113} & 1.176 & \textbf{1.126} & 1.355 \\
7  & 1.277 & \textbf{1.271} & \textbf{1.283} & 1.441 \\
10 & 1.458 & \textbf{1.429} & \textbf{1.488} & 1.589 \\
15 & 1.703 & \textbf{1.683} & \textbf{1.785} & 1.821 \\
20 & 1.930 & \textbf{1.905} & \textbf{2.021} & 2.026 \\
\bottomrule
\end{tabular}
\caption{SNI}
\label{tab:mf_w2_npcs_sni}
\end{subtable}

\caption{$W_2$ between predicted and observed male microglial gene expressions per non-initial timepoint for WPT$^-$ and mean-shift baseline as a function of the number of principal components used.}
\label{tab:mf_w2_npcs}
\end{table}

Having established the motivation for this section, we can now apply \Cref{alg: W trajectory reconstruction} to the dataset of cerebral organoids from \citet{kanton2019organoid}. In particular, we use WPT to impute the dynamics of the chimp organoids over time onto the human organoid initial conditions -- we then compute the Wasserstein distance between the imputed point clouds and the observed human organoid point clouds and compare the distances with those obtained when using the mean-shift baseline. Since there is no \say{treatment} in this setup, this experiment is not causal and we are not predicting a counterfactual. However, if the observed dynamics were a precursor to a treatment period, this experiment described would be a reasonable justification for the Wasserstein parallel trends assumption. 

We provide the Wasserstein distances between predicted and true point clouds in \Cref{tab:kanton_w2_npcs}, while we provide UMAP visualizations of the predicted dynamics produced by our method in \Cref{fig:kanton_umap} \citep{mcinnes2018umap}. As is standard in genomics, we project the datasets onto their top $n_{\text{pcs}}$ principal components, as the raw dataset contains on the order of $20000$ genes, most of which are not expressed. To get a sense of the dependence of the performance of our method on the dimension, we vary $n_{\text{pcs}}$ from $2$ to $20$ and include the errors for each in \Cref{tab:kanton_w2_npcs}. We find that our method produces dynamics that are closer in Wasserstein distance to the truth than the dynamics produced by the mean-shift baseline in all but one scenario. Since \Cref{pt recovers parallel trends} indicates that Wasserstein parallel transport gives rise to parallel trends of the means, we consider this strong evidence to suggest that the change in the \textit{shape} of point clouds captured by Wasserstein parallel trends is indeed biologically meaningful.

To provide further support for this notion, we apply the same experiment to a dataset of gene expression of microglial cells over time from \citet{tansley2022single}. In this experiment, we predict the evolution of microglial cells through gene expression space in males from the evolution in females. Since the study from which this data was obtained addresses a causal question -- how microglial gene expression changes in the presence of spared nerve injury (SNI) -- we run the experiment separately for the control (called \say{sham}) and the SNI group. We provide the Wasserstein distances between predicted point clouds and observed point clouds for our method and the mean shift baseline in \Cref{tab:mf_w2_npcs}, and we provide a visualization of the predictions produced by our method in \Cref{fig:tansley}. In this dataset we find that the mean-shift method closer in performance to our method than we saw in the dataset of cerebral organoids. With that said, this seems to hold only for the \textit{first} predicted time step -- for the second predicted time step we see that our method consistently outperforms the mean shift baseline.

\section{Conclusion}

In this work we proposed a conceptual framework for formalizing the notion of distribution-level parallel dynamics through the mechanism of parallel transport on the space of probability measures endowed with optimal transport geometry. In particular, we proposed a tractable procedure for approximating parallel transport on the space of probability measures over a Riemannian manifold $\man$. We also instantiated the method for $\man = \mathbb{R}^d$ in an algorithm to predict the dynamics of one system of time-evolving measures from another under the assumption of Wasserstein parallel trends, and we separately derived closed form expressions of Wasserstein parallel transport for Gaussian measures. Finally, we evaluated our parallel transport approximation scheme in simulation using the ground truth Gaussian parallel transport, and we deployed our dynamics prediction procedure on real genomics datasets of time-evolving cellular systems. 

While our work proposes and deploys a new procedure for imputing the dynamics of one statistical system onto another, there are many open questions that we have left unaddressed. In particular, we believe that a fruitful direction of future study would be an exploration of the statistical properties of Wasserstein parallel transport and, in turn, dynamics prediction with Wasserstein parallel transport. Understanding the rates of convergence, optimality properties and limiting laws would enable rigorous statistical inferences to be made when this framework is applied in practice. We believe this would be especially useful for Difference-in-Differences, as it would allow one to perform hypothesis testing on observed data from pretreatment periods to assess whether Wasserstein parallel trends is a reasonable assumption to adopt. 

In concurrent work, we have also been developing the theory of parallel transport on the space of non-negative Radon measures using \textit{Hellinger-Kantorovich} geometry -- such a theory would allow for a rigorous notion of parallel trends on the space of measures with total mass that is allowed to vary. This concurrent work has relied heavily on the results derived in this paper, as one can construct an equivalence (in a delicate sense) between Hellinger-Kantorovich geometry on $\mathbb{R}^d$ and Wasserstein geometry on a \textit{metric cone} of $\mathbb{R}^d$ \citep{liero2016optimal, liero2018optimal}. We believe that the completion of this theory of unbalanced parallel transport would be especially applicable to genomics, as cellular systems experience mass growth and shrinkage over time. As with Wasserstein parallel transport, we believe that an abundance of interesting statistical questions would follow.

\clearpage

{\bibliography{main.bib}}

\appendix

\section{Stability Theory for Wasserstein Parallel Transport}

For the stability theory in this work, we choose to take $\man = \mathbb{T}^d$, where $\mathbb{T}^d = \mathbb{R}^d/\mathbb{Z}^d$ is the \textit{flat torus.} This choice isn't uncommon in optimal transport theory, as it avoids extensive boundary issues that arise in the PDE machinery underpinning optimal transport (see \citet{manole2023central}, for instance). Thus, throughout this section of the appendix, we adopt the following definitions. For each absolutely continuous measure
\(\mu=\rho\,dx\in\mathcal P_2(\mathbb T^d)\), we define
\[
T_\mu\mathcal P_2(\mathbb T^d)
=
\overline{\{\nabla\varphi:\varphi\in C^\infty(\mathbb T^d)\}}^{L^2(\mu)},
\qquad
\Pi_\mu:L^2(\mu;\mathbb R^d)\to T_\mu\mathcal P_2(\mathbb T^d)
\]
where $C^\infty(\mathbb T^d)$ is the space of smooth periodic functions over $\mathbb{T}^d$. We also identify \(\Pi_\mu z\) with its unique representative \(\nabla\psi\) on \(\mathbb{T}^d\),
where \(\psi\in H^1(\mathbb{T}^d)\) solves
\[
\int_{\mathbb{T}^d} \rho\, \langle\nabla\psi, \nabla \xi\rangle\,dx
=
\int_{\mathbb{T}^d} \rho\, \langle z, \nabla \xi\rangle\,dx
\qquad
\forall \xi\in H^1(\mathbb{T}^d).
\]
We show this identification rigorously below. We note that this identification allows us to represent projected fields as gradients of Sobolev functions; in the original tangent space definition, the $L^2(\mu)$ closure prevents one from doing so.

\begin{restatable}[Tangent space realization]{lemma}{} \label{tangent space realization}
    Let $\mu$ be a probability measure on $\mathbb{T}^d$ that admits a Lebesgue density $\rho$ such that $0 < \lambda \leq \rho(x) \leq \Lambda < \infty$ for a.e. $x \in \mathbb{T}^d$, and define 
    \[T_\mu\mathcal P_2(\mathbb T^d)
    =
    \overline{\{\nabla\varphi:\varphi\in C^\infty(\mathbb T^d)\}}^{L^2(\mu)}.\]
    Then 
    \[T_\mu\mathcal{P}_2(\mathbb{T}^d) = \{\nabla \psi: \psi \in H^1(\mathbb{T}^d)\}.\]
    In particular, every $u \in T_\mu\mathcal{P}_2(\mathbb{T}^d)$ admits a representative of the form $u = \nabla \psi$ on $\mathbb{T}^d$ for some $\psi \in H^1(\mathbb{T}^d)$, where $\psi$ is unique up to an additive constant.  
\end{restatable}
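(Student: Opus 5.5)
The plan is to use the two-sided density bound to show that the $L^2(\mu)$ norm on vector fields is equivalent to the unweighted $L^2(\mathbb{T}^d)$ norm. Concretely, $\lambda\|w\|_{L^2(dx)}^2 \le \|w\|_{L^2(\mu)}^2 \le \Lambda\|w\|_{L^2(dx)}^2$. This reduces the statement to an unweighted fact: the $L^2(\mathbb{T}^d;\mathbb{R}^d)$ closure of $\{\nabla\varphi:\varphi\in C^\infty(\mathbb{T}^d)\}$ equals $\{\nabla\psi:\psi\in H^1(\mathbb{T}^d)\}$. Equivalent norms give the same closures and the same Cauchy sequences, so the weighted and unweighted closures coincide as sets.

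For the inclusion $\supseteq$, take $\psi\in H^1(\mathbb{T}^d)$. Smooth periodic functions are dense in $H^1(\mathbb{T}^d)$, for instance by mollification or by truncating Fourier series. So there are $\varphi_k\in C^\infty(\mathbb{T}^d)$ with $\nabla\varphi_k\to\nabla\psi$ in $L^2(dx)$, hence also in $L^2(\mu)$, and therefore $\nabla\psi\in T_\mu\mathcal{P}_2(\mathbb{T}^d)$.

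For the inclusion $\subseteq$, which is the main step, take $u$ in the closure and $\varphi_k\in C^\infty$ with $\nabla\varphi_k\to u$ in $L^2(\mu)$, hence in $L^2(dx)$.

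1. Normalize $\varphi_k$ to have zero mean over $\mathbb{T}^d$; this does not change the gradients.
2. The Poincaré--Wirtinger inequality on the torus, $\|\varphi-\bar\varphi\|_{L^2}\le C_P\|\nabla\varphi\|_{L^2}$, applied to $\varphi_k-\varphi_j$, shows that $(\varphi_k)$ is Cauchy in $H^1(\mathbb{T}^d)$.
3. Completeness of $H^1$ gives a limit $\psi$ with $\nabla\varphi_k\to\nabla\psi$ in $L^2$. By uniqueness of limits, $u=\nabla\psi$ almost everywhere.

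The Poincaré step is where compactness of $\mathbb{T}^d$ enters. If needed, I would prove it via Fourier series: for zero-mean $\varphi$, $\|\nabla\varphi\|^2=\sum_{k\neq0}4\pi^2|k|^2|\hat\varphi_k|^2\ge4\pi^2\|\varphi\|^2$.

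For uniqueness, suppose $\nabla\psi_1=\nabla\psi_2$ $\mu$-a.e. Since $\rho\ge\lambda>0$, the two gradients then agree Lebesgue-a.e. Thus $\nabla(\psi_1-\psi_2)=0$ in $L^2(dx)$, and the Poincaré--Wirtinger inequality forces $\psi_1-\psi_2$ to equal its mean, i.e. to be constant.

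This also justifies the weak formulation of $\Pi_\mu$ stated before the lemma. The bilinear form $\int\rho\langle\nabla\psi,\nabla\xi\rangle\,dx$ is coercive on the zero-mean subspace of $H^1$ by Poincaré together with $\rho\ge\lambda$, so Lax--Milgram gives a unique zero-mean solution $\psi$, and $\nabla\psi$ is the orthogonal projection. That remark is auxiliary and not needed for the lemma itself.
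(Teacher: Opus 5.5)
Your proposal is correct and follows essentially the same route as the paper. Both reduce to the unweighted closure via the two-sided density bound, obtain $\supseteq$ from density of $C^\infty(\mathbb{T}^d)$ in $H^1(\mathbb{T}^d)$, and obtain $\subseteq$ by demeaning the potentials, applying Poincar\'e--Wirtinger to show they are Cauchy in $H^1$, and using completeness; your Fourier proof of Poincar\'e and your explicit uniqueness argument just spell out steps the paper asserts briefly.
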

\begin{proof}
    Because $0 < \lambda \leq \rho \leq \Lambda$ a.e. on $\mathbb{T}^d$, the norms $\|\cdot \|_{L^2(\mu)}$ and $\|\cdot\|_{L^2(\mathbb{T}^d)}$ are equivalent. Thus, it suffices to prove that 
    \[\overline{\{\nabla\varphi: \varphi \in C^\infty(\mathbb{T}^d)\}}^{L^2(\mathbb{T}^d)} = \{\nabla \psi : \psi \in H^1(\mathbb{T}^d)\}.\]
    To do so, we'll start by showing the inclusion $\subseteq$. Let $\varphi_n$ be a sequence of functions in $C^\infty(\mathbb{T}^d)$ such that $\nabla \varphi_n \rightarrow u$ in $L^2(\mathbb{T}^d; \mathbb{R}^d)$ for some $u \in L^2(\mathbb{T}^d; \mathbb{R}^d)$. We will show that $u$ is necessarily a gradient of a Sobolev function. To do so, set 
    \[\tilde \varphi_n \triangleq \varphi_n - \int_{\mathbb{T}^d}\varphi_n(x)\, dx\]
    as a de-meaned instance of $\varphi_n$. Observe that $\int_{\mathbb{T}^d}\tilde \varphi_n \,dx = 0,$ and $\nabla \tilde \varphi_n = \nabla \varphi_n$. By the Poincar\'e-Wirtinger inequality \citep{cioranescu1999introduction},
    \[\|\tilde \varphi_n - \tilde \varphi_m\|_{L^2(\mathbb{T}^d)} \leq C_{\text{p}}\|\nabla \tilde \varphi_n - \nabla \tilde \varphi_m\|_{L^2(\mathbb{T}^d)}.\]
    Thus,
    \[\|\tilde \varphi_n - \tilde \varphi_m\|_{H^1(\mathbb{T}^d)} \leq (1 + C_{\text{p}})\|\nabla \varphi_n - \nabla \varphi_m\|_{L^2(\mathbb{T}^d)}.\]
    Since $\nabla \varphi_n$ is convergent in $L^2(\mathbb{T}^d; \mathbb{R}^d)$ it is Cauchy. By the inequality above, we know that $\tilde \varphi_n$ must then be Cauchy in $H^1(\mathbb{T}^d)$. Since $H^1(\mathbb{T}^d)$ is a Banach space, we know there exists a $\psi \in H^1(\mathbb{T}^d)$ such that $\tilde \varphi_n \rightarrow \psi$ in $H^1(\mathbb{T}^d).$ This convergence in a Sobolev sense implies that $\nabla \tilde \varphi_n = \nabla \varphi_n \rightarrow \nabla \psi$ in $L^2(\mathbb{T}^d).$ Since $\nabla \varphi_n \rightarrow u$ we conclude that $u = \nabla \psi$. Now we will show the reverse inclusion $\supseteq$. Let $\psi \in H^1(\mathbb{T}^d)$. Since $C^\infty(\mathbb{T}^d)$ is dense in $H^1(\mathbb{T}^d)$, we know there exists a sequence $\varphi_n \in C^\infty(\mathbb{T}^d)$ such that $\varphi_n \rightarrow \psi$ in $H^1(\mathbb{T}^d).$ This implies that $\nabla \varphi_n \rightarrow \nabla \psi$ in $L^2(\mathbb{T}^d; \mathbb{R}^d)$. Thus, $\nabla \psi$ is in the $L^2(\mathbb{T}^d)$ closure of $\{\nabla \varphi: \varphi \in C^\infty(\mathbb{T}^d)\}$. To establish the uniqueness up to an additive constant of $\psi$, observe the following: if $\nabla \psi_1 = \nabla \psi_2 \in L^2(\mu)$, then $\psi_1 - \psi_2$ is constant a.e. on $\mathbb{T}^d$. Thus the potential $\psi$ is unique up to an additive constant. 
\end{proof}

\begin{restatable}[Projection realization]{thm}{} \label{projection realization}
    Let $\mu$ be a probability measure on $\mathbb{T}^d$ that admits a Lebesgue density $\rho$ such that $0 < \lambda \leq \rho(x) \leq \Lambda < \infty$ for a.e. $x \in \mathbb{T}^d$, and let 
    \[\Pi_\mu: 
     L^2(\mu; \mathbb{R}^d)\longrightarrow \overline{\{\nabla\varphi:\varphi\in C^\infty(\mathbb T^d)\}}^{L^2(\mu)}\]
     denote the orthogonal projection onto the Wasserstein tangent space on $\mathbb{T}^d$. Then for every element $z \in L^2(\mu; \mathbb{R}^d)$ there exists a unique $\psi \in H^1(\mathbb{T}^d)$ such that $\int_{\mathbb{T}^d} \psi\, dx = 0$, $\Pi_\mu z = \nabla \psi \in L^2(\mu; \mathbb{R}^d)$, and 
     \[\int_{\mathbb{T}^d}\rho \,\langle \nabla \psi , \nabla \xi  \rangle\, dx = \int_{\mathbb{T}^d}\rho \, \langle z, \nabla \xi \rangle\, dx \]
     for all $\xi \in H^1(\mathbb{T}^d).$
\end{restatable}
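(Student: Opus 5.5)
By \Cref{tangent space realization}, the range of $\Pi_\mu$ is exactly $\{\nabla \psi : \psi \in H^1(\mathbb{T}^d)\}$. So for each $z$ I only need to pick out the right potential and normalize it. I would argue through the Lax--Milgram (equivalently Riesz) theorem on the mean-zero subspace
\[
\dot H^1 \triangleq \Big\{\psi \in H^1(\mathbb{T}^d) : \int_{\mathbb{T}^d}\psi\,dx = 0\Big\}.
\]
This is a closed subspace of $H^1(\mathbb{T}^d)$, hence a Hilbert space.

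\textbf{Existence via Lax--Milgram.} On $\dot H^1$, define the bilinear form and linear functional
\[
a(\psi,\xi) = \int_{\mathbb{T}^d} \rho\,\langle \nabla\psi,\nabla\xi\rangle\,dx, \qquad \ell(\xi) = \int_{\mathbb{T}^d} \rho\,\langle z,\nabla \xi\rangle\,dx .
\]
Boundedness of $a$ follows from $\rho \le \Lambda$. Coercivity follows from $\rho \ge \lambda$ together with the Poincar\'e--Wirtinger inequality already used in \Cref{tangent space realization}:
\[
a(\psi,\psi) \ge \lambda \|\nabla\psi\|_{L^2}^2 \ge \frac{\lambda}{1+C_{\rm p}^2}\,\|\psi\|_{H^1}^2 .
\]
Boundedness of $\ell$ comes from Cauchy--Schwarz in $L^2(\mu)$:
\[
|\ell(\xi)| \le \|z\|_{L^2(\mu)}\,\Lambda^{1/2}\,\|\nabla \xi\|_{L^2} .
\]
Lax--Milgram then gives a unique $\psi \in \dot H^1$ with $a(\psi,\xi) = \ell(\xi)$ for all $\xi \in \dot H^1$. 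The identity extends to all $\xi \in H^1(\mathbb{T}^d)$, because adding a constant to $\xi$ changes neither side.

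\textbf{Identifying $\nabla\psi$ with $\Pi_\mu z$.} The variational identity reads $\langle z - \nabla\psi, \nabla \xi\rangle_{L^2(\mu)} = 0$ for every $\xi\in H^1$. By \Cref{tangent space realization}, the fields $\nabla\xi$ exhaust $T_\mu\mathcal P_2(\mathbb{T}^d)$, and $\nabla\psi$ itself lies in that space. So $\nabla\psi$ is the element of the closed subspace whose residual is orthogonal to the subspace. By the characterization of orthogonal projections in Hilbert space, this means $\nabla\psi = \Pi_\mu z$ in $L^2(\mu)$.

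\textbf{Uniqueness.} Suppose $\psi_1,\psi_2$ both have mean zero and both satisfy the conditions. Subtracting the two identities and testing with $\xi = \psi_1 - \psi_2$ gives
\[
\lambda\|\nabla(\psi_1-\psi_2)\|_{L^2}^2 \le 0 .
\]
Hence $\psi_1 - \psi_2$ is constant, and the mean-zero normalization forces that constant to be $0$. Alternatively, uniqueness of $\Pi_\mu z$ combined with the uniqueness-up-to-constants part of \Cref{tangent space realization} gives the same conclusion.

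There is no real obstacle here. The only point needing care is the coercivity step: it requires the mean-zero normalization so that Poincar\'e--Wirtinger applies, and it requires the equivalence of the $L^2(\mu)$ and $L^2(\mathbb{T}^d)$ norms so that the Hilbert structures involved are the same.
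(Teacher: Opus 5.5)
Your proposal is correct and matches the paper's proof step for step: Lax--Milgram on the mean-zero subspace of $H^1(\mathbb{T}^d)$, with boundedness of the form, Poincar\'e--Wirtinger coercivity and a bounded right-hand side; extension of the identity to all $\xi \in H^1(\mathbb{T}^d)$ because both sides are unchanged when a constant is added to $\xi$; and identification of $\nabla\psi$ with $\Pi_\mu z$ by the orthogonality characterization together with \Cref{tangent space realization}. Your explicit uniqueness argument and your $\Lambda^{1/2}\|z\|_{L^2(\mu)}$ bound on $\ell$ are only minor refinements: the paper takes uniqueness directly from Lax--Milgram and bounds $\ell$ through the $L^2(\mathbb{T}^d)$ norm using norm equivalence.
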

\begin{proof}
    Define 
    \[H_{{\diamond}}^1(\mathbb{T}^d) \triangleq \left\{\xi \in H^1(\mathbb{T}^d) : \int_{\mathbb{T}^d}\xi \, dx = 0\right\}.\]
    Since $H^1_\diamond(\mathbb{T}^d)$ is a subspace of $H^1(\mathbb{T}^d)$, it inherits the same norm. In this proof, our goal will be to apply the Lax-Milgram theorem (\Cref{lax-milgram}) with the Hilbert space $H^1_{\diamond}(\mathbb{T}^d)$ to establish the uniqueness of $\psi \in H^1_\diamond(\mathbb{T}^d).$  To do so, define the functionals
    \[B(\psi, \xi) \triangleq \int_{\mathbb{T}^d}\rho \, \langle \nabla \psi, \nabla \xi\rangle \, dx, \quad \text{and} \quad f(\xi) \triangleq \int_{\mathbb{T}^d}\rho \, \langle z, \nabla \xi\rangle\, dx.\]
    To apply the Lax-Milgram theorem, we need to show that (1) the functional $B$ is upper bounded by a quantity proportional to the product of the norms of its arguments, (2) lower bounded by a quantity proportional to the squared norm of its first argument, and (3) $f$ is a bounded linear functional on $H^1(\mathbb{T}^d).$ We will start by showing (1). Using the density bounds and Cauchy-Schwartz, 
    \begin{align*}
        |B(\psi, \xi)| \leq \Lambda\|\nabla \psi\|_{L^2(\mathbb{T}^d)} \|\nabla \xi\|_{L^2(\mathbb{T}^d)} \leq \Lambda \|\psi\|_{H^1(\mathbb{T}^d)} \|\xi\|_{H^1(\mathbb{T}^d)}.
    \end{align*}
    This proves (1). Applying the density lower bound and the Poincar\'e-Wirtinger inequality \citep{cioranescu1999introduction} yields,
    \begin{align*}
        |B(\psi, \psi)| \geq \lambda \|\nabla \psi\|_{L^2(\mathbb{T}^d)}^2 \geq C_\text{p}^{-2}\lambda\|\psi\|_{L^2(\mathbb{T}^d)}^2.
    \end{align*}
    Note that this step used the fact that $\psi$ necessarily has mean zero. This bound implies,
    \begin{align*}
        \|\psi\|_{H^1(\mathbb{T}^d)}^2 = \|\psi\|_{L^2(\mathbb{T}^d)}^2 + \|\nabla \psi\|_{L^2(\mathbb{T}^d)}^2 \leq \left(1 + C_{\text{p}}^2\right)\|\nabla \psi\|_{L^2(\mathbb{T}^d)}^2
    \end{align*}
    which further implies 
    \[|B(\psi, \psi)| \geq\frac{\lambda}{(1 + C_\text{p}^2)}\|\psi\|_{H^1(\mathbb{T}^d)}^2\]
    proving (2). For (3), observe that $f(\cdot) = \langle z, \nabla (\cdot)\rangle_{\mathbb{T}^d}$ is a linear functional. Applying the density bound $\rho \leq \Lambda$ again yields
    \begin{align*}
        |f(\xi)| \leq \Lambda\|z\|_{L^2(\mathbb{T}^d)}\|\nabla \xi\|_{L^2(\mathbb{T}^d)} \leq \Lambda\|z\|_{L^2(\mathbb{T}^d)}\| \xi\|_{H^1(\mathbb{T}^d)}.
    \end{align*}
    Thus, $f$ is a bounded linear functional. Now we can apply the Lax-Milgram theorem (\Cref{lax-milgram}) to obtain the unique $\psi \in H^1_\diamond(\mathbb{T}^d)$ such that 
    \[\int_{\mathbb{T}^d}\rho \, \langle \nabla \psi, \nabla \xi\rangle \, dx = \int_{\mathbb{T}^d}\rho \, \langle z, \nabla \xi\rangle\, dx\]
    for all $\xi \in H^1_\diamond(\mathbb{T}^d).$ Note that $\nabla \xi$ is unchanged when $\xi$ is changed by an additive constant, so this statement can be upgraded to all $\xi \in H^1(\mathbb{T}^d).$ By \Cref{tangent space realization}, we know that $\nabla \psi \in T_\mu\mathcal{P}_2(\mathbb{T}^d).$ To finish the proof, we now need to show that $\nabla \psi$ is indeed the orthogonal projection of $z \in L^2(\mu; \mathbb{R}^d)$ onto $T_\mu\mathcal{P}_2(\mathbb{T}^d).$ Let $\varphi \in C^\infty(\mathbb{T}^d)$. Since $\varphi$ is \textit{also} in $H^1(\mathbb{T}^d)$, we can apply the variational identity above to say
    \[\int_{\mathbb{T}^d}\rho \, \langle z - \nabla \psi, \nabla \varphi\rangle \, dx = 0.\]
    This is equivalent to saying $\langle z - \nabla \psi, \nabla \varphi\rangle_{L^2(\mu)} = 0$ for all $\varphi \in C^\infty(\mathbb{T}^d)$, which clearly implies that $z - \nabla \psi$ is orthogonal to gradients $C^\infty(\mathbb{T}^d)$. This also renders $z - \nabla \psi$ orthogonal to the $L^2(\mu)$ closure of the gradients, $T_\mu\mathcal{P}_2(\mathbb{T}^d).$ Thus, $\Pi_\mu z = \nabla \psi$, which proves the claim. 
\end{proof}

\subsection{The Stability Theorem} \label{sec: stability theory}

In this section we state and prove our stability theorem (\Cref{wpt stability}), which we leverage to obtain error bounds for \Cref{alg: W trajectory reconstruction}. We collect all supplementary results for this stability theorem in \Cref{sec: stability thm supplemental results}.

\begin{restatable}[Stability of Wasserstein parallel transport]{thm}{} \label{wpt stability}
Let $(\nu,\mu,\mu')$ be a triple in the admissible class $\mathcal C$, and let $v \in T_\nu\mathcal{P}_2(\mathbb{T}^d)$. Suppose that the Wasserstein geodesics $(\mu_t)_{t\in[0,1]}$, $ (\mu_t')_{t\in[0,1]}$ from $\nu$ to $\mu$ and from $\nu$ to $\mu'$ admit Lebesgue densities $\rho_t,\rho_t'$ and have tangent velocity fields $\phi_t,\phi_t'$, such that:

\begin{enumerate}
    \item \textbf{Uniform density bounds.} There exist constants $0<\lambda\le \Lambda<\infty$ such that for all $t\in[0,1]$,
    \[
    \lambda \le \rho_t(x),\,\rho_t'(x)\le \Lambda
    \qquad\text{for a.e. }x\in\mathbb{T}^d.
    \]
    
    \item \textbf{$H^1$-propagation of parallel fields.} There exists $C_{\rm par}>0$ such that for all $t\in[0,1]$,
    \[
    \|w_t\|_{H^1(\nu)}+\|w_t'\|_{H^1(\nu)}
    \le C_{\rm par}\|v\|_{H^1(\nu)}
    \]
    where $w_t = {\PT}_{\nu \rightarrow \mu_t}(v)$ and $w_t' = {\PT}_{\nu \rightarrow \mu_t'}(v)$.

    \item \textbf{Lipschitz stability of geodesic velocities.} There exists $C_{\rm vel}>0$ such that for all $t\in[0,1]$,
    \[
    \|\nabla\phi_t-\nabla\phi_t'\|_{L^\infty(\mathbb{T}^d)}
    \le C_{\rm vel}\,W_2(\mu,\mu').
    \]

    \item \textbf{Lipschitz stability of densities.} There exist constants $C_{L}>0, C_W > 0$ such that for all $t\in[0,1]$,
    \[
    \|\rho_t-\rho_t'\|_{L^\infty(\mathbb{T}^d)}
    \le C_{L}\,W_2(\mu,\mu'), \quad \text{and} \quad \|\rho_t-\rho_t'\|_{W^{1,\infty}(\mathbb{T}^d)}
    \le C_{W}\,W_2(\mu,\mu').
    \]

    \item \textbf{Lipschitz stability of the weighted Helmholtz projection.} There exist constants
    $C_{\Pi,0}, C_{\Pi,1}>0$ such that for all $t\in[0,1]$ and every
    $z\in L^2(\nu;\mathbb R^d)$,
    \begin{align*}
    \|\Pi_{\mu_t}(z)-\Pi_{\mu_t'}(z)\|_{L^2(\nu)}
    &\le
    C_{\Pi,0}\,\|\rho_t-\rho_t'\|_{L^\infty(\mathbb{T}^d)}\,\|z\|_{L^2(\nu)}.
    \end{align*}
    Moreover, for all $w \in H^1(\nu; \mathbb{R}^d)$
    \begin{align*}
        \|\Pi_{\mu_t}(w)-\Pi_{\mu_t'}(w)\|_{H^1(\nu)}
    &\le
    C_{\Pi,1}\,\|\rho_t-\rho_t'\|_{W^{1,\infty}(\mathbb{T}^d)}\,\|w\|_{H^1(\nu)}.
    \end{align*}
    \item \textbf{Uniform $W^{1,\infty}$ control of geodesic velocities.} There exists $M<\infty$ such that
    \[
    \sup_{t\in[0,1]}\Big(
    \|\nabla\phi_t\|_{W^{1,\infty}(\mathbb{T}^d)}
    +
    \|\nabla\phi_t'\|_{W^{1,\infty}(\mathbb{T}^d)}
    \Big)\le M.
    \]
    \item \textbf{Time regularity of the weighted Helmholtz projection.}
    For every absolutely continuous curve $z \in \text{AC}([0,1]; L^2(\mathbb{T}^d; \mathbb{R}^d))$, the curve $t \mapsto \Pi_{\mu_t}z_t$ also belongs to $\text{AC}([0,1]; L^2(\nu; \mathbb{R}^d)$. Moreover, for a.e. $t \in (0,1)$ we define
    \[(\partial_t\Pi_{\mu_t})(z_t) \triangleq \partial_t(\Pi_{\mu_t}z_t) - \Pi_{\mu_t}(\partial_tz_t)\]
    and this quantity satisfies the bound 
    \[
    \big\|(\partial_t\Pi_{\mu_t} )(z_t)\big\|_{L^2(\nu)}
    \le C_{\dot\Pi}\|z_t\|_{L^2(\nu)}.
    \]
\end{enumerate}
Then there exists a constant $C_{\rm WPT}>0$, depending only on the constants above, such that for every $v\in T_\nu\mathcal P_2(\mathbb{T}^d)\cap H^1(\nu;\mathbb R^d)$,
\[
\|\PT_{\nu\to\mu}(v)-\PT_{\nu\to\mu'}(v)\|_{L^2(\nu)}
\le
C_{\rm WPT}\,\|v\|_{H^1(\nu)}\,W_2(\mu,\mu').
\]
By norm compatibility, the same bound holds in $L^2(\mu)$ up to a multiplicative constant depending only on $\lambda,\Lambda$.
\end{restatable}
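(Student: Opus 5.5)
We compare the two parallel fields by pulling them back to the fixed reference space $L^2(\nu)$. Here $w_t = \PT_{\nu\to\mu_t}(v)$ and $w_t'=\PT_{\nu\to\mu_t'}(v)$ are regarded as fields composed with the Lagrangian flows $F_t, F_t'$ of the two geodesics, or equivalently as elements of $L^2(\nu)$ after norm-equivalence via the density bounds of hypothesis 1. Both start from the same initial datum, $w_0=w_0'=v$. We then run a Grönwall argument on the difference $e_t \triangleq w_t - w_t'$.

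\textbf{Evolution equations.} By \Cref{Wasserstein parallel transport pde} and \Cref{Wasserstein covariant deriv} with $\man=\mathbb{T}^d$ flat, parallelism means $\Pi_{\mu_t}(\partial_t w_t + \nabla w_t\cdot\nabla\phi_t)=0$ with $w_t = \Pi_{\mu_t} w_t$. Differentiating the identity $w_t=\Pi_{\mu_t}w_t$ and using hypothesis 7 gives
\[\partial_t w_t = (\partial_t\Pi_{\mu_t})(w_t) - \Pi_{\mu_t}(\nabla w_t\cdot\nabla\phi_t),\]
and analogously for $w_t'$. Subtracting the two equations, we split $\partial_t e_t$ into four terms:
\begin{enumerate}[label=(\roman*)]
\item $(\partial_t\Pi_{\mu_t})(w_t)-(\partial_t\Pi_{\mu_t'})(w_t')$;
\item $\Pi_{\mu_t}(\nabla e_t\cdot\nabla\phi_t)$;
\item $\Pi_{\mu_t}(\nabla w_t'\cdot(\nabla\phi_t-\nabla\phi_t'))$;
\item $(\Pi_{\mu_t}-\Pi_{\mu_t'})(\nabla w_t'\cdot\nabla\phi_t')$.
\end{enumerate}

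\textbf{Bounding each term.} Term (iii) is controlled by hypotheses 3 and 2: its $L^2$ norm is at most $C_{\rm vel}W_2(\mu,\mu')\,C_{\rm par}\|v\|_{H^1}$, since the projection is a contraction up to $\lambda,\Lambda$ constants. Term (iv) is controlled by hypotheses 5, 4, 6 and 2: it is at most $C_{\Pi,0}C_L W_2\, M C_{\rm par}\|v\|_{H^1}$. Term (ii) is the dangerous one, because it involves $\nabla e_t$ and is not bounded by $\|e_t\|_{L^2}$. We handle it with an energy estimate rather than a direct bound. We compute $\frac{d}{dt}\tfrac12\|e_t\|^2_{L^2(\mu_t)}$, using the continuity equation for $\rho_t$ (so $\partial_t\rho_t$ contributes a transport term). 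Integrating by parts, $\int\rho_t\langle e_t,\nabla e_t\cdot\nabla\phi_t\rangle = -\tfrac12\int |e_t|^2\operatorname{div}(\rho_t\nabla\phi_t)$, which cancels against the $\partial_t\rho_t$ term up to lower-order terms bounded by $M$ and $K_{\log}$. The projection in front of term (ii) is removed because $e_t$ mixes the two tangent spaces; we write $e_t = \Pi_{\mu_t}e_t + (\Pi_{\mu_t'}-\Pi_{\mu_t})w_t'$. The second piece is $O(W_2\|v\|_{H^1})$ by hypothesis 5, and its gradient is $O(W_2\|v\|_{H^1})$ by the $H^1$ part of hypothesis 5 combined with the $W^{1,\infty}$ bound of hypothesis 4. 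Term (i) splits as $(\partial_t\Pi_{\mu_t})(e_t)+[(\partial_t\Pi_{\mu_t})-(\partial_t\Pi_{\mu_t'})](w_t')$. The first summand is bounded by $C_{\dot\Pi}\|e_t\|$. The second requires Lipschitz dependence of $\partial_t\Pi$ on the density; I would try to derive this by differentiating the Lax–Milgram characterization of \Cref{projection realization} in $t$ and in the density, using the $W^{1,\infty}$ stability of hypothesis 4 and the bound $K_\rho$.

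\textbf{Conclusion and main obstacle.} Collecting the estimates gives a differential inequality
\[\frac{d}{dt}\|e_t\|^2 \le C\|e_t\|^2 + C\,W_2(\mu,\mu')\|v\|_{H^1}\|e_t\|.\]
Grönwall's inequality with $e_0=0$ then yields $\|e_1\|\le C_{\rm WPT}\|v\|_{H^1}W_2(\mu,\mu')$, and the $L^2(\nu)$ versus $L^2(\mu)$ statement follows from the density bounds. The main obstacle is term (ii) together with the second summand of term (i): the loss of a derivative in the transport term, and the fact that $\partial_t\Pi$ for two different curves is not directly compared by the listed hypotheses. If the energy cancellation fails, the fallback is to carry $\|e_t\|_{H^1}$ in the Grönwall argument, using the uniform $H^1$ bound of hypothesis 2 so that the derivative loss is absorbed by $2C_{\rm par}\|v\|_{H^1}$ times a factor of $W_2$ obtained by interpolation.
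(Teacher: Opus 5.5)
Your splitting $e_t=\Pi_{\mu_t}e_t+(I-\Pi_{\mu_t})e_t$ and your bounds on terms (iii) and (iv) match what is needed. (Note that $(I-\Pi_{\mu_t})e_t=(\Pi_{\mu_t}-\Pi_{\mu_t'})w_t'$, the opposite sign from what you wrote, which is harmless.) However, the argument has a genuine gap at the second summand of term (i), $[(\partial_t\Pi_{\mu_t})-(\partial_t\Pi_{\mu_t'})](w_t')$. Hypothesis 7 bounds $\partial_t\Pi$ along each geodesic separately, and nothing in hypotheses 1--7 compares these derivatives across the two curves. Deriving such a comparison from the Lax--Milgram problem would need Lipschitz-in-$W_2(\mu,\mu')$ control of $\partial_t(\rho_t-\rho_t')$. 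Through the continuity equation, that control involves $\nabla^2(\phi_t-\phi_t')$, whereas hypotheses 3--4 only control $\nabla\phi_t-\nabla\phi_t'$ in $L^\infty$ and $\rho_t-\rho_t'$ in space.

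Your fallback does not help either. Interpolating between $L^2$ and $H^1$ gives only a fractional power of $W_2(\mu,\mu')$, and an $H^1$ Gr\"onwall for $e_t$ again needs second derivatives of $\phi_t-\phi_t'$.

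There is a second problem with running the energy estimate on $\|e_t\|^2_{L^2(\mu_t)}$ itself. Since $e_t$ is not tangent at $\mu_t$, the projected transport term does not cancel against the $\partial_t\rho_t$ contribution. What remains is exactly $\langle (I-\Pi_{\mu_t})e_t,\nabla e_t\nabla\phi_t\rangle_{L^2(\mu_t)}$, not a lower-order term controlled by $M$ and $K_{\log}$. Bounded directly via hypothesis 2, it is $O(W_2(\mu,\mu')\|v\|_{H^1(\nu)}^2)$ rather than $O(W_2(\mu,\mu')\|v\|_{H^1(\nu)}\|e_t\|)$. Gr\"onwall then only gives $\|e_1\|\lesssim W_2(\mu,\mu')^{1/2}\|v\|_{H^1(\nu)}$, unless you integrate by parts to move the derivative onto $r_t$.

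The missing idea, which is what the paper does, is to run the energy estimate on the tangential part $\eta_t=\Pi_{\mu_t}e_t$ and only ever compute the projected derivative $\Pi_{\mu_t}(\partial_t e_t)$.

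\begin{itemize}
\item Write $\Pi_{\mu_t}(\partial_t w_t')=\Pi_{\mu_t'}(\partial_t w_t')+(\Pi_{\mu_t}-\Pi_{\mu_t'})(\partial_t w_t')$ and use parallelism along $\mu_t'$. The discrepancy between the curves then enters only as $(\Pi_{\mu_t}-\Pi_{\mu_t'})(\partial_t w_t'+\nabla w_t'\nabla\phi_t')$.
\item The $L^2$ part of hypothesis 5 bounds this by $C_{\Pi,0}C_L W_2(\mu,\mu')\,\|\partial_t w_t'+\nabla w_t'\nabla\phi_t'\|_{L^2(\nu)}$. The last factor is $\lesssim (C_{\dot\Pi}+M)\|w_t'\|_{H^1(\nu)}$, using hypothesis 7 on the single curve $\mu_t'$ only.
\item The only other derivative of a projection is $(\partial_t\Pi_{\mu_t})(e_t)$, coming from $\partial_t\eta_t$ and bounded by $C_{\dot\Pi}\|e_t\|$.
\item Because $\eta_t$ is tangent at $\mu_t$, the identity $\tfrac12\tfrac{d}{dt}\|\eta_t\|^2_{L^2(\mu_t)}=\langle\eta_t,\Pi_{\mu_t}(\partial_t\eta_t+\nabla\eta_t\nabla\phi_t)\rangle_{L^2(\mu_t)}$ is exact.
\item The non-tangential part enters only through $\Pi_{\mu_t}(\nabla r_t\nabla\phi_t)$. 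This is bounded by $M\|r_t\|_{H^1}\lesssim W_2(\mu,\mu')\|v\|_{H^1(\nu)}$ via the $H^1$ part of hypothesis 5.
\end{itemize}

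Gr\"onwall for $\|\eta_t\|$ with $\eta_0=0$, together with $\|e_t\|\le\|\eta_t\|+\|r_t\|$, then gives the linear dependence on $W_2(\mu,\mu')$.
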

\noindent \textit{Proof.} Note that the uniform density bounds ensure that the normed spaces induced by each measure and any Wasserstein interpolants thereof are uniformly norm-equivalent. We will leverage this property extensively in this proof. Define the difference field $e_t = w_t - w_t'$, which is well defined pointwise on $\mathbb{T}^d$ despite the fact that $w_t$ and $w_t'$ live in different Wasserstein tangent spaces. By \Cref{Wasserstein parallel transport pde}, 
\begin{align*}
\Pi_{\mu_t}(\partial_t e_t) &= \Pi_{\mu_t}(\partial_t w_t) - \Pi_{\mu_t}(\partial_t w_t') \\
    &= -\Pi_{\mu_t}(\nabla w_t \nabla \phi_t) - \Pi_{\mu_t}(\partial_t w_t') \\
    &= -\Pi_{\mu_t}(\nabla w_t \nabla \phi_t) + \Pi_{\mu_t'}(\nabla w_t' \nabla \phi_t') + (\Pi_{\mu_t'} - \Pi_{\mu_t})(\partial_t w_t').
\end{align*}
Now adding and subtracting $\Pi_{\mu_t}(\nabla w_t'\nabla\phi_t)$ gives
\begin{equation}
\Pi_{\mu_t}(\partial_t e_t) = -\Pi_{\mu_t}(\nabla e_t \nabla \phi_t) - \Pi_{\mu_t}(\nabla w_t' \nabla(\phi_t - \phi_t'))
- (\Pi_{\mu_t} - \Pi_{\mu_t'})(\partial_tw_t' + \nabla w_t'\nabla \phi_t'). \label{eq: projected derivative}
\end{equation}
We will return to this expression shortly. First, define $\eta_t = \Pi_{\mu_t}e_t = w_t - \Pi_{\mu_t}(w_t')$ and $r_t = (I - \Pi_{\mu_t})(e_t)$, and observe that $e_t = \eta_t + r_t$. We will first control $r_t$,
\begin{align*}
    r_t &= (I - \Pi_{\mu_t})(w_t - w_t') \\
        &= (w_t - w_t') - \Pi_{\mu_t}(w_t - w_t') \\
        &= \Pi_{\mu_t}w_t - \Pi_{\mu_t'}w_t' - \Pi_{\mu_t}w_t + \Pi_{\mu_t}w_t' \\
        &= (\Pi_{\mu_t} - \Pi_{\mu_t'})(w_t')
\end{align*}
since $w_t = \Pi_{\mu_t}w_t$ and $w_t' = \Pi_{\mu_t'}w_t'$. By uniform norm equivalence of all measures of interest and assumptions 2, 4 and 5, \begin{align*}\|r_t\|_{L^2(\mu_t)} &\lesssim   C_{\Pi, 0}\|\rho_t - \rho_t'\|_{L^\infty(\mathbb{T}^d)}\|w_t'\|_{L^2(\nu)}\\
&\leq C_{\Pi,0} C_L C_{\text{par}}\|v\|_{H^1(\nu)}W_2(\mu, \mu').\end{align*}
Moreover, the second part of assumption 5 implies 
\begin{equation}
    \|r_t\|_{H^1(\mu_t)} \lesssim  C_{\Pi, 1}C_W C_{\text{par}}\|v\|_{H^1(\nu)}W_2(\mu, \mu').
\end{equation}
Since $\eta_t = \Pi_{\mu_t}e_t$, assumption 7 gives
\[\partial_t\eta_t = (\partial_t\Pi_{\mu_t})(e_t) + \Pi_{\mu_t}(\partial_te_t).\]
Projecting onto the tangent space at $\mu_t$ again yields
\[\Pi_{\mu_t}(\partial_t\eta_t) = \Pi_{\mu_t}((\partial_t\Pi_{\mu_t})(e_t)) + \Pi_{\mu_t}(\partial_te_t).\]
Plugging in \Cref{eq: projected derivative} and using the fact that $e_t = \eta_t + r_t$ yields
\begin{equation}
\Pi_{\mu_t}(\partial_t\eta_t) = -\Pi_{\mu_t}(\nabla \eta_t \nabla \phi_t)  + g_t \label{F_t}
\end{equation}
where
\begin{equation}
g_t = -\Pi_{\mu_t}(\nabla r_t \nabla \phi_t)  - \Pi_{\mu_t}(\nabla w_t' \nabla(\phi_t - \phi_t')) -(\Pi_{\mu_t} - \Pi_{\mu_t'})(\partial_tw_t'  + \nabla w_t'\nabla \phi_t') + \Pi_{\mu_t}((\partial_t\Pi_{\mu_t})(e_t)).
\end{equation}
Now we will show that 
\begin{equation*}
    \frac{1}{2}\frac{d}{dt}\|\eta_t\|_{L^2(\mu_t)}^2 = \langle \eta_t, g_t\rangle_{L^2(\mu_t)}.
\end{equation*}
We have
\begin{align*}
    \|\eta_t\|_{L^2(\mu_t)}^2 = \int_{\mathbb{T}^d}\|\eta_t(x)\|^2_2d\rho_t(x).
\end{align*}
Taking the time derivative yields
\begin{align*}
    \frac{1}{2}\frac{d}{dt}\|\eta_t\|^2_{L^2(\mu_t)} = \int_{\mathbb{T}^d} \langle \eta_t(x), \frac{d}{dt}\eta_t(x)\rangle \,d\rho_t(x) + \frac{1}{2}\int_{\mathbb{T}^d}\|\eta_t\|^2_2 (\partial_t\rho_t(x)) \,dx
\end{align*}
By the continuity equation along the geodesic $(\mu_t)$, we know that $\partial_t\rho_t = -\nabla \cdot(\rho_t\nabla\phi_t)$ and thus
\begin{align*}
    \frac{1}{2}\frac{d}{dt}\|\eta_t\|^2_{L^2(\mu_t)} = \int_{\mathbb{T}^d} \langle \eta_t(x), \frac{d}{dt}\eta_t(x)\rangle \,d\rho_t(x) - \frac{1}{2}\int_{\mathbb{T}^d} \|\eta_t\|^2_2 \nabla \cdot(\rho_t(x)\nabla\phi_t(x)) \,dx.
\end{align*}
Applying integration by parts to the second term indicates that 
\[\frac{1}{2}\int_{\mathbb{T}^d}\|\eta_t\|^2_2 \nabla \cdot(\rho_t(x)\nabla\phi_t(x)) \,dx = \int_{\mathbb{T}^d}\langle\eta_t, \nabla \eta_t\nabla\phi_t(x) \rangle  \, d\rho_t(x).\]
Note that no boundary term appears in the integration by parts formula as $\mathbb{T}^d$ is boundaryless. Plugging back in, we see that 
\[ \frac{1}{2}\frac{d}{dt}\|\eta_t\|_{L^2(\mu_t)}^2 = \langle \eta_t, \partial_t\eta_t\rangle_{L^2(\mu_t)} + 
\langle \eta_t, \nabla \eta_t \nabla \phi_t \rangle_{L^2(\mu_t)} = \langle \eta_t, \partial_t\eta_t +  \nabla \eta_t \nabla \phi_t \rangle_{L^2(\mu_t)}.\]
Now let $a_t = \partial_t\eta_t + \nabla\eta_t\nabla\phi_t$, and recall from \Cref{F_t} that $\Pi_{\mu_t}(a_t) = g_t$. Since $\eta_t \in T_{\mu_t}\mathcal{P}_2(\mathbb{T}^d)$, we know that for any $b \in L^2(\mu_t; \mathbb{R}^d)$, $\langle \eta_t, b_t\rangle_{L^2(\mu_t)} = \langle \eta_t, \Pi_{\mu_t}b_t\rangle_{L^2(\mu_t)}$. Thus, we indeed see that 
\[ \frac{1}{2}\frac{d}{dt}\|\eta_t\|_{L^2(\mu_t)}^2 = \langle \eta_t, g_t\rangle_{L^2(\mu_t)}\]
Now we will bound the four contributions to $g_t$. For the first term of $g_t$, using norm equivalence, assumption 6 and the Sobolev norm bound on $r_t$ obtained earlier yields
\begin{align*}
    \|\Pi_{\mu_t}(\nabla r_t\nabla \phi_t)\|_{L^2(\mu_t)} &\lesssim  \|\nabla r_t \nabla \phi_t\|_{L^2(\nu)} \\
    &\leq \|\nabla \phi_t\|_{W^{1, \infty}(\mathbb{T}^d)}\|r_t\|_{H^1(\nu)} \\
    &\leq MC_{\Pi,1}C_W C_{\rm par}\|v\|_{H^1(\nu)}W_2(\mu,\mu').
\end{align*}
For the second term, due to assumptions 2 and 3, we have
\begin{align*}
    \|\Pi_{\mu_t}(\nabla w_t' \nabla(\phi_t - \phi_t'))\|_{L^2(\mu_t)} \lesssim  \|\nabla w_t'\|_{L^2(\nu)}\|\nabla\phi_t - \nabla\phi_t'\|_{L^{\infty}(\mathbb{T}^d)} \leq C_\text{par}C_{\text{vel}}\|v\|_{H^1(\nu)}W_2(\mu, \mu').
\end{align*}
For the third term, applying assumption 5 gives
\begin{align*}
    \left\|(\Pi_{\mu_t} - \Pi_{\mu_t'})(\partial_tw_t'  + \nabla w_t'\nabla \phi_t')\right\|_{L^2(\mu_t)} \lesssim  C_{\Pi,0}\|\rho_t - \rho_t'\|_{L^{\infty}(\mathbb{T}^d)}\|\partial_tw_t' + \nabla w_t'\nabla \phi_t'\|_{L^2(\nu)}.
\end{align*}
Now let $h_t' \triangleq \partial_tw_t' + \nabla w_t'\nabla \phi_t'$. Since $w_t' = \Pi_{\mu_t'}w_t'$, applying assumption 7 and \Cref{lem:pt_ac} yields
\[\partial_t w_t' = \Pi_{\mu_t'}(\partial_tw_t') + (\partial_t\Pi_{\mu_t'})(w_t').\]
Since $w_t'$ is assumed to be parallel along $\mu_t'$, $\Pi_{\mu_t'}(\partial_t w_t') = -\Pi_{\mu_t'}(\nabla w_t' \nabla \phi_t')$. Thus,
\[h_t' = (\partial_t\Pi_{\mu_t'})(w_t') +(\text{id} - \Pi_{\mu_t'})(\nabla w_t' \nabla \phi_t').\]
Since $\Pi_{\mu_t'}$ is an orthogonal projection
\begin{align*}
    \|h_t'\|_{L^2(\nu)} &\leq \|(\partial_t\Pi_{\mu_t'})(w_t')\|_{L^2(\nu)} + \|(\text{id} - \Pi_{\mu_t'})(\nabla w_t' \nabla \phi_t')\|_{L^2(\nu)} \\
    &\leq C_{\dot \Pi}\|w_t'\|_{L^2(\nu)} + \|\nabla w_t'\|_{L^2(\nu)}\|\nabla \phi_t'\|_{W^{1, \infty}(\mathbb{T}^d)} \\
    &\leq C_{\dot \Pi}\|w_t'\|_{H^1(\nu)} + M\|w_t'\|_{H^1(\nu)} \\
    &\lesssim \|v\|_{H^1(\nu)}.
\end{align*}
Combining this with assumption 4 yields
\begin{align*}
    \left\|(\Pi_{\mu_t} - \Pi_{\mu_t'})(\partial_tw_t'  + \nabla w_t'\nabla \phi_t')\right\|_{L^2(\mu_t)} \lesssim  \|v\|_{H^1(\nu)}W_2(\mu, \mu').
\end{align*}
For the final term, assumption 7, \Cref{lem:pt_ac} and the bound on the $L^2(\nu)$ norm of $r_t$ implies
\begin{align*}
    \|\Pi_{\mu_t}((\partial_t\Pi_{\mu_t})e_t)\|_{L^2(\mu_t)} \lesssim  C_{\dot \Pi}\|\eta_t + r_t\|_{L^2(\nu)} \lesssim  C_{\dot \Pi}(\|\eta_t\|_{L^2(\mu_t)} + C_{\Pi, 0} C_L C_{\text{par}}\|v\|_{H^1(\nu)}W_2(\mu, \mu')).
\end{align*}
Combining the bounds for all terms and applying Cauchy-Schwartz yields
\begin{equation*}
    \frac{1}{2}\frac{d}{dt}\|\eta_t\|^2_{L^2(\mu_t)} \leq C_1\|\eta_t\|_{L^2(\mu_t)}^2 + C_2 \|v\|_{H^1(\nu)}\|\eta_t\|_{L^2(\mu_t)}  W_2(\mu, \mu')
\end{equation*}
for some $C_1$ and $C_2$ depending only on theorem constants. For all $t$ such that $\eta_t \neq 0$ we can cancel $\|\eta_t\|_{L^2(\mu_t)}$ and say
\[\frac{d}{dt}\|\eta_t\|_{L^2(\mu_t)} \leq C_1\|\eta_t\|_{L^2(\mu_t)} + C_2\|v\|_{H^1(\nu)}W_2(\mu, \mu').\]
Since $\eta_0 \equiv 0$ we can apply Gr\"onwall's inequality to say $\|\eta_t\|_{L^2(\mu_t)} \leq C_3\|v\|_{H^1(\nu)}W_2(\mu, \mu')$ for some $C_3$ dependent only on theorem constants. Finally, we have
\begin{align*}
    \|e_t\|_{L^2(\mu_t)} \leq \|\eta_t\|_{L^2(\mu_t)} + \|r_t\|_{L^2(\mu_t)} \leq C_{\text{WPT}}\|v\|_{H^1(\nu)}W_2(\mu, \mu').
\end{align*}
Since $e_t = w_t - w_t' = {\PT}_{\nu \rightarrow \mu_t}(v) - {\PT}_{\nu \rightarrow \mu_t'}(v)$, and since the spaces under $\mu_t, \mu, \nu$ are all norm-equivalent, the proof is complete. \qed{}

\begin{restatable}{lemma}{}
    Under \Cref{all necessary W assumptions}, the hypotheses of \Cref{wpt stability} are satisfied.
\end{restatable}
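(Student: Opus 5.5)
I would verify the seven hypotheses of \Cref{wpt stability} one at a time, taking the triple $(\nu,\mu,\mu')\in\mathcal C$ and their geodesic interpolants $\rho_t,\rho_t'$, $\nabla\phi_t,\nabla\phi_t'$. Several follow by direct reading of \Cref{all necessary W assumptions}.
\begin{itemize}
\item Hypothesis 1 (uniform density bounds) is \ref{density regularity conditions}(a).
\item Hypothesis 3 is \ref{geodesic lipschitz conditions}(b).
\item Hypothesis 4 is \ref{geodesic lipschitz conditions}(a).
\item Hypothesis 6 follows from \ref{geodesic regularity conditions} applied to both pairs $(\nu,\mu)$ and $(\nu,\mu')$, replacing $M$ by $2M$.
\end{itemize}
Throughout I would use the uniform equivalence of $L^2(\mu_t)$, $L^2(\nu)$ and $L^2(\mathbb T^d)$ norms furnished by $\lambda\le\rho_t\le\Lambda$. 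I would also use the realization of $\Pi_{\mu_t}$ from \Cref{projection realization}: $\Pi_{\mu_t}z=\nabla\psi$, where $\nabla\cdot(\rho_t\nabla\psi)=\nabla\cdot(\rho_t z)$ weakly on $\mathbb T^d$.

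For hypothesis 5, set $\nabla\psi=\Pi_{\mu_t}z$ and $\nabla\psi'=\Pi_{\mu_t'}z$. Subtracting the two weak formulations gives
\[
\int\rho_t\langle\nabla(\psi-\psi'),\nabla\xi\rangle=\int(\rho_t-\rho_t')\langle z-\nabla\psi',\nabla\xi\rangle .
\]
Testing with $\xi=\psi-\psi'$ and using coercivity from \Cref{projection realization} yields $\|\nabla(\psi-\psi')\|_{L^2}\le \lambda^{-1}\|\rho_t-\rho_t'\|_{L^\infty}\,2\|z\|_{L^2}$. This gives $C_{\Pi,0}$.

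The $H^1$ version requires second derivatives of $\psi-\psi'$. I would rewrite the equation in nondivergence form as
\[
\Delta\psi+\nabla\log\rho_t\cdot\nabla\psi=\nabla\cdot z+\nabla\log\rho_t\cdot z .
\]
Periodic $H^2$ elliptic regularity on the flat torus, which is just Fourier multipliers since $\|D^2 f\|_{L^2}=\|\Delta f\|_{L^2}$, together with \ref{density regularity conditions}(d) then gives $\|\Pi_{\mu_t} w\|_{H^1}\lesssim\|w\|_{H^1}$. Differencing the two equations produces a source term controlled by $\|\rho_t-\rho_t'\|_{W^{1,\infty}}\|w\|_{H^1}$, using $\lambda$ to bound $\nabla\log\rho_t-\nabla\log\rho_t'$. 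This gives $C_{\Pi,1}$.

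For hypothesis 7, I would differentiate the weak identity in $t$. For $\psi_t$ solving $\int\rho_t\langle\nabla\psi_t-z_t,\nabla\xi\rangle=0$, formally
\[
\int\rho_t\langle\partial_t\nabla\psi_t-\partial_t z_t,\nabla\xi\rangle=-\int\partial_t\rho_t\langle\nabla\psi_t-z_t,\nabla\xi\rangle .
\]
Hence $(\partial_t\Pi_{\mu_t})(z_t)$ is the gradient solving a weighted elliptic problem with right-hand side bounded by $K_\rho\|z_t\|_{L^2}$, using \ref{density regularity conditions}(c). The Lax–Milgram estimate then gives $C_{\dot\Pi}\lesssim K_\rho/\lambda$. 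To make the differentiation rigorous, I would show that $t\mapsto\Pi_{\mu_t}$ is Lipschitz in operator norm, by the $C_{\Pi,0}$-type estimate with $\|\rho_t-\rho_s\|_{L^\infty}\le K_\rho|t-s|$. Absolute continuity of $\Pi_{\mu_t}z_t$ then follows, and the derivative is identified via difference quotients.

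Hypothesis 2, $H^1$ propagation of parallel fields, is the main obstacle. I would run an energy estimate for $\nabla w_t$ alongside the one for $\|w_t\|_{L^2(\mu_t)}$, which is conserved by metric compatibility. The first thing I would try is to write $w_t=\nabla\psi_t$ and use the parallel transport equation from \Cref{Wasserstein parallel transport pde} in the form
\[
\partial_t w_t=-\Pi_{\mu_t}(\nabla w_t\nabla\phi_t)+(\partial_t\Pi_{\mu_t})(w_t),
\]
then differentiate $\|\nabla w_t\|_{L^2}^2$. The transport term $\nabla w_t\nabla\phi_t$ loses one derivative, but it is of commutator type: after integration by parts, the top-order part yields $\int\langle\nabla w,\nabla(\nabla w\,\nabla\phi)\rangle$, which is controlled by $\|\nabla\phi_t\|_{W^{1,\infty}}\|\nabla w\|^2$ up to the projection. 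The $H^1$-boundedness of $\Pi_{\mu_t}$ established above, together with the bounds $M$, $K$ and $K_{\log}$, should close a Grönwall inequality $\frac{d}{dt}\|w_t\|_{H^1}^2\le C\|w_t\|_{H^1}^2$. This gives $C_{\rm par}=e^{C}$.

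The delicate point is that commuting $\nabla$ with $\Pi_{\mu_t}$ requires second derivatives of $\phi_t$ and $\rho_t$. If $W^{1,\infty}$ control of $\nabla\phi_t$ does not suffice, I would fall back on approximating by smooth data and passing to the limit using the a priori bound.
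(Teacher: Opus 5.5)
Your treatment of hypotheses 1, 3, 4 and 6 is correct. Your arguments for 5 and 7 are sound and essentially reproduce the paper's supporting lemmas. For $C_{\Pi,1}$, your nondivergence form $\Delta\psi=\nabla\cdot z+\nabla\log\rho_t\cdot(z-\nabla\psi)$ combined with $\|D^2\psi\|_{L^2}=\|\Delta\psi\|_{L^2}$ is, if anything, more direct than the paper's route of differentiating the equation and testing with $\partial_k\delta$.

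The gap is hypothesis 2, which is where the real content lies (the paper's \Cref{prop:h1}). The mechanism you propose does not close. Bounding $\langle\nabla w,\nabla\Pi_{\mu_t}(\nabla w\,\nabla\phi_t)\rangle$ via the $H^1$-boundedness of $\Pi_{\mu_t}$ produces $\|\nabla w\|_{L^2}\|\nabla w\,\nabla\phi_t\|_{H^1}$. That contains $\|D^2 w\|_{L^2}$, which is precisely the derivative loss the transport structure is meant to avoid. The cancellation in $\int\langle\nabla w,\nabla(\nabla w\,\nabla\phi_t)\rangle$ helps only if the projection is removed \emph{exactly} at top order, and you give no argument for that. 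Your fallback does not repair it either. If the a priori constant involves second derivatives of $\rho_t$ or $\phi_t$, smoothing the data gives no bound in terms of $\lambda,\Lambda,M,K_{\log}$. Passing to the limit would also require regular parallel fields for the smoothed data and a stability result for $\PT$ under perturbation of the geodesic, and neither is available.

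The missing idea is never to commute $\nabla$ past $\Pi_{\mu_t}$. Since $w=\nabla\psi$ is a gradient, so is $-\Delta w=\nabla(-\Delta\psi)$. It is therefore an admissible test field in the weak form $\int\rho_t\langle\partial_t w+\nabla w\,\nabla\phi_t,\nabla\xi\rangle\,dx=0$ of the parallel transport equation, and the projection simply drops out. After integrating by parts, the top-order transport term is $\tfrac12\int\rho_t\nabla\phi_t\cdot\nabla\|\nabla w\|_F^2\,dx=\tfrac12\int\partial_t\rho_t\|\nabla w\|_F^2\,dx$ by the continuity equation. This cancels the $\partial_t\rho_t$ term produced by differentiating $E(t)=\int\rho_t\|\nabla w\|_F^2\,dx$. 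What remains gives $\tfrac12E'\le M(1+2K_{\log})E$, using only $\nabla\log\rho_t\in L^\infty$ and $\nabla\phi_t\in W^{1,\infty}$ and no second derivatives of $\rho_t$.

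Your unweighted energy also works once you use two facts. First, $\langle\nabla w,\nabla b\rangle_{L^2}=\langle\nabla\cdot w,\nabla\cdot b\rangle_{L^2}$ for gradient $w$. Second, the weak Helmholtz equation gives $\nabla\cdot(\Pi_{\mu_t}a-a)=-\nabla\log\rho_t\cdot(\Pi_{\mu_t}a-a)$. Together these show the projection contributes only a zeroth-order term.

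Either test, however, needs $w_t\in H^2$, which is not known for the true parallel field. The paper makes it legitimate with a Galerkin scheme in $V_m=\operatorname{span}\{\nabla e_1,\dots,\nabla e_m\}$, built from Laplacian eigenfunctions so that $V_m$ is invariant under $\Delta$. It proves the bound uniformly in $m$, passes to the limit by compactness, and identifies the limit with $\PT_{\nu\to\mu_t}(v)$ by uniqueness.

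Incidentally, the term $(\partial_t\Pi_{\mu_t})(w_t)$ in your evolution equation, whose $H^1$ norm (A.1)(c) would not control, actually vanishes. Under the density bounds every $T_{\mu_t}\mathcal P_2(\mathbb T^d)$ is the same set $\{\nabla\psi:\psi\in H^1(\mathbb T^d)\}$, and $\partial_t w_t$ belongs to it.
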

\noindent \textit{Proof.} We will go through each assumption and show that it follows from an assumption in \Cref{all necessary W assumptions}. Assumption 1 is stated exactly in \ref{density regularity conditions}. Assumption 2 follows from \Cref{prop:h1} coupled with \ref{density regularity conditions} and \ref{geodesic regularity conditions}. Assumptions 3 and 4 are stated directly in \ref{geodesic lipschitz conditions}. Assumption 5 is guaranteed by \Cref{L2 lipschitz stability} and \Cref{sobolev lipschitz stability}. Assumption 6 is stated directly in \ref{geodesic regularity conditions}, and assumption 7
is guaranteed by \ref{density regularity conditions} and \Cref{lem:time_reg_weighted_projection}.
\qed{}

\subsection{Supplementary Results} \label{sec: stability thm supplemental results}

\begin{restatable}[Lipschitz stability of weighted Helmholtz projection in $L^2$]{lemma}{} \label{L2 lipschitz stability}
    Let
    $\rho,\rho'$ satisfy
    \[
    0<\lambda\le \rho(x),\rho'(x)\le \Lambda<\infty
    \qquad\text{a.e. on }\mathbb{T}^d.
    \]
    Let $\Pi_\rho,\Pi_{\rho'}$ denote the weighted Helmholtz projections onto the
    $L^2(\rho)$- and $L^2(\rho')$-closures of gradient fields, respectively.
    Then there exists a constant $C_{\Pi, 0}=C(\lambda,\Lambda)$ such that for every
    $z\in L^2(\mathbb{T}^d;\mathbb R^d)$,
    \[
    \|\Pi_\rho z-\Pi_{\rho'} z\|_{L^2(\mathbb{T}^d)}
    \le
    C_{\Pi, 0}\,\|\rho-\rho'\|_{L^\infty(\mathbb{T}^d)}\,\|z\|_{L^2(\mathbb{T}^d)}.
    \]
    If, in addition, the reference measure $\nu=\rho_\nu dx$ satisfies
    $0<\lambda_\nu\le \rho_\nu\le \Lambda_\nu<\infty$, then equivalently
    \[
    \|\Pi_\rho z-\Pi_{\rho'} z\|_{L^2(\nu)}
    \le
    C_{\Pi, 0}\,\|\rho-\rho'\|_{L^\infty(\mathbb{T}^d)}\,\|z\|_{L^2(\nu)}.
    \]
\end{restatable}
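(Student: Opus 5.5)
Our plan is to use the variational characterization from \Cref{projection realization} and compare the two weak formulations directly. Let $\nabla\psi = \Pi_\rho z$ and $\nabla\psi' = \Pi_{\rho'} z$, with $\psi,\psi' \in H^1_\diamond(\mathbb{T}^d)$ mean-zero solutions of
\[
\int_{\mathbb{T}^d}\rho\,\langle\nabla\psi,\nabla\xi\rangle\,dx = \int_{\mathbb{T}^d}\rho\,\langle z,\nabla\xi\rangle\,dx,
\qquad
\int_{\mathbb{T}^d}\rho'\,\langle\nabla\psi',\nabla\xi\rangle\,dx = \int_{\mathbb{T}^d}\rho'\,\langle z,\nabla\xi\rangle\,dx,
\]
valid for all $\xi\in H^1(\mathbb{T}^d)$.

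\textbf{Error identity.} Subtract the second identity from the first, after rewriting the first with $\rho = \rho' + (\rho-\rho')$. This gives, for every test function $\xi$,
\[
\int_{\mathbb{T}^d}\rho'\,\langle\nabla(\psi-\psi'),\nabla\xi\rangle\,dx
=
\int_{\mathbb{T}^d}(\rho-\rho')\,\langle z-\nabla\psi,\nabla\xi\rangle\,dx .
\]

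\textbf{Coercivity step.} Test with $\xi = \psi-\psi'$. The left-hand side is bounded below by $\lambda\|\nabla(\psi-\psi')\|_{L^2}^2$. By Cauchy--Schwarz, the right-hand side is at most
\[
\|\rho-\rho'\|_{L^\infty}\,\|z-\nabla\psi\|_{L^2}\,\|\nabla(\psi-\psi')\|_{L^2}.
\]
Dividing through (the case $\nabla(\psi-\psi')=0$ is trivial) yields
\[
\|\Pi_\rho z - \Pi_{\rho'}z\|_{L^2} \le \lambda^{-1}\|\rho-\rho'\|_{L^\infty}\,\|z-\Pi_\rho z\|_{L^2}.
\]

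\textbf{Remaining bound.} It remains to bound $\|z-\Pi_\rho z\|_{L^2(\mathbb{T}^d)}$ by a multiple of $\|z\|_{L^2(\mathbb{T}^d)}$. Since $I-\Pi_\rho$ is an orthogonal projection in $L^2(\rho)$, we have $\|z-\Pi_\rho z\|_{L^2(\rho)}\le\|z\|_{L^2(\rho)}$. Converting between $L^2(\rho)$ and Lebesgue $L^2$ costs a factor $\sqrt{\Lambda/\lambda}$. Hence
\[
C_{\Pi,0} = \lambda^{-1}\sqrt{\Lambda/\lambda}
\]
works, and it depends only on $\lambda$ and $\Lambda$.

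\textbf{Transfer to $L^2(\nu)$.} For the $L^2(\nu)$ version, use $\lambda_\nu\le\rho_\nu\le\Lambda_\nu$. This gives
\[
\|\cdot\|_{L^2(\nu)}\le\sqrt{\Lambda_\nu}\,\|\cdot\|_{L^2(\mathbb{T}^d)}
\quad\text{and}\quad
\|z\|_{L^2(\mathbb{T}^d)}\le\lambda_\nu^{-1/2}\|z\|_{L^2(\nu)}.
\]
The constant is therefore multiplied by $\sqrt{\Lambda_\nu/\lambda_\nu}$, which is what "equivalently" means here up to absorbing constants.

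\textbf{Main subtlety.} The only delicate point is justifying $\xi=\psi-\psi'$ as a test function. This is allowed because \Cref{projection realization} gives the identity for all $\xi\in H^1(\mathbb{T}^d)$, and $\psi,\psi'\in H^1$. Everything else reduces to the density bounds, so we do not expect any real obstacle.
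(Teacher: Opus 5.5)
Your proof is correct and follows essentially the same route as the paper: subtract the two weak formulations, test with $\psi-\psi'$, use the lower density bound for coercivity and Cauchy--Schwarz, then bound the residual field by $\|z\|_{L^2(\mathbb{T}^d)}$. The differences are cosmetic. You write the mirror-image identity, with $\rho'$ on the left and $z-\Pi_\rho z$ on the right, where the paper uses $\rho$ on the left and $z-\Pi_{\rho'}z$ on the right. You also bound the residual through the complementary orthogonal projection $I-\Pi_\rho$ instead of the triangle inequality, which gives the slightly sharper constant $\lambda^{-1}\sqrt{\Lambda/\lambda}$ in place of the paper's $\lambda^{-1}(1+\sqrt{\Lambda/\lambda})$.
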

\noindent \textit{Proof.} Let $u \triangleq \Pi_\rho z$ and let $u' \triangleq \Pi_{\rho'}z$ and write $u = \nabla \psi$, $u' = \nabla \psi'$, and $w = u - u'$ (which can be done due to \Cref{projection realization}). Recall that the norm equivalence of the $L^2(\rho)$ and $L^2(\rho')$ spaces ensure that the image spaces of the projections necessarily coincide. The first order conditions of the two Helmholtz projections are
\begin{align*}
    \int_{\mathbb{T}^d} \langle u, \nabla \xi\rangle\, d\rho &= \int_{\mathbb{T}^d} \langle z, \nabla\xi \rangle \, d\rho  \\
    \int_{\mathbb{T}^d} \langle u', \nabla \xi\rangle\, d\rho' &= \int_{\mathbb{T}^d} \langle z, \nabla\xi \rangle \, d\rho'
\end{align*}
for all $\xi \in H^1(\mathbb{T}^d).$ Subtracting the first order conditions implies that for all $\xi \in H^1(\mathbb{T}^d)$,
\begin{align*}
    \int_{\mathbb{T}^d} \langle u, \nabla \xi\rangle\, d\rho =   \int_{\mathbb{T}^d} \langle u', \nabla \xi\rangle\, d\rho'+\int_{\mathbb{T}^d} \langle z, \nabla\xi \rangle \, d\rho - \int_{\mathbb{T}^d} \langle z, \nabla\xi \rangle \, d\rho'.
\end{align*}
Subtracting $\langle u', \nabla \xi \rangle_{L^2(\rho)}$ from both sides yields
\begin{align*}
    \int_{\mathbb{T}^d} \langle w, \nabla \xi\rangle\, d\rho &= \int_{\mathbb{T}^d} \langle u', \nabla \xi\rangle\, d\rho'+\int_{\mathbb{T}^d} \langle z, \nabla\xi \rangle \, d\rho - \int_{\mathbb{T}^d} \langle z, \nabla\xi \rangle \, d\rho' - \int_{\mathbb{T}^d} \langle u', \nabla \xi\rangle\, d\rho \\
    &=\int_{\mathbb{T}^d} (z - u')(\rho - \rho') \cdot \nabla \xi.
\end{align*}
Now we'll test with $\xi = \psi - \psi' \triangleq \delta$. Since $\nabla \delta = u - u' = w$,
\begin{align*}
    \int_{\mathbb{T}^d}  \|w\|_2^2 \,d\rho = \int_{\mathbb{T}^d}(\rho - \rho')(z-u') \cdot w \, dx.
\end{align*}
Under the assumption that $\rho(x) \geq \lambda$, through application of H\"older's inequality we obtain
\begin{align*}
    \lambda \|w\|_{L^2(\mathbb{T}^d)}^2 \leq \int_{\mathbb{T}^d}(\rho - \rho')(z-u') \cdot w \, dx \leq \|\rho - \rho'\|_{L^\infty(\mathbb{T}^d)}\|z-u'\|_{L^2(\mathbb{T}^d)}\|w\|_{L^2(\mathbb{T}^d)}
\end{align*}
which implies the bound 
\[\|w\|_{L^2(\mathbb{T}^d)} \leq \frac{1}{\lambda}\|\rho - \rho'\|_{L^\infty(\mathbb{T}^d)}\|z-u'\|_{L^2(\mathbb{T}^d)}.\]
Since $u' = \Pi_{\rho'}z$ is an orthogonal projection in $L^2(\rho')$, $\|u'\|_{L^2(\rho')} \leq \|z\|_{L^2(\rho')}.$ Thus
\[\|u'\|_{L^2(\mathbb{T}^d)} \leq \lambda^{-1/2}\|u'\|_{L^2(\rho')} \leq \lambda^{-1/2}\|z\|_{L^2(\rho')} = \left(\frac{\Lambda}{\lambda}\right)^{1/2}\|z\|_{L^2(\mathbb{T}^d)}.\]
Thus, we recover the stability bound
\[\|\Pi_\rho z-\Pi_{\rho'} z\|_{L^2(\mathbb{T}^d)}
    \le
    \frac{1 + \sqrt{\Lambda/\lambda}}{\lambda}\,\|\rho-\rho'\|_{L^\infty(\mathbb{T}^d)}\,\|z\|_{L^2(\mathbb{T}^d)}.\]
\qed{}

\begin{restatable}[Lipschitz stability of weighted Helmholtz projection in $H^1$]{lemma}{} \label{sobolev lipschitz stability}
Assume that
\[
\rho,\rho' \in W^{1,\infty}(\mathbb{T}^d),\qquad
0<\lambda\le \rho,\rho' \le \Lambda.
\]
Assume also that the densities are uniformly bounded in a Sobolev sense,
\[\|\rho\|_{W^{1, \infty}(\mathbb{T}^d)} + \|\rho'\|_{W^{1, \infty}(\mathbb{T}^d)} \leq K.\]
Then there exists $C_{\Pi, 1}>0$ depending only on the regularity constants such that
for every $z\in H^1(\mathbb{T}^d;\mathbb R^d)$,
\[
\|\Pi_\rho z-\Pi_{\rho'} z\|_{H^1(\mathbb{T}^d)}
\le
C_{\Pi, 1}\,\|\rho-\rho'\|_{W^{1,\infty}(\mathbb{T}^d)}\,\|z\|_{H^1(\mathbb{T}^d)}.
\]
\end{restatable}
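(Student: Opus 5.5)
The plan is to turn the weak first-order conditions from \Cref{projection realization} into a \emph{strong} elliptic identity for the Laplacian of the potentials. On the flat torus, Fourier analysis gives $\|D^2\psi\|_{L^2(\mathbb{T}^d)}=\|\Delta\psi\|_{L^2(\mathbb{T}^d)}$, so an $L^2$ bound on $\Delta$ controls the $H^1$ norm of the gradient field. The zeroth-order part of the $H^1$ norm is then supplied by \Cref{L2 lipschitz stability}.

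\textbf{Step 1 (strong form).} Write $\Pi_\rho z=\nabla\psi$ with $\int_{\mathbb{T}^d}\rho\langle\nabla\psi,\nabla\xi\rangle\,dx=\int_{\mathbb{T}^d}\rho\langle z,\nabla\xi\rangle\,dx$ for all $\xi\in H^1(\mathbb{T}^d)$. Since $\rho\in W^{1,\infty}$ and $\rho\ge\lambda$, the function $\xi=\eta/\rho$ lies in $H^1$ for every $\eta\in C^\infty(\mathbb{T}^d)$. Using $\nabla(\eta/\rho)=\nabla\eta/\rho-\eta\nabla\rho/\rho^2$, this choice of test function gives, in the distributional sense,
\[
\Delta\psi=\operatorname{div} z+\nabla\log\rho\cdot(z-\nabla\psi).
\]
For $z\in H^1$ the right-hand side is in $L^2$, with $\|\nabla\log\rho\|_{L^\infty}\le K/\lambda$. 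Hence $\psi\in H^2(\mathbb{T}^d)$ (Fourier multipliers on the torus), and the same holds for $\psi'$. As a by-product we get the a priori bound
\[
\|\nabla\psi'\|_{H^1}\lesssim(1+K/\lambda)\,\|z\|_{H^1},
\]
where the $L^2$ part uses $\|\nabla\psi'\|_{L^2}\le\sqrt{\Lambda/\lambda}\,\|z\|_{L^2}$ as in the proof of \Cref{L2 lipschitz stability}.

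\textbf{Step 2 (difference equation).} Set $\delta=\psi-\psi'$. Subtracting the two strong identities, and adding and subtracting $\nabla\log\rho\cdot\nabla\psi'$, gives
\[
\Delta\delta=(\nabla\log\rho-\nabla\log\rho')\cdot(z-\nabla\psi')-\nabla\log\rho\cdot\nabla\delta .
\]
We bound the coefficient difference by
\[
\|\nabla\log\rho-\nabla\log\rho'\|_{L^\infty}\le\frac{\|\nabla\rho-\nabla\rho'\|_{L^\infty}}{\lambda}+\frac{K\,\|\rho-\rho'\|_{L^\infty}}{\lambda^2}\lesssim\|\rho-\rho'\|_{W^{1,\infty}} .
\]
Together with Step 1 this yields
\[
\|D^2\delta\|_{L^2}=\|\Delta\delta\|_{L^2}\lesssim\|\rho-\rho'\|_{W^{1,\infty}}\,\|z\|_{L^2}+\frac{K}{\lambda}\,\|\nabla\delta\|_{L^2}.
\]
By \Cref{L2 lipschitz stability}, $\|\nabla\delta\|_{L^2}\le C_{\Pi,0}\|\rho-\rho'\|_{L^\infty}\|z\|_{L^2}$. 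Summing the $L^2$ and $D^2$ contributions gives the claim, with $C_{\Pi,1}$ depending only on $\lambda,\Lambda,K$.

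\textbf{Main obstacle.} The delicate point is Step 1: passing from the weak formulation, which has only a Lipschitz weight, to $H^2$ regularity. I expect the test-function trick $\xi=\eta/\rho$ combined with Fourier characterization of $H^2(\mathbb{T}^d)$ to handle it cleanly, since it avoids difference quotients and uses only $\rho\in W^{1,\infty}$ and $\rho\ge\lambda$. One must also check that constants (the additive normalization of $\psi$) play no role. This holds because only $\nabla\psi$ and $D^2\psi$ enter the $H^1$ norm of the projected field.
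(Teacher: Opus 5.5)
Your proof is correct, but its core estimate takes a different route from the paper's. The paper also derives $\Delta\psi=\nabla\cdot z+\rho^{-1}\langle\nabla\rho,z-\nabla\psi\rangle$, but only to obtain $\psi,\psi'\in H^2(\mathbb{T}^d)$. It then runs the estimate in divergence form: it writes $\nabla\cdot(\rho\nabla\delta)=\nabla\cdot G$ with $G=(\rho-\rho')(z-u')$, differentiates in $x_k$ and tests with $\partial_k\delta$. This gives $\|D^2\delta\|_{L^2}\lesssim\|\nabla\rho\|_{L^\infty}\|\nabla\delta\|_{L^2}+\|G\|_{H^1}$. Bounding $\|G\|_{H^1}$ then needs $z\in H^1$ and a second differentiated energy estimate showing $\|u'\|_{H^1}\lesssim\|z\|_{H^1}$.

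You instead subtract the two non-divergence identities, so $\operatorname{div}z$ cancels and only $\nabla\log\rho-\nabla\log\rho'$ multiplies an $L^2$ quantity. You then close with the exact torus identity $\|D^2\delta\|_{L^2}=\|\Delta\delta\|_{L^2}$, with no further differentiation of the equation.

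This is shorter and proves more than you state. The difference identity follows in $\mathcal{D}'(\mathbb{T}^d)$ directly from testing the two weak formulations with $\eta/\rho$ and $\eta/\rho'$. So neither the individual $H^2$ regularity nor your Step 1 $H^1$ a priori bound is ever needed, and your chain only uses $\|z-\nabla\psi'\|_{L^2}$ and \Cref{L2 lipschitz stability}. Hence you actually get
\[
\|\Pi_\rho z-\Pi_{\rho'}z\|_{H^1(\mathbb{T}^d)}\le C\,\|\rho-\rho'\|_{W^{1,\infty}(\mathbb{T}^d)}\,\|z\|_{L^2(\mathbb{T}^d)}\qquad\text{for every } z\in L^2(\mathbb{T}^d;\mathbb{R}^d),
\]
i.e.\ the difference of the two projections gains a derivative.

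What the paper's energy method buys in exchange is robustness. It does not use that the weight is scalar, which is what lets you divide by $\rho$ and land on the flat Laplacian, nor the Parseval identity relating $D^2$ and $\Delta$. It therefore adapts more readily to matrix-valued weights or general divergence-form operators, at the cost of the $H^1$ hypothesis on $z$ and the auxiliary bound on $u'$.
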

\noindent \textit{Proof.} As derived in \Cref{L2 lipschitz stability}, the first order optimality conditions of the projections give rise to the equation
\begin{align}
    \nabla \cdot (\rho w) = \nabla \cdot ((\rho - \rho')(z-u')) \label{eq: focs}
\end{align}
where $w = u - u'$, $u = \Pi_\rho z$ and $u' = \Pi_{\rho'}z$. If we identify $u = \nabla\psi$ and $u'=\nabla \psi'$ (as guaranteed by \Cref{projection realization}), then we can define $\delta \triangleq \psi - \psi'$. By the arguments in \Cref{L2 lipschitz stability}, one can show
\[\|w\|_{L^2(\mathbb{T}^d)} \leq C\|\rho - \rho'\|_{L^\infty(\mathbb{T}^d)}\|z - u'\|_{L^2(\mathbb{T}^d)}\]
and ultimately one can bound the right hand side by $C\|\rho - \rho'\|_{L^\infty(\mathbb{T}^d)}\|z\|_{L^2(\mathbb{T}^d)}.$ To obtain the desired Sobolev stability bound, we can differentiate $w$ spatially and bound the operator norm of its gradient. We can do this by differentiating the first order conditions in \cref{eq: focs} -- differentiating the left-hand side with respect to the $k$-th coordinate yields
\[ \partial_k (\nabla \cdot(\rho \nabla\delta)) = \nabla \cdot(\partial_k(\rho\nabla \delta))\]
since $w = \nabla \delta$. Applying the product rule yields
\[ \nabla \cdot(\partial_k(\rho\nabla \delta)) = \nabla \cdot(\partial_k\rho\nabla \delta + \rho \nabla (\partial_k\delta)).\]
Differentiating the right-hand side of \Cref{eq: focs} using $G = (\rho - \rho')(z - u')$ yields
\[\partial_k(\nabla\cdot ((\rho - \rho')(z - u')) = \nabla \cdot (\partial_kG).\]
Combining, and rearranging slightly results in,
\[-\nabla \cdot( \rho\nabla (\partial_k\delta)) = \nabla \cdot (\partial_k\rho\nabla \delta) - \nabla \cdot (\partial_kG)\]
and when we test with a test function $\xi \in H^1(\mathbb{T}^d)$, we integrate the above against $\xi$. Ultimately, we want to apply the test function $\xi = \partial_k \delta$, but to do so we first need to show $\partial_k \delta \in H^1(\mathbb{T}^d)$ or, equivalently, $\psi, \psi' \in H^2(\mathbb{T}^d)$. To see that $\psi, \psi'$ are in $H^2(\mathbb{T}^d)$, we can apply the product rule for divergences to rewrite the first order conditions for $\psi$ as
\[-\rho \Delta\psi - \langle \nabla \psi, \nabla \rho\rangle = - \rho \nabla \cdot z - \langle z, \nabla \rho\rangle\]
which, after rearranging, yields
\[\Delta\psi = \nabla \cdot z + \rho^{-1}\langle \nabla \rho, z - \nabla \psi\rangle.\]
Note that $\nabla \cdot z \in L^2(\mathbb{T}^d)$ because $z \in H^1(\mathbb{T}^d)$, and $\rho^{-1}\langle \nabla \rho, z - \nabla \psi\rangle \in L^2(\mathbb{T}^d)$ because $\rho \in W^{1, \infty}(\mathbb{T}^d)$, $\rho \geq \lambda > 0$ and both $z, \nabla \psi \in L^2(\mathbb{T}^d)$. Thus, we can conclude that $\Delta \psi \in L^2(\mathbb{T}^d)$ as well. By Theorem 15.1 of \citet{dyatlov2022ellipticnotes}, $\Delta \psi \in L^2(\mathbb{T}^d)$
implies that $\psi \in H^2(\mathbb{T}^d)$. One can apply the same argument to conclude that $\psi' \in H^2(\mathbb{T}^d)$ as well. Having established this, we can now take $\xi = \partial_k\delta \in L^2(\mathbb{T}^d)$ as our test function, yielding
\[\int_{\mathbb{T}^d} \|\nabla (\partial_k \delta)\|_2^2 \, d\rho = \int_{\mathbb{T}^d} \langle \nabla (\partial_k\delta), \nabla \delta\rangle \, \partial_k\rho \, dx - \int_{\mathbb{T}^d}\langle \nabla (\partial_k\delta),  \partial_kG\rangle  \, dx.\]
Applying H\"older's inequality and using the fact that $\rho \geq \lambda$ yields the bound
\[\lambda\|\nabla (\partial_k\delta)\|^2_{L^2(\mathbb{T}^d)} \leq \|(\partial_k \rho)\nabla \delta\|_{L^2(\mathbb{T}^d)}\|\nabla (\partial_k\delta)\|_{L^2(\mathbb{T}^d)} + \|\nabla (\partial_k\delta)\|_{L^2(\mathbb{T}^d)}\|\partial_kG\|_{L^2(\mathbb{T}^d)}.\]
Thus,
\[\|\nabla (\partial_k\delta)\|_{L^2(\mathbb{T}^d)} \leq \frac{1}{\lambda}\left(\|(\partial_k \rho)\nabla \delta\|_{L^2(\mathbb{T}^d)} + \|\partial_kG\|_{L^2(\mathbb{T}^d)}\right)\]
and if we sum over $k$ and bound further we obtain
\[\|D^2\delta\|_{L^2(\mathbb{T}^d)} \lesssim  \|\nabla \rho\|_{L^\infty(\mathbb{T}^d)}\|\nabla \delta\|_{L^2(\mathbb{T}^d)} + \|G\|_{H^1(\mathbb{T}^d)}\]
where the left-hand side is the integrated Frobenius norm of the Hessian of $x \mapsto \delta(x)$. Since $w = \nabla \delta$, $\|w\|_{H^1(\mathbb{T}^d)} \lesssim  \|\nabla \delta\|_{L^2(\mathbb{T}^d)} + \|D^2\delta\|_{L^2(\mathbb{T}^d)}$, which implies
\begin{equation}\label{sobolev norm on w intermediate}
\|w\|_{H^1(\mathbb{T}^d)} \leq C\left((1+\|\nabla \rho\|_{L^\infty(\mathbb{T}^d)})\|\nabla \delta\|_{L^2(\mathbb{T}^d)} + \|G\|_{H^1(\mathbb{T}^d)}\right). 
\end{equation}
From \Cref{L2 lipschitz stability}, we know that $\|\nabla \delta\|_{L^2(\mathbb{T}^d)} = \|w\|_{L^2(\mathbb{T}^d)} \lesssim  \|\rho - \rho'\|_{L^\infty(\mathbb{T}^d)}\|z - u'\|_{L^2(\mathbb{T}^d)}$, and $\|G\|_{H^1(\mathbb{T}^d)} \leq C\|\rho - \rho'\|_{W^{1, \infty}(\mathbb{T}^d)}\|z - u'\|_{H^1(\mathbb{T}^d)} \leq C\|\rho - \rho'\|_{W^{1, \infty}(\mathbb{T}^d)}(\|z\|_{H^1(\mathbb{T}^d)} + \|u'\|_{H^1(\mathbb{T}^d)}).$ Thus, we require a bound of the form $\|u'\|_{H^1(\mathbb{T}^d)} \lesssim \|z\|_{H^1(\mathbb{T}^d)}.$ We can obtain this bound by differentiating the first order conditions of the projection of $z$,
\[\nabla \cdot (\rho' \nabla \psi') = \nabla \cdot (\rho 'z)\]
yielding
\[\nabla \cdot (\partial_k\rho' \nabla \psi' + \rho'\partial_k\nabla \psi') = \nabla \cdot (\partial_k\rho' z + \rho'\partial_kz).\]
Applying the test function $\partial_k\psi'$ and moving around terms yields
\begin{align*}
    \int_{\mathbb{T}^d} \|\nabla (\partial_k\psi')\|_2^2
    \,d\rho' &= - \int_{\mathbb{T}^d} \partial_k\rho' \langle \nabla (\partial_k\psi'), \nabla \psi'\rangle \, dx+ \int_{\mathbb{T}^d} (\partial_k\rho')\langle \nabla (\partial_k\psi'), z\rangle \, dx\\
    &\hspace{73mm}+ \int_\mathbb{T}^d\langle \nabla (\partial_k\psi'), \partial_kz\rangle \, d\rho' \\
    &\leq \|\partial_k \rho'\|_{L^\infty(\mathbb{T}^d)}\|\nabla (\partial_k\psi')\|_{L^2(\mathbb{T}^d)}\|\nabla \psi'\|_{L^2(\mathbb{T}^d)}\\
    & \hspace{30mm} + \|\partial_k \rho'\|_{L^\infty(\mathbb{T}^d)}\|\nabla (\partial_k\psi')\|_{L^2(\mathbb{T}^d)}\|z\|_{L^2(\mathbb{T}^d)}\\
    &\hspace{60mm} + \Lambda\|\nabla(\partial_k \psi')\|_{L^2(\mathbb{T}^d)}\|\partial_kz\|_{L^2(\mathbb{T}^d)}.
\end{align*}
where the second line used H\"older's inequality and the bound $\rho \leq \Lambda$. Note that the left hand side can be bounded from below by $\lambda \|\nabla (\partial_k\psi')\|_{L^2(\mathbb{T}^d)}^2$. Thus, after cancellation we obtain
\begin{align*}
    \|\nabla (\partial_k\psi')\|_{L^2(\mathbb{T}^d)} \leq \frac{1}{\lambda}\left(\|\partial_k \rho'\|_{L^\infty(\mathbb{T}^d)}\|\nabla \psi'\|_{L^2(\mathbb{T}^d)} + \|\partial_k \rho'\|_{L^\infty(\mathbb{T}^d)}\|z\|_{L^2(\mathbb{T}^d)} + \Lambda\|\partial_kz\|_{L^2(\mathbb{T}^d)} \right).
\end{align*}
Summing over $k$, we get
\[\|\nabla u'\|_{L^2(\mathbb{T}^d)} \lesssim \|u'\|_{L^2(\mathbb{T}^d)} + \|z\|_{H^1(\mathbb{T}^d)}.\]
Now, using the fact that $\Pi_{\rho'}$ is an orthogonal projection implies that $\|u'\|_{L^2(\mathbb{T}^d)} \lesssim \|z\|_{L^2(\mathbb{T}^d)} \leq \|z\|_{H^1(\mathbb{T}^d)}$ yielding 
\[\|\nabla u'\|_{L^2(\mathbb{T}^d)} \lesssim \|z\|_{H^1(\mathbb{T}^d)}.\]
Combining this with the $L^2$ bound on $u'$ yields
\[\|u'\|_{H^1(\mathbb{T}^d)} \lesssim  \|z\|_{H^1(\mathbb{T}^d).}\]
Plugging this bound back into \Cref{sobolev norm on w intermediate} yields
\begin{align*}
    \|w\|_{H^1(\mathbb{T}^d)} \leq C\left((1+\|\nabla \rho\|_{L^\infty(\mathbb{T}^d)})\|w\|_{L^2(\mathbb{T}^d)} + C\|\rho - \rho'\|_{W^{1, \infty}(\mathbb{T}^d)}\|z\|_{H^1(\mathbb{T}^d)}\right)
\end{align*}
and using $\|w\|_{L^2(\mathbb{T}^d)} \lesssim \|\rho - \rho'\|_{W^{1, \infty}(\mathbb{T}^d)}\|z\|_{H^1(\mathbb{T}^d)}$ (\Cref{L2 lipschitz stability}) gives the final bound
\[\|\Pi_\rho z-\Pi_{\rho'} z\|_{H^1(\mathbb{T}^d)}
\le
C_{\Pi, 1}\,\|\rho-\rho'\|_{W^{1,\infty}(\mathbb{T}^d)}\,\|z\|_{H^1(\mathbb{T}^d)}.\]
for some $C_{\Pi, 1} > 0$ dependent only on regularity constants. \qed{}

\begin{restatable}[Time regularity of the weighted Helmholtz projection]{lemma}{}
\label{lem:time_reg_weighted_projection}
Let $\mu_t$ admit a Lebesgue density $\rho_t$ for $t \in [0, 1]$ such that:
\begin{enumerate}
    \item there exist constants
    $0<\lambda\le \Lambda<\infty$ such that for all $t\in[0,1]$,
    \[
    \lambda \le \rho_t(x)\le \Lambda
    \qquad\text{for a.e. }x\in\mathbb{T}^d.
    \]
    \item the map $t \mapsto \rho_t$ is absolutely continuous as a map from $[0,1]$ into $L^\infty(\mathbb{T}^d)$ and there exists
    $K_\rho<\infty$ such that,
    \[
    \sup_{t\in[0,1]}\|\partial_t \rho_t\|_{L^\infty(\mathbb{T}^d)}\le K_\rho.
    \]
\end{enumerate}
Then there exists a constant $C_{\dot \Pi}>0$, depending only on
$\lambda,\Lambda,$ and $K_\rho$, such that the following hold.
\begin{enumerate}
    \item For every $z\in L^2(\mathbb{T}^d;\mathbb R^d)$ and all
    $s,t\in[0,1]$,
    \[
    \|\Pi_{\mu_t}z-\Pi_{\mu_s}z\|_{L^2(\mathbb{T}^d)}
    \le
    C_{\dot\Pi}|t-s|\,\|z\|_{L^2(\mathbb{T}^d)}.
    \]
    In particular,
    \[
    \|\Pi_{\mu_t}-\Pi_{\mu_s}\|_{\mathcal L(L^2(\mathbb{T}^d;\mathbb R^d))}
    \le
    C_{\dot\Pi}|t-s|
    \]
    where the norm is the \textit{operator} norm on bounded linear maps $L^2(\mathbb{T}^d; \mathbb{R}^d) \rightarrow L^2(\mathbb{T}^d; \mathbb{R}^d).$
    \item For every absolutely continuous curve
    $z_\cdot \in AC([0,1];L^2(\mathbb{T}^d;\mathbb R^d))$, the curve
    $t\mapsto \Pi_{\mu_t}z_t$ belongs to
    $AC([0,1];L^2(\mathbb{T}^d;\mathbb R^d))$.
    Moreover, for a.e. $t\in[0,1]$, define
    \[
    (\partial_t\Pi_t)z_t
    \;\triangleq\;
    \partial_t(\Pi_{\mu_t}z_t)-\Pi_{\mu_t}(\partial_t z_t).
    \]
    Then
    \[
    \|(\partial_t\Pi_t)z_t\|_{L^2(\mathbb{T}^d)}
    \le
    C_{\dot\Pi}\|z_t\|_{L^2(\mathbb{T}^d)}
    \qquad\text{for a.e. }t\in[0,1].
    \]
\end{enumerate}
By norm equivalence under the density bounds, the same conclusions hold in
$L^2(\nu)$ up to multiplicative constants depending only on $\lambda,\Lambda$.
\end{restatable}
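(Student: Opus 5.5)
Part 1 follows from \Cref{L2 lipschitz stability} applied to the pair $(\rho_t,\rho_s)$. That lemma, with both densities in $[\lambda,\Lambda]$ (assumption 1), gives
\[
\|\Pi_{\mu_t}z-\Pi_{\mu_s}z\|_{L^2(\mathbb{T}^d)}\le C_{\Pi,0}(\lambda,\Lambda)\,\|\rho_t-\rho_s\|_{L^\infty(\mathbb{T}^d)}\,\|z\|_{L^2(\mathbb{T}^d)} .
\]
Since $t\mapsto\rho_t$ is absolutely continuous into $L^\infty(\mathbb{T}^d)$ with $\|\partial_t\rho_t\|_{L^\infty}\le K_\rho$, I write $\rho_t-\rho_s=\int_s^t\partial_r\rho_r\,dr$ as a Bochner integral. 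This gives $\|\rho_t-\rho_s\|_{L^\infty}\le K_\rho|t-s|$, hence $C_{\dot\Pi}:=C_{\Pi,0}K_\rho$. The operator-norm statement follows by taking the supremum over $\|z\|_{L^2}\le1$.

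For part 2, absolute continuity comes from the splitting
\[
\Pi_{\mu_t}z_t-\Pi_{\mu_s}z_s=\Pi_{\mu_t}(z_t-z_s)+(\Pi_{\mu_t}-\Pi_{\mu_s})z_s .
\]
Each $\Pi_{\mu_t}$ is orthogonal in $L^2(\mu_t)$, so its norm on $L^2(\mathbb{T}^d)$ is uniformly bounded by $\sqrt{\Lambda/\lambda}$. Also $\sup_s\|z_s\|$ is finite because $z$ is continuous on a compact interval. Therefore
\[
\|\Pi_{\mu_t}z_t-\Pi_{\mu_s}z_s\|\le\sqrt{\Lambda/\lambda}\,\|z_t-z_s\|+C_{\dot\Pi}|t-s|\sup_r\|z_r\|,
\]
and the right-hand side is the sum of an absolutely continuous modulus and a Lipschitz modulus, which gives absolute continuity.

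For the derivative bound, the key technical point is differentiability of $t\mapsto\Pi_{\mu_t}z_t$. Absolute continuity alone gives a.e.\ differentiability only in spaces with the Radon--Nikodym property. $L^2$ is a Hilbert space and has this property, so $\partial_t(\Pi_{\mu_t}z_t)$ exists a.e., and so does $\partial_t z_t$. At a common point of differentiability $t$, I decompose the difference quotient as
\[
\frac{\Pi_{\mu_{t+h}}z_{t+h}-\Pi_{\mu_t}z_t}{h}=\Pi_{\mu_{t+h}}\frac{z_{t+h}-z_t}{h}+\frac{(\Pi_{\mu_{t+h}}-\Pi_{\mu_t})z_t}{h}.
\]
The first term differs from $\Pi_{\mu_{t+h}}(\partial_tz_t)$ by at most $\sqrt{\Lambda/\lambda}\,\|h^{-1}(z_{t+h}-z_t)-\partial_tz_t\|$, which tends to $0$. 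Part 1 gives $\Pi_{\mu_{t+h}}(\partial_tz_t)\to\Pi_{\mu_t}(\partial_tz_t)$. Consequently the second quotient converges, and its limit is exactly $(\partial_t\Pi_t)z_t$. By part 1 its norm is at most $C_{\dot\Pi}\|z_t\|$ for every $h$, and the bound passes to the limit.

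The statement in $L^2(\nu)$ follows by norm equivalence. The main care point is the a.e.\ differentiability, which the Hilbert-valued (Radon--Nikodym) argument settles. One could alternatively differentiate the variational identity of \Cref{projection realization} in $t$, but that is unnecessary.
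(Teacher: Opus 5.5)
Your proof is correct and mostly coincides with the paper's. For part 1 the paper does not cite \Cref{L2 lipschitz stability}; instead it re-derives the same estimate inline (subtract the first-order conditions at $t$ and $t+h$, test with the difference quotient of the potentials, use $\rho\ge\lambda$ and H\"older). It arrives at the same constant $C_{\dot\Pi}=\frac{K_\rho}{\lambda}\bigl(1+\sqrt{\Lambda/\lambda}\bigr)=C_{\Pi,0}K_\rho$. Your absolute-continuity splitting is literally the paper's.

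The derivative bound is where you differ. The paper introduces the auxiliary curve $w_t=\Pi_{\mu_t}z_t-\int_0^t\Pi_{\mu_r}(\partial_r z_r)\,dr$ and defines the defect as $\partial_t w_t$. It then controls difference quotients through $w_{t+h}-w_t=(\Pi_{\mu_{t+h}}-\Pi_{\mu_t})z_t+\int_t^{t+h}(\Pi_{\mu_{t+h}}-\Pi_{\mu_r})(\partial_r z_r)\,dr$, whose second term is $o(h)$. You instead fix a common differentiability point of $t\mapsto\Pi_{\mu_t}z_t$ and $t\mapsto z_t$ and split the difference quotient directly.

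Your version is shorter. It needs no Bochner integral of $r\mapsto\Pi_{\mu_r}(\partial_r z_r)$. It also needs no Lebesgue-point argument to identify $\partial_t w_t$ with $\partial_t(\Pi_{\mu_t}z_t)-\Pi_{\mu_t}(\partial_t z_t)$, a step the paper leaves tacit. It shows slightly more as well: the defect equals $\lim_{h\to0}h^{-1}(\Pi_{\mu_{t+h}}-\Pi_{\mu_t})z_t$. That is, it really is the derivative of the projection applied to the frozen vector $z_t$, which justifies the notation $(\partial_t\Pi_t)z_t$. The paper's version only has to differentiate one auxiliary curve, but otherwise buys nothing extra.

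One small point of rigor. Having correctly invoked the Radon--Nikodym property of $L^2$, you then write $\rho_t-\rho_s=\int_s^t\partial_r\rho_r\,dr$ as a Bochner integral in $L^\infty(\mathbb{T}^d)$. That space lacks the Radon--Nikodym property, so absolute continuity alone does not justify this representation. The estimate you actually need, $\|\rho_t-\rho_s\|_{L^\infty}\le K_\rho|t-s|$, still holds. For any norm-one functional $\ell$, the scalar function $t\mapsto\ell(\rho_t)$ is absolutely continuous. Its derivative is $\ell(\partial_t\rho_t)$, of modulus at most $K_\rho$, at every $t$ where $\partial_t\rho_t$ exists (a.e.\ at least, as the hypothesis presupposes). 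Hence it is $K_\rho$-Lipschitz, and taking the supremum over $\ell$ gives the bound. The paper makes this same step without comment.
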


\begin{proof}
We will start with the Lipschitz bound on the map $t \mapsto \Pi_{\mu_t}z$. Fix $z\in L^2(\mathbb{T}^d;\mathbb R^d)$ and define $u_t \triangleq \Pi_{\mu_t}z$. By first order optimality conditions, $u_t=\nabla \psi_t$ and $\psi_t$ satisfy
\[
\nabla\cdot(\rho_t u_t) = \nabla\cdot(\rho_t z).
\]
Equivalently, for every $\varphi\in H^1(\mathbb{T}^d)$,
\[
\int_{\mathbb{T}^d} \rho_t \langle u_t,\nabla\varphi\rangle\,dx
=
\int_{\mathbb{T}^d} \rho_t \langle z,\nabla\varphi\rangle\,dx.
\]
We first demonstrate the standard $L^2$ bound on the projection. Since $u_t$ is the
$L^2(\mu_t)$-orthogonal projection of $z$ onto the closed subspace
$T_{\mu_t}\mathcal P_2(\mathbb{T}^d)$, we have
\[
\|u_t\|_{L^2(\mu_t)}\le \|z\|_{L^2(\mu_t)}.
\]
Using the density bounds,
\[
\lambda \|u_t\|_{L^2(\mathbb{T}^d)}^2
\le
\|u_t\|_{L^2(\mu_t)}^2
\le
\|z\|_{L^2(\mu_t)}^2
\le
\Lambda \|z\|_{L^2(\mathbb{T}^d)}^2,
\]
hence
\[
\|u_t\|_{L^2(\mathbb{T}^d)}
\le
\sqrt{\frac{\Lambda}{\lambda}}\,\|z\|_{L^2(\mathbb{T}^d)}.
\]
We now prove that the map $t \mapsto \Pi_{\mu_t}z$ is Lipschitz in $L^2(\mathbb{T}^d; \mathbb{R}^d).$ Let $h > 0$ be such that $[t, t+h] \subseteq [0,1]$. Applying the first order conditions at times $t$ and $t+h$ and subtracting gives,
\[\int_{\mathbb{T}^d}\rho_{t+h} \, \langle u_{t+h} - u_t, \nabla \varphi\rangle \, dx = \int_{\mathbb{T}^d}(\rho_{t+h}-\rho_t)\langle z-u_t, \nabla \varphi \rangle\, dx\]
for all $\varphi \in H^1(\mathbb{T}^d).$ Since $u_s=\nabla\psi_s$ for each $s$, define
\[
\eta_h:=\frac{\psi_{t+h}-\psi_t}{h}\in H^1(\mathbb{T}^d),
\qquad
\nabla\eta_h=\frac{u_{t+h}-u_t}{h}.
\]
Taking $\varphi=\eta_h$ in the preceding identity and dividing by $h$ yields
\[
\int_{\mathbb{T}^d}\rho_{t+h}\left\|\frac{u_{t+h}-u_t}{h}\right\|_2^2\,dx
=
\int_{\mathbb{T}^d}\frac{\rho_{t+h}-\rho_t}{h}
\left\langle z-u_t,\frac{u_{t+h}-u_t}{h}\right\rangle\,dx.
\]
Using the lower density bound and applying H\"older's inequality,
\[
\lambda\left\|\frac{u_{t+h}-u_t}{h}\right\|_{L^2(\mathbb{T}^d)}^2
\le
\left\|\frac{\rho_{t+h}-\rho_t}{h}\right\|_{L^\infty(\mathbb{T}^d)}
\|z-u_t\|_{L^2(\mathbb{T}^d)}
\left\|\frac{u_{t+h}-u_t}{h}\right\|_{L^2(\mathbb{T}^d)}.
\]
Therefore,
\[
\left\|\frac{u_{t+h}-u_t}{h}\right\|_{L^2(\mathbb{T}^d)}
\le
\frac{1}{\lambda}
\left\|\frac{\rho_{t+h}-\rho_t}{h}\right\|_{L^\infty(\mathbb{T}^d)}
\|z-u_t\|_{L^2(\mathbb{T}^d)}.
\]
Since $\sup_{t\in [0,1]}\|\partial_t\rho_t\|_{L^\infty(\mathbb{T}^d)} \leq K_\rho$ and $t \mapsto \rho_t$ is absolutely continuous as an $L^\infty(\mathbb{T}^d)$ valued map, we have
\[\|\rho_{t+h} - \rho_t\|_{L^\infty(\mathbb{T}^d)} \leq K_\rho h\]
and thus,
\[
\left\|\frac{u_{t+h}-u_t}{h}\right\|_{L^2(\mathbb{T}^d)}
\le
\frac{K_\rho}{\lambda}\|z-u_t\|_{L^2(\mathbb{T}^d)}.
\]
Applying Minkowski's inequality and the bound on $\|u_t\|_{L^2(\mathbb{T}^d)}$ yields
\[
\|z-u_t\|_{L^2(\mathbb{T}^d)}
\le
\|z\|_{L^2(\mathbb{T}^d)}+\|u_t\|_{L^2(\mathbb{T}^d)}
\le
\left(1+\sqrt{\frac{\Lambda}{\lambda}}\right)\|z\|_{L^2(\mathbb{T}^d)}.
\]
Combining and multiplying through by $h$ therefore results in 
\[
\|u_{t+h}-u_t\|_{L^2(\mathbb{T}^d)}
\le
\frac{K_\rho}{\lambda}
\left(1+\sqrt{\frac{\Lambda}{\lambda}}\right)|h|\,\|z\|_{L^2(\mathbb{T}^d)}.
\]
Thus, statement (1) holds with \[C_{\dot \Pi} = \frac{K_\rho}{\lambda}
\left(1+\sqrt{\frac{\Lambda}{\lambda}}\right).\]
Therefore the map $t \mapsto \Pi_{\mu_t}z$ is Lipschitz as a map from $[0,1]$ to $L^2(\mathbb{T}^d; \mathbb{R}^d)$. Taking the supremum over all $z$ such that $\|z\|_{L^2(\mathbb{T}^d)} = 1$ yields the operator norm bound 
\[
\|\Pi_{\mu_t} - \Pi_{\mu_s}\|_{\mathcal{L}(L^2(\mathbb{T}^d))}
\le
C_{\dot \Pi}|t - s|.
\]
Now let $z_{(\cdot)} \in \text{AC}([0,1]; L^2(\mathbb{T}^d; \mathbb{R}^d))$ and define $y_t \triangleq \Pi_{\mu_t}(z_t).$ Since $z_{(\cdot)}$ is absolutely continuous, $z_t - z_s= \int_{s}^t \partial_r z_r \, dr$. Note that
\[y_t - y_s = (\Pi_{\mu_t} - \Pi_{\mu_s})(z_s) + \Pi_{\mu_t}(z_t-z_s)\]
implies the bound
\[
\|y_t-y_s\|_{L^2(\mathbb{T}^d)}
\le
C_{\dot\Pi}|t-s|\,\|z_s\|_{L^2(\mathbb{T}^d)}
+
\|\Pi_{\mu_t}\|_{\mathcal L(L^2(\mathbb{T}^d))}\int_s^t\|\partial_r z_r\|_{L^2(\mathbb{T}^d)}\,dr.
\]
Since
\[
\|\Pi_{\mu_t}\|_{\mathcal L(L^2(\mathbb{T}^d;\mathbb R^d))}
\le \sqrt{\frac{\Lambda}{\lambda}}
\]
and $z_{(\cdot)}$ is continuous on $[0,1]$, the right-hand side is controlled by
an $L^1$ function of $r$. Therefore $y_{(\cdot)}\in AC([0,1];L^2(\mathbb{T}^d;\mathbb R^d))$. Next define
\[
w_t
\triangleq
y_t-\int_0^t \Pi_{\mu_r}(\partial_r z_r)\,dr.
\]
Since both $y_{(\cdot)}$ and the integral term are absolutely continuous,
$w_{(\cdot)}$ is absolutely continuous as well. Moreover, for $0\le s<t\le 1$,
\begin{align*}
w_t-w_s
&=
\Pi_{\mu_t}z_t-\Pi_{\mu_s}z_s-\int_s^t \Pi_{\mu_r}(\partial_r z_r)\,dr \\
&=
(\Pi_{\mu_t}-\Pi_{\mu_s})z_s
+
\Pi_{\mu_t}\!\left(\int_s^t \partial_r z_r\,dr\right)
-
\int_s^t \Pi_{\mu_r}(\partial_r z_r)\,dr \\
&=
(\Pi_{\mu_t}-\Pi_{\mu_s})z_s
+
\int_s^t(\Pi_{\mu_t}-\Pi_{\mu_r})(\partial_r z_r)\,dr.
\end{align*}
Therefore,
\begin{align*}
\|w_t-w_s\|_{L^2(\mathbb{T}^d)}
&\le
C_{\dot\Pi}|t-s|\,\|z_s\|_{L^2(\mathbb{T}^d)}
+
C_{\dot\Pi}\int_s^t |t-r|\,\|\partial_r z_r\|_{L^2(\mathbb{T}^d)}\,dr.
\end{align*}
Since $w_{(\cdot)}$ is absolutely continuous, it is differentiable for a.e. $t$.
For such $t$, define
\[
(\partial_t\Pi_t)z_t \triangleq \partial_t w_t
=
\partial_t(\Pi_{\mu_t}z_t)-\Pi_{\mu_t}(\partial_t z_t).
\]
It remains to prove the bound on this defect term. Let $t$ be a point at which $w_{(\cdot)}$ is differentiable. For $h$ such that
$[t,t+h]\subset [0,1]$, the identity above gives
\[
w_{t+h}-w_t
=
(\Pi_{\mu_{t+h}}-\Pi_{\mu_t})z_t
+
\int_t^{t+h}(\Pi_{\mu_{t+h}}-\Pi_{\mu_r})(\partial_r z_r)\,dr.
\]
Dividing by $h$ and taking norms,
\[
\left\|\frac{w_{t+h}-w_t}{h}\right\|_{L^2(\mathbb{T}^d)}
\le
C_{\dot\Pi}\|z_t\|_{L^2(\mathbb{T}^d)}
+
C_{\dot\Pi}\frac{1}{|h|}
\int_t^{t+h}|t+h-r|\,\|\partial_r z_r\|_{L^2(\mathbb{T}^d)}\,dr.
\]
Since $|t+h-r|\le |h|$ on the interval of integration,
\[
\frac{1}{|h|}
\int_t^{t+h}|t+h-r|\,\|\partial_r z_r\|_{L^2(\mathbb{T}^d)}\,dr
\le
\int_t^{t+h}\|\partial_r z_r\|_{L^2(\mathbb{T}^d)}\,dr
\to 0
\]
as $h\to 0$. Hence
\[
\|\partial_t w_t\|_{L^2(\mathbb{T}^d)}
\le
C_{\dot\Pi}\|z_t\|_{L^2(\mathbb{T}^d)}
\]
for a.e. $t$, i.e.
\[
\|(\partial_t\Pi_t)z_t\|_{L^2(\mathbb{T}^d)}
\le
C_{\dot\Pi}\|z_t\|_{L^2(\mathbb{T}^d)}.
\]
This proves item (2). The final statement in $L^2(\nu)$ follows from the uniform equivalence between
$L^2(\mathbb{T}^d)$ and $L^2(\nu)$ under the density bounds.
\end{proof}

\begin{restatable}[$H^1$ propagation of parallel fields]{prop}{}\label{prop:h1}
    Let $(\mu_t)_{t\in[0,1]}$ be a regular geodesic in $\mathcal{P}_2(\mathbb{T}^d)$ with tangent velocity field $\nabla\phi_t$, and suppose that each $\mu_t$ admits a Lebesgue density $\rho_t$. Let $w_t = \PT_{\nu \to \mu_t}(v)$ be the parallel transport of $v \in T_\nu \mathcal{P}_2(\mathbb{T}^d) \cap H^1(\mathbb{T}^d;\mathbb{R}^d)$ along the geodesic. Suppose that there exist constants $0 < \lambda \le \Lambda < \infty$ such that $\lambda \le \rho_t(x) \le \Lambda$ for a.e.\ $x \in \mathbb{T}^d$ and all $t \in [0,1]$, it holds that \[\displaystyle\sup_{t \in [0,1]} \|\nabla\phi_t\|_{W^{1,\infty}(\mathbb{T}^d)} \le M,\] 
    and it holds that  \[\displaystyle\sup_{t \in [0,1]} \|\nabla \log \rho_t\|_{L^\infty(\mathbb{T}^d)} \le K_{\log}.\] Then there exists $C_{\mathrm{par}} > 0$ depending only on $\lambda, \Lambda, M, K_{\log}$ such that
    \[
        \|w_t\|_{H^1(\mathbb{T}^d)} \le C_{\mathrm{par}}\,\|v\|_{H^1(\mathbb{T}^d)} \qquad \forall\, t \in [0,1].
    \]
    Moreover, $w \in L^\infty(0,1;H^1(\mathbb T^d;\mathbb R^d))
    \cap W^{1,\infty}(0,1;L^2(\mathbb T^d;\mathbb R^d))$, with
    \[
    \|w\|_{L^\infty(0,1;H^1(\mathbb T^d))}
    +\|\partial_t w\|_{L^\infty(0,1;L^2(\mathbb T^d))}
    \le C\|v\|_{H^1(\mathbb T^d)}.
    \]
\end{restatable}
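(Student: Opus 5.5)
The plan is to work directly with the Eulerian form of the parallel transport equation, exploiting that $w_t=\nabla\psi_t$ is a gradient on $\mathbb{T}^d$ (by \Cref{tangent space realization}). I will obtain, in order, an $L^2$ bound, an $H^1$ bound via an energy estimate on $\operatorname{div} w_t=\Delta\psi_t$, and finally the bound on $\partial_t w$.

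\textbf{Step 1: the $L^2$ bound.} By \Cref{Wasserstein parallel transport pde}, the field $q_t\triangleq\partial_t w_t+\nabla w_t\nabla\phi_t$ satisfies $\Pi_{\mu_t}q_t=0$. The computation in the proof of \Cref{wpt stability} (continuity equation plus integration by parts on the boundaryless torus) gives
\[\tfrac12\tfrac{d}{dt}\|w_t\|_{L^2(\mu_t)}^2=\langle w_t,\Pi_{\mu_t}q_t\rangle_{L^2(\mu_t)}=0,\]
since $w_t\in T_{\mu_t}\mathcal P_2$. Hence $\|w_t\|_{L^2(\mu_t)}=\|v\|_{L^2(\nu)}$, and so $\|w_t\|_{L^2(\mathbb{T}^d)}\le\sqrt{\Lambda/\lambda}\,\|v\|_{L^2}$.

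\textbf{Step 2: rewriting the equation.} Since $w_t=\nabla\psi_t$, the Jacobian $\nabla w_t=D^2\psi_t$ is symmetric. It follows that
\[\nabla(w_t\cdot\nabla\phi_t)=\nabla w_t\nabla\phi_t+D^2\phi_t\,w_t.\]
Also $\partial_t w_t=\nabla\partial_t\psi_t$ is itself a gradient, so $\Pi_{\mu_t}\partial_t w_t=\partial_t w_t$. Projecting the parallel equation therefore yields the closed evolution
\[\partial_t w_t+\nabla w_t\nabla\phi_t=-(I-\Pi_{\mu_t})(D^2\phi_t\,w_t)\triangleq -r_t .\]
Here $\|r_t\|_{L^2}\lesssim M\|w_t\|_{L^2}$, because the projection is bounded uniformly by the density bounds.

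\textbf{Step 3: the divergence energy estimate.} Set $D_t=\operatorname{div} w_t=\Delta\psi_t$. On $\mathbb{T}^d$, Fourier analysis gives $\|\nabla w_t\|_{L^2}=\|D^2\psi_t\|_{L^2}=\|D_t\|_{L^2}$, so it suffices to control $\|D_t\|_{L^2}$. The point that avoids needing third derivatives of $\phi_t$ is that $r_t=z-\nabla\chi$ satisfies $\operatorname{div}(\rho_t r_t)=0$. Consequently
\[\operatorname{div} r_t=-\nabla\log\rho_t\cdot r_t,\qquad \|\operatorname{div} r_t\|_{L^2}\le K_{\log}M\sqrt{\Lambda/\lambda}\,\|w_t\|_{L^2}.\]
Taking the divergence of the equation in Step 2 and using symmetry of $\nabla w_t$ gives
\[\partial_t D_t+\nabla\phi_t\cdot\nabla D_t=-\operatorname{tr}(\nabla w_t\,D^2\phi_t)-\operatorname{div} r_t .\]
Testing against $D_t$ in $L^2(dx)$ and integrating by parts, $-\int D_t\nabla\phi_t\cdot\nabla D_t=\tfrac12\int D_t^2\Delta\phi_t$. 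This leads to
\[\tfrac{d}{dt}\tfrac12\|D_t\|^2\le C(M)\|D_t\|^2+C(M,K_{\log},\lambda,\Lambda)\|v\|_{L^2}\|D_t\|.\]
Gr\"onwall's inequality then gives $\|D_t\|\le C(\|\operatorname{div} v\|+\|v\|)\le C\|v\|_{H^1}$. Combined with Step 1, this is the claimed bound $\|w_t\|_{H^1}\le C_{\rm par}\|v\|_{H^1}$.

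\textbf{Step 4: the time derivative.} From the equation in Step 2,
\[\|\partial_t w_t\|_{L^2}\le M\|\nabla w_t\|_{L^2}+\|r_t\|_{L^2}\le C\|v\|_{H^1},\]
which gives the $W^{1,\infty}(0,1;L^2)$ statement.

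\textbf{Main obstacle.} I expect the main difficulty to be justifying the energy identity in Step 3, since a priori $D_t$ is only $L^2$ and $\nabla D_t$ is not defined. My first attempt would be a Fourier--Galerkin truncation (or spatial mollification) of the projected equation in Step 2. The commutator $[\text{mollifier},\nabla\phi_t\cdot\nabla]$ is controlled in the standard DiPerna--Lions fashion by $\|\nabla\phi_t\|_{W^{1,\infty}}\le M$. I would derive the uniform bounds at the regularized level and pass to the limit using weak-$*$ compactness. Uniqueness of the parallel transport along a regular curve then identifies the limit with $w_t$.
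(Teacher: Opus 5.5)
Your argument is correct, and it takes a genuinely different route from the paper.

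\textbf{How the paper proceeds.} The paper never isolates the non-gradient part of the equation. It runs a Galerkin scheme directly on the weighted weak form: $w_m\in V_m=\operatorname{span}\{\nabla e_1,\dots,\nabla e_m\}$ with $\int\rho_t\langle\partial_t w_m+\nabla w_m\nabla\phi_t,\xi\rangle\,dx=0$ for all $\xi\in V_m$, so $\|w_m(t)\|_{L^2(\mu_t)}$ is exactly conserved. It then tests with $\xi=-\Delta w_m\in V_m$ to control the weighted full-gradient energy $E_m=\int\rho_t\|\nabla w_m\|_F^2\,dx$. In that estimate $\nabla\rho_t$ enters through two integration-by-parts remainders bounded via $K_{\log}$, and the two $\partial_t\rho_t$ terms cancel by the continuity equation. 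Arzel\`a--Ascoli, Banach--Alaoglu and uniqueness finish the argument.

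\textbf{How yours differs.} You use the gradient structure to rewrite the equation as a transport equation with a bounded zero-order nonlocal forcing, $\partial_t w+\nabla w\nabla\phi=-(I-\Pi_{\mu_t})(D^2\phi\,w)$. You then estimate the unweighted quantity $\|\operatorname{div}w\|_{L^2}=\|\nabla w\|_{L^2}$, with $K_{\log}$ entering only through $\operatorname{div}r_t=-\nabla\log\rho_t\cdot r_t$; this is what keeps third derivatives of $\phi_t$ out. Your version makes the structure of the problem more transparent and needs only unweighted integrations by parts. The paper's version buys a regularized problem whose weak form is literally the defining equation, with exact isometry at every level.

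\textbf{Making the justification rigorous.} Commit to your Galerkin option and make it precise. Truncate your Step~2 equation with the unweighted $L^2(dx)$-orthogonal projection $Q_m$ onto $V_m$. Do not use the paper's $L^2(\mu_t)$-projection: it is self-adjoint only in $L^2(\mu_t)$ and does not intertwine with $\operatorname{div}$.
\begin{enumerate}
\item Since $\operatorname{div}Q_mX$ is the Fourier truncation of $\operatorname{div}X$ and $Q_m$ is self-adjoint in $L^2(dx)$, your divergence estimate holds exactly for $w_m$, with no commutator.
\item The $L^2$ bound on $w_m$ then comes from Gr\"onwall, via the terms $\int|w_m|^2\Delta\phi_t$ and $\langle w_m,r_m\rangle$, rather than from exact conservation.
\item The weak limit satisfies $\partial_t w=-\nabla(w\cdot\nabla\phi_t)+\Pi_{\mu_t}(D^2\phi_t\,w)$. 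Your Step~2 algebra shows this is equivalent to $\Pi_{\mu_t}(\partial_t w+\nabla w\nabla\phi_t)=0$ once $w\in L^\infty(0,1;H^1)$. Uniqueness then identifies the limit, as in the paper.
\end{enumerate}

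The mollification alternative, if it means mollifying the actual $w_t$, would be circular. A DiPerna--Lions commutator bound at the level of $\operatorname{div}w_t$ presupposes $\operatorname{div}w_t\in L^2$, which is what you are trying to prove. Worse, $\operatorname{tr}(\nabla w_t\,D^2\phi_t)$ is not even defined for $\nabla w_t\in H^{-1}$ when $\phi_t$ is only $W^{2,\infty}$.
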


\begin{proof}
    Our proof technique will be as follows: we will construct a \textit{Galerkin approximation} (\citet[\S 7.1.2]{evans2022partial}) to the parallel field $w_t$, which approximates the $w_t$ by an orthogonal projection onto a finite-dimensional subspace. In particular, let $\{e_k\}_{k \geq 1}$ denote a smooth mean-zero orthonormal basis of the eigenfunctions of the Laplacian on $\mathbb{T}^d$,
    \[-\Delta e_k = \lambda_ke_k, \quad \int_{\mathbb{T}^d}e_k \, dx = 0. \]
    Define the finite dimensional subspaces
    \[V_m \triangleq \text{span}\{\nabla e_1, \dots, \nabla e_m\} \subset C^\infty(\mathbb{T}^d; \mathbb{R}^d).\]
    By \Cref{projection realization}, we know that $v \in T_\nu \mathcal{P}_2(\mathbb{T}^d) \cap H^1(\mathbb{T}^d;\mathbb{R}^d)$ is necessarily a gradient field of a potential $\psi \in H^1(\mathbb{T}^d)$. The fact that $v \in H^1(\mathbb{T}^d; \mathbb{R}^d)$ implies that $\nabla \cdot v = \Delta \psi \in L^2(\mathbb{T}^d).$ By Theorem 15.1 of \citet{dyatlov2022ellipticnotes}, $\Delta \psi \in L^2(\mathbb{T}^d)$ implies that $\psi \in H^2(\mathbb{T}^d)$. Now define the \textit{approximate} potential and gradient field
    \[\psi_m \triangleq \sum_{k = 1}^m \langle \psi, e_k\rangle_{L^2(\mathbb{T}^d)} \cdot e_k, \quad v_m\triangleq \nabla \psi_m.\]
    Then $v_m \in V_m$, and 
    \[ \psi - \psi_m = \sum_{k > m}\langle \psi, e_k\rangle_{L^2(\mathbb{T}^d)} \cdot e_k \quad \implies \quad\|\psi - \psi_m\|_{H^2(\mathbb{T}^d)}^2 \asymp \sum_{k > m}(1 + \lambda_k)^2|\langle \psi, e_k \rangle |^2 \overset{m \rightarrow \infty}\longrightarrow 0 \]
    by the Fourier characterization of Sobolev spaces on $\mathbb{T}^d$ (\citet{shkoller2009lp_sobolev_notes}, Definition 5.6). Thus,
    \[\|v - v_m\|_{H^1(\mathbb{T}^d)}  = \|\nabla \psi - \nabla \psi_m\|_{H^1(\mathbb{T}^d)} \leq \| \psi -  \psi_m\|_{H^2(\mathbb{T}^d)} \rightarrow 0.\]
    For each $m$, we construct $w_m:[0,1]\to V_m$ by requiring
    \begin{equation}
    \int_{\mathbb T^d}
    \rho_t(x)\,
    \big\langle \partial_t w_m(t,x)+\nabla w_m(t,x)\,\nabla\phi_t(x),\,\xi(x)\big\rangle\,dx
    =0
    \qquad\forall \xi\in V_m
    \label{eq:galerkin_weak}
    \end{equation}
    for a.e. $t\in[0,1]$, together with the requirement $w_m(0)=v_m.$ Now choose a basis $\{\xi_1,\dots,\xi_{N_m}\}$ of $V_m$ and write
    \[
    w_m(t)=\sum_{\ell=1}^{N_m} a_\ell^{(m)}(t)\,\xi_\ell.
    \]
    Then \cref{eq:galerkin_weak} is equivalent to the linear ODE system
    \[
    M_m(t)\,a_m'(t)+B_m(t)\,a_m(t)=0,
    \]
    where
    \[
    (M_m(t))_{ij}=\int_{\mathbb T^d}\rho_t \langle \xi_j,\xi_i\rangle\,dx,
    \qquad
    (B_m(t))_{ij}=\int_{\mathbb T^d}\rho_t \langle \nabla \xi_j\,\nabla\phi_t,\xi_i\rangle\,dx.
    \]
    Because $\lambda\le \rho_t\le \Lambda$, the matrix $M_m(t)$ is uniformly positive definite,
    \[
    a^\top M_m(t)a
    =\int_{\mathbb T^d}\rho_t \Big\|\sum_{j=1}^{N_m} a_j \xi_j\Big\|_2^2 dx
    \ge \lambda \Big\|\sum_{j=1}^{N_m} a_j\xi_j\Big\|_{L^2(\mathbb T^d)}^2.
    \]
    Moreover, $M_m$ and $B_m$ are bounded measurable in $t$. Hence, by Carath\'eodory's existence theorem, there exists a unique
    \[
    a_m\in AC([0,1];\mathbb R^{N_m}),
    \]
    and therefore
    \[
    w_m\in AC([0,1];V_m)\subset AC([0,1];L^2(\mathbb T^d;\mathbb R^d)).
    \]
    Since $w_m(t)\in V_m$, we may choose $\xi=w_m(t)$ in \cref{eq:galerkin_weak}. This gives
    \[
    \int_{\mathbb T^d}\rho_t \langle \partial_t w_m,w_m\rangle\,dx
    +
    \int_{\mathbb T^d}\rho_t \langle \nabla w_m\,\nabla\phi_t,w_m\rangle\,dx
    =0.
    \]
    Because $(\rho_t,\nabla\phi_t)$ solves the continuity equation on $\mathbb T^d$,
    \[
    \partial_t\rho_t+\nabla\cdot(\rho_t\nabla\phi_t)=0
    \]
    in the distributional sense, and because $\mathbb T^d$ has no boundary, we obtain
    \begin{align*}
    \frac{1}{2}\frac{d}{dt}\int_{\mathbb T^d} \rho_t \|w_m\|_2^2\,dx
    &=
    \int_{\mathbb T^d}\rho_t \langle \partial_t w_m,w_m\rangle\,dx
    +\frac12\int_{\mathbb T^d}(\partial_t\rho_t)\|w_m\|_2^2\,dx\\
    &=
    -\int_{\mathbb T^d}\rho_t \langle \nabla w_m\,\nabla\phi_t,w_m\rangle\,dx
    +\frac12\int_{\mathbb T^d}(\partial_t\rho_t)\|w_m\|_2^2\,dx\\
    &=
    -\frac12\int_{\mathbb T^d}\rho_t \nabla\phi_t\cdot\nabla(\|w_m\|_2^2)\,dx
    +\frac12\int_{\mathbb T^d}(\partial_t\rho_t)\|w_m\|_2^2\,dx\\
    &=
    \frac12\int_{\mathbb T^d}\nabla\cdot(\rho_t\nabla\phi_t)\,\|w_m\|_2^2\,dx
    +\frac12\int_{\mathbb T^d}(\partial_t\rho_t)\|w_m\|_2^2\,dx\\
    &=0.
    \end{align*}
    Thus, $\|w_m(t)\|_{L^2(\mu_t)}=\|v_m\|_{L^2(\nu)} $ for all $t\in[0,1].$ Now let $P_{m,t}:L^2(\mu_t;\mathbb R^d)\to V_m$ denote the $L^2(\mu_t)$ orthogonal projection
    onto $V_m$. Since \cref{eq:galerkin_weak} stipulates precisely that
    \[
    \partial_t w_m + P_{m,t}\bigl(\nabla w_m\,\nabla\phi_t\bigr)=0
    \qquad\text{in }V_m,
    \]
    we have $\partial_t w_m = - P_{m,t}\bigl(\nabla w_m\,\nabla\phi_t\bigr).$
    By contractivity of orthogonal projection in $L^2(\mu_t)$,
    \[
    \|\partial_t w_m\|_{L^2(\mu_t)}
    \le \|\nabla w_m\,\nabla\phi_t\|_{L^2(\mu_t)}
    \le \|\nabla\phi_t\|_{L^\infty}\|\nabla w_m\|_{L^2(\mu_t)}
    \le M\|\nabla w_m\|_{L^2(\mu_t)}.
    \]
    Thus
    \begin{equation}
    \|\partial_t w_m\|_{L^2(\mathbb T^d)}
    \le M\sqrt{\frac{\Lambda}{\lambda}}\,\|\nabla w_m\|_{L^2(\mathbb T^d)}.
    \label{eq:dtwm_bound}
    \end{equation}
    Define the weighted gradient energy
    \[
    E_m(t)\triangleq \int_{\mathbb T^d} \rho_t(x)\,\|\nabla w_m(t,x)\|_F^2\,dx.
    \]
    The density bounds imply
    \begin{equation}
    \lambda \|\nabla w_m(t)\|_{L^2(\mathbb T^d;F)}^2
    \le E_m(t)
    \le \Lambda \|\nabla w_m(t)\|_{L^2(\mathbb T^d;F)}^2.
    \label{eq:Em_equiv}
    \end{equation}
    Since $w_m=\nabla\psi_m \in V_m$, we have $-\Delta w_m = \nabla(-\Delta\psi_m)\in V_m$. Thus we may choose $\xi=-\Delta w_m(t)$ in \cref{eq:galerkin_weak}. Writing $0=\text{I}_1(t)+\text{I}_2(t)$, where
    \[
    \text{I}_1(t)\triangleq \int_{\mathbb T^d}\rho_t \langle \partial_t w_m,-\Delta w_m\rangle\,dx,
    \quad \text{and} \quad
    \text{I}_2(t)\triangleq \int_{\mathbb T^d}\rho_t \langle \nabla w_m\,\nabla\phi_t,-\Delta w_m\rangle\,dx,
    \]
    we bound the two terms separately. For $\text{I}_1$, we can integrate by parts on $\mathbb T^d$, yielding
    \begin{align*}
    \text{I}_1(t)
    &=
    \sum_{j=1}^d \int_{\mathbb T^d}\rho_t \langle \partial_j\partial_t w_m,\partial_j w_m\rangle\,dx
    +
    \sum_{j=1}^d \int_{\mathbb T^d} (\partial_j\rho_t)\,\langle \partial_t w_m,\partial_j w_m\rangle\,dx\\
    &=
    \frac12 \frac{d}{dt} E_m(t)
    -\frac12\int_{\mathbb T^d}(\partial_t\rho_t)\,\|\nabla w_m\|_F^2\,dx
    +
    R_{1,m}(t),
    \end{align*}
    where
    \[
    R_{1,m}(t)\triangleq
    \sum_{j=1}^d \int_{\mathbb T^d} (\partial_j\rho_t)\,\langle \partial_t w_m,\partial_j w_m\rangle\,dx.
    \]
    Using $\nabla\rho_t=\rho_t\nabla\log\rho_t$, the bound on $\nabla\log\rho_t$, and \cref{eq:dtwm_bound}, we obtain
    \begin{align*}
    |R_{1,m}(t)| &\le \|\nabla\log\rho_t\|_{L^\infty}
    \|\partial_t w_m\|_{L^2(\mu_t)}
    \|\nabla w_m\|_{L^2(\mu_t)}\\
    &\le K_{\log}\,M\,E_m(t).
    \end{align*}
    For $\text{I}_2$, integrate by parts again:
    \begin{align*}
    \text{I}_2(t)
    &=
    \sum_{j=1}^d
    \int_{\mathbb T^d}
    \rho_t \,\big\langle \partial_j(\nabla w_m\,\nabla\phi_t),\,\partial_j w_m\big\rangle\,dx
    +
    \sum_{j=1}^d
    \int_{\mathbb T^d}
    (\partial_j\rho_t)\,\big\langle \nabla w_m\,\nabla\phi_t,\,\partial_j w_m\big\rangle\,dx\\
    &\triangleq  \text{I}_{21}(t)+\text{I}_{22}(t).
    \end{align*}
    The second term satisfies
    \[
    |\,\text{I}_{22}(t)|
    \le \|\nabla\log\rho_t\|_{L^\infty}\|\nabla\phi_t\|_{L^\infty} E_m(t)
    \le K_{\log} M E_m(t).
    \]
    Expanding yields
    \[
    \partial_j(\nabla w_m\,\nabla\phi_t)
    =
    (\partial_j\nabla w_m)\,\nabla\phi_t
    +
    \nabla w_m\,(\partial_j\nabla\phi_t).
    \]
    Hence $\text{I}_{21}(t)=\text{I}_{211}(t)+\text{I}_{212}(t)$ where
    \begin{align*}
        \text{I}_{211}(t)
        &\triangleq
        \sum_{j=1}^d
        \int_{\mathbb T^d}
        \rho_t \,\big\langle (\partial_j\nabla w_m)\,\nabla\phi_t,\,\partial_j w_m\big\rangle\,dx, \\
        \text{I}_{212}(t)
        &\triangleq
        \sum_{j=1}^d
        \int_{\mathbb T^d}
        \rho_t \,\big\langle \nabla w_m\,(\partial_j\nabla\phi_t),\,\partial_j w_m\big\rangle\,dx.
    \end{align*}
    To bound $\text{I}_{212}(t)$ we simply use $\|\nabla^2\phi_t\|_{L^\infty}\le M$, yielding $|\text{I}_{212}(t)|\le M E_m(t).$ For $\text{I}_{211}$, observe that
    \[
    \sum_{j=1}^d \big\langle (\partial_j\nabla w_m)\,\nabla\phi_t,\,\partial_j w_m\big\rangle
    =
    \frac12 \nabla\phi_t\cdot \nabla\bigl(\|\nabla w_m\|_F^2\bigr).
    \]
    Therefore
    \begin{align*}
    \text{I}_{211}(t)
    &=
    \frac12\int_{\mathbb T^d}\rho_t\,\nabla\phi_t\cdot\nabla\bigl(\|\nabla w_m\|_F^2\bigr)\,dx\\
    &=
    -\frac12\int_{\mathbb T^d}\nabla\cdot(\rho_t\nabla\phi_t)\,\|\nabla w_m\|_F^2\,dx\\
    &=
    \frac12\int_{\mathbb T^d}(\partial_t\rho_t)\,\|\nabla w_m\|_F^2\,dx,
    \end{align*}
    again by the continuity equation. Combining the preceding bounds gives
    \[
    \text{I}_2(t)
    =
    \frac12\int_{\mathbb T^d}(\partial_t\rho_t)\,\|\nabla w_m\|_F^2\,dx
    +
    R_{2,m}(t),
    \quad \text{where} \quad 
    |R_{2,m}(t)|\le M(1+K_{\log})E_m(t).
    \]
    Since $I_1(t)+I_2(t)=0$, the $\partial_t\rho_t$ terms cancel, and we conclude that
    \[
    \frac12 \frac{d}{dt}E_m(t)
    \le \bigl(M+2MK_{\log}\bigr) E_m(t)
    \qquad\text{for a.e. }t\in[0,1].
    \]
    Applying Gr\"onwall's inequality (\Cref{gronwall}) yields,
    \[
    E_m(t)\le e^{2(M+2MK_{\log})t} \cdot E_m(0)
    \qquad\forall t\in[0,1].
    \]
    Using \cref{eq:Em_equiv} at $t=0$ and the convergence $v_m\to v$ in $H^1$,
    \[
    E_m(0)=\int_{\mathbb T^d}\rho_0 \|\nabla v_m
    |_F^2\,dx
    \le \Lambda \|v_m\|_{H^1(\mathbb T^d)}^2
    \le C\|v\|_{H^1(\mathbb T^d)}^2
    \]
    for all sufficiently large $m$. Hence
    \[
    \sup_{m\ge 1}\sup_{t\in[0,1]}\|w_m(t)\|_{H^1(\mathbb T^d)}
    \le C\|v\|_{H^1(\mathbb T^d)}
    \]
    for a constant $C$ depending only on $\lambda,\Lambda,M,K_{\log}$. Combining this with \cref{eq:dtwm_bound} yields
    \[
    \sup_{m\ge 1}\|\partial_t w_m\|_{L^\infty(0,1;L^2(\mathbb T^d))}
    \le C\|v\|_{H^1(\mathbb T^d)}.
    \]
    The uniform bounds obtained above imply
    \[
    \sup_m \|w_m\|_{L^\infty(0,1;H^1(\mathbb T^d))}
    +
    \sup_m \|\partial_t w_m\|_{L^\infty(0,1;L^2(\mathbb T^d))}
    <\infty.
    \]
    For each fixed $t \in [0,1]$, the sequence $\{w_m(t)\}_m$ is bounded in
    $H^1(\mathbb{T}^d;\mathbb{R}^d)$ by the uniform bound on
    $w_m$. Since the embedding
    \[
    H^1(\mathbb{T}^d;\mathbb{R}^d)\hookrightarrow L^2(\mathbb{T}^d;\mathbb{R}^d)
    \]
    is compact by the Rellich--Kondrachov theorem (\citet[\S 5.7]{evans2022partial}),
    it follows that for each fixed $t \in [0,1]$, the set
    $\{w_m(t)\}_m$ is relatively compact in $L^2(\mathbb{T}^d;\mathbb{R}^d)$.
    Moreover, for all $s,t \in [0,1]$,
    \[
    \|w_m(t)-w_m(s)\|_{L^2(\mathbb{T}^d)}
    \le \int_s^t \|\partial_r w_m(r)\|_{L^2(\mathbb{T}^d)}\,dr
    \le C|t-s|,
    \]
    where $C$ is independent of $m$. Thus $\{w_m\}_m$ is equicontinuous as a family
    of maps from $[0,1]$ into $L^2(\mathbb{T}^d;\mathbb{R}^d)$, and for each fixed
    $t$ its image is relatively compact in $L^2(\mathbb{T}^d;\mathbb{R}^d)$.
    By the Arzel\`a--Ascoli theorem, after passing to a subsequence we obtain
    \[
    w_m \to w
    \qquad\text{strongly in } C([0,1];L^2(\mathbb{T}^d;\mathbb{R}^d)).
    \] In particular, since $w_m(0)=v_m$ and $v_m\to v$ in $L^2(\mathbb T^d;\mathbb R^d)$, we obtain $w(0) = v$. On the other hand, the uniform $L^\infty(0,1;H^1(\mathbb{T}^d))$ bound on $w_m$ allows us to apply the Banach-Alaoglu theorem \citep[Theorem 3.15]{rudin1991functional} to extract a \textit{further} subsequence along which
    \[
    w_{m} \overset{*}{\rightharpoonup} w
    \quad\text{in }L^\infty(0,1;H^1(\mathbb T^d;\mathbb R^d))
    \]
    where $\overset{*}{\rightharpoonup}$ denotes convergence in the weak-* topology. It therefore follows that 
    \[
    \nabla w_{m} \overset{*}{\rightharpoonup} \nabla w
    \quad\text{in }L^\infty(0,1;L^2(\mathbb T^d;\mathbb R^d))
    \]
    along the same subsequence. By the uniform bound
    \[
    \sup_m \|\partial_t w_m\|_{L^\infty(0,1;L^2(\mathbb T^d))}
    < \infty,
    \]
    the Banach--Alaoglu theorem \citep[Theorem 3.15]{rudin1991functional} yields, after passing to a further subsequence
    (not relabeled), the existence of
    \[
    g \in L^\infty(0,1;L^2(\mathbb T^d;\mathbb R^d))
    \]
    such that
    \[
    \partial_t w_m \overset{*}{\rightharpoonup} g
    \qquad\text{in }L^\infty(0,1;L^2(\mathbb T^d;\mathbb R^d)).
    \]
    For every $\psi \in L^2(\mathbb T^d;\mathbb R^d)$ and every
    $\zeta \in C_c^\infty(0,1)$, since
    $w_m \in W^{1,\infty}(0,1;L^2(\mathbb T^d;\mathbb R^d))$, we have
    \[
    \int_0^1 \langle \partial_t w_m(t),\psi\rangle_{L^2}\,\zeta(t)\,dt
    =
    -\int_0^1 \langle w_m(t),\psi\rangle_{L^2}\,\zeta'(t)\,dt.
    \]
    Passing to the limit as $m\to\infty$, using the weak-* convergence of
    $\partial_t w_m$ and the strong convergence
    $w_m \to w$ in $C([0,1];L^2(\mathbb T^d;\mathbb R^d))$, yields
    \[
    \int_0^1 \langle g(t),\psi\rangle_{L^2}\,\zeta(t)\,dt
    =
    -\int_0^1 \langle w(t),\psi\rangle_{L^2}\,\zeta'(t)\,dt.
    \]
    Thus $g$ is the weak time derivative of $w$, i.e. $\partial_t w = g$. In particular,
    \[
    w \in W^{1,\infty}(0,1;L^2(\mathbb T^d;\mathbb R^d)).
    \]
    Moreover, since $g$ is a weak-* limit of a sequence bounded by
    $C\|v\|_{H^1(\mathbb T^d)}$ in $L^\infty_tL^2_x$, we have
    \[
    \|\partial_t w\|_{L^\infty(0,1;L^2(\mathbb T^d))}
    \le C\|v\|_{H^1(\mathbb T^d)}.
    \]
    Let $\eta$ be any mean-zero trigonometric polynomial and set $\xi=\nabla\eta$, and let $\zeta \in C^\infty_c(0,1)$. For all $m$
    large enough, $\xi\in V_m$, so \cref{eq:galerkin_weak} gives
    \[
    \int_0^1\!\!\int_{\mathbb T^d}
    \rho_t \,\big\langle \partial_t w_m+\nabla w_m\,\nabla\phi_t,\xi\big\rangle
    \zeta(t)\,dx\,dt=0
    \qquad
    \forall \zeta\in C_c^\infty(0,1).
    \]
    Integrating the first term by parts in time yields
    \begin{multline*}
    0 =
    -\int_0^1\int_{\mathbb T^d}
    \rho_t\langle w_m,\xi\rangle \zeta'(t)\,dx\,dt
    -\int_0^1\int_{\mathbb T^d}
    (\partial_t\rho_t)\langle w_m,\xi\rangle \zeta(t)\,dx\,dt \\
    +\int_0^1\int_{\mathbb T^d}
    \rho_t\langle \nabla w_m\,\nabla\phi_t,\xi\rangle \zeta(t)\,dx\,dt.
    \end{multline*}
    Passing to the limit, using the strong convergence of $w_m$ in $C([0,1];L^2)$ for the first two terms
    and the weak-$*$ convergence of $\nabla w_m$ in $L^\infty_tL^2_x$ for the last term, we obtain
    \begin{multline*}
    0
    =
    -\int_0^1\int_{\mathbb T^d}
    \rho_t\langle w,\xi\rangle \zeta'(t)\,dx\,dt
    -\int_0^1\int_{\mathbb T^d}
    (\partial_t\rho_t)\langle w,\xi\rangle \zeta(t)\,dx\,dt \\
    +\int_0^1\int_{\mathbb T^d}
    \rho_t\langle \nabla w\,\nabla\phi_t,\xi\rangle \zeta(t)\,dx\,dt.
    \end{multline*}
        Since $w \in W^{1,\infty}(0,1;L^2(\mathbb T^d;\mathbb R^d))$, we may apply
    the Banach-valued integration-by-parts formula with the test function
    \[
    \Psi(t) \triangleq \rho_t \cdot \xi\cdot \zeta(t)
    \in W^{1,1}(0,1;L^2(\mathbb T^d;\mathbb R^d)).
    \]
    This gives
    \[
    \int_0^1\int_{\mathbb T^d}
    \rho_t \langle \partial_t w,\xi\rangle \zeta(t)\,dx\,dt
    =
    -\int_0^1\int_{\mathbb T^d}
    \rho_t\langle w,\xi\rangle \zeta'(t)\,dx\,dt
    -\int_0^1\int_{\mathbb T^d}
    (\partial_t\rho_t)\langle w,\xi\rangle \zeta(t)\,dx\,dt.
    \]
    Therefore the preceding identity is equivalent to
    \[
    \int_0^1\int_{\mathbb T^d}
    \rho_t \big\langle \partial_t w + \nabla w\,\nabla\phi_t,\xi\big\rangle
    \zeta(t)\,dx\,dt = 0
    \qquad\forall\,\zeta\in C_c^\infty(0,1).
    \]
    For $\eta\in C^\infty(\mathbb T^d)$, let $S_N\eta$ be its Fourier partial sums. Then
    $S_N\eta$ is a trigonometric polynomial for each $N$, and $S_N\eta\to\eta$ in
    $C^1(\mathbb T^d)$, since the Fourier series of $\eta$ and of each $\partial_j\eta$
    converge uniformly \citep[Proposition 1.6]{casselman2016fourier}. Hence
    \[
    \nabla S_N\eta \to \nabla\eta
    \qquad\text{uniformly on }\mathbb T^d,
    \]
    so trigonometric gradients are dense in the space of smooth periodic gradient fields.
    It then follows that
    \[
    \Pi_{\mu_t}\bigl(\partial_t w_t+\nabla w_t\,\nabla\phi_t\bigr)=0
    \]
    in the distributional sense on $(0,1)$. Moreover, because each $w_m(t)\in V_m\subset T_{\mu_t}\mathcal P_2(\mathbb T^d)$
    and $T_{\mu_t}\mathcal P_2(\mathbb T^d)$ is closed in $L^2(\mu_t)$, the strong $L^2$ convergence
    implies
    \[
    w_t\in T_{\mu_t}\mathcal P_2(\mathbb T^d)
    \qquad\forall t\in[0,1].
    \]
    Thus $w$ is a parallel field along $(\mu_t)$ with initial value $v$. By uniqueness of parallel
    transport along regular geodesics \citep[page 104]{ambrosio2012user}, we must have
    \[
    w_t = \PT_{\nu\to\mu_t}(v)
    \qquad\forall t\in[0,1].
    \]
    Finally, the uniform $H^1$ bound passes to the limit by weak lower semicontinuity. In particular, for a.e.
    $t\in[0,1]$,
    \[
    \|w_t\|_{H^1(\mathbb T^d)}
    \le \liminf_{m\to\infty}\|w_m(t)\|_{H^1(\mathbb T^d)}
    \le C\|v\|_{H^1(\mathbb T^d)}.
    \]
    Since $w\in C([0,1];L^2)$, this bound extends from a full-measure subset of times to all
    $t\in[0,1]$ by taking sequences $t_n\to t$ from that full-measure set and using weak compactness
    in $H^1$ together with strong convergence in $L^2$. Therefore
    \[
    \|w_t\|_{H^1(\mathbb T^d)} \le C_{\mathrm{par}}\|v\|_{H^1(\mathbb T^d)}
    \qquad\forall t\in[0,1],
    \]
    as claimed.
\end{proof}

\begin{restatable}{corollary}{} \label{lem:pt_ac}
    Under the hypotheses of \Cref{prop:h1}, the parallel transport field \[w_t=\PT_{\nu\to\mu_t}(v)\] belongs to
    \[
    W^{1,\infty}(0,1;L^2(\mathbb T^d;\mathbb R^d))
    \subset AC([0,1];L^2(\mathbb T^d;\mathbb R^d)).
    \]
    Under the uniform density bounds, the same conclusion holds with
    $L^2(\nu;\mathbb R^d)$ in place of $L^2(\mathbb T^d;\mathbb R^d)$.
    The same statement holds for any second regular geodesic $(\mu_t')$ satisfying the
    same hypotheses.
\end{restatable}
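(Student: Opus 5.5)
This corollary follows almost directly from \Cref{prop:h1}. That proposition asserts that the parallel field $w_t = \PT_{\nu\to\mu_t}(v)$ lies in $W^{1,\infty}(0,1;L^2(\mathbb T^d;\mathbb R^d))$, with
\[
\|\partial_t w\|_{L^\infty(0,1;L^2(\mathbb T^d))}\le C\|v\|_{H^1(\mathbb T^d)}.
\]
So the plan has three parts: (i) upgrade this Sobolev-in-time regularity to absolute continuity, (ii) pass between $L^2(\mathbb T^d)$ and $L^2(\nu)$ using the density bounds, and (iii) observe that nothing in the argument singles out the geodesic $(\mu_t)$.

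\textbf{Step (i).} I will use the standard fact about Bochner–Sobolev spaces on an interval. If $w\in W^{1,1}(0,1;X)$ for a Banach space $X$, then $w$ has a representative satisfying
\[
w(t)-w(s)=\int_s^t \partial_r w(r)\,dr
\]
as a Bochner integral for all $s,t\in[0,1]$ (see, e.g., Evans \S5.9). Here $X=L^2(\mathbb T^d;\mathbb R^d)$, and $W^{1,\infty}\subset W^{1,1}$ since $(0,1)$ has finite measure. It follows that
\[
\|w(t)-w(s)\|_{L^2}\le \|\partial_t w\|_{L^\infty_tL^2_x}\,|t-s|,
\]
so $w$ is in fact Lipschitz, and hence absolutely continuous.

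The one point to check is that this continuous representative is the object the paper calls $w_t$. In the proof of \Cref{prop:h1}, the limit $w$ was obtained in $C([0,1];L^2)$ via Arzelà–Ascoli, and it was identified pointwise in $t$ with $\PT_{\nu\to\mu_t}(v)$. So no change of representative is needed. Even so, I will state the identification explicitly, since it is the only place the argument could be sloppy.

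\textbf{Step (ii).} Since $\lambda\le\rho_\nu\le\Lambda$, the norms satisfy
\[
\sqrt{\lambda}\,\|\cdot\|_{L^2(\mathbb T^d)}\le\|\cdot\|_{L^2(\nu)}\le\sqrt{\Lambda}\,\|\cdot\|_{L^2(\mathbb T^d)}.
\]
The identity map between the two spaces is therefore a bounded linear isomorphism. Bounded linear isomorphisms preserve Bochner integrability, the weak time derivative, and the Lipschitz and absolute-continuity properties, with constants changing by at most $\sqrt{\Lambda/\lambda}$.

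\textbf{Step (iii).} \Cref{prop:h1} was proved for an arbitrary regular geodesic satisfying its hypotheses. Applying it verbatim to $(\mu_t')$ and repeating Steps (i) and (ii) gives the same conclusion for $w_t'$.

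I do not expect a real obstacle here. The only step needing care is the representative issue in Step (i), which is resolved by the $C([0,1];L^2)$ construction already carried out in \Cref{prop:h1}.
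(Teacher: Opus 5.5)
Your proposal is correct and follows the paper's own argument. You take $w\in W^{1,\infty}(0,1;L^2(\mathbb T^d;\mathbb R^d))$ from \Cref{prop:h1} and deduce that $w$ is Lipschitz, hence absolutely continuous. You then transfer to $L^2(\nu;\mathbb R^d)$ via $\sqrt{\lambda}\|\cdot\|_{L^2}\le\|\cdot\|_{L^2(\nu)}\le\sqrt{\Lambda}\|\cdot\|_{L^2}$ and repeat verbatim for $(\mu_t')$. Your extra care over the representative is not in the paper but is accurate: the Arzel\`a--Ascoli limit is already continuous and is identified with $\PT_{\nu\to\mu_t}(v)$ for every $t$.
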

\begin{proof}
The proof of \Cref{prop:h1} shows, in addition to the pointwise $H^1$ bound,
that the limit parallel transport field satisfies $w \in W^{1,\infty}(0,1;L^2(\mathbb T^d;\mathbb R^d)).$ Thus, $w \in AC([0,1];L^2(\mathbb T^d;\mathbb R^d)).$ Under the uniform density bounds, the norms of
$L^2(\mathbb T^d;\mathbb R^d)$ and $L^2(\nu;\mathbb R^d)$ are uniformly
equivalent, so the same conclusion holds with $L^2(\nu;\mathbb R^d)$ in place
of $L^2(\mathbb T^d;\mathbb R^d)$. The same argument applies verbatim to any
second regular geodesic $(\mu_t')$ satisfying the same hypotheses.
\end{proof}

\begin{restatable}[Lax-Milgram Theorem, {\citet[§~6.2.1]{evans2022partial}}]{thm}{} \label{lax-milgram}
    Let $H$ be a real Hilbert space with inner product $\langle \cdot, \cdot \rangle_H$ and norm $\|\cdot \|_H$. Assume that $B: H \times H \rightarrow \mathbb{R}$ is a bilinear mapping for which there exist constants $\alpha, \beta > 0$ such that 
    \[|B(u, v)| \leq \alpha \|u\|_H\|v\|_H \quad \forall\, u,v \in H\]
    and 
    \[\beta \|u\|^2_H \leq B(u, u) \quad \forall \, u \in H.\]
    Finally, let $f: H \rightarrow \mathbb{R}$ be a bounded linear functional on $H$. Then there exists a unique element $u \in H$ such that 
    \[B(u, v) = \langle f, v\rangle_H\]
    for all $v \in H.$
\end{restatable}

\section{Proofs of Results From the Text} \label{sec: proof of main results}

\subsection{Proof of \Cref{Gaussian parallel transport}} \label{sec: proof of Gaussian parallel transport}
In this derivation we will abuse notation and denote $\mu_t$ as both the density and the probability measure associated with the Gaussian of interest -- since Gaussians admit a density, there should be no confusion. Given two Gaussian probability measures 
\[\mu_0 = N(m_0, \Sigma_0),  \quad \mu_1 = N(m_1, \Sigma_1)\]
with $\Sigma_0, \Sigma_1 \in \mathbb{S}_+^d$, the optimal transport map and tangent vector are
\[T(x) = m_1 - Bm_0 + Bx, \quad v(x) = m_1 - Bm_0 + (B - I_d)x\]
and $B$ is the \textit{Brenier matrix} given by 
\[B = \Sigma_0^{-1/2}\left(\Sigma_0^{1/2}\Sigma_1\Sigma_0^{1/2}\right)^{1/2}\Sigma_0^{-1/2}\]
as derived in \citet{ojm/1326291215}. We can obtain the interpolating geodesic by pushing $\mu_0$ forward through the map 
\[F_t(x) = m_t + M_t(x - m_0)\]
where $m_t = (1-t)m_0 + tm_1$ and $M_t = (1-t)I_d + tB$,  which implies that the interpolating measures are given by
\[\mu_t = (F_t)_\#\mu_0 = N(m_t, M_t\Sigma_0M_t^{\top}).\]
Now we want to obtain the Eulerian velocity $\nabla\varphi_t$ of the geodesic. Since $\man = \mathbb{R}^d$ we have 
\begin{align*}
    \nabla\varphi_t(y) = ((T - I_d) \circ F^{-1}_t)(y) &= m_1 - Bm_0 + (B - I_d)(F_t^{-1}(y)) \\
    &= (m_1 - m_0) + \left(B - I_d\right)M_t^{-1}\left(y - m_t\right).
\end{align*}
Now consider an affine tangent vector of the form $v_t(x) = a_t + A_t(x-m_t) \in T_{\mu_t}\mathcal{P}_2(\mathbb{R}^d), A_t \in \mathbb{S}^d$ along the geodesic $\mu_t$. Note that we restrict our search for a parallel field to affine maps as they preserve Gaussianity -- furthermore, we require $A_t$ to be symmetric to ensure that $v_t$ is truly a gradient field. For $v_t$ to be the parallel transport along the geodesic $(F_t)_\#\mu_0$, it needs to be the a.e. solution to the PDE
\[\nabla\cdot \left(\mu_t \left(\partial_tv_t + \nabla v_t \cdot \nabla\varphi_t \right)\right) = 0.\]
We have $\partial_tv_t = \dot a_t + \dot A_t (x - m_t) - A_t(m_1 - m_0)$ and $\nabla v_t = A_t$, which taken together yield the condition
\[\nabla \cdot (\mu_tw_t) = 0, \quad w_t \triangleq \dot a_t + \left(\dot A_t + A_t(B-I_d)M_t^{-1}\right)(x-m_t).\]
Applying the product rule for divergence,
\begin{align*}
    \nabla\cdot (\mu_tw_t) &=  \mu_t (\nabla \cdot w_t) + \langle w_t, \nabla \mu_t\rangle = \mu_t\left(\nabla \cdot w_t + \left\langle w_t, \nabla \log \mu_t\right\rangle\right).
\end{align*}
For the first term, 
\[\nabla \cdot w_t = \text{tr}(\nabla w_t) = \text{tr}\left(\dot A_t + A_t(B - I_d)M_t^{-1}\right) \triangleq \text{tr}\left(C_t\right).\]
For the second term, we have $\nabla \log \mu_t = -M_t^{-\top}\Sigma^{-1}_0M_t^{-1}\left(x - m_t\right) \triangleq - Q_t(x-m_t).$ Putting it together, since $\mu_t > 0$ we know that $v_t$ is a parallel field along $\mu_t$ if  
\[\text{tr}(C_t) - \dot a_t^{\top}Q_t(x-m_t) - (x-m_t)C_t^{\top}Q_t(x-m_t) = 0 \quad \forall \, x \in \mathbb{R}^d.\]
Thus, we require that all coefficients of the second-order polynomial above be zero. Since $Q_t$ is invertible, $\dot a_t \equiv 0$. A sufficient set of conditions on $\dot A_t$ and $A_t$ is described by 
\[\text{tr}\left(C_t\right) = 0 \quad \text{and} \quad C_t^{\top}Q_t \text{ is skew-symmetric}.\]
We will start with the second condition. Expanding, we have
\begin{align*}
\left(\dot A_t + A_t(B - I_d)M_t^{-1}\right)^{\top}Q_t + Q_t^{\top}\left(\dot A_t + A_t(B - I_d)M_t^{-1}\right) &= 0 \\
\implies \dot A_tQ_t + Q_t\dot A_t = S_t^{\top}A_tQ_t + Q_tA_tS_t \quad \text{with} \quad S_t \triangleq (I_d - B)&M_t^{-1}. \tag{\textasteriskcentered}
\end{align*}
Note that $\dot A_t$ can be recovered with well established tools, as (\textasteriskcentered) is simply an instantiation of the continuous Lyapunov equation. The trace condition $\text{tr}(C_t) = 0$ follows from skew symmetry, as skew symmetry of $C_t^{\top} Q_t$ and the fact that $Q_t$ is positive semi-definite guarantees that $C_t$ has zero trace. Therefore, 
\[v_t(x) = a_0 + A_t(x - m_t) \quad \text{with} \quad A_t =  A_0 + \int_0^t\dot A_s \, ds\]
is the parallel transport of $v_0(x)$ along the curve $\mu_t$. \qed{}

\subsection{Proof of \Cref{gradient representer thm}} \label{sec: proof of gradient representer thm}

Firstly, Theorem 1 of \citet{zhou2008derivative} proves the derivative-reproducing property $\partial_{j} f(x) = \langle \partial_{j} K(x, \cdot), f\rangle_{\mathcal{H}}$ where the partial derivative is taken with respect to the $j$-th coordinate of the second argument. Now consider the function class $\text{span}\{\partial_{j}K(x_i, \cdot) : i \in \{1, \dots , n\}, j \in \{1, \dots, d\}\}$. Any function $f \in \mathcal{H}$ can therefore be decomposed as $f = f^{\parallel} + f^{\perp}$ where $f^\parallel$ is in the class and $f^\perp$ is in the orthogonal complement -- this implies that $\|f\|_{\mathcal{H}}^2 = \|f^{\parallel}\|_{\mathcal{H}}^2 + \|f^\perp\|_{\mathcal{H}}^2 \geq \|f^{\parallel}\|_{\mathcal{H}}^2$, and 
\begin{align*}
    \partial_{j}f(x) &= \langle \partial_{j}K(x, \cdot), f \rangle _{\mathcal{H}} \tag{reproducing property} \\
    &= \langle \partial_{j}K(x, \cdot), f^{\parallel} \rangle _{\mathcal{H}} + \langle \partial_{j}K(x, \cdot), f^{\perp} \rangle _{\mathcal{H}} \\
    &= \langle \partial_{j}K(x, \cdot), f^{\parallel} \rangle _{\mathcal{H}} \\
    &= \partial_{j}f^\parallel(x) \tag{reproducing property}
\end{align*}
for any $x \in \{x_1, \dots, x_n\}$ by definition of $f^\perp$. Together, these imply that the optimization problem in \Cref{eq: sample gradient RKHS problem} is minimized by choosing $\hat f_\lambda \in \text{span}\{\partial_{j} K(x_i, \cdot): i \in \{1, \dots, n\}, j \in \{1, \dots, d\}\}.$ This means it can be written as 
\begin{align*}
    \hat f_\lambda (\cdot ) = \sum_{i = 1}^n \langle c_i, \nabla_2 K(x_i, \cdot) \rangle \quad \text{and} \quad \nabla \hat f_\lambda (x_j) = \sum_{i = 1}^n \nabla_2^2K(x_i, x_j) c_i
\end{align*}
where the inner product is the standard Euclidean inner product on $\mathbb{R}^d$ and $\nabla_i$ denotes the gradient with respect to the $i$-th argument. 
\qed{}

\subsection{Proof of \Cref{suff cond for density}} \label{sec: proof of suff cond for density}
Let 
\[\mathcal{F}_\mu \triangleq \left\{\nabla f : f \in \mathcal{H}\right\} \quad \text{and} \quad \mathcal{A}_\mu \triangleq \{\nabla f: f \in C^\infty_c(\mathbb{R}^d)\}\]
We will first show that for any $\nabla f \in \mathcal{F}_\mu$ there exists a sequence $\nabla f_m \in \mathcal{A}_\mu$ such that $\nabla f_m \rightarrow \nabla f$ in the $L^2(\mu)$ topology. This will imply the inclusion $\mathcal{F}_\mu \subseteq \overline{\mathcal{A}_\mu}^{L^2(\mu)}$.  By assumption, we know that $\mathcal{H} = H^\tau(\mathbb{R}^d)$ as sets and they are norm equivalent. Since
\[H^\tau(\mathbb{R}^d) = \overline{C_c^\infty(\mathbb{R}^d)}^{H^\tau}\]
by definition, we know that $C^\infty_c(\mathbb{R}^d)$ is dense in $\mathcal{H}$ with respect to the Sobolev norm $H^\tau$. This means that for every $f \in \mathcal{H}$ there exists a sequence $f_m \in C^{\infty}_c$ such that 
\[\|f_m - f\|_{H^{\tau}(\mathbb{R}^d)} \rightarrow 0.\]
Now we will show that this implies density with respect to the $L^2(\mu)$ norm. The assumption that $\tau \geq 1$ implies that $\|f\|_{H^{\tau}(\mathbb{R}^d)}^2 = \sum_{|\alpha| \leq k}\|D^\alpha f\|_{L^2(\mathbb{R}^d)}^2 \geq \|\nabla f\|_{L^2(\mathbb{R}^d)}^2$, implying that the operator $\nabla: H^\tau(\mathbb{R}^d) \rightarrow L^2(\mathbb{R}^d)$ is Lipschitz with constant $1$. Thus, if $f_m \rightarrow f$ in $H^\tau(\mathbb{R}^d)$ then
\[\|\nabla f_m - \nabla f\|_{L^2(\mathbb{R}^d)} \rightarrow 0.\]
Since the Lebesgue density $\rho$ of $\mu$ is bounded, convergence in $L^2(\mathbb{R}^d)$ implies convergence in $L^2(\mu)$. In particular, $\|\nabla f_m - \nabla f\|_{L^2(\mu)} \rightarrow 0$. Thus, for any $\nabla f \in \mathcal{F}_\mu$ there exists a sequence in $\mathcal{A}_\mu$ converging to it. This implies the inclusion 
\[\mathcal{F}_\mu \subseteq \overline{\mathcal{A}_\mu}^{L^2(\mu)} \implies \overline{\mathcal{F}_\mu}^{L^2(\mu)}\subseteq \overline{\mathcal{A}_\mu}^{L^2(\mu)}.\]
Now we will see that the reverse conclusion is trivial. By assumption $\mathcal{H} \cong H^\tau(\mathbb{R}^d)$  and $ H^\tau (\mathbb{R}^d) = \overline{C_c^\infty(\mathbb{R}^d)}^{H^\tau}$, which implies that $C^\infty_c(\mathbb{R}^d) \subseteq \mathcal{H}$. Taking the $L^2(\mu)$ closure finishes the proof.
\qed{}

\subsection{Proof of \Cref{one step parallel transport approx result}.} \label{sec: proof of one step parallel transport approx}
Let $v \in L^2(\nu)$ be a vector field on $M$ and define the following function, 
\begin{equation*}
    L^2(\nu_s) \ni \tau_0^s(v)(x) \triangleq \begin{cases}
        \text{the parallel transport of $v(F_s^{-1}(x))$ on $\man$ along the curve} \\
        r \mapsto F_r(F_s^{-1}(x)) \text{ from $r = 0$ to $r = s$ }.
    \end{cases}
\end{equation*}
By the triangle inequality, we have
\begin{align*}
    \|w_s - \tau_0^s(v)\|_{L^2(\nu_s)} \leq \|w_s - \Pi_{\nu_s}(\tau_0^s(v))\|_{L^2(\nu_s)} + \|\Pi_{\nu_s}(\tau_0^s(v)) - {\PT}_{\nu \rightarrow \nu_s}(v)\|_{L^2(\nu_s)}.
\end{align*}
By equations 4.18 and 4.10 of \citet{gigli2012second}, we have that the second term satisfies
\begin{align*}
    \|\Pi_{\nu_s}(\tau_0^s(v)) - {\PT}_{\nu \rightarrow \nu_s}(v)\|_{L^2(\nu_s)} \leq \left(e^{\int_0^1{\operatorname{Lip}}(\nabla \psi_r)dr} - 1\right)^2\|v\|_{L^2(\nu)}\left(\int_0^s{\operatorname{Lip}}(\nabla\psi_r)dr\right)^2
\end{align*}
where ${\operatorname{Lip}}(\nabla\psi_r)$ is the largest Lipschitz constant of the velocity field generating the geodesic $\mathbf{exp}_{\nu}(su).$ Now we will handle the first term in the triangle inequality. Note that $\Pi_{\nu_s}: L^2(\nu_s) \rightarrow T_{\nu_s}\mathcal{P}_{2}(\man)$ is a projection onto a closed and linear subset, and it is therefore non-expansive. This allows us to say
\begin{align*}
    \|w_s - \Pi_{\nu_s}(\tau_0^s(v))\|_{L^2(\nu_s)} &\leq \left\|s^{-1}j_{\nu, u}\left(0, v\right)(s)  - \tau_0^s(v) \right\|_{L^2(\nu_s)} \\
    &= \bigg(\int \|s^{-1}j_{F_s^{-1}(x), u(F^{-1}_s(x))}\left(0, v(F_s^{-1}(x))\right)(s) \\
    &\hspace{45mm}- {\PT}^\man_{F_s^{-1}(x) \rightarrow x}(v(F_s^{-1}(x))) \|_g^2 \,d\nu_s(x)\bigg)^{1/2} \\
    &= \left(\int \|s^{-1}j_{y, u(y)}\left(0, v(y)\right)(s)  - {\PT}^\man_{y \rightarrow F_s(y)}(v(y)) \|_g^2 \,d\nu(y)\right)^{1/2}.
\end{align*}
Now we'll apply \Cref{fanning approximation} to say 
\begin{align*}
    \|s^{-1}j_{y, u(y)}\left(0, v(y)\right)(s)  - {\PT}^{\man}_{y \rightarrow F_s(y)}(v(y))\|_g \leq As^2\|v(y)\|_g
\end{align*}
for $s$ sufficiently small, which implies 
\begin{align*}
    \|w_s - \Pi_{\nu_s}(\tau_0^s(v))\|_{L^2(\nu_s)} &\leq \left(\int A^2s^4\|v\|_g^2 \,d\nu\right)^{1/2}  = As^2\|v\|_{L^2(\nu)}.
\end{align*}
The second part of the result follows from the fact that $t \mapsto \nu_t$ is assumed to be strongly regular and thus admits a velocity field with a spatial Lipschitz constant that integrates to $O(s)$ in the bound of the second term.
\qed{}

\subsection{Proof of \Cref{full approximation of parallel transport along regular curves}.} \label{sec: proof of full approximation of parallel transport along regular curves}
By \Cref{one step parallel transport approx result} we know that for each $k \in \{1\, \dots , N\}$ we have \[\|\widehat{{\PT}}_k(w) - {\PT}_k(w)\|_{L^2(\nu_{ks})} \leq Cs^2\]
for some $C > 0$ independent of $k$. Now let $e_k \triangleq \hat v_k - v_k.$ By the triangle inequality
\begin{align*}
    \|\hat v_k - v_k\|_{L^2(\nu_{sk})}  &= \|\widehat{{\PT}}_k(\hat v_{k-1}) - {\PT}_k(v_{k-1})\|_{L^2(\nu_{sk})}\\ &\leq \|\widehat{{\PT}}_k(\hat v_{k-1}) - {\PT}_k(\hat v_{k-1})\|_{L^2(\nu_{sk})} + \|{\PT}_k(\hat v_{k-1}) - {\PT}_k(v_{k-1})\|_{L^2(\nu_{sk})}.
\end{align*}
Note that the first term is bounded by $Cs^2\|\hat v_{k - 1}\|_{L^2(\nu_{(k-1)s})}$ due to \Cref{one step parallel transport approx result}. Since parallel transport is an isometry, we have that the second term can be rewritten to
\[\|{\PT}_k(\hat v_{k-1}) - {\PT}_k(v_{k-1})\|_{L^2(\nu_{sk})} = \|\hat v_{k-1} - v_{k-1}\|_{L^2(\nu_{s(k-1)})} = \|e_{k-1}\|_{L^2(\nu_{s(k-1)})}.\]
Putting it together, we have the recurrence relation 
\[\|e_k\|_{L^2(\nu_{sk})} \leq Cs^2\|\hat v_{k - 1}\|_{L^2(\nu_{(k-1)s})} + \|e_{k-1}\|_{L^2(\nu_{s(k-1)})}.\]
By triangle inequality and the isometry of parallel transport, we have 
\[\|\hat v_{k-1}\|_{L^2(\nu_{(k-1)s})} \leq \|\hat v_{k-1} - v_{k - 1}\|_{L^2(\nu_{(k-1)s})} + \|v_{k - 1}\|_{L^2(\nu_{(k-1)s})} = \|e_{k-1}\|_{L^2(\nu_{(k-1)s})} + \|v\|_{L^2(\nu)}.\]
This implies 
\[\|e_k\|_{L^2(\nu_{sk})} \leq Cs^2\|v\|_{L^2(\nu)} + \|e_{k-1}\|_{L^2(\nu_{s(k-1)})}\left( 1 + Cs^2\right).\]
One can verify that the recurrence relation implies
\begin{multline} \label{eq: error recursion}
\|e_k\|_{L^2(\nu_{sk})} \leq Cs^2\|v\|_{L^2(\nu)}\sum_{i = 0}^k(1 + Cs^2)^i\\ = Cs^2\|v\|_{L^2(\nu)} \frac{(1 + Cs^2)^{k} - 1}{Cs^2} = \|v\|_{L^2(\nu)}\left((1 + Cs^2)^{k} - 1\right).\end{multline}
Since $k \leq N = 1/s$ it holds that $(1 + Cs^2)^k \leq (1 + Cs^2)^{1/s}  = 1 + Cs + O(s^2)$, where the last equality is shown in \Cref{recurrence relation bound}. It follows that $\|e_k\|_{L^2(\nu_{sk})} = O(s)$
for all $k \leq N$. 
\qed{}

\subsection{Proof of \Cref{assumption implications}} \label{sec: proof of assumption implications}
Under the assumptions of the result statement, the optimal transport coupling between $\mu$ and $\nu$ is supported on a map of the form $\nabla\varphi$ (\Cref{brenier}). For quadratic OT (as we have), the potential $\varphi$ solves the Monge-Ampere equation 
\[\text{det}(\nabla^2\varphi(x)) = f(x)/g(\nabla \varphi(x)) \quad \text{s.t.} \quad \nabla\varphi(\Omega) = \Omega^*.\]
By Theorem 1.1 of \citet{trudinger2013note}, $\nabla\varphi$ is a $C^2$ diffeomorphism from $\overline{\Omega}$ to $\overline{\Omega^*}$. It follows that $\nabla \varphi$ is bi-Lipschitz on $\overline{\Omega}$ since $\overline{\Omega}$ is compact. Now we will compute the Eulerian tangent field generating the geodesic and show that it is spatially Lipschitz. Define the interpolation map $F_t(x) \triangleq (1-t)x + t\nabla\varphi(x)$, and observe that its gradient is given by $\nabla F_t = (1-t)\text{id} + t\nabla^2 \varphi$.  Since $\nabla\varphi$ is a diffeomorphism between compact domains, $\nabla^2\varphi$ must have eigenvalues bounded away from zero 
\[\inf_{x \in \overline{\Omega}}\lambda_{\min}(\nabla^2\varphi) \geq m > 0\]
implying that 
\begin{equation} \lambda_{\min}(\nabla F_t) \geq 1 - t + t\cdot \lambda_{\min}(\nabla ^2\varphi) \geq \min\{1, m\} > 0 \label{min singular value of flow map jacobian}
\end{equation}
implying that $F_t$ is invertible. Moreover, since $\nabla\varphi$ is $C^2$ on a compact domain, $\|\nabla^2\varphi\|_{\text{op}}$ must be bounded on $\Omega$,
\[\sup_{x \in \Omega} \|\nabla^2\varphi(x)\|_{\text{op}} \leq M < \infty.\]
The Eulerian velocity field generating the geodesic is given by $v_t(y) = ((\nabla\varphi - \text{id}) \circ F^{-1}_t)(y)$, which implies \[\nabla v_t(y) = \nabla^2 \varphi(x)\cdot (\nabla F_t(x))^{-1} - (\nabla F_t(x))^{-1} = (\nabla^2 \varphi(x) - I_d)\cdot (\nabla F_t(x))^{-1}\quad x = F_t^{-1}(y).\] Define $\Omega_t = F_t(\Omega)$ and observe that
\[\sup_{y \in \Omega_t}\|\nabla v_t(y)\|_2 \leq \sup_{x\in \Omega}\|\nabla^2\varphi(x) - I_d\|_{\text{op}}\|(\nabla F_t(x))^{-1}\|_{\text{op}} \leq \frac{M+1}{\min\{1, m\}} < \infty.\]
Since the bound holds for all $t$, we know that ${\operatorname{Lip}}(v_t) \leq (M+1)/\min\{1,m\}$ for all $t$. Thus,
\[\int_0^1 {\operatorname{Lip}}(v_t) \, dt < \infty.\]
Now it remains to show that $\int_0^1\|v_t\|_{L^2(\mu_t)}^2 dt < \infty.$ We can bound this directly as follows,
\begin{align*}
    \int_0^1 \|v_t\|_{L^2(\mu_t)}^2 dt &= \int_0^1\int_\Omega\left\|\nabla\varphi(x) - x\right\|_2^2 \, d\mu \, dt \tag{$\mu_t = (F_t)_\#\mu$} \\
    &\leq \int_0^1 \sup_{x \in \Omega}\|\nabla \varphi(x) - x\|_2^2 \, dt \\
    & < \infty
\end{align*}
by compactness and continuity of $\nabla \varphi$. It follows that the geodesic induced by $\nabla \varphi$ is indeed regular.
\qed{}

\subsection{Proof of \Cref{one step error of W traj reconstruction}.} \label{proof of one step error of W traj reconstruction}
In this proof let $\exp$ denote the exponential map on $\mathbb{T}^d$. By the triangle inequality
\begin{align*}
    W_2(\hat\mu_{i+1}^*, \mu_{i+1}^*) &= W_2(\exp( \hat{v})_\#\hat{\mu}_i^*, \exp( \nabla\varphi_i^*)_\#\mu_i^*) \\
    &\leq W_2(\exp(\hat{v})_\#\hat{\mu}_i^*, \exp(\nabla\varphi_i^*)_\#\hat{\mu}_i^*) + W_2(\exp(\nabla\varphi_i^*)_\#\hat{\mu}_i^*, \exp(\nabla \varphi_i^*)_\#\mu_i^*).
\end{align*}
For the second term, recall that for any measurable map $F$ that is $L$-Lipschitz we have $W_2(F_\#\mu, F_\#\nu) \leq L\cdot W_2(\mu, \nu).$ To see this, let $\gamma^*$ be the optimal coupling of $\mu$ and $\nu$ and observe that
\begin{align*}
    W_2^2(F_\#\mu, F_\#\nu) &\leq \int_{\man \times \man}d^2(F(x), F(y))\,d\gamma^*(x,y) \\
    &\leq L^2\int_{\man \times \man}d^2(x, y)\,d\gamma^*(x,y) \\
    &= L^2\cdot W_2^2(\mu, \nu).
\end{align*}
Thus, we need to determine the Lipschitz constant of the map $\exp(\nabla\varphi_i^*)$. By assumption, $\nabla\varphi^*_i$ is spatially Lipschitz and the exponential map on $\mathbb{T}^d$ is globally Lipschitz with a constant $1$. Thus,
\[W_2(\exp(\nabla\varphi_i^*)_\#\hat{\mu}_i, \exp(\nabla \varphi_i^*)_\#\mu_i^*) \le (1 + {\operatorname{Lip}}(\nabla \varphi_i^*))\cdot W_2(\hat\mu_i^*, \mu_i^*).\]
For the first term in the original inequality, we can choose the suboptimal coupling 
\[(\exp_X(\hat{v}(X)), \exp_X(\nabla\varphi_i^*(X)))\] 
where $X \sim \hat\mu_i^*$. Thus,
\begin{align*}
    W_2^2(\exp(\hat v)_\#\hat{\mu}_i^*, \exp(\nabla\varphi_i^*)_\#\hat{\mu}_i^*) &\leq \int_{\man}d^2_{\man}\big(\exp_x(\hat{v}(x)), \exp_x(\nabla \varphi_i^*(x))\big) \, d\hat\mu_i^*(x) \\
    &\leq \int_{\man} \left\|\hat{v}(x) - \nabla\varphi_i^*(x)\right\|_{2}^2 \, d\hat\mu_i^*(x) \\
    &\leq  \|\hat{v} - \nabla\varphi_i^*\|_{L^2(\hat\mu_i^*)}^2.
\end{align*}
Recall that $\hat{v} = \widehat{{\PT}}_{\nu_i \rightarrow \hat\mu_i^*}(\nabla \varphi_i)$, while $\nabla\varphi_i^* = {\PT}_{\nu_i \rightarrow \mu_i^*}(\nabla \varphi_i)$. Thus, we will split up the discrepancy between the two with Minkowski's inequality
\begin{align*}
    \|\hat{v} - \nabla\varphi_i^*\|_{L^2(\hat\mu_i^*)} \leq \|\hat{v} - {\PT}_{\nu_i \rightarrow \hat\mu_i^*}(\nabla \varphi_i)\|_{L^2(\hat\mu_i^*)} + \|{\PT}_{\nu_i \rightarrow \hat\mu_i^*}(\nabla \varphi_i) - {\PT}_{\nu_i \rightarrow \mu_i^*}(\nabla\varphi_i)\|_{L^2(\hat\mu_i^*)}.
\end{align*}
\Cref{full approximation of parallel transport along regular curves} and the norm equivalence of $L^2(\mu_i^*; \mathbb{R}^d)$ and $L^2(\hat\mu_i^*; \mathbb{R}^d)$ imply that the first term is $O(N^{-1})$. The second term represents the stability Wasserstein parallel transport to perturbations of its destination measure. By the norm-equivalence of all measures in the admissible class and \Cref{wpt stability}, we know that the second term is bounded by $C_{\text{WPT}}\|\nabla \varphi_i\|_{H^1(\mathbb{T}^d)}W_2(\hat \mu_i^*, \mu_i^*).$ Putting it together, we obtain
\[W_2(\hat\mu_{i+1}^*, \mu_{i+1}^*) \leq \left((1 + {\operatorname{Lip}}(\nabla \varphi_i^*)) + \|\nabla \varphi_i\|_{H^1(\mathbb{T}^d)}C_{\text{WPT}} \right) W_2(\hat \mu_i^*, \mu_i^*) + O(N^{-1}).\]

\qed{}

\subsection{Proof of \Cref{pt recovers parallel trends}} \label{sec: proof of pt recovers parallel trends}
Let $R> 0$, let $a \in \mathbb{R}^d$ and define the test function $\psi_R(x) \triangleq \langle x, a \rangle \cdot \chi_R(x) \in C^1_c(\mathbb{R}^d)$ where $\chi_R(x) \in C^1(\mathbb{R}^d)$ and $\chi_R \equiv 1$ on $B(0,R)$ and $\chi_R \equiv 0$ outside $B(0, 2R)$ and $\|\nabla\chi_R(x)\|\lesssim R^{-1}$ for all $x$. This means that
\[\nabla \psi_R(x) = a\cdot\chi_R(x) + \langle x, a \rangle \nabla \chi_R(x)\]
Since $\|x\|_2/R \leq 2$ on the support of $\nabla \chi_R$, we can say
\[|\nabla \psi_R(x)| \le C\|a\|_2\]
for some $C$ independent of $R$, and $\nabla \psi_R(x) \rightarrow a$ as $R \rightarrow \infty$. Since $\nabla \varphi_t \in L^2(\nu_t; \mathbb{R}^d)$ for all $t$, we know that 
\[\int_0^1\int_{\mathbb{R}^d}\|\nabla \varphi_t\|_2 \, d\nu_t dt  < \infty.\]
Applying the dominated convergence theorem yields,
\[\int_{\mathbb{R}^d} \langle \nabla \psi_R, \nabla\varphi_t\rangle \, d\nu_t \rightarrow \int_{\mathbb{R}^d}\langle a, \nabla\varphi_t\rangle\, d\nu_t.\]
Further note that $\|\psi_R(x)\|_2\leq\|a\|_2\|x\|_2$, and for all $t \in [0, 1]$
\[\int_{\mathbb{R}^d}\|x\|_2\|a\|_2 \, d\nu_t \leq \|a\|_2M\]
for some $M$ independent of $t$. Applying the dominated convergence theorem again implies that
\[\int_{\mathbb{R}^d}\psi_R(x)\, d\nu_t \rightarrow \int_{\mathbb{R}^d}\langle x, a\rangle \, d\nu_t.\]
Further let $\eta(t) \in C_c^1((0, 1))$ and define $\phi_R(x, t) \triangleq \eta(t)\psi_R(x)$. Since $\nabla\varphi_t$ is the tangent velocity at $\nu_t$, we have
\begin{align*}
    \int_0^1 \int_{\mathbb{R}^d}\partial_t\phi_R \, d\nu_t dt + \int_0^1\int_{\mathbb{R}^d} \langle\nabla \phi_R, \nabla \varphi_t\rangle \,d\nu_tdt &= 0
\end{align*}
which implies 
\[\int_0^1\eta'(t)\underbrace{\int_{\mathbb{R}^d}\psi_R \, d\nu_t}_{\triangleq F_R(t)}dt  = -\int_0^1\eta(t)\underbrace{\int_{\mathbb{R}^d} \langle \nabla \psi_R(x), \nabla \varphi_t(x)\rangle \, d\nu_t}_{\triangleq G_R(t)}dt.\]
Thus, $G_R(t)$ is the \textit{weak} derivative of $F_R(t)$ -- in particular, this means that for a.e. $t$,
\[\frac{d}{dt} \int_{\mathbb{R}^d} \chi_R(x)\,\langle a, x\rangle \, d\nu_t(x) = \int_{\mathbb{R}^d} \langle \nabla \psi_R(x), \nabla \varphi_t(x)\rangle \, \, d\nu_t.\]
Passing to the limit yields
\[\frac{d}{dt} \int_{\mathbb{R}^d}\,\langle a, x\rangle \, d\nu_t(x) = \underbrace{\int_{\mathbb{R}^d} \langle a, \nabla \varphi_t(x)\rangle \, \, d\nu_t}_{T_{t}}.\]
By the same logic, we arrive at the analogous conclusion for $\mu_t^*$. In particular, 
\[\frac{d}{dt}\int_{\mathbb{R}^d} \langle a, x\rangle \, d\mu_t^*(x) = \underbrace{\int_{\mathbb{R}^d} \langle a, \nabla \varphi_t^*\rangle \, d\mu_t^*}_{T^*_t}. \]
Thus if we can show $T_t = T^*_t$, then 
\[\frac{d}{dt} \int_{\mathbb{R}^d} \langle a, x\rangle \, d\nu_t(x) = \frac{d}{dt} \int_{\mathbb{R}^d} \langle a, x\rangle \, d\mu_t^*(x) \quad  \forall \, a \in \mathbb{R}^d \]
which implies the theorem statement. By the isometry of parallel transport,
\begin{align*}
    T_t^* &= \langle a, \nabla\varphi_t^* \rangle_{L^2(\mu_t^*)} = \langle {\PT}_{\mu_t^* \rightarrow \nu_t}(a), \nabla\varphi_t\rangle_{L^2(\nu_t)}.
\end{align*}
By \Cref{affine PT preserves constant fields}, we know that ${\PT}_{\mu_t^* \rightarrow \nu_t}(a) = a$, implying that $T_t^* = T_t$  almost everywhere. \qed{} 

\begin{restatable}[Wasserstein Parallel Transport preserves constant fields.]{prop}{} \label{affine PT preserves constant fields}
    Let $(\lambda_t)_{t \in [0, 1]}$ be a regular curve of measures in $\mathcal{P}_2(\mathbb{R}^d)$ with a tangent velocity field $\nabla \varphi_t$, and let $a \in \mathbb{R}^d$. Then for any $t_0, t_1 \in [0, 1]$ we have 
    \[{\PT}_{\lambda_{t_0} \rightarrow \lambda_{t_1}}(a) = a.\]
\end{restatable}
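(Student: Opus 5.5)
The plan is to verify directly that the constant field $w_t \equiv a$ satisfies the defining conditions of a parallel field along $(\lambda_t)$ in \Cref{Wasserstein parallel transport pde}. We then conclude by uniqueness of parallel transport along regular curves, the uniqueness already invoked in the proof of \Cref{prop:h1} via \citet{ambrosio2012user}.

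\textbf{Tangency.} First we check that $a$ belongs to $T_{\lambda_t}\mathcal{P}_2(\mathbb{R}^d)$ for every $t$. The field $a$ is formally $\nabla\langle a,x\rangle$, but $\langle a,x\rangle$ is not compactly supported. We therefore reuse the cutoff construction from the proof of \Cref{pt recovers parallel trends}, namely $\psi_R(x)=\langle a,x\rangle\chi_R(x)$, mollified if needed to be $C_c^\infty$. This gives
\[
\nabla\psi_R = a\chi_R + \langle a,x\rangle\nabla\chi_R .
\]
These gradients are uniformly bounded by $C\|a\|_2$ and converge pointwise to $a$. Since $\lambda_t$ is a probability measure, dominated convergence yields $\nabla\psi_R\to a$ in $L^2(\lambda_t)$, so $a\in T_{\lambda_t}\mathcal{P}_2(\mathbb{R}^d)$.

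\textbf{Parallel equation.} Next we verify the parallel-transport equation. For $w_t\equiv a$ we have $\partial_t w_t=0$ and $\nabla w_t=0$, so
\[
\partial_t w_t+\nabla w_t\,\nabla\varphi_t = 0
\]
identically. Hence
\[
\nabla^{W_2}_{(\nabla\varphi_t)}w_t=\Pi_{\lambda_t}(0)=0
\]
for a.e.\ $t$, and equivalently $\operatorname{div}(\lambda_t\cdot 0)=0$. The curve $t\mapsto a$ is also trivially absolutely continuous into $L^2(\lambda_t)$, which is the regularity that the notion of parallel field along a regular curve requires.

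\textbf{Uniqueness.} Finally, the parallel field along a regular curve with a prescribed value at time $t_0$ is unique: a solution exists and is an isometry by Gigli's theory along regular curves, and uniqueness is the cited result. Both $w_t\equiv a$ and $t\mapsto\PT_{\lambda_{t_0}\to\lambda_t}(a)$ are parallel and agree at $t_0$, so they coincide, and evaluating at $t_1$ gives the claim. This works for $t_1<t_0$ as well, by reversing the time parametrization of the curve.

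The main obstacle I expect is the rigorous matching with Gigli's weak framework. His definition uses fields in $L^2(\lambda_t)$ along the curve, with derivatives understood via translation by the flow of $\nabla\varphi_t$. The translated field $a\circ$ (flow) is still the constant $a$, so its total derivative is zero. The care needed is in confirming that this total-derivative formulation, rather than the Eulerian PDE, is the one to which uniqueness applies. Under regularity, where $\operatorname{Lip}(\nabla\varphi_t)$ is integrable so the flow is well defined, I expect this to be immediate.
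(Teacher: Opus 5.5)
Your proposal is correct and follows the paper's argument, which observes that for $w_t\equiv a$ both $\partial_t a$ and $\nabla a\,\nabla\varphi_t$ vanish, so the parallel-transport equation $\nabla\cdot\bigl(\lambda_t(\partial_t a+\nabla a\,\nabla\varphi_t)\bigr)=0$ holds trivially. Your cutoff argument showing $a\in T_{\lambda_t}\mathcal P_2(\mathbb R^d)$ and your appeal to uniqueness of parallel transport along regular curves supply two steps the paper leaves implicit.
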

\noindent \textit{Proof.} To prove the statement, we need to show that
\[\nabla \cdot (\lambda_t(\partial_ta + \nabla a \cdot \nabla \varphi_t)) = 0\]
for a.e. $t$. Since $a$ is unchanging in $t$ or spatially, both terms in the parentheses are zero and the equation is trivially satisfied. \qed{}

\subsection{Supplementary Results}

\begin{restatable}[Gr\"onwall's Inequality]{lemma}{} \label{gronwall}
    Suppose $\frac{d}{dt}f(t) \leq u(t)f(t) + c$ for $t \geq 0$ where $c$ is a constant. Then 
    \[f(t) \leq f(0)\exp\left(\int_0^t u(s)\,ds\right) + c\int_0^t\exp \left(\int_s^tu(r)\,dr\right)\,ds.\]
\end{restatable}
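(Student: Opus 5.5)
Since $u$ and $c$ may be arbitrary here, I would use the classical integrating-factor argument. Assume $f$ is absolutely continuous and $u$ is locally integrable, which is how the lemma is used in \Cref{prop:h1} and \Cref{wpt stability}. Define
\[U(t) \triangleq \int_0^t u(s)\,ds, \qquad g(t) \triangleq e^{-U(t)} f(t).\]
Then $g$ is absolutely continuous as a product of absolutely continuous functions on compact intervals, and for a.e.\ $t$ the product rule gives
\[g'(t) = e^{-U(t)}\bigl(f'(t) - u(t) f(t)\bigr) \le c\, e^{-U(t)},\]
because the factor $e^{-U(t)}$ is positive and the hypothesis $f' \le uf + c$ holds a.e.

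Next I would integrate this differential inequality from $0$ to $t$, using the fundamental theorem of calculus for absolutely continuous functions:
\[e^{-U(t)} f(t) - f(0) \le c\int_0^t e^{-U(s)}\,ds.\]
Multiplying through by $e^{U(t)} > 0$ and using $U(t) - U(s) = \int_s^t u(r)\,dr$ gives exactly
\[f(t) \le f(0)\exp\Bigl(\int_0^t u\Bigr) + c\int_0^t \exp\Bigl(\int_s^t u(r)\,dr\Bigr)\,ds,\]
which is the claimed bound.

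There is no real obstacle, only regularity bookkeeping. The one point to justify is that $g$ is absolutely continuous, so that integrating the a.e.\ inequality $g' \le c\,e^{-U}$ is legitimate. This follows because $e^{-U}$ is absolutely continuous and bounded on $[0,t]$. The inequality direction is preserved throughout because the exponential factors are positive, and no sign assumption on $u$ or $c$ is needed.

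There is one subtlety in how the lemma is applied. In \Cref{wpt stability}, the inequality is first derived for $\|\eta_t\|$ only at times where $\eta_t \neq 0$. To cover all $t$, I would apply the argument to $f_\varepsilon = \sqrt{\|\eta_t\|^2 + \varepsilon^2}$ and then let $\varepsilon \to 0$. This remark is not part of the lemma itself.
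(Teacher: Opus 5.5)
Your proof is correct and uses the same integrating-factor argument as the paper: multiply by $e^{-\int_0^t u}$, bound the derivative of the product by $c\,e^{-\int_0^t u}$, integrate on $[0,t]$, and multiply back. Your regularity bookkeeping (absolute continuity of $f$, local integrability of $u$) makes explicit what the paper leaves implicit, and you correctly keep the factor $c$ in front of the second term, which the paper's final display drops.
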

\noindent \textit{Proof.} Let $\mu(t) = \exp(-\int_0^t u(s)\,ds)$ and observe that $\frac{d}{dt}\mu(t) = -\mu(t) u(t).$ Now multiply both sides of the inequality in the hypothesis by $\mu(t)$,
\[\mu(t)\frac{d}{dt}f(t) \leq \mu(t)u(t)f(t) + c\mu(t)\]
and observe that $\frac{d}{dt}(\mu(t)f(t)) = -\mu(t)u(t)f(t) + \mu(t)\frac{d}{dt}f(t).$ Thus,
\[\frac{d}{dt}\left(\mu(t)f(t)\right) \leq c\mu(t).\]
Integrating from $s = 0$ to $s = t$ gives
\[\mu(t)f(t) \leq f(0) + c\int_0^t \mu(s) \, ds \]
and multiplying through by $\mu(t)^{-1}$ gives
\[f(t) \leq f(0)\exp\left(\int_0^tu(s)\, ds\right) + \int_0^t\exp\left(\int_s^t u(r)\,dr\right)\,ds.\]

\qed{}

\begin{restatable}[Bounded Linear Operator]{prop}{} \label{bounded linear operator}
    Define the linear operator $A: \mathcal{H} \rightarrow L^2(\mu; \mathbb{R}^d)$ to be $Af = \nabla f$. Under the hypotheses of \Cref{RKHS consistency}, $A$ is a bounded linear operator. 
\end{restatable}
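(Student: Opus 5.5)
Linearity is immediate: $A(\alpha f+\beta g)=\nabla(\alpha f+\beta g)=\alpha\nabla f+\beta\nabla g$ pointwise. So the substance is boundedness, which I will get from the derivative reproducing property and assumption 1 of \Cref{RKHS consistency}.

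First, for each $x\in\mathbb{R}^d$ and $j\in\{1,\dots,d\}$, I will write
\[
\partial_j f(x)=\langle \psi_j(x), f\rangle_{\mathcal H}.
\]
This is the derivative-reproducing property of \citet{zhou2008derivative}, as already used in the proof of \Cref{gradient representer thm}. Its hypothesis is that the representers $\psi_j(x)=\partial_j K(x,\cdot)$ lie in $\mathcal H$, which is exactly what assumption 1 guarantees. This identity also shows that $\nabla f$ is defined pointwise everywhere.

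Second, I will get a uniform pointwise bound. Cauchy–Schwarz in $\mathcal H$ gives $|\partial_j f(x)|\le \|\psi_j(x)\|_{\mathcal H}\|f\|_{\mathcal H}$. Summing the squares over $j$,
\[
\|\nabla f(x)\|_2^2\le \Big(\sum_{j=1}^d\|\psi_j(x)\|_{\mathcal H}^2\Big)\|f\|_{\mathcal H}^2\le \kappa^2\|f\|_{\mathcal H}^2,
\]
so $\sup_x\|\nabla f(x)\|_2\le\kappa\|f\|_{\mathcal H}$.

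Third, I will integrate this bound against the probability measure $\mu$:
\[
\|Af\|_{L^2(\mu)}^2=\int\|\nabla f\|_2^2\,d\mu\le\kappa^2\|f\|_{\mathcal H}^2 .
\]
Hence $\|A\|\le\kappa$.

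The main obstacle is the measurability needed for $Af$ to be an element of $L^2(\mu;\mathbb{R}^d)$. I will handle it by noting that $x\mapsto\langle\psi_j(x),f\rangle_{\mathcal H}$ equals $\partial_j f$. Since $K\in C^2$, each $f\in\mathcal H$ is $C^1$ (again by \citet{zhou2008derivative}), so $\partial_j f$ is continuous and therefore Borel. With that settled, the three steps above complete the proof.
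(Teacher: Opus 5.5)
Your proposal is correct and follows the paper's proof essentially verbatim: the derivative reproducing property $\partial_j f(x)=\langle\psi_j(x),f\rangle_{\mathcal H}$, Cauchy--Schwarz summed over $j$ to get $\|\nabla f(x)\|_2\le\kappa\|f\|_{\mathcal H}$ for every $x$, then integration against the probability measure $\mu$ to get $\|A\|\le\kappa$. Your measurability remark is an addition the paper omits; note that it uses the section's standing assumption $K\in C^2$, which is not among the hypotheses listed in \Cref{RKHS consistency} itself.
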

\noindent \textit{Proof.} By the derivative reproducing property shown in \Cref{gradient representer thm}, we know that $\partial_{j}f(x) = \langle \psi_j(x), f \rangle_{\mathcal{H}}$. Thus for all $x \in \mathbb{R}^d$
\[\|\nabla f(x)\|_2^2 = \sum_{i = 1}^d|\langle \psi_i(x), f\rangle_{\mathcal{H}}|^2 \leq \|f\|_{\mathcal{H}}^2\sum_{i = 1}^d\|\psi_i(x)\|_{\mathcal{H}}^2 \leq \|f\|_{\mathcal{H}}^2\kappa^2.\]
It follows that $\|\nabla f\|_{L^2(\mu)}^2 \leq \kappa^2 \|f\|_{\mathcal{H}}^2$ implying that $\|A\|_{\text{op}} \leq \kappa$ and $A$ is indeed a bounded linear operator. \qed{}

\begin{restatable}{lemma}{}
    For $s \rightarrow 0$ and a fixed constant $C > 0$, $(1 + Cs^2)^{1/s} = 1 + Cs + O(s^2).$ \label{recurrence relation bound}
\end{restatable}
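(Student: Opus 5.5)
Write $(1+Cs^2)^{1/s}=\exp\bigl(s^{-1}\log(1+Cs^2)\bigr)$ and expand the exponent first, then the exponential. Both expansions are elementary Taylor estimates with explicit remainders, valid for $s$ small. Throughout, take $s\in(0,s_0]$ with $s_0$ chosen so that $Cs^2\le 1/2$.

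\textbf{Step 1: the exponent.} For $0\le y\le 1/2$ we have $y-y^2/2\le \log(1+y)\le y$. With $y=Cs^2$ this gives
\[
Cs-\tfrac{C^2}{2}s^3\;\le\; s^{-1}\log(1+Cs^2)\;\le\; Cs .
\]
So the exponent equals $Cs+r(s)$ with $|r(s)|\le \tfrac{C^2}{2}s^3$.

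\textbf{Step 2: the exponential.} Set $x=Cs+r(s)$. Then $|x|\le Cs+\tfrac{C^2}{2}s^3\le 1$ for $s$ small. Using $|e^x-1-x|\le e\,x^2/2$ for $|x|\le 1$, we get
\[
e^{x}=1+Cs+r(s)+O(x^2)=1+Cs+O(s^3)+O(s^2)=1+Cs+O(s^2).
\]
The implied constant depends only on $C$, for instance $\tfrac{C^2}{2}s_0+\tfrac{e}{2}(C+C^2s_0^2/2)^2$.

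\textbf{Step 3: usage.} In the proof of \Cref{full approximation of parallel transport along regular curves} only the upper bound $(1+Cs^2)^{k}\le(1+Cs^2)^{1/s}$ for $k\le N=1/s$ is needed. This monotonicity in the exponent holds because $1+Cs^2\ge 1$. The upper-bound half of Step 1 alone already gives $(1+Cs^2)^{1/s}\le e^{Cs}\le 1+Cs+\tfrac{e}{2}C^2s^2$, which suffices there.

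There is no real obstacle. The only care needed is to keep the remainder constants uniform in $s$ by restricting to $s\le s_0$, so that the $O(\cdot)$ is a genuine bound rather than a formal asymptotic.
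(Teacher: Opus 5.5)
Your proof is correct and follows essentially the same route as the paper: write $(1+Cs^2)^{1/s}=\exp\bigl(s^{-1}\log(1+Cs^2)\bigr)$, expand the logarithm to get an exponent $Cs+O(s^3)$, and then apply $e^u=1+u+O(u^2)$. The differences are refinements: you use the explicit inequalities $y-y^2/2\le\log(1+y)\le y$ and $|e^x-1-x|\le e\,x^2/2$ so that the $O(s^2)$ constant is uniform on $(0,s_0]$, and you correctly observe that the downstream argument only needs the one-sided bound $(1+Cs^2)^{1/s}\le e^{Cs}$.
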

\noindent \textit{Proof.} We will start by obtaining an asymptotic expression for the log of the quantity of interest,
\begin{align*}
    \log\left((1 + Cs^2)^{1/s}\right) &= \frac{1}{s}\log(1 + Cs^2) \\
    &= \frac{1}{s}\left(Cs^2 - \frac{C^2s^4}{2} + \frac{C^3s^6}{3} + O(s^8)\right)
\end{align*}
since $\log(1 + u) = u - \frac{u^2}{2} + \frac{u^3}{3} +O(u^4)$ for $u \rightarrow 0$. Multiplying out and exponentiating yields
\begin{align*}
    (1 + Cs^2)^{1/s} &= \exp\left(Cs - \frac{C^2s^3}{2} + \frac{C^3s^5}{3} + O(s^7)\right) \\
    &= 1 + Cs + O(s^2)
\end{align*}
since $e^u = 1 + u + O(u^2)$ for $u \rightarrow 0$.

\qed{}

\begin{restatable}{lemma}{} \label{L2}
Let $(\Omega,\|\cdot\|_\Omega)$ be a normed measurable space, let
$\lambda_0,\lambda_1\in\mathcal P_2(\Omega)$, and let
$v:\Omega\to\Omega$ be measurable and spatially Lipschitz. If
$v\in L_2(\lambda_0;\Omega)$, then also $v\in L_2(\lambda_1;\Omega)$.
Moreover,
\[
\|v\|_{L_2(\lambda_1)}^2
\le
2\|v\|_{L_2(\lambda_0)}^2
+
2{\operatorname{Lip}}(v)^2 W_2^2(\lambda_0,\lambda_1).
\]
\end{restatable}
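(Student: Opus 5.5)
The plan is a coupling argument combined with a pointwise bound $\|v(x)\|\le\|v(y)\|+\operatorname{Lip}(v)\|x-y\|$. Let $\gamma^*\in\Gamma_{\lambda_0,\lambda_1}$ be an optimal coupling for $W_2(\lambda_0,\lambda_1)$; it exists since both measures have finite second moment (and otherwise a near-optimal coupling with a limiting argument suffices). Because $\gamma^*$ has second marginal $\lambda_1$, I will rewrite
\[
\|v\|_{L_2(\lambda_1)}^2=\int_{\Omega\times\Omega}\|v(y)\|_\Omega^2\,d\gamma^*(x,y).
\]

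The key pointwise step comes from the triangle inequality and the Lipschitz property:
\[
\|v(y)\|_\Omega\le \|v(x)\|_\Omega+\|v(y)-v(x)\|_\Omega\le \|v(x)\|_\Omega+\operatorname{Lip}(v)\,\|x-y\|_\Omega .
\]
Squaring and applying $(a+b)^2\le 2a^2+2b^2$ gives
\[
\|v(y)\|_\Omega^2\le 2\|v(x)\|_\Omega^2+2\operatorname{Lip}(v)^2\|x-y\|_\Omega^2 .
\]

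Next I integrate this inequality against $\gamma^*$. The first term integrates to $2\|v\|_{L_2(\lambda_0)}^2$ because the first marginal of $\gamma^*$ is $\lambda_0$. The second term integrates to $2\operatorname{Lip}(v)^2W_2^2(\lambda_0,\lambda_1)$ by optimality of $\gamma^*$, taking the ground distance in $W_2$ to be the norm distance on $\Omega$. The right-hand side is then finite, which yields both $v\in L_2(\lambda_1)$ and the stated bound. Measurability of $y\mapsto\|v(y)\|^2$ is immediate from the measurability of $v$, and nonnegativity justifies writing the integrals even before finiteness is known.

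The only point needing care is the identification of the metric. $W_2$ on $\Omega$ has to be defined with the cost $\|x-y\|_\Omega^2$ so that $\int\|x-y\|^2\,d\gamma^*=W_2^2$. I will state this convention explicitly, and also note that in the paper's applications ($\mathbb{R}^d$, or $\mathbb{T}^d$ with its flat distance) it coincides with the metric used there. No other step should be an obstacle.
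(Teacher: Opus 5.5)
Your proposal is correct and matches the paper's proof: both take an optimal coupling $(X,Y)\sim\gamma$, bound $\|v(Y)\|^2\le 2\|v(X)\|^2+2\operatorname{Lip}(v)^2\|X-Y\|^2$ pointwise, and integrate using the two marginals and optimality of $\gamma$. Your extra remarks on the near-optimal coupling fallback and on taking the $W_2$ ground cost to be $\|x-y\|_\Omega^2$ are sensible; the paper leaves both implicit.
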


\noindent\textit{Proof.}
Let $\gamma$ be an optimal coupling of $\lambda_0$ and $\lambda_1$, and let
$(X,Y)\sim \gamma$. Then
\begin{align*}
\|v(Y)\|_\Omega^2
&\le 2\|v(X)\|_\Omega^2 + 2\|v(Y)-v(X)\|_\Omega^2 \\
&\le 2\|v(X)\|_\Omega^2 + 2{\operatorname{Lip}}(v)^2\|Y-X\|_\Omega^2.
\end{align*}
Taking expectations yields
\begin{align*}
\int \|v\|_\Omega^2\,d\lambda_1
= \mathbb E\|v(Y)\|_\Omega^2
&\le
2\mathbb E\|v(X)\|_\Omega^2
+
2{\operatorname{Lip}}(v)^2 \mathbb E\|X-Y\|_\Omega^2 \\
&=
2\int \|v\|_\Omega^2\,d\lambda_0
+
2{\operatorname{Lip}}(v)^2 W_2^2(\lambda_0,\lambda_1)
<\infty.
\end{align*}
Thus $v\in L_2(\lambda_1;\Omega)$.
\qed{}

\section{RKHS Derivation} \label{sec: RKHS empirical objective derivation}

\paragraph{RKHS Empirical Objective.} 
We now derive the empirical objective and the associated linear system for the RKHS-based
Helmholtz projection estimator introduced in \Cref{sec: helmholtz decomp}. Let $\mathcal{H}$ be a scalar RKHS on \(\mathbb{R}^d\) with a twice-differentiable kernel
\(K \in C^2(\mathbb{R}^d \times \mathbb{R}^d)\). By \Cref{gradient representer thm}, any minimizer of
\[
\hat f_\lambda \in \arg\min_{f \in \mathcal{H}}
\left(
\frac{1}{n}\sum_{i=1}^n \|\nabla f(x_i) - v(x_i)\|_2^2
+ \lambda \|f\|_{\mathcal{H}}^2
\right)
\]
admits a representation of the form
\[
\hat f_\lambda(\cdot)
=
\sum_{i=1}^n \langle c_i, \nabla_2 K(x_i,\cdot)\rangle,
\]
where \(\nabla_2\) denotes the gradient with respect to the second argument of \(K\) and $c_i \in \mathbb{R}^d$.
Equivalently, writing \(c_i = (c_{i1},\dots,c_{id})^{\top}\),
\[
\hat f_\lambda(\cdot)
=
\sum_{i=1}^n \sum_{a=1}^d c_{ia}\,\partial_{2,a}K(x_i,\cdot).
\]
Define the stacked coefficient vector and the stacked target vector
\[
c \triangleq (c_1^{\top},\dots,c_n^{\top})^{\top} \in \mathbb{R}^{nd}, \qquad v \triangleq \bigl(v(x_1)^\top,\dots,v(x_n)^\top\bigr)^\top \in \mathbb{R}^{nd}.
\]
\paragraph{Gradients of the representer expansion.} Differentiating \(\hat f_\lambda\) with respect to its argument gives, for each \(j \in \{1,\dots,n\}\),
\[\nabla \hat f_\lambda(x_j)
=
\sum_{i=1}^n \nabla_2^2 K(x_i,x_j)c_i,
\]
where \(\nabla_2^2 K(x_i,x_j)\in\mathbb{R}^{d\times d}\) is the Hessian with respect to the
second argument. Thus, if we define the block matrix \(D \in \mathbb{R}^{nd\times nd}\) by
\[
D_{ji} \triangleq \nabla_2^2 K(x_i,x_j)\in\mathbb{R}^{d\times d},
\]
for $i,j \in \{1,\dots,n\}$, then the stacked fitted gradients satisfy
\[
\begin{bmatrix}
\nabla \hat f_\lambda(x_1)\\
\vdots\\
\nabla \hat f_\lambda(x_n)
\end{bmatrix}
=
Dc.
\]
\paragraph{RKHS norm of the representer expansion.} Now we will compute \(\|\hat f_\lambda\|_\mathcal{H}^2\). By bilinearity of the RKHS inner product,
\begin{align*}
\|\hat f_\lambda\|_{\mathcal{H}}^2
&=
\left\langle
\sum_{i=1}^n \sum_{a=1}^d c_{ia}\,\partial_{2,a}K(x_i,\cdot),
\sum_{j=1}^n \sum_{b=1}^d c_{jb}\,\partial_{2,b}K(x_j,\cdot)
\right\rangle_{\mathcal{H}} \\
&=
\sum_{i,j=1}^n \sum_{a,b=1}^d
c_{ia}c_{jb}
\left\langle
\partial_{2,a}K(x_i,\cdot),
\partial_{2,b}K(x_j,\cdot)
\right\rangle_{\mathcal{H}}.
\end{align*}
By the derivative reproducing property,
\[
\left\langle
\partial_{2,a}K(x_i,\cdot),
\partial_{2,b}K(x_j,\cdot)
\right\rangle_{\mathcal{H}}
=
\partial_{1,a}\partial_{2,b}K(x_i,x_j).
\]
Therefore, if we define the block Gram matrix \(G \in \mathbb{R}^{nd\times nd}\) by
\[
G_{ij}
\triangleq
\nabla_1\nabla_2 K(x_i,x_j)\in\mathbb{R}^{d\times d},
\]
for $i,j \in \{1,\dots,n\}$ then
\[
\|\hat f_\lambda\|_{\mathcal{H}}^2 = c^\top G c.
\]
\paragraph{The empirical objective in coefficient form.}
Substituting the expressions above into the empirical objective yields
\[
J(c)
=
\frac{1}{n}\|Dc - v\|_2^2 + \lambda\, c^\top G c.
\]
Expanding \(J(c)\),
\[
J(c)
=
\frac{1}{n}(Dc-v)^\top(Dc-v) + \lambda c^\top G c.
\]
Differentiating with respect to \(c\) gives
\[
\nabla_c J(c)
=
\frac{2}{n}D^\top(Dc-v) + 2\lambda Gc.
\]
Hence any minimizer \(c^\star\) satisfies the normal equations
\[
D^\top D\,c^\star + n\lambda G\,c^\star = D^\top v,
\]
or equivalently,
\[
\bigl(D^\top D + n\lambda G\bigr)c^\star = D^\top v.
\]
Whenever \(D^\top D + n\lambda G\) is invertible, the minimizer is unique and given by
\[
c^\star
=
\bigl(D^\top D + n\lambda G\bigr)^{-1}D^\top v.
\]
Note that the simplified one-matrix formula
\[
J(c)=\frac{1}{n}\|Hc-v\|_2^2+\lambda c^\top H c,
\qquad
c^\star=(H+n\lambda I_{nd})^{-1}v,
\]
is recovered under the additional identification \(D=G=H\) (with \(H\) symmetric).

\section{Implementation Details}
\label{appendix:implementation_details}

Our implementation of the algorithms described in \Cref{sec: wpt approx}, \Cref{sec: helmholtz decomp} and \Cref{sec:counterfactual_dyn} is available at
\[
\texttt{\url{https://github.com/TristanSaidi/WassersteinPT}}.
\]
The codebase contains the main routines for approximate Wasserstein parallel transport and
counterfactual dynamics prediction, together with supporting geometry and experiment code. In
particular, the public repository includes source files \texttt{pt.py}, \texttt{cf\_recon.py}, and
\texttt{geom.py}.

\paragraph{Empirical optimal transport and tangent estimation.}
In the theoretical development, \Cref{alg: W parallel transport} is written in terms of the Brenier map and the
Wasserstein logarithmic map. In the empirical setting, however, we work with empirical measures
\[
\hat \nu = \sum_{i=1}^n a_i \delta_{x_i},
\qquad
\hat \mu = \sum_{j=1}^m b_j \delta_{y_j},
\]
and we compute an optimal coupling
\[
\Gamma^\star \in \mathbb{R}_{+}^{n\times m}
\]
for the quadratic transport cost. Concretely, \(\Gamma^\star\) is obtained by solving the discrete
optimal transport problem
\[
\min_{\Gamma \in \Pi(a,b)} \sum_{i=1}^n \sum_{j=1}^m \Gamma_{ij}\|x_i-y_j\|_2^2,
\]
where \(\Pi(a,b)\) denotes the set of nonnegative matrices with row sums \(a\) and column sums
\(b\). Given the optimal coupling, we convert it into a tangent vector by barycentric projection. That is,
for each source support point \(x_i\), we define the empirical transport target
\[
\bar y_i
\triangleq
\frac{\sum_{j=1}^m \Gamma^\star_{ij} y_j}{\sum_{j=1}^m \Gamma^\star_{ij}},
\]
and then set the empirical tangent vector to be
\[
\hat v(x_i)
\triangleq
\bar y_i - x_i
=
\frac{1}{a_i}\sum_{j=1}^m \Gamma^\star_{ij}(y_j-x_i).
\]
Thus, in practice, the logarithmic map is approximated by the displacement induced by the
barycentric projection of the empirical OT plan. When the discrete coupling is supported on a map,
this reduces to the usual pointwise displacement \(T(x_i)-x_i\); when mass from \(x_i\) splits
across multiple targets, the barycentric projection provides the canonical single-vector summary
used by the implementation.

\paragraph{Weighted aggregation when the barycentric projection is not injective.}
A practical complication arises in the iterative parallel transport procedure when the empirical
barycentric projection is not injective. In that case, multiple source support points may be mapped
to the same destination support point after one transport step. A naive pointwise pullback or
pushforward of tangent vectors is then ill-posed, because there is no unique source vector at
the destination location. To handle this, the implementation performs a weighted aggregation of the transported vectors. Suppose the current tangent field is represented by vectors \(z_1,\dots,z_n\) on source locations
\(x_1,\dots,x_n\), and suppose the one-step empirical transport from the current support to the next
support is represented by a coupling matrix \(\Gamma\). For each destination point \(y_j\), we define
the transported tangent by the conditional barycenter
\[
\tilde z_j
\triangleq
\frac{\sum_{i=1}^n \Gamma_{ij} z_i}{\sum_{i=1}^n \Gamma_{ij}}.
\]
Equivalently, the new vector at a destination support point is the mass-weighted average of all
incoming vectors under the one-step coupling. This aggregation rule has two desirable properties. First, it is consistent with the deterministic
case: if the step is induced by an injective map, then each destination point has exactly one
preimage and the formula above simply recovers the transported vector associated with that
preimage. Second, when several source points merge, it preserves the correct mass weighting induced
by the empirical transport plan rather than arbitrarily selecting one source vector. In this
sense, the discrete implementation uses the coupling itself to define the natural empirical analogue
of transporting a tangent field through a non-invertible step. We also note that under standard regularity assumptions the barycentric projection of the empirical optimal coupling converges to the population Brenier map in the large-sample limit \citep[Theorem 2.2 and Corollary 2.3]{deb2021rates}.

\paragraph{RKHS projection via random Fourier features.}
\Cref{sec: helmholtz decomp} describes the Helmholtz-Hodge projection step through a kernel gradient regression problem in an RKHS. The exact formulation leads to a linear system involving a block kernel matrix whose size scales with both the sample size and the ambient dimension. In practice, this becomes a computational bottleneck in the high-dimensional settings considered in our experiments. To reduce both runtime and memory usage, the implementation replaces the exact kernel expansion with a random Fourier feature approximation. Specifically, for a shift-invariant kernel \(K\), we use
a feature map $\phi:\mathbb{R}^d \to \mathbb{R}^D$
such that
\[
K(x,y) \approx \phi(x)^{\top} \phi(y),
\]
where \(D\) is the number of random features. We then parameterize the scalar potential as
\[
f_\theta(x) = \theta^{\top} \phi(x),
\qquad \theta \in \mathbb{R}^D,
\]
so that the projected vector field is represented by
\[
\nabla f_\theta(x)
=
J_\phi(x)^{\top} \theta,
\]
with \(J_\phi(x)\) denoting the Jacobian of the feature map. Under this approximation, the empirical projection problem becomes a finite-dimensional ridge
regression problem in the feature parameters \(\theta\):
\[
\min_{\theta \in \mathbb{R}^D}
\frac{1}{n}\sum_{i=1}^n
\bigl\|
J_\phi(x_i)^{\top} \theta - v(x_i)
\bigr\|_2^2
+
\lambda \|\theta\|_2^2.
\]
This avoids forming the full \(nd \times nd\) kernel matrix from the exact representer expansion,
and instead works with feature matrices whose width is the chosen number of random features.
Consequently, the memory complexity is reduced from kernel-matrix storage to feature-matrix
storage, and the resulting linear algebra is substantially faster in the regimes relevant to our
experiments.

\end{document}